\documentclass{article}

\newcommand{\loose}{\looseness=-1}

\usepackage[utf8]{inputenc} %
\usepackage[T1]{fontenc}    %
\usepackage{url}            %
\usepackage{booktabs}       %
\usepackage{amsfonts}       %
\usepackage{nicefrac}       %
\usepackage{microtype}      %

\usepackage{tocloft}            %

\usepackage{enumitem}

\usepackage{breakcites}

\usepackage{etoolbox}
\usepackage{comment}
\newtoggle{draft}
\togglefalse{draft}
\newcommand{\draft}[1]{\iftoggle{draft}{#1}{}}

\usepackage{mathrsfs}

\usepackage{algorithm}
\usepackage{verbatim}
\usepackage[noend]{algpseudocode}
\newcommand{\multiline}[1]{\parbox[t]{\dimexpr\linewidth-\algorithmicindent}{#1}}

\usepackage{multicol}

\usepackage{colortbl}

\usepackage{setspace}

\usepackage{transparent}

\usepackage{inconsolata}
\usepackage[scaled=.90]{helvet}
\usepackage{xspace}

\usepackage{pifont}

\usepackage[letterpaper, left=1in, right=1in, top=1in, bottom=1in]{geometry}

\PassOptionsToPackage{hypertexnames=false}{hyperref}  %
\usepackage{parskip}

\usepackage[dvipsnames]{xcolor}
\usepackage[colorlinks=true, linkcolor=blue!70!black, citecolor=blue!70!black,urlcolor=black,breaklinks=true]{hyperref}
\usepackage{microtype}
\usepackage{hhline}

\makeatletter
\newcommand{\neutralize}[1]{\expandafter\let\csname c@#1\endcsname\count@}
\makeatother

\usepackage{algorithm}

\usepackage{natbib}
\bibliographystyle{plainnat}
\bibpunct{(}{)}{;}{a}{,}{,}

\usepackage{amsthm}
\usepackage{mathtools}
\usepackage{amsmath}
\usepackage{bbm}
\usepackage{amsfonts}
\usepackage{amssymb}

\usepackage{xpatch}

\usepackage{thmtools}
\usepackage{thm-restate}
\declaretheorem[name=Theorem,parent=section]{theorem}
\declaretheorem[name=Lemma,parent=section]{lemma}
\declaretheorem[name=Assumption, parent=section]{assumption}
\declaretheorem[name=Condition, parent=section]{condition}
\declaretheorem[qed=$\triangleleft$,name=Example,style=definition, parent=section]{example}
\declaretheorem[name=Remark,style=definition, parent=section]{remark}
\declaretheorem[name=Proposition, parent=section]{proposition}

\makeatletter
  \renewenvironment{proof}[1][Proof]%
  {%
   \par\noindent{\bfseries\upshape {#1.}\ }%
  }%
  {\qed\newline}
  \makeatother

\theoremstyle{definition}  %

\newtheorem{corollary}{Corollary}[section]

\theoremstyle{plain}
\newtheorem{definition}{Definition}[section]

\xpatchcmd{\proof}{\itshape}{\normalfont\proofnameformat}{}{}
\newcommand{\proofnameformat}{\bfseries}

\usepackage[nameinlink,capitalize]{cleveref}

\newcommand{\pref}[1]{\cref{#1}}
\newcommand{\pfref}[1]{Proof of \pref{#1}}

\renewcommand{\eqref}[1]{\texorpdfstring{\hyperref[#1]{Eq. (\ref*{#1})}}{Eq. (\ref*{#1})}}

\crefformat{equation}{#2(#1)#3}
\Crefformat{equation}{#2(#1)#3}

\Crefformat{figure}{#2Figure #1#3}
\Crefname{assumption}{Assumption}{Assumptions}
\Crefformat{assumption}{#2Assumption #1#3}

\usepackage{crossreftools}
\pdfstringdefDisableCommands{%
    \let\Cref\crtCref
    \let\cref\crtcref
}
\newcommand{\preft}[1]{\crtcref{#1}}
\newcommand{\creftitle}[1]{\crtcref{#1}}

\usepackage{xparse}

\ExplSyntaxOn
\DeclareDocumentCommand{\XDeclarePairedDelimiter}{mm}
 {
  \__egreg_delimiter_clear_keys: %
  \keys_set:nn { egreg/delimiters } { #2 }
  \use:x %
   {
    \exp_not:n {\NewDocumentCommand{#1}{sO{}m} }
     {
      \exp_not:n { \IfBooleanTF{##1} }
       {
        \exp_not:N \egreg_paired_delimiter_expand:nnnn
         { \exp_not:V \l_egreg_delimiter_left_tl }
         { \exp_not:V \l_egreg_delimiter_right_tl }
         { \exp_not:n { ##3 } }
         { \exp_not:V \l_egreg_delimiter_subscript_tl }
       }
       {
        \exp_not:N \egreg_paired_delimiter_fixed:nnnnn 
         { \exp_not:n { ##2 } }
         { \exp_not:V \l_egreg_delimiter_left_tl }
         { \exp_not:V \l_egreg_delimiter_right_tl }
         { \exp_not:n { ##3 } }
         { \exp_not:V \l_egreg_delimiter_subscript_tl }
       }
     }
   }
 }

\keys_define:nn { egreg/delimiters }
 {
  left      .tl_set:N = \l_egreg_delimiter_left_tl,
  right     .tl_set:N = \l_egreg_delimiter_right_tl,
  subscript .tl_set:N = \l_egreg_delimiter_subscript_tl,
 }

\cs_new_protected:Npn \__egreg_delimiter_clear_keys:
 {
  \keys_set:nn { egreg/delimiters } { left=.,right=.,subscript={} }
 }

\cs_new_protected:Npn \egreg_paired_delimiter_expand:nnnn #1 #2 #3 #4
 {%
  \mathopen{}
  \mathclose\c_group_begin_token
   \left#1
   #3
   \group_insert_after:N \c_group_end_token
   \right#2
   \tl_if_empty:nF {#4} { \c_math_subscript_token {#4} }
 }
\cs_new_protected:Npn \egreg_paired_delimiter_fixed:nnnnn #1 #2 #3 #4 #5
 {
  \mathopen{#1#2}#4\mathclose{#1#3}
  \tl_if_empty:nF {#5} { \c_math_subscript_token {#5} }
 }
\ExplSyntaxOff

\XDeclarePairedDelimiter{\supnorm}{
  left=\lVert,
  right=\rVert,
  subscript=\infty
  }

\DeclarePairedDelimiter{\abs}{\lvert}{\rvert} %
\DeclarePairedDelimiter{\brk}{[}{]}
\DeclarePairedDelimiter{\crl}{\{}{\}}
\DeclarePairedDelimiter{\prn}{(}{)}
\DeclarePairedDelimiter{\nrm}{\|}{\|}

\DeclareMathOperator{\En}{\mathbb{E}}

\DeclareMathOperator*{\argmin}{arg\,min} %
\DeclareMathOperator*{\argmax}{arg\,max}

\newcommand{\wt}[1]{\widetilde{#1}}
\newcommand{\wh}[1]{\widehat{#1}}
\newcommand{\wb}[1]{\widebar{#1}}

\def\ddefloop#1{\ifx\ddefloop#1\else\ddef{#1}\expandafter\ddefloop\fi}
\def\ddef#1{\expandafter\def\csname bb#1\endcsname{\ensuremath{\mathbb{#1}}}}
\ddefloop ABCDEFGHIJKLMNOPQRSTUVWXYZ\ddefloop
\def\ddefloop#1{\ifx\ddefloop#1\else\ddef{#1}\expandafter\ddefloop\fi}
\def\ddef#1{\expandafter\def\csname b#1\endcsname{\ensuremath{\mathbf{#1}}}}
\ddefloop ABCDEFGHIJKLMNOPQRSTUVWXYZ\ddefloop
\def\ddef#1{\expandafter\def\csname sf#1\endcsname{\ensuremath{\mathsf{#1}}}}
\ddefloop ABCDEFGHIJKLMNOPQRSTUVWXYZ\ddefloop
\def\ddef#1{\expandafter\def\csname c#1\endcsname{\ensuremath{\mathcal{#1}}}}
\ddefloop ABCDEFGHIJKLMNOPQRSTUVWXYZ\ddefloop
\def\ddef#1{\expandafter\def\csname h#1\endcsname{\ensuremath{\widehat{#1}}}}
\ddefloop ABCDEFGHIJKLMNOPQRSTUVWXYZ\ddefloop
\def\ddef#1{\expandafter\def\csname hc#1\endcsname{\ensuremath{\widehat{\mathcal{#1}}}}}
\ddefloop ABCDEFGHIJKLMNOPQRSTUVWXYZ\ddefloop
\def\ddef#1{\expandafter\def\csname t#1\endcsname{\ensuremath{\widetilde{#1}}}}
\ddefloop ABCDEFGHIJKLMNOPQRSTUVWXYZ\ddefloop
\def\ddef#1{\expandafter\def\csname tc#1\endcsname{\ensuremath{\widetilde{\mathcal{#1}}}}}
\ddefloop ABCDEFGHIJKLMNOPQRSTUVWXYZ\ddefloop
\def\ddefloop#1{\ifx\ddefloop#1\else\ddef{#1}\expandafter\ddefloop\fi}
\def\ddef#1{\expandafter\def\csname scr#1\endcsname{\ensuremath{\mathscr{#1}}}}
\ddefloop ABCDEFGHIJKLMNOPQRSTUVWXYZ\ddefloop

\newcommand{\ind}{\mathbbm{1}}    %

\newcommand{\veps}{\varepsilon}

\newcommand{\ldef}{\vcentcolon=}

\newcommand{\Mia}{M_{\alpha,i}}
  \newcommand{\Mja}{M_{\alpha,j}}
  \newcommand{\fmia}{f\sups{\Mia}}
  \newcommand{\pimia}{\pi\subs{\Mia}}
  \newcommand{\Mtila}{\wt{M}}
\newcommand{\fmtila}{f\sups{\Mtila}}

\newcommand{\cMab}{\cM^{\alpha,\beta}}%
\newcommand{\cMcup}{\cM^{\mathrm{all}}}%

\newcommand{\Mbarnu}{\Mbar_\nu}
\newcommand{\fmbarnu}{f\sups{\Mbarnu}}
\newcommand{\pimbarnu}{\pi\subs{\Mbarnu}}

\newcommand{\cmb}{\cup\crl{\Mbar}}

\newcommand{\alphaupper}{\wb{\Delta}(\gamma,T)}
\newcommand{\alphalower}{\underline{\Delta}(\gamma,T)}
\newcommand{\alphaupperabs}{\wb{\alpha}(\veps,\gamma)}
\newcommand{\alphalowerabs}{\underline{\alpha}(\veps,\gamma)}

\newcommand{\alphaveps}{\alpha(\veps)}

\newcommand{\rhs}{right-hand side\xspace}

\newcommand{\vz}{\vee{}0}

\newcommand{\pnot}{p_0}%

\newcommand{\cMu}{\cM\cup\crl{\Mbar}}

\newcommand{\alphagamma}{\alpha_{\gamma}}
  \newcommand{\alphaequiv}{\alpha(\veps,\gamma)}

\newcommand{\creg}{c_{\reg}}
\newcommand{\Creg}{C_{\reg}}

\newcommand{\falsex}{false exclusion\xspace}
\newcommand{\Falsex}{False exclusion\xspace}

\newcommand{\reg}{\mathrm{reg}}

\newcommand{\vepsl}{\underline{\veps}(T)}
\newcommand{\vepsu}{\wb{\veps}(T)}

\newcommand{\vepslowerT}{\vepsl}
\newcommand{\vepsupperT}{\vepsu}

\newcommand{\Ceff}{C_{\mathrm{prob}}}
\newcommand{\Cprob}{\Ceff}

\newcommand{\Mtil}{\wt{M}}
\newcommand{\fmtil}{f\sups{\Mtil}}
\newcommand{\pimtil}{\pi\subs{\Mtil}}

\newcommand{\picirc}{\pi_{\circ}}

\newcommand{\Enp}[1][p]{\En_{\pi\sim{}#1}}
\newcommand{\Enq}[1][q]{\En_{\pi\sim{}#1}}

\newcommand{\pdec}{\normalfont{\textsf{p-dec}}}
\newcommand{\rdec}{\normalfont{\textsf{r-dec}}}

\newcommand{\decopac}[1][\gamma]{\pdec^{\mathrm{o}}_{#1}}
\newcommand{\decoreg}[1][\gamma]{\rdec^{\mathrm{o}}_{#1}}
\newcommand{\deccpac}[1][\veps]{\pdec^{\mathrm{c}}_{#1}}
\newcommand{\deccreg}[1][\veps]{\rdec^{\mathrm{c}}_{#1}}

\newcommand{\decoregbayes}[1][\gamma]{\underline{\rdec}^{\mathrm{o}}_{#1}}
\newcommand{\decopacbayes}[1][\gamma]{\underline{\pdec}^{\mathrm{o}}_{#1}}

\newcommand{\decopacr}[1][\gamma]{\pdec^{\mathrm{o,rnd}}_{#1}}
\newcommand{\decoregr}[1][\gamma]{\rdec^{\mathrm{o,rnd}}_{#1}}
\newcommand{\deccpacr}[1][\veps]{\pdec^{\mathrm{c,rnd}}_{#1}}
\newcommand{\deccregr}[1][\veps]{\rdec^{\mathrm{c,rnd}}_{#1}}

\newcommand{\deccpacg}[1][\veps]{\pdec^{\mathrm{c,greedy}}_{#1}}

\newcommand{\deccpacalt}[1][\veps]{\wt{\pdec}^{\mathrm{c}}_{#1}}

\newcommand{\cMall}{\cM^{+}}

\newcommand{\cMtil}[1][\alpha]{\wt{\cM}_{#1}}

\newcommand{\RegDM}{\Reg_{\mathsf{DM}}(T)}

\newcommand{\RiskDM}{\mathrm{\mathbf{Risk}}_{\mathsf{DM}}(T)}

\newcommand{\Empi}[1][M]{\En^{\sss{#1},\pi}}

\renewcommand{\pm}[1][M]{p_{\sss{#1}}}
\newcommand{\Ct}{C(T)}

\newcommand{\gmo}{g\sups{M_1}}

\renewcommand{\c}{\mathrm{c}}

\newcommand{\fmp}{f\sups{M'}}

\newcommand{\gammastar}{\gamma^{\star}}

\newcommand{\qmbar}{
  q_{\sMbar}}

\newcommand{\pmbar}{p_{\sMbar}}

\newcommand{\qbar}{\bar{q}}

\renewcommand{\emptyset}{\varnothing}

\newcommand{\filt}{\mathscr{F}}

\newcommand{\hist}{\mathfrak{H}}

\newcommand{\Asig}{\mathscr{P}}
\newcommand{\Rsig}{\mathscr{R}}
\newcommand{\Osig}{\mathscr{O}}

\newcommand{\SA}{\mathscr{A}}
\newcommand{\SB}{\mathscr{B}}

\newcommand{\Hspace}{\Omega}
\newcommand{\Hsig}{\filt}

\newcommand{\abscont}{V(\cM)}

\newcommand{\Framework}{Decision Making with Structured Observations\xspace}
\newcommand{\FrameworkShort}{DMSO\xspace}

\newcommand{\learner}{learner\xspace}

\newcommand{\Cloc}{C_{\mathrm{loc}}}

\newcommand{\act}{\pi}
\newcommand{\Act}{\Pi}

\newcommand{\obs}{o}
\newcommand{\Obs}{\mathcal{\cO}}
\newcommand{\ObsSpace}{\mathcal{\cO}}

\newcommand{\RewardSpace}{\cR}

\newcommand{\Rspace}{\RewardSpace}

\newcommand{\CompText}{Decision-Estimation Coefficient\xspace}
\newcommand{\CompAbbrev}{DEC\xspace}
\newcommand{\CompShort}{\CompAbbrev}

\newcommand{\etdtext}{Estimation-to-Decisions\xspace}

\newcommand{\etdp}{\textsf{E\protect\scalebox{1.04}{2}D}$^{+}$\xspace}

\newcommand{\M}[1]{^{{\scriptscriptstyle M}}}  %

\newcommand{\sMbar}{\sss{\Mbar}}
 
\newcommand{\sups}[1]{^{{\scriptscriptstyle#1}}}
\newcommand{\subs}[1]{_{{\scriptscriptstyle#1}}}

\newcommand{\sss}[1]{{\scriptscriptstyle#1}}

\newcommand{\Enm}[1][M]{\En^{\sss{#1}}}

\newcommand{\bbPm}[1][M]{\bbP\sups{#1}}
\newcommand{\bbPmbar}[1][\Mbar]{\bbP\sups{#1}}

\newcommand{\fm}[1][M]{f\sups{#1}}
\newcommand{\pim}[1][M]{\pi_{\sss{#1}}}

\newcommand{\gm}{g\sups{M}}
\newcommand{\gmbar}{g\sups{\Mbar}}

\newcommand{\fmbar}{f\sups{\Mbar}}
\newcommand{\pimbar}{\pi\subs{\Mbar}}

\newcommand{\fmstar}{f\sups{\Mstar}}
\newcommand{\pimstar}{\pi\subs{\Mstar}}

\newcommand{\pihat}{\wh{\pi}}

\newcommand{\cMloc}[1][\alpha]{\cM_{#1}}
\newcommand{\cMinf}[1][\alpha]{\cM^{\infty}_{#1}}

\newcommand{\Mbar}{\wb{M}}

\newcommand{\Rm}[1][M]{R\sups{#1}}
\newcommand{\Pm}[1][M]{P\sups{#1}}

\newcommand{\Reg}{\mathrm{\mathbf{Reg}}}

\newcommand{\Est}{\mathrm{\mathbf{Est}}_{\mathsf{H}}}
\newcommand{\EstBar}{\widebar{\mathrm{\mathbf{Est}}}_{\mathsf{H}}}

\newcommand{\EstHel}{\mathrm{\mathbf{Est}}_{\mathsf{H}}(T)}

\newcommand{\EstProbHel}{\mathrm{\mathbf{Est}}_{\mathsf{H}}(T,\delta)}

  \newcommand{\AlgEst}{\mathrm{\mathbf{Alg}}_{\mathsf{Est}}}

\newcommand{\Mhat}{\wh{M}}
\newcommand{\Mstar}{M^{\star}}

\newcommand{\algcommentlight}[1]{\textcolor{blue!70!black}{\transparent{0.5}\footnotesize{\texttt{\textbf{//\hspace{2pt}#1}}}}}
\newcommand{\algcommentbig}[1]{\textcolor{blue!70!black}{\footnotesize{\texttt{\textbf{/*
          #1~*/}}}}}

\newcommand{\approxleq}{\lesssim}
\newcommand{\approxgeq}{\gtrsim}

\renewcommand{\ind}[1]{^{{\scriptscriptstyle#1}}}

\newcommand{\bigoh}{O}
\newcommand{\bigoht}{\wt{O}}
\newcommand{\bigom}{\Omega}
\newcommand{\bigomt}{\wt{\Omega}}
\newcommand{\bigthetat}{\wt{\Theta}}

\newcommand{\indic}{\mathbb{I}}

\newcommand{\Dhel}[2]{D_{\mathsf{H}}\prn*{#1,#2}}

\newcommand{\Dhels}[2]{D^{2}_{\mathsf{H}}\prn*{#1,#2}}
\newcommand{\hell}[2]{D^{2}_{\mathsf{H}}\prn*{#1,#2}}
\newcommand{\tvd}[2]{D_{\mathsf{TV}}\prn*{#1,#2}}

\newcommand{\Dtv}[2]{D_{\mathsf{TV}}\prn*{#1,#2}}
\newcommand{\Dtvs}[2]{D^2_{\mathsf{TV}}\prn*{#1,#2}}

\newcommand{\DhelsX}[3]{D^{2}_{\mathsf{H}}\prn[#1]{#2,#3}}

\newcommand{\Ber}{\mathrm{Ber}}

\newcommand{\conv}{\mathrm{co}}

\newcommand{\Qstar}{Q^{\star}}

\newcommand{\unif}{\mathrm{Unif}}

\newcommand{\astar}{a^{\star}}

\newcommand{\mathand}{\quad\text{and}\quad}

\def\multiset#1#2{\ensuremath{\left(\kern-.3em\left(\genfrac{}{}{0pt}{}{#1}{#2}\right)\kern-.3em\right)}}

\renewcommand{\emptyset}{\varnothing}

\newcommand{\pbar}{\bar{p}}
\newcommand{\phat}{\wh{p}}

\newcommand{\pir}{\pi_{\circ}}%

\newcommand{\nc}{\newcommand}
\nc{\DMO}{\DeclareMathOperator}
\newcount\Comments  %
\Comments=1

\DMO{\prox}{prox}
\DMO{\Span}{span}
\DMO{\UCB}{UCB}
\DMO{\LCB}{LCB}
\nc{\br}[2]{{\rm br}^{#1}({#2})}
\nc{\depth}[1]{{\rm d}({#1})}
\nc{\child}[2]{{\rm ch}_{#1}({#2})}
\nc{\parent}[1]{{\rm pa}({#1})}
\nc{\dg}{\dagger}
\nc{\indsig}[2]{\mathcal{I}_{#1}({#2})}
\nc{\total}{{\rm fin}}
\nc{\early}{{\rm pre}}
\nc{\zsink}{z_{\rm sink}}
\nc{\lowv}{{\rm low}}
\nc{\ol}{\overline}
\nc{\madec}[3]{\texttt{ma-dec}_{#1}({#2}, {#3})}
\nc{\madeco}[1]{\texttt{ma-dec}_{#1}}
\nc{\madecd}[3]{\texttt{ma-dec}^{\texttt{d}}_{#1}({#2}, {#3})}
\nc{\mainf}{\texttt{ma-inf}}
\nc{\maexo}{\texttt{ma-exo}}
\nc{\dec}{\texttt{dec}}
\nc{\decc}{\texttt{dec}^{\texttt{c}}}
\nc{\deccp}{\texttt{dec}^{\texttt{c-pac}}}
\nc{\deccr}{\texttt{dec}^{\texttt{c-reg}}}
\nc{\Alg}{{\rm\bf Alg}}
\nc{\co}{{\rm co}}
\nc{\BV}{\mathbb{V}}
\nc{\ham}[2]{d_{\rm Ham}({#1}, {#2})}

\nc{\gamvec}{\gamma}
\nc{\til}{\widetilde}
\nc{\td}{\tilde}
\nc{\todo}[1]{\ifnum\Comments=1 {\color{red}  [TODO: #1]}\fi}
\nc{\old}[1]{\ifnum\Comments=1 {\color{brown}  [OLD: #1]}\fi}
\newcommand{\noah}[1]{\ngcomment{#1}}
\nc{\BP}{\mathbb{P}}
\nc{\BI}{\mathbb{I}}

\nc{\fools}[3]{\MF_{#3}({#1}, {#2})}
\nc{\fool}[2]{\MF({#1},{#2})}
\nc{\clip}[2]{{\rm clip}\left[ \left. {#1} \right| {#2} \right]}
\nc{\imax}{\omega}
\nc{\CF}{\mathscr{F}}
\nc{\CG}{\mathscr{G}}
\nc{\CA}{\mathscr{A}}
\nc{\MH}{\mathcal{H}}
\nc{\MV}{\mathcal{V}}
\nc{\MC}{\mathcal{C}}
\nc{\MI}{\mathcal{I}}
\nc{\MQ}{\mathcal{Q}}
\nc{\st}{\star}
\nc{\lng}{\langle}
\nc{\rng}{\rangle}
\DMO{\OOPT}{opt}
\nc{\dopt}[2]{\ell_{\OOPT}({#1},{#2})}
\nc{\MG}{\mathcal{G}}
\nc{\MP}{\mathcal{P}}
\nc{\PP}{\mathbb{P}}
\nc{\TT}{\mathbb{T}}
\nc{\TTmax}{\TT_{\max}}
\DMO{\REG}{Reg}
\DMO{\WREG}{wReg}
\nc{\wreg}[2]{{\Delta}^{\rm w}_{{#1}}({#2})}
\nc{\wReg}[2]{{\WREG}_{{#1}}({#2})}
\DMO{\Ham}{Ham}
\DMO{\Gap}{Gap}
\DMO{\GD}{GD}
\DMO{\GDA}{GDA}
\DMO{\EG}{EG}
\DMO{\OGDA}{OGDA}
\DMO{\Unif}{Unif}
\DMO{\Tr}{Tr}
\nc{\Qu}{\ul{Q}}
\nc{\Qo}{\ol{Q}}
\nc{\Ro}{\ol{R}}
\nc{\Vu}{\ul{V}}
\nc{\Vo}{\ol{V}}
\nc{\RanQ}{\Delta Q}
\nc{\RanV}{\Delta V}
\nc{\clipQ}{\Delta \breve{Q}}
\nc{\frzQ}{\Delta \mathring{Q}}
\nc{\clipV}{\Delta \breve{V}}
\nc{\clipdelta}{\breve{\delta}}
\nc{\cliptheta}{\breve{\theta}}
\nc{\delmin}{\Delta_{{\rm min}}}
\nc{\delmins}[1]{\Delta_{{\rm min},{#1}}}
\nc{\gapfinal}[1]{\max \left\{ \frac{\frzQ_{{#1}}^{k^\st}(x,a)}{2H}, \frac{\delmin}{4H} \right\}}
\nc{\post}[2]{R({#1}; {#2})}
\nc{\posts}[3]{R_{#3}({#1}; {#2})}
\nc{\pstr}{{\rm po}}
\nc{\prior}{{\rm pr}}

\nc{\algnst}[1]{\begin{align*}#1\end{align*}}
\nc{\algn}[1]{\begin{align}#1\end{align}}
\nc{\matx}[1]{\left(\begin{matrix}#1\end{matrix}\right)}
\renewcommand{\^}[1]{^{\scriptscriptstyle#1}}

\nc{\nuu}{\nu}

\nc{\bel}[1]{\mathbf{b}({#1})}
\nc{\nbel}[1]{\bar{\mathbf{b}}({#1})}
\nc{\sbel}[2]{\mathbf{b}'_{#1}({#2})}
\nc{\nsbel}[2]{\bar{\mathbf{b}}'_{#1}({#2})}

\nc{\bone}{\mathbf{1}}

\nc{\MO}{\mathcal O}
\nc{\MU}{\mathcal{U}}
\nc{\ME}{\mathcal{E}}
\nc{\MN}{\mathcal{N}}
\nc{\MK}{\mathcal{K}}
\nc{\MM}{\mathcal{M}}
\nc{\ML}{\mathcal{L}}
\nc{\MS}{\mathcal{S}}
\nc{\MT}{\mathcal{T}}
\nc{\BF}{\mathbb F}
\nc{\BQ}{\mathbb Q}
\nc{\MX}{\mathcal{X}}
\nc{\MA}{\mathcal{A}}
\nc{\MD}{\mathcal{D}}
\nc{\MB}{\mathcal{B}}
\nc{\MZ}{\mathcal{Z}}
\nc{\MJ}{\mathcal{J}}
\nc{\MW}{\mathcal{W}}
\nc{\MR}{\mathcal{R}}
\nc{\MY}{\mathcal{Y}}
\nc{\BZ}{\mathbb Z}
\nc{\BN}{\mathbb N}
\nc{\ep}{\epsilon}
\nc{\vep}{\varepsilon}
\nc{\gapfn}[1]{\varepsilon_{#1}}
\nc{\ggapfn}[2]{\varphi_{#1}({#2})}
\nc{\epsahk}{\gapfn{0}}
\nc{\BH}{\mathbb H}
\nc{\BG}{\mathbb{G}}
\nc{\D}{\Delta}
\nc{\MF}{\mathcal{F}}
\nc{\One}[1]{\mathbbm{1}\left\{{#1}\right\}}
\nc{\bOne}{\mathbf{1}}
\nc{\Aopt}{\mathcal{A}^{\rm opt}}
\nc{\Amul}{\mathcal{A}^{\rm mul}}
\nc{\CM}{\mathscr{M}}
\nc{\CO}{\mathscr{O}}
\nc{\CR}{\mathsscr{R}}

\nc{\SP}{\mathsf P}
\nc{\SQ}{\mathsf Q}
\nc{\SC}{\mathscr{C}}
\nc{\SD}{\mathscr{D}}
\nc{\SE}{\mathscr{E}}
\nc{\SG}{\mathscr{G}}

\nc{\DO}{\accentset{\circ}{\D}}
\nc{\mf}{\mathfrak}
\nc{\mfp}{\mathfrak{p}}
\nc{\mfq}{\mf{q}}
\nc{\Sp}{\mbox{Spec}}
\nc{\Spm}{\mbox{Specm}}
\nc{\hookuparrow}{\mathrel{\rotatebox[origin=c]{90}{$\hookrightarrow$}}}
\nc{\hookdownarrow}{\mathrel{\rotatebox[origin=c]{-90}{$\hookrightarrow$}}}
\nc{\hra}{\hookrightarrow}
\nc{\tra}{\twoheadrightarrow}
\nc{\sgn}{{\rm sgn}}
\nc{\aut}{{\rm Aut}}
\nc{\Hom}{{\rm Hom}}
\nc{\img}{{\rm Im}}
\DMO{\id}{Id}
\DMO{\KL}{KL}
\nc{\kld}[2]{\KL({#1}||{#2})}
\nc{\ren}[3]{D_{#3}({#1}||{#2})}
\nc{\chisq}[2]{\chi^2({#1},{#2})}
\nc{\dvg}[2]{D({#1} \| {#2})}
\DMO{\BSS}{BSS}
\DMO{\BES}{BES}
\DMO{\BGS}{BGS}
\nc{\indep}{\perp}
\DMO{\sink}{sink}
\nc{\fp}[1]{\MP_1({#1})}
\nc{\BO}{\mathbb{O}}
\nc{\BT}{\mathbb{T}}

\nc{\RR}{\mathbb{R}}
\nc{\Gradient}{\nabla}
\nc{\norm}[1]{\left \lVert #1 \right \rVert}
\nc{\EE}{\mathbb{E}}

\DMO{\PR}{Pr}

\nc{\E}{\mathbb{E}}
\nc{\ra}{\rightarrow}

\nc{\opo}{\texttt{opo}}

\input{widebar}

\usepackage[suppress]{color-edits}
 \addauthor{df}{ForestGreen}
 \addauthor{dk}{magenta}
 \addauthor{jq}{yellow!60!black}
 \addauthor{ng}{purple}
 \draft{

 \usepackage[textsize=tiny]{todonotes}
 
 }
\usepackage[final]{showlabels}

\makeatletter
\let\OldStatex\Statex
\renewcommand{\Statex}[1][3]{%
  \setlength\@tempdima{\algorithmicindent}%
  \OldStatex\hskip\dimexpr#1\@tempdima\relax}
\makeatother

 \usepackage{accents}

 \addtocontents{toc}{\protect\setcounter{tocdepth}{0}}

\let\oldparagraph\paragraph
\renewcommand{\paragraph}[1]{\oldparagraph{#1.}}

\newcommand{\paragraphi}[1]{\par\noindent\emph{#1.}}

\title{Improved Guarantees for Interactive Decision Making\\with the
  Decision-Estimation Coefficient}
\title{Tight Guarantees for Interactive Decision Making\\with the
  Decision-Estimation Coefficient}
\author{%
  Dylan J. Foster\\{\small \texttt{dylanfoster@microsoft.com}} \and Noah Golowich\thanks{Supported by a Fannie \& John Hertz Foundation Fellowship and an NSF Graduate Fellowship.}\\{\small \texttt{nzg@mit.edu}}\\ \and Yanjun Han\\{\small \texttt{yjhan@mit.edu}}
}

\date{January 19, 2023}

\begin{document}
\maketitle
\begin{abstract}
A foundational problem in reinforcement learning and interactive
decision making is to understand what modeling assumptions lead to
sample-efficient learning guarantees, and what algorithm design
principles achieve optimal sample
complexity. Recently, \citet{foster2021statistical} introduced the
\CompText (\CompShort), a
measure of statistical complexity which leads to upper and lower bounds on the
optimal sample complexity for a general class of problems encompassing bandits and reinforcement learning with function approximation.
In
this paper, we introduce a new variant of the \CompShort, the
\emph{Constrained \CompText}, and use it to derive new lower bounds that
improve upon prior work on three fronts:
\begin{itemize}
        \item they hold in expectation, with no restrictions on the
          class of algorithms under consideration.
        \item they hold globally, and do not rely on the notion of
          \emph{localization} used by \citet{foster2021statistical}.
        \item most interestingly, they allow the \emph{reference
            model} with respect to which the
          \CompShort is defined to be \emph{improper}, establishing that
          improper reference models play a fundamental role.
        \end{itemize}
We provide upper bounds on regret that scale with the same
quantity, thereby closing all but one of the gaps between upper
and lower bounds in \citet{foster2021statistical}. Our results
apply to both the regret framework and PAC framework, and make use of
several new analysis and algorithm design techniques that we
anticipate will find broader use.

\end{abstract}

  \addtocontents{toc}{\protect\setcounter{tocdepth}{2}}
  {\hypersetup{hidelinks}
    \tableofcontents
  }

\section{Introduction}
\label{sec:intro}

Sample efficiency in reinforcement learning and decision
making is a fundamental challenge. State-of-the-art algorithms \citep{lillicrap2015continuous,mnih2015human,silver2016mastering} often require
millions of rounds of interaction to achieve human-level performance, which is
prohibitive in practice and constitutes a barrier to reliable
deployment. For continued progress on challenging real-world domains
where the agent must navigate high-dimensional state and observation spaces, it is critical to design
algorithms that can take advantage of users' domain knowledge
(via modeling and function approximation) to enable generalization and improved
sample efficiency. As such, a foundational question is to understand what
modeling assumptions lead to sample-efficient learning
guarantees, and what algorithms achieve optimal sample complexity.

The non-asymptotic theory of reinforcement learning is rich with sufficient
conditions under which sample-efficient learning is
possible \citep{dean2020sample,yang2019sample,jin2020provably,modi2020sample,ayoub2020model,krishnamurthy2016pac,du2019latent,li2009unifying,dong2019provably,zhou2021nearly}, as well as structural properties that aim to unify these
conditions
\citep{russo2013eluder,jiang2017contextual,sun2019model,wang2020provably,du2021bilinear,jin2021bellman},
but conditions that are \emph{necessary}
  have generally been elusive. Recently though, \citet{foster2021statistical} introduced the \CompText (\CompShort), a
unified notion of statistical complexity that leads to both upper and
\emph{lower} bounds on the optimal sample complexity in a general decision
making framework. The results of \citet{foster2021statistical} show that the \CompShort plays a role
for interactive decision making analogous to that of the VC dimension and its relatives in statistical learning, but leave room for tighter
quantitative guarantees. In this paper, we introduce a
new variant of the \CompShort, the \emph{Constrained \CompText}, and
use it to close several gaps between the upper and lower bounds in \citet{foster2021statistical}.

\subsection{Interactive Decision Making}

We consider the \emph{\Framework} (\FrameworkShort) framework of
\citet{foster2021statistical}, a general setting for
interactive decision making that encompasses bandits problems
(including structured and
contextual bandits) and reinforcement learning with function
approximation. 

The protocol consists of $T$ rounds. For each round $t=1,\ldots,T$:
  \begin{enumerate}
  \item The \learner selects a \emph{decision} $\act\ind{t}\in\Act$,
    where $\Act$ is the \emph{decision space}.
  \item The learner receives a reward $r\ind{t}\in\cR\subseteq\bbR$
    and observation $o\ind{t}\in\cO$ sampled via
    $(r\ind{t},o\ind{t})\sim{}\Mstar(\pi\ind{t})$, where
    $\Mstar:\Pi\to\Delta(\cR\times\cO)$ is the underlying \emph{model}.
    
  \end{enumerate}
  We refer to $\cR$ as the \emph{reward space} and $\cO$ as the
  \emph{observation space}.
    The model $\Mstar$, which we formalize as a conditional
    distribution, plays the role of the underlying
    environment, and is unknown to the learner. However, the learner
    is assumed to have access to a \emph{model class} $\cM\subset(\Pi\to\Delta(\cR\times\cO))$ that
    contains $\Mstar$.
    \begin{assumption}[Realizability]
  \label{ass:realizability}
  The learner has access to model class $\cM$ that contains the true model $\Mstar$.
\end{assumption}
The model class $\cM$ represents the learner's prior knowledge of the
underlying environment. For structured bandit problems, where models
  correspond to reward distributions, it encodes
structure in the reward landscape (smoothness, linearity, convexity,
etc.), and for reinforcement learning problems, where models correspond to
  Markov decision processes (MDPs), it typically encodes
structure in the transition probabilities or value functions. In more detail:
    \begin{itemize}
    \item \textbf{Bandits.} In bandit problems, $\cM$ is a reward
      distribution, $\act\ind{t}$ is
     referred to as an \emph{action} or \emph{arm} and $\Act$
    is referred to as the \emph{action space}; there are no
    observations beyond rewards ($\ObsSpace=\crl*{\emptyset}$). By
    choosing $\cM$ so that the mean reward function $\fm$ has
    appropriate structure, one can capture bandit problems with continuous or infinite action spaces and
structured rewards, including linear bandits
    \citep{dani2007price,abernethy2008competing,bubeck2012towards},
    bandit convex optimization
    \citep{kleinberg2004nearly,flaxman2005online,bubeck2017kernel,lattimore2020improved},
    and nonparametric bandits
    \citep{kleinberg2004nearly,bubeck2011x,magureanu2014lipschitz}.

  \item \textbf{Reinforcement learning.} In episodic reinforcement learning,
    each model $M\in\cM$ specifies a non-stationary horizon-$H$  Markov decision process
  $M=\crl*{\crl{\cS_h}_{h=1}^{H}, \cA, \crl{\Pm_h}_{h=1}^{H}, \crl{\Rm_h}_{h=1}^{H},
    d_1}$, where $\cS_h$ is the state space for layer $h$, $\cA$ is the action space,
  $\Pm_h:\cS_h\times\cA\to\Delta(\cS_{h+1})$ is the probability transition
  kernel for layer $h$, $\Rm_h:\cS_h\times\cA\to\Delta(\bbR)$ is
  the reward distribution for layer $h$, and $d_1\in\Delta(\cS_1)$ is the initial
  state distribution. Decisions 
  $\pi=\crl{\pi_h:\cS_h\to\Delta(\cA)}_{h=1}^{H}$ are \emph{policies} (mappings
    from states to actions). Given a policy $\pi$, an episode proceeds
    as follows (beginning from
  $s_1\sim{}d_1$). For $h=1,\ldots,H$:
  \begin{itemize}
  \item $a_h\sim\pi_h(s_h)$.
  \item $r_h\sim{}R_h\sups{M}(s_h,a_h)$ and $s_{h+1}\sim{}P_h\sups{M}(\cdot\mid{}s_h,a_h)$.
  \end{itemize}
  This process leads to $(r,o)\sim{}M(\pi)$, where
  $r=\sum_{h=1}^{H}r_h$ is the
    cumulative reward in the episode, and the observation $o=(s_1,a_1,r_1),\ldots(s_H,a_H,r_H)$
    is the episode's trajectory (sequence of observed states, actions,
    and rewards). By choosing $\cM$ appropriately, one can encompass standard classes
  of MDPs (e.g., tabular MDPs or linear systems) \citet{dean2020sample,yang2019sample,jin2020provably,modi2020sample,ayoub2020model,krishnamurthy2016pac,du2019latent,li2009unifying,dong2019provably}, as well as more
  general structural conditions \citep{jiang2017contextual,sun2019model,wang2020provably,du2021bilinear,jin2021bellman}.

  \end{itemize}

For a model $M\in\cM$, $\Empi[M]\brk*{\cdot}$ denotes expectation
  under the process $(r,\obs)\sim{}M(\pi)$,
  $\fm(\pi)\ldef{}\Empi[M]\brk*{r}$ denotes the mean reward function,
  and $\pim\ldef{}\argmax_{\act\in\Act}\fm(\act)$ denotes the optimal decision.

We consider two types of guarantees for interactive decision making: regret
  guarantees and PAC
  (Probably Approximately Correct) guarantees. For regret guarantees,
  we are concerned with the \emph{cumulative suboptimality} given by
  \begin{equation}
    \label{eq:regret}
    \RegDM\ldef\sum_{t=1}^{T}\En_{\pi\ind{t}\sim{}p\ind{t}}\brk*{\fmstar(\pimstar)-\fmstar(\pi\ind{t})},
  \end{equation}
  where $p\ind{t}$ is the learner's randomization distribution
  (conditional distribution over $\pi\ind{t}$) for
  round $t$. For PAC guarantees, we are only concerned with \emph{final suboptimality}. After rounds $t=1,\ldots,T$
  complete, the learner can use all of the data collected to select a
  final decision $\pihat$ (which may be randomized according to a distribution $p \in \Delta(\Pi)$), and we measure performance via
  \begin{equation}
    \label{eq:pac}
    \RiskDM\ldef{}\En_{\pihat\sim{}p}\brk*{\fmstar(\pimstar) - \fmstar(\pihat)}.
  \end{equation}

  We refer to \citet{foster2021statistical,foster2022complexity} for
  further background and measure-theoretic details, as well as
  additional examples and discussion.

  \subsection{Background: Decision-Estimation Coefficient}

  To motivate our results, let us first recall the \CompText of
  \citet{foster2021statistical}; for this discussion, we restrict our attention to regret
  guarantees. Define the squared Hellinger distance for probability measures $\bbP$ and $\bbQ$ with a common
dominating measure $\nu$ by
\begin{equation}
  \label{eq:hellinger}
  \Dhels{\bbP}{\bbQ}=\int\prn[\bigg]{\sqrt{\frac{d\bbP}{d\nu}}-\sqrt{\frac{d\bbQ}{d\nu}}}^{2}d\nu.
\end{equation}

For a model class $\cM$, reference model $\Mbar:\Pi\to\Delta(\cR\times\cO)$, and scale
parameter $\gamma>0$, the \CompText is given by
\begin{align}
  \label{eq:dec_offset}
  \decoreg(\cM,\Mbar)=
  \inf_{p\in\Delta(\Pi)}\sup_{M\in\cM}\En_{\pi\sim{}p}\brk*{
  \fm(\pim) - \fm(\pi)
  - \gamma\cdot\Dhels{M(\pi)}{\Mbar(\pi)}
  }.
\end{align}
Here, we depart slightly from the notation in
\citet{foster2021statistical} and append the prefix $\textsf{r-}$ (indicating ``regret'') and the
superscript ``o'' (indicating ``offset'') to distinguish from other
variants that will be introduced shortly.

\citet{foster2021statistical} (see also
\citet{foster2022complexity}) use the \CompShort to provide upper and
lower bounds on the optimal regret for interactive decision
making:
\begin{itemize}
  \item On the upper bound side, there exists an algorithm
    (\emph{Estimation-to-Decisions}) that obtains
    \begin{equation}\En\brk*{\RegDM}\approxleq{}\inf_{\gamma}\max\crl[\bigg]{\sup_{\Mbar\in\conv(\cM)}\decoreg(\cM,\Mbar)\cdot{}T+\gamma\cdot\EstHel}.\label{eq:upper_old_simple}
    \end{equation}
    Here, the term $\EstHel$ represents the
sample complexity required to perform statistical estimation with the
class, and has $\EstHel\leq\log\abs{\cM}$ for the special case of finite classes.
  \item On the lower bound side, \emph{any algorithm} must have
    \begin{equation}\En\brk*{\RegDM}\approxgeq\inf_{\gamma}\max\crl[\bigg]{\sup_{\Mbar\in\cM}\decoreg(\cM',\Mbar)\cdot{}T+\gamma},
      \label{eq:lower_old_simple}
      \end{equation}
  where $\cM'$ is a certain ``localized''
  subclass of $\cM$ (roughly speaking, models for which $\nrm{\fm-\fmbar}_{\infty}\approxleq\frac{\gamma}{T}$).
\end{itemize}
While the results of \citet{foster2021statistical} show that the \CompShort characterizes
learnability for various classes of models (for instance, convex
classes with low ``estimation complexity''),
the quantitative rates leave room for improvement. The aim of this paper is to address the following gaps:
\begin{itemize}
\item To provide lower bounds that hold in expectation (as opposed to
  low probability) the lower bound \pref{eq:lower_old_simple}
  restricts to a localized subclass of $\cM'\subseteq\cM$, which may have lower complexity than the original
  class. This yields reasonable results for many special
  cases, but can be arbitrarily loose in general.
    \item The lower bound \pref{eq:lower_old_simple} restricts to reference models
  $\Mbar\in\cM$, yet the upper bound \pref{eq:upper_old_simple}
  maximizes over reference
  models $\Mbar\in\conv(\cM)$, potentially leading to larger regret. For
  example, Proposition 3.1 of \citet{foster2021statistical} gives a
  model class $\cM$ for which this distinction leads to an arbitrarily
  large gap between the upper and lower bounds.
\end{itemize}
A third gap, which we do not address, is the presence of the
estimation complexity $\EstHel$ in \eqref{eq:upper_old_simple}, which
is not present in the lower bound. See \citet{foster2021statistical,foster2022note} for
further discussion around this issue.

    \subsection{Constrained Decision-Estimation Coefficient and
      Overview of Results}

    To address the issues in the prequel, we introduce a new
    complexity measure, the
    \emph{Constrained \CompText}. For a reference model
    $\Mbar:\Pi\to\Delta(\cR\times\cO)$, define
    , defined via
\begin{align}
  \label{eq:dec_constrained}
  \deccreg(\cM,\Mbar)=
  \inf_{p\in\Delta(\Pi)}\sup_{M\in\cM}\crl*{\En_{\pi\sim{}p}\brk*{
  \fm(\pim) - \fm(\pi)}
  \mid\En_{\pi\sim{}p}\brk*{\Dhels{M(\pi)}{\Mbar(\pi)}}\leq\veps^2
  },
\end{align}
with the convention that the value above is zero whenever the set
\begin{align}
  \label{eq:hellinger_ball}
\cH_{p,\veps}(\Mbar)\ldef{}\crl*{M\in\cM\mid{}\En_{\pi\sim{}p}\brk*{\Dhels{M(\pi)}{\Mbar(\pi)}}\leq\veps^2}
\end{align}
is empty; the superscript ``c'' indicates ``constrained'', and
distinguishes from the offset counterpart. Our main quantity of
interest is
\begin{equation}
  \label{eq:dec_constrained_max}
  \deccreg(\cM) = \sup_{\Mbar\in\conv(\cM)}\deccreg(\cM\cup\crl{\Mbar},\Mbar),
\end{equation}%
where $\conv(\cM)$ denotes the convex hull of the class $\cM$.

The constrained and offset \CompShort differ only in how
the quantity $\En_{\pi\sim{}p}\brk*{\Dhels{M(\pi)}{\Mbar(\pi)}}$,
representing information gain, is incorporated. The constrained
\CompShort places a hard constraint on the information gain, while the
offset \CompShort treats the information
gain as a penalty for the max-player, amounting to a soft
constraint. At first glance, one might expect these quantities to be
equivalent via Lagrangian duality.
It is indeed the case that the offset \CompShort upper bounds the
constrained \CompShort:
\[
\deccreg(\cM,\Mbar)\leq\inf_{\gamma}\crl{\decoreg(\cM,\Mbar)\vee{}0+\gamma\veps^2},\quad\forall{}\Mbar,
\]
but strong duality fails, and the complexity measures are not equivalent
in general; detailed discussion is given in
\pref{sec:properties}. 

\paragraph{Main results for regret framework}
The constrained DEC
possesses a number of useful properties not shared by the offset
DEC, including \emph{implicitly} enforcing a form of localization in
an adaptive fashion (cf. \pref{sec:properties}). By leveraging these
properties, we provide improved lower and upper bounds that close all but one of the gaps between the
bounds in \citet{foster2021statistical}.
Our main result is as follows.
\newtheorem*{thm:informal1}{Theorem (informal)}%
\begin{thm:informal1}
  For any model class $\cM$:
  \begin{itemize}
  \item Lower bound: For a worst-case model in $\cM$, any algorithm must have
    \[
      \En\brk*{\RegDM} \geq \bigomt(1)\cdot\deccreg[\vepsl](\cM)\cdot{}T
    \]
    for $\vepsl=\wt{\Theta}\prn[\big]{\sqrt{1/T}}$,
  \item Upper bound: There exists an algorithm
    (Estimation-to-Decisions$^{+}$\hspace{-1pt}) that achieves
    \[
      \En\brk*{\RegDM} \leq \bigoht(1)\cdot\deccreg[\vepsu](\cM)\cdot{}T
    \]
    for $\vepsu=\wt{\Theta}\prn[\big]{\sqrt{\EstHel/T}} \leq \wt{\Theta}\prn[\big]{\sqrt{\log\abs{\cM}/T}}$.
  \end{itemize}

\end{thm:informal1}
This lower and upper bound are always tighter than the respective
bounds in prior work, and the lower bound in particular improves upon 
\citet{foster2021statistical} on two fronts:
\begin{itemize}
        \item It holds globally, and removes the notion of
        \emph{localization} used by
        \citet{foster2021statistical}. In addition, it holds in expectation, with no restriction on the
        class of algorithms under consideration.
      \item More interestingly, it scales with
        $\deccreg(\cM)=\sup_{\Mbar\in\conv(\cM)}\deccreg(\cM\cup\crl{\Mbar},\Mbar)$
        as opposed to, say, $\sup_{\Mbar\in\cM}\deccreg(\cM,\Mbar)$, showing (in
        tandem with the upper bound) that
        \emph{improper} reference models $\Mbar\in\conv(\cM)$ play a
        fundamental role, and are not simply an artifact of the upper
        bounds in \citet{foster2021statistical}.
      \end{itemize}
      Together, our upper and lower bounds form an important
      step toward building a simple, user-friendly, and unified theory for interactive
      decision making based on the \CompText. The only gap our results
      leave open is the role of the estimation error
      $\EstHel$, a deep issue that necessitates future research.\loose

On the technical side, our lower bounds make use of several
new analysis techniques and structural results that depart sharply
from previous approaches
\citep{foster2021statistical,foster2022complexity}. Our upper bounds
build upon the \etdtext paradigm introduced in
\citet{foster2021statistical}, but the design and analysis are
substantially more sophisticated, as certain properties of the constrained
\CompShort that aid in proving lower bounds lead to non-trivial
challenges in deriving upper bounds. We anticipate that our techniques will
  find broader use, and to this end we provide user-friendly
  tools for working with the constrained \CompShort in \pref{sec:properties}.

  \begin{remark}
    At this point, the reader may wonder why the definition
    \pref{eq:dec_constrained_max} incorporates the set
    $\cM\cup\crl{\Mbar}$, where $\Mbar$ may lie outside the class
    $\cM$. We show in \pref{sec:properties} that this modification,
    despite not being required for the offset \CompShort, plays a
    central role for the constrained \CompShort.
  \end{remark}

  \paragraph{Main results for PAC framework}
Moving from the regret framework to PAC, we work with the following
PAC counterparts to the
offset and constrained DEC:
\begin{align}
  \label{eq:dec_offset_pac}
  \decopac(\cM,\Mbar)=
  \inf_{p,q\in\Delta(\Pi)}\sup_{M\in\cM}\crl*{\En_{\pi\sim{}p}\brk*{
  \fm(\pim) - \fm(\pi)}
  - \gamma\cdot\En_{\pi\sim{}q}\brk*{\Dhels{M(\pi)}{\Mbar(\pi)}}
  }
\end{align}
and
\begin{align}
  \label{eq:dec_constrained_pac}
  \deccpac(\cM,\Mbar)=
  \inf_{p,q\in\Delta(\Pi)}\sup_{M\in\cM}\crl*{\En_{\pi\sim{}p}\brk*{
  \fm(\pim) - \fm(\pi)}
  \mid\En_{\pi\sim{}q}\brk*{\Dhels{M(\pi)}{\Mbar(\pi)}}\leq\veps^2
  },
\end{align}
with the convention that the value in \eqref{eq:dec_constrained_pac} is
zero when $\cH_{q,\veps}(\Mbar)=\emptyset$. These definitions parallel \pref{eq:dec_offset} and
\pref{eq:dec_constrained}, but allow the min-player to select a
separate \emph{exploration distribution} $q$ under which the
information gain (Hellinger distance) is evaluated, and \emph{exploitation distribution} $p$ under
which regret is evaluated. Since one can always choose $p=q$, it is immediate that
$\decopac(\cM,\Mbar)\leq\decoreg(\cM,\Mbar)$, and likewise
$\deccpac(\cM,\Mbar)\leq\deccreg(\cM,\Mbar)$. Defining\footnote{Compared
  the definition of the constrained \CompShort for regret, the definition
  $\deccpac(\cM)=\sup_{\Mbar\in\conv(\cM)}\deccpac(\cM,\Mbar)$ does
  not apply the \CompShort to the class $\cMu$. See
  \pref{sec:reference_regret} for a detailed explanation.}
\begin{equation}
\deccpac(\cM)=\sup_{\Mbar\in\conv(\cM)}\deccpac(\cM,\Mbar),\label{eq:dec_pac_max}
\end{equation}
  we show
that the PAC \CompShort leads to the following lower and upper bounds
on PAC sample complexity.
\newtheorem*{thm:informal2}{Theorem (informal)}%
\begin{thm:informal2}
  For any model class $\cM$:
  \begin{itemize}
  \item Lower bound: For a worst-case model in $\cM$, any PAC algorithm with $T$ rounds of interaction must have
    \[
      \En\brk*{\RiskDM} \geq \bigomt(1)\cdot{}\deccpac[\vepsl](\cM)
    \]
    for $\vepsl=\bigthetat\prn[\big]{\sqrt{1/T}}$.
  \item Upper bound: There exists an algorithm
    (Estimation-to-Decisions$^{+}$\hspace{-1pt}) that achieves
    \[
      \En\brk*{\RiskDM} \leq \bigoht(1)\cdot{}\deccpac[\vepsu](\cM)
    \]
    for $\vepsu=\wt{\Theta}\prn[\big]{\sqrt{\EstHel/T}} \leq \wt{\Theta}\prn[\big]{\sqrt{\log\abs{\cM}/T}}$.
  \end{itemize}

\end{thm:informal2}

\subsection{Preliminaries}
\paragraph{Interactive decision making}
We assume throughout the paper that $\cR=\brk*{0,1}$ unless otherwise stated. We let
$\cMall=\crl{M:\Pi\to\Delta(\brk{0,1}\times\cO)}$ denote the space of
all possible models with rewards in $\brk{0,1}$. We adopt the shorthand
$\gm(\pi)=\fm(\pim) - \fm(\pi)$.

We adopt the same formalism for probability spaces as in
\citet{foster2021statistical,foster2022complexity}. Decisions are
associated with a measurable space $(\Act,\Asig)$, rewards are
associated with the space $(\Rspace,\Rsig)$, and observations are associated with the space $(\Obs,\Osig)$.  The history up to time $t$ is denoted by  $\hist\ind{t}
=(\act\ind{1},r\ind{1},\obs\ind{1}),\ldots,(\act\ind{t},r\ind{t},\obs\ind{t})$. We define
\[
\Hspace\ind{t}=\prod_{i=1}^{t}(\Act\times\Rspace\times\Obs),\mathand\Hsig\ind{t}=\bigotimes_{i=1}^{t}(\Asig\otimes{}\Rsig\otimes\Osig)
\]
so that $\hist\ind{t}$ is associated with the space
$(\Hspace\ind{t},\Hsig\ind{t})$. For a class $\cM$, we define
\[\abscont\ldef\sup_{M,M'\in\cM}\sup_{\act\in\Act}\sup_{A\in{}\Rsig\otimes\Osig}\crl[\big]{\tfrac{M(A\mid\act)}{M'(A\mid{}\act)}}\vee{}e;\]
finiteness of this quantity is not necessary for our any of our
results, but improves our lower bounds (\pref{thm:pac_lower,thm:regret_lower})
by a $\log(T)$ factor.

\paragraph{Divergences}
  For probability distributions $\bbP$ and $\bbQ$ over a measurable space
  $(\Omega,\filt)$ with a common dominating measure, we define the total variation distance as
  \[
    \Dtv{\bbP}{\bbQ}=\sup_{A\in\filt}\abs{\bbP(A)-\bbQ(A)}
    = \frac{1}{2}\int_\Omega\abs{d\bbP-d\bbQ}.
  \]
\subsection{Organization}
\pref{sec:lower} presents our main lower bounds for the regret and PAC
frameworks, and \pref{sec:upper} presents complementary upper
bounds. \pref{sec:properties} establishes structural results
concerning the constrained \CompShort and offset \CompShort, and \pref{sec:related} provides a detailed comparison to
bounds from prior work. We close with additional examples
(\pref{sec:examples}). Unless otherwise stated, proofs are deferred to
the appendix.

\paragraph{Additional notation} For an integer $n\in\bbN$, we let $[n]$ denote the set
  $\{1,\dots,n\}$. For a set $\cZ$, we let
        $\Delta(\cZ)$ denote the set of all probability distributions
        over $\cZ$, and let $\cZ^{\c}$ denote the complement. We adopt standard
        big-oh notation, and write $f=\bigoht(g)$ to denote that $f =
        \bigoh(g\cdot{}\max\crl*{1,\mathrm{polylog}(g)})$. We use $\approxleq$ only in informal statements to emphasize the most notable elements of an inequality.

\section{Lower Bounds}
\label{sec:lower}

In this section, we provide lower bounds based on the constrained
\CompText for the PAC and regret frameworks, then prove the lower bound for PAC,
highlighting the most salient ideas behind the result.

\subsection{Main Results}

We provide minimax lower bounds for interactive decision making, which show for any model class $\cM$
and horizon $T\in\bbN$,
the worst-case regret (resp. PAC sample complexity) for any algorithm is lower
bounded by the constrained DEC for an appropriate choice of the radius parameter
$\veps>0$. Our lower bounds for PAC and regret take a nearly identical form, and differ only in the use of
$\deccpac(\cM)$ versus $\deccreg(\cM)$. To state the results, define $\Ct\ldef\log(T\wedge{}\abscont)$.

\begin{theorem}[Main Lower Bound: PAC]
  \label{thm:pac_lower}
  Let $\vepslowerT \ldef c\cdot\frac{1}{\sqrt{T\Ct}}$, where $c>0$ is
  a sufficiently small numerical constant and $C\ldef{}48\sqrt{2}$. For all $T\in\bbN$ such
  that the condition
  \begin{align}
    \deccpac[\vepslowerT](\MM)\geq{}C\cdot\vepslowerT\label{eq:pacdec-lb}
  \end{align}
  is satisfied, it holds that for any PAC algorithm, there exists a model
  in $\cM$ such that
    \begin{align}
\E\left[\RiskDM\right] \geq \bigom(1) \cdot
      \sup_{\Mbar\in\cMall}\deccpac[\vepslowerT](\cMu,\Mbar)
      \geq \bigom(1) \cdot \deccpac[\vepslowerT](\MM).
  \end{align}
\end{theorem}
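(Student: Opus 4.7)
The plan is a two-model indistinguishability argument: the minimax structure of the constrained DEC supplies a ``hard'' alternative $\Mtil$ that is close in Hellinger distance to a reference $\Mbar$, and a chain rule for the $T$-round interaction law then shows that no PAC algorithm can simultaneously achieve small risk against $\Mbar$ and against $\Mtil$. I would fix an arbitrary $\Mbar \in \cMall$ (supremizing at the end), simulate the algorithm as if $\Mstar = \Mbar$, and for the per-round decision kernel $p\ind{t}$ and final output kernel $p_{\mathrm{out}}$, form the averaged distributions
\[
\bar p \ldef \En\sups{\Mbar,\mathrm{Alg}}\brk*{p_{\mathrm{out}}(\hist\ind{T})}, \qquad
\bar q \ldef \tfrac{1}{T}\sum_{t=1}^T \En\sups{\Mbar,\mathrm{Alg}}\brk*{p\ind{t}}.
\]
Plugging $(p,q)=(\bar p,\bar q)$ into the infimum in \eqref{eq:dec_constrained_pac} then yields a hard model $\Mtil\in\cMu$ (up to an arbitrarily small approximation slack for approximate maximizers) with
\[
\En_{\pi \sim \bar p}\brk*{\fmtil(\pimtil) - \fmtil(\pi)} \geq \deccpac[\vepslowerT](\cMu, \Mbar),
\qquad
\En_{\pi \sim \bar q}\brk*{\Dhels{\Mtil(\pi)}{\Mbar(\pi)}} \leq \vepslowerT^2.
\]

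Next I would invoke a Hellinger chain rule for interactive protocols, which lifts the per-round bound on $\bar q$ to a bound on the full interaction laws:
\[
\Dhels{\bbPs{\Mbar,\mathrm{Alg}}}{\bbPs{\Mtil,\mathrm{Alg}}} \lesssim \Ct\cdot T\cdot \En_{\pi \sim \bar q}\brk*{\Dhels{\Mtil(\pi)}{\Mbar(\pi)}} \leq \Ct\cdot T\cdot \vepslowerT^2 \lesssim c^2,
\]
where the $\Ct = \log(T \wedge V(\cM))$ factor absorbs unbounded likelihood ratios via truncation (using $V(\cM)$ when finite, or a stopping-time argument at scale $T$). Passing through the data-processing inequality and a change-of-measure inequality tailored to the non-negative $[0,1]$-valued test function $\fmtil(\pimtil) - \fmtil(\pihat)$ then gives
\[
\En\sups{\Mtil,\mathrm{Alg}}\brk*{\fmtil(\pimtil) - \fmtil(\pihat)} \geq \En_{\pi \sim \bar p}\brk*{\fmtil(\pimtil) - \fmtil(\pi)} - \text{slack},
\]
with the slack controllable at the scale on which the hypothesis \eqref{eq:pacdec-lb} is stated. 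Combining with the DEC lower bound on the first term and supremizing over $\Mbar \in \cMall$ yields the statement, and the second inequality in the conclusion follows from $\conv(\cM)\subseteq\cMall$ together with monotonicity of the DEC in its class argument.

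The main technical obstacle I foresee is making the transfer step quantitatively sharp enough for the hypothesis \eqref{eq:pacdec-lb} to suffice: a crude $D_{\mathsf{TV}} \leq \sqrt{\Dhelshort}$ bound only yields a slack of constant order, which would force the much stronger requirement $\deccpac = \Omega(1)$. Closing the gap requires a refined one-sided change-of-measure inequality exploiting both the non-negativity of the regret and the DEC's certification that $\bar p$ places nontrivial mass on the bad region $\{\pi : \fmtil(\pimtil) - \fmtil(\pi) \gtrsim \deccpac[\vepslowerT](\cMu,\Mbar)\}$; this is where the decoupled PAC structure of the constrained DEC --- separate exploration $q$ and exploitation $p$ --- becomes essential, since a single distribution would conflate the constraint and the regret certificate. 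A secondary, subtler issue is the boundary case $\Mtil = \Mbar \in \cMu\setminus\cM$: because the sup in $\deccpac[\vepslowerT](\cMu, \Mbar)$ ranges over $\cMu$, the witness might lie outside $\cM$, and a convex-hull/Jensen argument --- leveraging $\Mbar \in \conv(\cM)$ in the relevant regime --- is then needed to descend to a genuine element of $\cM$ inheriting comparable regret against $\bar p$.
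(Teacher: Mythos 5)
Your opening moves --- simulate the algorithm under an arbitrary reference $\Mbar\in\cMall$, extract the averaged distributions $(\bar p, \bar q)$, pick an adversarial alternative via the DEC saddle point, and transfer through the Hellinger chain rule to a total-variation bound --- match the paper's. But the obstacle you flag at the end of your proposal is the crux, and your proposed fix does not exist. Having a single $\Mtil$ with $\E_{\pi\sim\bar p}[g\sups{\Mtil}(\pi)] \geq \delta$ and $\Dtv{p\subs{\Mtil}}{\bar p} \leq 1/10$ gives nothing when $\delta \ll 1$: since $g\sups{\Mtil}\in[0,1]$ but need not be $O(\delta)$ pointwise, Markov only yields $\bar p\bigl(\crl{g\sups{\Mtil} \geq c_0\delta}\bigr) \gtrsim \delta$, and a total-variation perturbation of order $1/10$ can entirely erase a set of probability $\Theta(\delta)$. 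No ``refined one-sided change-of-measure inequality'' can rescue this, because the obstruction is genuine --- with a single alternative the algorithm could place essentially all of its output mass outside the bad region.

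The paper's fix is an iterated two-model construction, not a sharper inequality. It picks $M_1\in\cM$ close to $\Mbar$ under both $\qmbar$ and $\pmbar$, defines $\MA_1 = \crl{\pi : g\sups{M_1}(\pi) \geq c_0\delta}$, and then (when $\pmbar(\MA_1^\c) \geq 1/2$) re-runs the adversarial selection against the \emph{conditional} distribution $p' = \pmbar(\cdot\mid\MA_1^\c)$ to obtain a second model $M_2\in\cM$. The crucial step is establishing the separation $f\sups{M_2}(\pi\subs{M_2}) - f\sups{M_1}(\pi\subs{M_1}) \gtrsim \delta$ in \eqref{eq:pac_lb_gap}, which uses $\abs{f\sups{M_1}-f\sups{M_2}} \leq 2\veps$ on $\MA_1^\c$ (via the $[0,1]$ reward bound); this separation forces $\pmbar(\MA_1 \cup \ME\sups{M_2}) \geq 1/2$, so one of the two bad events has $\pmbar$-probability at least $1/4$ --- a constant, not $\Theta(\delta)$. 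The decoupled $(p,q)$ structure is indeed essential, but for a reason opposite to the one you suggest: conditioning on $\MA_1^\c$ alters only the exploitation distribution, so the \emph{same} exploration constraint $\E_{\pi\sim\qmbar}\bigl[\Dhels{M_2(\pi)}{\Mbar(\pi)}\bigr]\leq\veps^2$ keeps the interaction laws close. Your secondary concern about the witness landing in $\cMu\setminus\cM$ does not arise, since both $M_1,M_2$ are chosen from $\cM$; the conclusion in terms of $\deccpac(\cMu,\Mbar)$ then follows from \pref{prop:pac_union}. Your proposed route cannot be completed as written.
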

\begin{theorem}[Main Lower Bound: Regret]
  \label{thm:regret_lower}
There exist universal constants $C, C' > 0$ and $c, c' > 0$ such that the following holds.  Let $\vepslowerT \ldef c\cdot\frac{1}{\sqrt{T\Ct}}$. %
For all $T\in\bbN$ such
that the condition
    \begin{align}
\label{eq:regret_lower_condition}
\deccreg[\vepslowerT](\MM)\geq{}C \cdot \vepslowerT
  \end{align}
  is satisfied, it holds that for any regret minimization algorithm, there exists a model
  in $\cM$ such that
  
    \begin{align}
    \label{eq:regret_lower}
      \En\brk*{\RegDM}
      &\geq{}
        c'\cdot\sup_{\Mbar\in\cMall}\deccreg[\vepslowerT](\cMu,\Mbar)\cdot{}\frac{T}{\log
        T} - C' \cdot \frac{\sqrt{T}}{\log(T)} \\
      &\geq{}
        c' \cdot\deccreg[\vepslowerT](\cM)\cdot{}\frac{T}{\log
        T}
        - C' \cdot \frac{ \sqrt{T}}{\log(T)}.\nonumber
    \end{align}
  \end{theorem}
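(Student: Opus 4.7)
The plan is to adapt the PAC lower bound (Theorem \ref{thm:pac_lower}) to the regret framework, with the crucial modification that a single distribution $p$ must simultaneously control both exploration (through Hellinger information gain) and exploitation (through instantaneous regret). I would fix an arbitrary algorithm $\Alg$ and a reference model $\Mbar \in \cMall$ essentially attaining the supremum in \eqref{eq:regret_lower}. Running $\Alg$ against $\Mbar$ induces time-varying play distributions $p\ind{1}, \ldots, p\ind{T}$, and I write $\bar{p} = \frac{1}{T}\sum_{t=1}^T p\ind{t}$ for the time-averaged play. My goal is to exhibit a model $M^\star \in \cM$ on which $\Alg$ suffers expected cumulative regret of the stated magnitude, via the averaged play against $\Mbar$.

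The first step is a minimax-style reformulation of $\deccreg[\vepslowerT](\cMu, \Mbar)$ using the structural results from Section \ref{sec:properties}: this produces a distribution $\nu \in \Delta(\cMu)$ of \emph{hard} models such that for the averaged play $\bar{p}$, some $M^\star$ in the support of $\nu$ simultaneously satisfies $\En_{\pi \sim \bar{p}}\brk{g^{M^\star}(\pi)} \gtrsim \deccreg[\vepslowerT](\cMu, \Mbar)$ and $\En_{\pi \sim \bar{p}}\brk{\Dhels{M^\star(\pi)}{\Mbar(\pi)}} \lesssim \vepslowerT^2$. Next, a Hellinger chain rule argument parallel to that used in the proof of Theorem \ref{thm:pac_lower} upgrades this per-round closeness into global information-theoretic indistinguishability: concretely, $\Dkl{\bbP\sups{M^\star,\Alg}}{\bbP\sups{\Mbar,\Alg}} \lesssim T\,\vepslowerT^2 \cdot \log\abscont \lesssim 1$, since $\vepslowerT^2 \asymp 1/(T\Ct)$ with $\Ct = \log(T \wedge \abscont)$. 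Via Pinsker, this implies the time-averaged play of $\Alg$ under $M^\star$ is close in total variation to $\bar{p}$, so the per-round expected regret of $\Alg$ on $M^\star$ remains at least $c \cdot \deccreg[\vepslowerT](\cMu, \Mbar)/\log T$. Summing over $T$ rounds yields the main $T/\log T$ term in \eqref{eq:regret_lower}, while the additive $\sqrt{T}/\log T$ term absorbs the slack from Pinsker, the bias between $\bar p$ and the play distribution under $M^\star$, and the boundary condition \eqref{eq:regret_lower_condition}.

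The main obstacle is the first step: unlike the offset \CompShort, the constrained \CompShort does not admit Lagrangian duality, so a generic Sion-type minimax theorem does not directly yield the reformulation. A bespoke argument exploiting the joint convexity of the squared Hellinger distance is needed, and the handling of \emph{improper} reference models $\Mbar \in \cMall \setminus \cM$ is essential --- appending $\Mbar$ to the model class (forming $\cMu$) is precisely what enables the interchange, which is why the bound must involve $\deccreg[\vepslowerT](\cMu,\Mbar)$ rather than $\deccreg[\vepslowerT](\cM,\Mbar)$. A secondary subtlety is the calibration $\vepslowerT \asymp 1/\sqrt{T\Ct}$: the logarithmic factor in the Hellinger-to-KL chain rule (which depends on $\abscont$) is what forces $\vepslowerT$ to shrink this way, and it is also the source of the $\log T$ loss relative to a naïve reduction from the PAC bound to regret via random time sampling.
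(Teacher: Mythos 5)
Your proposal does not follow the paper's route and, more importantly, has a genuine gap at its core: the "minimax-style reformulation" you flag as the main obstacle is not merely technically delicate, it is precisely the step that fails for the constrained \CompShort. The paper is explicit that strong Lagrangian duality fails for $\deccreg$ (cf.\ the discussion after \eqref{eq:dec_constrained_max} and \pref{prop:constrained_offset_counterexample1}), and the structural results in \pref{sec:properties} do not supply a distribution $\nu \in \Delta(\cMu)$ with the joint regret/information-gain property you assert. Nothing in your sketch resolves this; you assert a bespoke convexity argument exists without constructing it.

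There is a second, more concrete obstruction: you propose to run $\Alg$ against the improper reference model $\Mbar \in \cMall \setminus \cM$ and then extract hard models from its averaged play $\bar p$. But when $\Mbar \notin \cM$ there is \emph{no a priori control on the algorithm's regret under $\Mbar$} --- the algorithm's guarantee only applies to models in $\cM$. This is exactly the difficulty the paper points out in its discussion of proof techniques ("unless the class $\cM$ is convex, there is no reason why an algorithm with low risk for models in $\cM$ should have low risk for improper mixtures $\Mbar \in \co(\cM)$"). For PAC the paper circumvents this via iterated conditioning (choosing $M_1$ from $\Mbar$'s play, then $M_2$ conditioned on the near-optimal event for $M_1$), but the paper also explicitly notes this scheme breaks for regret because there is a single distribution coupling exploration and exploitation. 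The paper's actual solution is a two-part argument: (i) prove a \emph{proper} lower bound $\En[\RegDM] \gtrsim T \cdot \sup_{\Mbar \in \cM} \deccreg(\cM, \Mbar)$ by a direct two-point adversarial argument against the algorithm run on $\Mbar \in \cM$ (\pref{lem:mbar-in-class}), and then (ii) lift to improper $\Mbar$ via an \emph{algorithmic reduction} (\pref{lem:add-mbar}): any algorithm with regret $R$ on $\cM$ can be wrapped --- by restarting whenever cumulative reward drifts too far below $\fmbar(\pimbar)$, with at most $\lceil \log T\rceil$ restarts before switching permanently to $\pimbar$ --- to get regret $O(R \log T)$ on $\cM \cup \{\Mbar\}$. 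Applying part (i) to $\cM \cup \{\Mbar\}$ and inverting the reduction yields the claimed bound.

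Finally, you have misattributed the $\log T$ loss. It does not come from the Hellinger-to-KL chain rule: the $\Ct = \log(T \wedge \abscont)$ factor is absorbed into the definition of $\vepslowerT$ in both the PAC and regret statements, and the PAC lower bound (\pref{thm:pac_lower}) has no analogous multiplicative $\log T$ penalty. The $T/\log T$ (rather than $T$) in \eqref{eq:regret_lower} is an artifact of the $\lceil \log T \rceil$ restarts in the reduction of \pref{lem:add-mbar}. Your proposal, as stated, would not produce this additive structure and has no mechanism to deliver the final bound.
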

  
\pref{thm:pac_lower,thm:regret_lower} show that the constrained
\CompShort is a fundamental limit for interactive decision making. Importantly, these lower bounds remove the notion of
localization required by prior work, and show that the
\CompShort remains a lower bound even if one allows for improper reference
models $\Mbar\in\conv(\cM)$; as a result, they are always tighter than
those found in \citet{foster2021statistical,foster2022complexity}. We
will show in a moment (\pref{sec:upper}) that our lower bounds can achieved
algorithmically, up to a difference in radius that depends on the
estimation capacity for $\cM$ ($\sqrt{\log\abs{\cM}/T}$ versus
$1/\sqrt{T}$ for the case of finite classes). We defer a
detailed comparison to prior work to \pref{sec:related}, and take this time to build
intuition as to the behavior of the lower bounds and give an overview
of our proof techniques.

\begin{remark}
  \label{rem:abscont}
  Whenever $\abscont=\bigoh(1)$, we have
  $\vepslowerT\propto1/\sqrt{T}$ in
  \pref{thm:pac_lower,thm:regret_lower}. In the general case where
  $\abscont$ is not bounded, we have
  $\vepslowerT\propto{}1/\sqrt{T\log(T)}$, and the lower bounds lose a
  logarithmic factor. For most standard model classes, one has
  $\abscont=\infty$, but there exists a subclass $\cM'\subseteq\cM$
  with $V(\cM')=\bigoh(1)$ and $\deccpac(\cM')\approxgeq\deccpac(\cM)$
  (resp. $\deccreg(\cM')\approxgeq\deccreg(\cM)$). In this case, one
  can derive a tighter lower bound with $\vepslowerT\propto1/\sqrt{T}$
  by applying \pref{thm:pac_lower} (resp. \pref{thm:regret_lower}) to
  $\cM'$. See \citet{foster2021statistical} for examples.
\end{remark}

\begin{remark}
  \pref{thm:pac_lower} (resp. \pref{thm:regret_lower}) scales with the
  quantity
  $\sup_{\Mbar\in\cMall}\deccpac(\cMu,\Mbar)\geq\deccpac(\cM)$
  (resp. $\sup_{\Mbar\in\cMall}\deccreg(\cMu,\Mbar)\geq\deccreg(\cM)$),
  which allows for arbitrary reference models
  $\Mbar\notin\conv(\cM)$. We show in \pref{sec:convexity} that
  maximizing over reference models $\Mbar\in\cMall$ does not increase the value of the
  \CompShort beyond what is attained by $\Mbar\in\conv(\cM)$, so this
  result does not contradict our upper bounds. We state the lower
  bounds in this form because 1) our proof works with $\Mbar\in\cMall$
  directly, and does not use the structure of $\conv(\cM)$, and 2)
  allowing for $\Mbar\in\cMall$ often simplifies calculations.
\end{remark}

\paragraph{Understanding the lower bounds}
Let us give a sense for how the lower bounds behave for standard model
classes. We focus on regret, since this is the main
object of interest for prior work.
\begin{itemize}
\item \emph{$\sqrt{T}$-rates.} For the most well-studied
  classes found throughout the literature on bandits and reinforcement
  learning, we have
  \[
    \deccreg(\cM)\propto\veps\cdot\sqrt{\Ceff},
  \]
  where $\Ceff>0$ is a problem-dependent constant that reflects some
  notion of intrinsic complexity. In this case, the condition
  \pref{eq:regret_lower_condition} is satisfied whenever $\Ceff$ is
  larger than some numerical constant, and \pref{thm:regret_lower}
  gives\footnote{This requires $\abscont=\bigoh(1)$ so that $\vepsupperT\propto1/\sqrt{T}$; see \pref{rem:abscont}.}
  \[
\En\brk*{\RegDM} \geq\bigomt\prn[\big]{\sqrt{\Ceff{}\cdot{}T}}.
  \]
  Examples (cf. \pref{sec:examples}) include multi-armed bandits
  with $A$ actions, where $\Ceff\geq{}A$ (leading to $\En\brk*{\RegDM}\geq\bigomt(\sqrt{AT})$), linear bandits in dimension $d$,
  where $\Ceff\geq{}d$ (leading to $\En\brk*{\RegDM}\geq\bigomt(\sqrt{dT})$), and tabular reinforcement learning with $S$ states,
  $A$ actions, and horizon $H$, where $\Ceff\geq{}HSA$ (leading to $\En\brk*{\RegDM}\geq\bigomt(\sqrt{HSAT})$).
\item \emph{Nonparametric rates.} For nonparametric model classes, where the
  optimal regret is of larger order than $\sqrt{T}$, one typically has
\[
  \deccreg(\cM)\propto\veps^{1-\rho}
\]
for some $\rho\in(0,1)$. In this case, the condition in \eqref{eq:regret_lower_condition} is
satisfied whenever $T$ is a sufficiently large constant, and \pref{thm:regret_lower}
gives \[\En\brk*{\RegDM} \geq\bigomt(T^{\frac{1+\rho}{2}}).\] A
standard example is Lipschitz bandits over $\brk{0,1}^{d}$, where we have
$\deccreg(\cM)\propto\veps^{1-\frac{d}{d+2}}$, leading to
$\En\brk*{\RegDM} \geq\bigomt\prn[\big]{T^{\frac{d+1}{d+2}}}$.
\item \emph{Fast rates.} For problems with low noise, such as
  noiseless bandits, the \CompShort
  typically exhibits threshold behavior, with
  \[
    \deccreg(\cM)\propto\indic\crl[\big]{\veps\geq{}1/\sqrt{\Ceff}},
  \]
  where $\Ceff$ is a problem-dependent parameter. For example, if
  $\cM$ consists of multi-armed bandit instances with $\Pi=\crl{1,\ldots,A}$ and
  noiseless, binary rewards, one can take $\Ceff\propto{}A$. For such
  settings, the condition \pref{eq:regret_lower_condition} is satisfied whenever
  $T=\wt{O}\prn*{\Ceff}$, and \pref{thm:regret_lower} gives \[\En\brk*{\RegDM}\geq\bigomt\prn[\big]{\min\crl{\Ceff,T}}.\]
\end{itemize}
For PAC guarantees, the situation is similar to regret, but the lower bounds are
scaled by a $1/T$ factor due to normalization. Briefly:
\begin{itemize}
  \item Whenever
    $\deccpac(\cM)\propto\veps\cdot\sqrt{\Ceff}$,
  \pref{thm:pac_lower} gives
$\En\brk*{\RiskDM} \geq\bigomt\prn*{\sqrt{\frac{\Ceff{}}{T}}}$,
which implies that
$\bigomt\prn*{
    \frac{\Ceff}{\veps^2}
    }$
samples are required to learn an $\veps$-optimal policy.
  
\item Whenever $\deccpac(\cM)\propto\veps^{1-\rho}$
for $\rho\in(0,1)$, \pref{thm:pac_lower}
gives $\En\brk*{\RiskDM} \geq\bigomt(T^{-\frac{(1-\rho)}{2}})$, which
implies that $\bigomt(\veps^{-\frac{2}{1-\rho}})$ samples are required
to learn a $\veps$-optimal policy.
\item Whenever $\deccpac(\cM)\propto\indic\crl{\veps\geq{}1/\sqrt{\Ceff}}$,
  where $\Ceff$ is a problem-dependent parameter,
  \pref{thm:pac_lower} gives
  $\En\brk*{\RiskDM}\geq\bigom(1)$ until $T=\bigomt(\Ceff)$; that
  is, at least $\bigomt(\Ceff)$ samples are required to learn
  beyond constant suboptimality.
\end{itemize}
We refer to \pref{sec:examples} for further examples and details.

\paragraph{Proof techniques: PAC} We now highlight some of the key ideas behind the
proof of \pref{thm:pac_lower}. The high-level structure of the proof is as follows. For any
  algorithm, one can construct a ``hard'' pair of models $M_1,M_2\in\cM$ such
  that:
  \begin{enumerate}
    \item The joint laws of the decisions $\pi\^t$ and observations
    $(r\^t, o\^t)$ induced by the algorithm under $M_1$ and $M_2$ are close in total variation (i.e.,
    $\Dtv{\bbP\sups{M_1}}{\bbP\sups{M_2}}\leq{}1/4$, where $\bbP^{M}$
    denotes the law of $\hist\ind{T}$ under $M$)
    \item Any
    algorithm with risk much smaller
    than the DEC must query
    substantially different decisions in $\Pi$ depending on whether
    the underlying model is $M_1$ or $M_2$.
  \end{enumerate}
  Since  $\bbP\sups{M_1}$ are 
  $\bbP\sups{M_2}$ close enough (in total variation) that the algorithm will fail to
  distinguish the models with constant probability, and since the optimal
  decisions for the models are (approximately) exclusive, the lower
  bound follows.
  
We select the pair of models $(M_1, M_2)$ in an \emph{adversarial}
fashion based on the algorithm under consideration, in a way that
generalizes the approach taken in \citet{foster2021statistical,foster2022complexity}.
We fix an arbitrary model $M_1\in\cM$, then, letting
$q\subs{M_1}\ldef\En\sups{M_1}\brk*{\frac{1}{T}\sum_{t=1}^{T}q\ind{t}(\cdot\mid\hist\ind{t-1})}$
and $p\subs{M_1}\ldef\En\sups{M_1}\brk*{p(\cdot\mid\hist\ind{T})}$
denote the learner's average play under this model, choose $M_2$ as
the model that attains the maximum in \eqref{eq:dec_constrained_pac} with $(p\subs{M_1},q\subs{M_1})$ plugged
in. This approach suffices to prove lower bounds that scale with
$\sup_{\Mbar\in\cM}\deccpac(\cM,\Mbar)$, but is not sufficient to
incorporate improper reference models and prove a lower bound that scales with
$\deccpac(\cM)=\sup_{\Mbar\in\conv(\cM)}\deccpac(\cMu,\Mbar)$. Indeed,
unless the class $\MM$ is convex, there is no reason why an algorithm
with low risk for models in $\cM$ should have low risk for improper
\emph{mixtures} $\Mbar \in \co(\MM)$. Thus, naively choosing $M_1 =
\Mbar \in \co(\MM)$ is problematic, as we have no way to relate the
algorithm's risk under $M_1$ to that for models in the class.

To circumvent this issue, we iterate the process above: Given an
arbitrary, potentially improper reference model $\Mbar\in\cMall$, we first obtain $M_1$ by finding
the model that attains the maximum in \eqref{eq:dec_constrained_pac} with $(p\subs{\Mbar},\qmbar)$ plugged
in. With this model in hand, we obtain $M_2$ in a similar fashion, but
condition on the event the learner behaves near-optimally for
$M_1$. That is, we find the maximizer for \eqref{eq:dec_constrained_pac} with the distribution
$(p\subs{\Mbar}(\cdot\mid{}\cE_1),\qmbar)$ plugged in, where
$\cE_1$ is the set of near-optimal decisions for $M_1$. This argument
leads to lower bounds that scale with $\sup_{\Mbar\in\cMall}\deccpac(\cM,\Mbar)\geq\deccpac(\cM)$ because the reference model $\Mbar$ acts only as a midpoint between
$M_1$ and $M_2$ (whose existence allows us to control the total
variation between the models), and as a result is not required to live in $\cM$.

\paragraph{Proof techniques: Regret}
The proof of our lower bound for regret (\pref{thm:regret_lower})
follows a similar approach to \pref{thm:pac_lower}. However, non-trivial
difficulties arise in applying the iterative conditioning scheme
in the preceding discussion
because there are no longer separate distributions for exploration
($q\subs{M}$) and exploitation ($\pm$), causing the analysis of regret
and information to be coupled. To address this, we adopt a somewhat
different two-part scheme.
\begin{enumerate}
\item First, we prove a lower bound that is similar to \Cref{thm:regret_lower},
  but qualitatively weaker in the sense that the quantity
  $\deccreg[\vepslowerT](\MM) = \sup_{\Mbar \in \cMall}
  \deccreg[\vepslowerT](\MM, \Mbar)$ in \eqref{eq:regret_lower} is
  replaced by $\sup_{\Mbar \in \cM} \deccreg[\vepslowerT](\MM,
  \Mbar)$; that is, the lower bound restricts to proper reference
  models. This is proven using a similar approach to our lower bound
  for PAC.
\item Then, we upgrade this weaker result to the full claim of
  \Cref{thm:regret_lower}, using the following algorithmic result: for any
  model class $\MM$ and any $\Mbar \in \cMall$ (not necessarily in
  $\MM$) satisfying mild technical conditions,
  if there exists an algorithm that achieves expected regret at
  most $R$ with respect to the class $\cM$, then there exists an
  algorithm that achieves regret at most $O(R \cdot \log T)$ with
  respect to the enlarged class $\cM \cup \{ \Mbar \}$. By then choosing
  $\Mbar$ appropriately and applying the proper lower bound from Part
  1 to a
  slightly modified version of the class $\cM \cup \{ \Mbar \}$, we are able to establish \Cref{thm:regret_lower}. 
\end{enumerate}
See \pref{app:lower} for the proof.

\paragraph{Benefits of the constrained \CompShort}

The main technical advantage gained by working with the constrained \CompShort over
  the offset \CompShort is that, by placing a hard constraint on the
  Hellinger distance between models under consideration, we can appeal
  to stronger change-of-measure arguments than those considered in
  prior work; this is key to deriving in-expectation (as opposed to
  low probability) lower bounds. In particular, the radius
  $\vepslowerT\approx{}1/\sqrt{T}$ is the largest possible choice such
  that for any algorithm, one can find a worst-case pair of models for which the total variation distance
  $\Dtv{\bbP\sups{M_1}}{\bbP\sups{M_2}}$ is a \emph{small constant} (say,
  $1/4$). Whenever the total variation distance between the induced
  laws is constant, the algorithm must fail to distinguish
  $M_1$ and $M_2$ with constant probability, which entails large
  regret if the optimal decisions for $M_1$ and $M_2$ are
  significantly different.

\citet{foster2021statistical} emphasize that the \CompText can be
thought of as interactive counterpart to the modulus of continuity in
statistical estimation
\citep{donoho1987geometrizing,donoho1991geometrizingii,donoho1991geometrizingiii}. We
find the constrained \CompShort to be a more direct analogue
than the offset \CompShort: The modulus of continuity places a hard
constraint on Hellinger distance for similar technical reasons, and lower bounds
based on the modulus make use of the same $1/\sqrt{T}$ radius.

\subsection{Proof of PAC Lower Bound (\Cref{thm:pac_lower})}
\label{sec:pac_lower}
In what follows, we prove the PAC lower bound
(\pref{thm:pac_lower}). The proof for regret (\pref{thm:regret_lower}), which is
similar but carries some important technical differences, is deferred to
\pref{app:lower}.

\paragraph{Preliminaries}
Formally, for $T \in \BN$, an \emph{algorithm} for the PAC framework is a
collection of mappings $(p,q) = \prn*{\{ q\^t(\cdot \mid \cdot) \}_{t=1}^T, p(\cdot \mid \cdot
) }$ that (adaptively) draws decisions $\pi\^t \sim
q\^t(\cdot \mid \hist\^{t-1})$ (for $t \in [T]$), and then outputs the final
decision $\wh \pi \sim p(\cdot \mid \hist\^T)$ conditioned on the history
$\hist\^T$. We define $\BP^{\sss{M}, (p,q)}$ as the law of $\hist\^T$
when the underlying model is $M$ and the algorithm is
$(p,q)$. Throughout the proof, we will use the elementary property $\Dtv{\bbP}{\bbQ}\leq\Dhel{\bbP}{\bbQ}$.

\begin{proof}[\pfref{thm:pac_lower}]
  For later reference, we define a constant
  \begin{align}
c_0 = 1/16\nonumber,
  \end{align}
and define $\Ct=2^{8}\cdot\log(T\wedge\abscont)$. Fix $T \in \BN$ and an algorithm $(p,q) = \{ q\^t(\cdot \mid \cdot),
  p(\cdot \mid \cdot ) \}_{t=1}^T$. For each model $M\in\cMall$, we use the abbreviation
  $\BP\sups{M}\equiv\bbP^{\sss{M},(p,q)}$, and write $\E\sups{M}$ for
  the corresponding expectation. In addition, we define \[\pm = \Enm
    \left[  p(\cdot \mid \hist\^T) \right],\mathand q\subs{M} = \Enm \left[ \frac 1T \sum_{t=1}^T q\^t(\cdot \mid \hist\^{t-1}) \right].\]

  \paragraph{Choosing a hard pair of models}
  Fix an arbitrary reference model $\Mbar\in\cMall$ and define $\vep := \frac{1}{10 \sqrt{C(T) \cdot T}}$
  and $\vepslowerT := \vep/\sqrt{2}$. We will prove a lower bound in terms
  of $\deccpac[\vepslowerT](\cM,\Mbar)$. We abbreviate $\delta \ldef \deccpac[\vepslowerT](\MM, \Mbar)$, so that the assumption \Cref{eq:pacdec-lb} gives $\delta \geq 48 \vep$. %
  
  To begin, choose any model $M_1\in\cM$ satisfying:
  \begin{align}
M_1 \in \left\{ M \in \MM \ : \ \E_{\pi \sim q_{\Mbar}}\left[ \hell{M(\pi)}{\Mbar(\pi)} \right] \leq \vep^2 \ \wedge \ \E_{\pi \sim \pmbar} \left[\hell{M(\pi)}{\Mbar(\pi)} \right] \leq \vep^2 \right\}.\label{eq:pac_lb_set}
  \end{align}
We will make use of the fact that, by \pref{lem:pac_constrained_alt}, we have that for
all $p\in\Delta(\Pi)$,
  \begin{align}
    &\dec_{\vep/\sqrt 2}(\MM, \Mbar)  \notag\\
    &\leq \sup_{M \in \MM} \left\{ \E_{\pi \sim p} \left[ \gm(\pi) \right] \ | \ \E_{\pi \sim q_{\Mbar}} \left[\hell{M(\pi)}{\Mbar(\pi)} \right] \leq \vep^2 \ \wedge \ \E_{\pi \sim p} \left[\hell{M(\pi)}{\Mbar(\pi)} \right] \leq \vep^2 \right\} \label{eq:chisq-dec-lb},
  \end{align}
  which implies (along with \eqref{eq:pacdec-lb}) that the set in \eqref{eq:pac_lb_set} is
  non-empty. For any model $M\in\cMall$, define
  \begin{align}
\ME\sups{M} := \left\{ \pi \in \Pi : g^{M}(\pi) \geq c_0 \cdot \delta \right\},\nonumber
  \end{align}
  and define $\MA_1 := \ME\sups{M_1}$. 
  Let $p' \ldef \pmbar(\cdot \mid \MA_1^\c)$, and set
  \begin{align}
M_2 := \argmax_{M \in \MM} \left\{ \E_{\pi \sim p'} \left[ \gm(\pi) \right] \ | \ \E_{\pi \sim q_{\Mbar}} \left[ \hell{M(\pi)}{\Mbar(\pi)} \right] \leq \vep^2 \ \wedge \ \E_{\pi \sim p'} \left[ \hell{M(\pi)}{\Mbar(\pi)}\right] \leq \vep^2 \right\};
  \end{align}
  as with $M_1$, \pref{lem:pac_constrained_alt} implies that this set is non-empty.
  Finally, define $\MA_2 \ldef \ME\sups{M_2} \cap \MA_1^{\c}$. 

  \paragraph{Lower bounding the algorithm's risk}
We now recall Lemma A.13 from \citet{foster2021statistical},
which states that for all models $M$,
\begin{equation}
  \label{eq:hellinger_bound}
    \Dhels{\bbPm}{\bbPmbar}
    \leq \Ct\cdot{}T\cdot\En_{\pi\sim\qmbar}\brk*{\Dhels{M(\pi)}{\Mbar(\pi)}}.
  \end{equation}
  By the data
  processing inequality, this further implies that 
  \begin{align}
    \Dtvs{\pm}{\pmbar}\leq
\Ct\cdot{}T\cdot\En_{\pi\sim\qmbar}\brk*{\Dhels{M(\pi)}{\Mbar(\pi)}}.
  \end{align}
  Since $\E_{\pi \sim q_{\Mbar}} \left[ \hell{M_i(\pi)}{\Mbar(\pi)}
  \right] \leq \vep^2$ for $i \in \{1,2\}$, our choice $\vep \leq
  \frac{1}{10 \cdot \sqrt{T \cdot C(T)}}$ implies
  that \[\tvd{p\subs{M_i}}{\pmbar} \leq
    \frac{1}{10},\quad\text{for}\quad i \in \{1,2\}.\] As a result,
  for each $i \in \{1,2\}$, we have
  \begin{align}
    \E_{\pi \sim p\subs{M_i}} \left[ g\sups{M_i}(\pi) \right] &\geq  c_0 \delta \cdot p\subs{M_i}(\pi \in \ME\sups{M_i}) \\&\geq c_0 \delta \cdot \left( \pmbar(\pi \in \ME\sups{M_i}) - \tvd{\pmbar}{p\subs{M_i}} \right)\nonumber\\
    &\geq c_0 \delta \cdot (\pmbar(\pi \in \ME\sups{M_i}) - 1/10 )\label{eq:mbar-tenth}.
  \end{align}
  Thus, to prove the theorem, it suffices to lower
  bound $\pmbar(\pi \in \ME\sups{M_i})$ by $1/4$ for at least one of $i \in
  \{1,2\}$, which will show that the quantity in \eqref{eq:mbar-tenth}
  is at least $\frac{3c_0\delta}{20}$ (in fact, any constant lower
  bound greater than $1/10$ suffices).

  We assume henceforth that
  $\pmbar(\MA_1^\c) \geq 1/2$, as otherwise we have
  $\pmbar(\ME\sups{M_1}) = \pmbar(\MA_1) \geq 1/2$, in which case the
  result immediately follows from \eqref{eq:mbar-tenth}.
  Before continuing, we note that since $\E_{\pi \sim \pmbar} \left[
    \hell{M_1(\pi)}{\Mbar(\pi)} \right] \leq \vep^2$ and
  \begin{align}
    \E_{\pi \sim \pmbar}\left[ \One{\pi \in \MA_1^{\c}} \cdot \hell{M_2(\pi)}{\Mbar(\pi)} \right] \leq \E_{\pi \sim p'} \left[ \hell{M_2(\pi)}{\Mbar(\pi)} \right] \leq \vep^2,\nonumber
  \end{align}
  the triangle inequality for Hellinger distance implies that
$
\E_{\pi \sim \pmbar} \left[ \One{\pi \in \MA_1^{\c}} \cdot  \hell{M_1(\pi)}{M_2(\pi)} \right] \leq 4\vep^2.
$
Hence, by \eqref{lem:hellinger_to_value} and Jensen's inequality, we have
\begin{align}
  \E_{\pi \sim \pmbar} \left[ \One{\pi \in \MA_1^{\c}} \cdot | f\sups{M_1}(\pi) - f\sups{M_2}(\pi)| \right] \leq \sqrt{\E_{\pi \sim \pmbar} \left[ \One{\pi \in \MA_1^{\c}} \cdot \hell{M_1(\pi)}{M_2(\pi)}\right]} \leq  2\vep.\label{eq:diff-m1m2-q}
  \end{align}

  \paragraph{Lower bounding the gap} To proceed, we will first
  establish that
  \begin{align}
    \label{eq:pac_lb_gap}
    f\sups{M_2}(\pi\subs{M_2}) \geq f\sups{M_1}(\pi\subs{M_1}) + \frac{\delta}{2} \cdot (1-4c_0) - 2\vep.
  \end{align}
   To do this, we note that from the definition of $M_2$,
  \begin{align}
\E_{\pi \sim \pmbar} \left[ \One{\pi \in \MA_1^\c} \cdot g\sups{M_2}(\pi) \right] \geq \pmbar(\MA_1^\c) \cdot \E_{\pi \sim p'} \left[ g\sups{M_2}(\pi) \right] \geq \frac 12 \cdot \deccpac[\vep/\sqrt 2](\MM, \Mbar) = \frac{\delta}{2}\nonumber,
  \end{align}
  where we have used the assumption that $\pmbar(\MA_1^\c) \geq \frac 12$. Thus,
  \begin{align}
    \E_{\pi \sim \pmbar} \left[ \One{\pi \in \MA_2} \cdot g\sups{M_2}(\pi) \right] &= \E_{\pi \sim \pmbar} \left[ \One{\pi \in \MA_1^\c} \cdot g\sups{M_2}(\pi)\right] - \E_{\pi \sim \pmbar} \left[ \One{\pi \in \MA_1^\c \cap (\ME\sups{M_2})^\c} \cdot g\sups{M_2}(\pi) \right] \nonumber\\
    &\geq  \E_{\pi \sim \pmbar} \left[ \One{\pi \in \MA_1^\c} \cdot g\sups{M_2}(\pi)\right] - c_0\delta \geq \frac{\delta}{2} \cdot \left(1 - 2c_0 \right)\label{eq:lb-gm2-a2},
  \end{align}
where the first inequality follows because $g\sups{M_2}(\pi) < c_0\delta$ for $\pi \in (\ME\sups{M_2})^\c$. 

Next, we compute%
  \begin{align}
    c_0 \delta &\geq   \E_{\pi \sim \pmbar} \left[ \One{\pi \in \MA_2} \cdot \left( f\sups{M_1}(\pi\subs{M_1}) - f\sups{M_1}(\pi) \right) \right] \nonumber\\
    & = \E_{\pi \sim \pmbar} \left[ \One{\pi \in \MA_2} \cdot \left(f\sups{M_2}(\pi\subs{M_2}) - f\sups{M_1}(\pi) \right) \right] + \pmbar(\MA_2) \cdot \left(f\sups{M_1}(\pi\subs{M_1}) - f\sups{M_2}(\pi\subs{M_2})\right)\nonumber\\
    & \geq \E_{\pi \sim \pmbar} \left[ \One{\pi \in \MA_2} \cdot \left( f\sups{M_2}(\pi\subs{M_2}) - f\sups{M_2}(\pi) \right) \right] - 2\vep + \pmbar(\MA_2) \cdot \left(f\sups{M_1}(\pi\subs{M_1}) - f\sups{M_2}(\pi\subs{M_2})\right)\nonumber\\
    & \geq \frac{\delta}{2} \cdot (1-2c_0) - 2 \vep + \pmbar(\MA_2) \cdot \left( f\sups{M_1}(\pi\subs{M_1}) - f\sups{M_2}(\pi\subs{M_2}) \right)\nonumber,
  \end{align}
  where the first inequality follows because $g\sups{M_1}(\pi) < c_0\delta$ for $\pi \in \MA_2 \subset \MA_1^{\c} = (\ME\sups{M_1})^{\c}$, the
  second-to-last inequality follows from \eqref{eq:diff-m1m2-q} and the fact that $\MA_2 \subset \MA_1^{\c}$, and the final inequality follows by \eqref{eq:lb-gm2-a2}. Rearranging and using that $\pmbar(\MA_2) \in [0,1]$, we obtain
  \begin{align}
    f\sups{M_2}(\pi\subs{M_2}) - f\sups{M_1}(\pi\subs{M_1}) & \geq \pmbar(\MA_2) \cdot \left( f\sups{M_2}(\pi\subs{M_2}) - f\sups{M_1}(\pi\subs{M_1}) \right)\nonumber\\
    & \geq \frac{\delta}{2} \cdot (1-4c_0) - 2\vep\nonumber.
  \end{align}

  \paragraph{Lower bounding the failure probability and concluding}
  To finish the proof, we use the inequality \pref{eq:pac_lb_gap} to bound the probability $\pmbar((\ME\sups{M_2})^{\c} \cap \MA_1^{\c})$ as follows:
  \begin{align}
    \pmbar((\ME\sups{M_2})^\c \cap \MA_1^{\c}) \cdot \left( \frac{\delta}{2} \cdot (1-4c_0) - 2\vep \right) & \leq  \E_{\pi \sim \pmbar} \left[ \One{(\ME\sups{M_2})^\c \cap \MA_1^\c} \cdot (f\sups{M_2}(\pi\subs{M_2}) - f\sups{M_1}(\pi\subs{M_1})) \right]\nonumber\\
    & \leq  \E_{\pi \sim \pmbar} \left[ \One{(\ME\sups{M_2})^\c \cap \MA_1^\c} \cdot \left(( f\sups{M_2}(\pi\subs{M_2})  - f\sups{M_2}(\pi)) - (f\sups{M_1}(\pi\subs{M_1}) - f\sups{M_1}(\pi))\right)\right] + 2\ep\nonumber\\
    & \leq  \E_{\pi \sim \pmbar} \left[ \One{(\ME\sups{M_2})^\c \cap \MA_1^\c} \cdot g\sups{M_2}(\pi) \right] + 2\vep\nonumber\\
    & \leq  c_0 \delta + 2\vep\nonumber,
  \end{align}
  where the first inequality uses \eqref{eq:pac_lb_gap}, the second
  inequality uses \eqref{eq:diff-m1m2-q} and the fact that $(\ME\sups{M_2})^\c \cap \MA_1^\c \subset \MA_1^\c$, the third inequality uses
  that $\gmo\geq{}0$, and the final inequality uses the definition of
  $\cE\sups{M_2}$.  Since we
  have assumed that $\vep \leq \frac{\delta}{48}$ and $c_0 \leq 1/16$, it
  follows that $\pmbar((\ME\sups{M_2})^{\c} \cap \MA_1^\c) \leq
  \frac{\delta/8}{\delta/4} = 1/2$. Thus, we have established that\[\pmbar(\MA_1) + \pmbar(\ME\sups{M_2}) \geq \pmbar(\MA_1 \cup \ME\sups{M_2}) \geq 
  1/2,\] which implies that either $\pmbar(\ME\sups{M_1}) =
\pmbar(\MA_1) \geq 1/4$ or $\pmbar(\ME\sups{M_2}) \geq 1/4$. As a
consequence, by \eqref{eq:mbar-tenth}, we have that for some $i \in \{1,2\}$, $\E_{\pi \sim p\subs{M_i}} \left[ g\sups{M_i}(\pi)\right] \geq \frac{3c_0\delta}{20}$, thus establishing the desired lower bound.

To wrap up, we note that the lower bound we have established holds for an
arbitrary reference model $\Mbar\in\cMall$, so we are free to choose
$\Mbar\in\cMall$ to maximize $\deccpac[\veps/\sqrt{2}](\cM,\Mbar)$.
 
\end{proof}

\begin{remark}
    The structure of the proof \pref{thm:pac_lower} bears some
    superficial similarities to that of the classical
  two-point method
\citep{donoho1987geometrizing,donoho1991geometrizingii,donoho1991geometrizingiii,yu1997assouad,tsybakov2008introduction}
in statistics and information theory, but has a number of fundamental differences.
\begin{enumerate}
  \item First, in our lower bound, the pair of models
  $(M_1, M_2)$ is chosen in an \emph{adversarial} fashion based on the
  algorithm under consideration, while the classical approach selects
  a pair of models obliviously. When considering only two models, choosing the models adversarially is
  critical to capture the complexity of classes $\cM$ that require
  distinguishing between many distinct decisions. For example, even in
  the simple special case of multi-armed bandits, this is necessary to make the number of actions $A$ appear in the lower
  bound. %
\item Second, and perhaps more importantly, the classical two-point
  argument cannot be directly applied because the function $\gm(\pi)$
  does not enjoy the metric structure required by this approach. In
  particular, the classical ``separation condition'', which takes the form
$g\sups{M_1}(\pi) + g\sups{M_2}(\pi) \geq \delta\;\forall{}\pi$ when
applied to our setting, does not hold. Instead, the crux of the proof
is to show that, as a consequence of our choice for $M_1$ and $M_2$,
we have
\[
  \En_{\pi\sim{}p\subs{M_1}}\brk*{g\sups{M_1}(\pi)} + \En_{\pi\sim{}p\subs{M_2}}\brk*{g\sups{M_2}(\pi)}\approxgeq{}\delta.
\]
To establish this inequality, we take advantage of the fact that 
1) $\deccpac[\vepslowerT](\cM)\approxgeq{}\vepslowerT$, by assumption, and 2) rewards $r$
are observed and lie in the range $\brk*{0,1}$; the latter 
implies that $\abs{\fm(\pi)-\fmbar(\pi)}\leq\Dhel{M(\pi)}{\Mbar(\pi)}$
for all $M,\Mbar\in\cMall$.
\end{enumerate}

\end{remark}

\section{Upper Bounds}
\label{sec:upper}

We now give algorithms that obtain upper bounds on sample complexity
that complement the lower bounds in \pref{sec:lower}. First, in
\pref{sec:upper_pac} we give an algorithm and upper bound for the PAC
framework. We then prove this result in \pref{sec:upper_pac_proof},
highlighting the key algorithm design principles and analysis ideas. Then,
in \pref{sec:upper_regret}, we give an algorithm and upper bound for
regret. Both of our algorithms use the
\emph{Estimation-to-Decisions} paradigm of
\citet{foster2021statistical}. Our algorithm for regret builds upon
our algorithm for PAC, but is more
technically involved.

\subsection{Online Estimation} Our algorithms and regret bounds use
the primitive of an \emph{online estimation oracle}, denoted by
$\AlgEst$, which is an algorithm that, given knowledge of some class $\MM$ containing the true model $M^\st$, estimates the underlying model $\Mstar$ from
data on the fly. At each round $t$, given the data
$\hist\ind{t-1}=(\pi\ind{1},r\ind{1},o\ind{1}),\ldots,(\pi\ind{t-1},r\ind{t-1},o\ind{t-1})$
observed so far an estimation oracle builds an estimate
\[
\Mhat\ind{t}=\AlgEst\prn*{ \crl*{(\act\ind{i},
r\ind{i},\obs\ind{i})}_{i=1}^{t-1} }
\]
for the true model $\Mstar$. We measure the oracle's estimation
performance in terms of cumulative Hellinger error, which we assume is
bounded as follows.
\begin{assumption}[Estimation oracle for $\cM$]
    \label{ass:hellinger_oracle}
	At each time $t\in[T]$, an online estimation oracle
        $\AlgEst$ for $\cM$ returns,
        given  $$\hist\ind{t-1}=(\pi\ind{1},r\ind{1},o\ind{1}),\ldots,(\pi\ind{t-1},r\ind{t-1},o\ind{t-1})$$
        with $(r\ind{i},o\ind{i})\sim\Mstar(\pi\ind{i})$ and
        $\pi\ind{i}\sim p\ind{i}$, an estimator
        $\Mhat\ind{t}\in\conv(\cM)$ such that whenever $\Mstar\in\cM$,
        \begin{align}
              \EstHel \ldef{}
          \sum_{t=1}^{T}\En_{\act\ind{t}\sim{}p\ind{t}}\brk*{\Dhels{\Mstar(\pi\ind{t})}{\Mhat\ind{t}(\pi\ind{t})}}
          \leq \EstProbHel,
        \end{align}
	with probability at least $1-\delta$, where $\EstProbHel$ is a
        known upper bound.
      \end{assumption}

      Algorithms satisfying \pref{ass:hellinger_oracle} can be obtained via online conditional
      density estimation (that is, online learning with the
      logarithmic loss). Typically, the best possible
      estimation rate $\EstProbHel$ will reflect the statistical capacity of the class
      $\cM$. Standard examples include finite classes, where the
      exponential weights algorithm (also known as Vovk's
      aggregating algorithm) achieves
      $\EstProbHel\leq\bigoh(\log(\abs{\cM}/\delta))$, and parametric
      classes in $\bbR^{d}$, where one can achieve
      $\EstProbHel\leq\bigoht(d)$. See Section 4 of
      \citet{foster2021statistical} for further background.

      \subsection{Algorithm and Upper Bound for PAC}
      \label{sec:upper_pac}
            \begin{algorithm}[ht]
    \setstretch{1.3}
     \begin{algorithmic}[1]
       \State \textbf{parameters}:
       \Statex[1] Number of rounds $T\in\bbN$.
       \Statex[1] Failure probability $\delta>0$.
       \Statex[1] Online estimation oracle $\AlgEst$.
       \State Define $L \ldef \lceil \log 2/\delta\rceil$, $J \ldef
       \frac{T}{L+1}$, and $\EstBar\ldef\Est\left( \frac{2T}{\lceil\log 2/\delta \rceil},
  \frac{\delta}{4\lceil \log 2/\delta\rceil}\right)$.
       \State Set $\vepsupperT \ldef 8\sqrt{\frac{\lceil \log 2/\delta \rceil}{T} \cdot
  \EstBar}$.
       \Statex[0] \algcommentbig{Exploration phase}
  \For{$t=1, 2, \cdots, J$}
  \State Obtain estimate $\Mhat\ind{t} = \AlgEst\prn[\Big]{ \crl*{(\act\ind{i},
      r\ind{i},\obs\ind{i})}_{i=1}^{t-1} }$.  \label{line:pac-compute-mhat}
\State\multiline{Compute
\label{line:ptqt-pac}
    \begin{align}
  \hspace{-1cm}    (p\^t,q\^t) := \argmin_{p,q \in \Delta(\Pi)} \sup_{M \in  \MH_{q,\vepsupperT}(\wh M\^t)} \E_{\pi \sim p} [ \fm(\pim) - \fm(\pi)],\nonumber
    \end{align}
with the convention that the value is zero if $\cH_{q,\vepsupperT}(\Mhat\ind{t})=\emptyset$.}
    \State{}Sample decision $\act\ind{t}\sim{}q\ind{t}$ and update estimation
    oracle $\AlgEst$ with $(\act\ind{t},r\ind{t}, \obs\ind{t})$.
    \EndFor
    \Statex[0] \algcommentbig{Exploitation phase}
    \State Sample $L$ indices $t_1, \ldots, t_L \sim\unif(
    [J])$ independently. \label{line:pac-sample-linds}
    \State For each $\ell \in [L]$, draw $J$ independent samples
    $\pi_{\ell}\^1, \ldots, \pi_{\ell}\^J \sim q\^{t_\ell}$, and
    observe $(\pi_{ \ell}\^j, r_{ \ell}\^j, o_{\ell}\^j)$ for each $j
    \in [J]$.%
    \State For each $\ell \in [L]$ and $j \in [J]$, compute
    \begin{align}
\til M_{\ell}\^j := \AlgEst \left( \left\{ (\pi_{\ell}\^i, r_\ell\^i, o_\ell\^i) \right\}_{i=1}^{j-1} \right),\nonumber
    \end{align}
    and let $\til M_\ell := \frac{1}{J} \sum_{j=1}^J \til M_\ell\^j$. \hfill\algcommentlight{$\Mtil_{\ell}$ is a high-quality
      estimate for $\Mstar$ under $q\ind{t_\ell}$.}
    \label{line:tilm-pac-define}
    \State \textbf{output:}
    Set $\phat\ldef{} p\^{t_{\wh\ell}}$ and output
    $\pihat\sim{}\phat$, where $\wh\ell\ldef{}\argmin_{\ell\in\brk{L}}\E_{\pi \sim q\^{t_{\ell}}} \brk[\big]{ \DhelsX{\big}{\wh M\^{t_{\ell}}(\pi)}{\til M_{\ell}(\pi)}}$. \label{line:choose-lhat} %
\end{algorithmic}
\caption{Estimation-to-Decisions (\etdp) for PAC}
\label{alg:pac}
\end{algorithm}

\pref{alg:pac} displays our main algorithm for the PAC framework, \etdp. The
algorithm is built upon the Estimation-to-Decisions paradigm of
\citet{foster2021statistical}, which uses the following scheme for
each round $t$:
\begin{itemize}
\item Obtain an estimator $\Mhat\ind{t}\in\conv(\cM)$ for $\Mstar$ from the
  online estimation oracle $\AlgEst$.
\item Sample $\pi\ind{t}$ from a decision distribution obtained by solving the min-max
  optimization problem that defines the \CompShort, with
  $\Mhat\ind{t}$ plugged in as the reference model.
\end{itemize}
\etdp follows this template, but incorporates non-trivial changes 
that are tailored to i) the constrained (as opposed to offset) DEC and
ii) PAC guarantees (as opposed to regret). Briefly, the algorithm
consists of two phases, an \emph{exploration phase} and an
\emph{exploitation phase}, which we outline below.

\paragraph{Exploration phase}
In the \emph{exploration phase}, which consists of rounds
$t=1,\ldots,J$, where $J=\bigomt(T)$, \pref{alg:pac} repeatedly obtains an estimator $\Mhat\ind{t}$ by querying
  the estimation oracle $\AlgEst$ with the current dataset
  $\hist\ind{t-1}=(\pi\ind{1},r\ind{1},o\ind{1}),\ldots,(\pi\ind{t-1},r\ind{t-1},o\ind{t-1})$ (\cref{line:pac-compute-mhat}),
  then computes the pair of distributions $(p\ind{t},q\ind{t})$ that
  solve the min-max problem that defines the PAC DEC
  (\eqref{eq:dec_constrained_pac}) with
  $\Mhat\ind{t}$
  plugged in as the reference model (\cref{line:ptqt-pac}):
  \begin{align}
    \label{eq:pac_dec_alg}
    (p\^t,q\^t) := \argmin_{p,q \in \Delta(\Pi)} \sup_{M \in  \MH_{q,\vepsupperT}(\wh M\^t)} \E_{\pi \sim p} [ \fm(\pim) - \fm(\pi)].
  \end{align}
  By definition, the value above is always bounded by $\deccpac[\vepsupperT](\cM,\Mhat\ind{t})$.
  Recall that $p\ind{t}$ may be though of as an \emph{exploitation
  distribution}, and that $q\ind{t}$ may be thought of as an
  \emph{exploration distribution}. With these distributions in hand, \pref{alg:pac} samples
  $\pi\ind{t}\sim{}q\ind{t}$ from the exploration distribution
  $q\ind{t}$. The distribution $p\ind{t}$ is not used in this phase,
  but is retained for the exploitation phase that follows.
\paragraph{Exploitation phase}
  In the \emph{exploitation phase} (or, post-processing phase), we
  aim to identify an exploitation distribution
  $p\ind{t}\in\crl{p\ind{1},\ldots,p\ind{J}}$ from the collection
  computed during the exploration phase that is ``good'' in the
    sense that it has sufficiently low suboptimality under $\Mstar$. The motivation behind
    this stage is as follows. From the expression
    \pref{eq:pac_dec_alg} and the definition of the constrained
    \CompShort, we are guaranteed that $p\ind{t}$ has
    \[
     \En_{\pi\sim{}p\ind{t}}\brk*{\fmstar(\pimstar)-\fmstar(\pi\ind{t})}\leq{}\deccpac[\vepsupperT](\cM,\Mhat\ind{t})
    \]
for    any round $t\in\brk{J}$ where
    $\Mstar\in\cH_{q\ind{t},\vepsupperT}(\Mhat\ind{t})$. The challenge
    here is
    that the estimation oracle ensures only that the \emph{cumulative}
    estimation
    error for the estimators $\Mhat\ind{1},\ldots,\Mhat\ind{J}$ is
    low. There is no guarantee that the per-round estimation error
    $\En_{\pi\ind{t}\sim{}q\ind{t}}\brk*{\Dhels{\Mstar(\pi\ind{t})}{\Mhat\ind{t}(\pi\ind{t})}}$
        will decrease with time, and for any fixed round $t$ of
        interest, this quantity might be trivially large. This can lead
    to a problem we term \emph{\falsex}, where
    $\Mstar\notin\cH_{q\ind{t},\vepsupperT}(\Mhat\ind{t})$. 
    \Falsex is problematic because we have no control over the suboptimality
    under $\Mstar$ for rounds $t\in\brk{J}$ where it occurs. The good news is that
    while the online
    estimation oracle may lead to
    \falsex for some rounds, Markov's inequality implies
    that (for our choice of $\vepsupperT$, which depends on $\EstProbHel$) the true model 
    $\Mstar$ will be included in
    $\cH_{q\ind{t},\vepsupperT}(\Mhat\ind{t})$ for \emph{at least half
      of the rounds $t\in\brk{J}$}. Hence, the exploitation phase
    proceeds by sampling (on \cref{line:pac-sample-linds}) a small number of rounds $t_1,\ldots,t_L\in\brk{J}$---a logarithmic
    number suffices to ensure that at least one is good with high
    probability---and performing a test to identify a good distribution
    $p\ind{t_{\wh{\ell}}}$ within the set
    $\crl{p\ind{t_1},\ldots,p\ind{t_L}}$, which is then returned by
    the algorithm.

    In more detail, the exploitation phase
    (\pref{line:pac-sample-linds} through \pref{line:choose-lhat})
    proceeds by gathering many (namely, $\Theta(T/\log T)$) samples
    from $q\ind{t_{\ell}}$
    for each of the $L$ rounds $\crl{t_{\ell}}_{\ell\in\brk{L}}$ and then, for each
    $\ell\in\brk{L}$, using the estimation oracle $\AlgEst$ to produce
    an estimated model $\til M_\ell\in\cM$ based on these
    samples. Since many samples are used to produce the estimate
    $\til M_\ell$, it is guaranteed to be close to the true model
    $M^\st$ under $q\ind{t_{\ell}}$. This means that by choosing a round
    $t_{\wh \ell}$ that minimizes the Hellinger distance between $\wh
    M\^{t_{\wh \ell}}$ and $\til M_{\wh \ell}$
    (\cref{line:choose-lhat}), we have that with high probability,
    $M^\st \in \MH_{q\^{t_{\wh \ell}}, \vep}(\wh M\^{t_{\wh \ell}})$,
    thus solving the \falsex problem, and ensuring that the
    exploitation distribution $p\ind{t_{\wh
        \ell}}$ has low risk. \dfcomment{typesetting of
      $\cH_{q\ind{t_{\wh \ell}}}$ is brutal}

\paragraph{Main result}
We show that \etdp enjoys the following guarantee for PAC.
\begin{theorem}[Main Upper Bound: PAC]
  \label{thm:pac_upper}
Fix $\delta \in \left(0, \frac{1}{10} \right)$ and $T \in \BN$. Suppose
that \Cref{ass:realizability,ass:hellinger_oracle} hold, and let
$\EstBar\ldef\Est\left( \frac{2T}{\lceil\log 2/\delta \rceil},
  \frac{\delta}{4\lceil \log 2/\delta\rceil}\right)$. With
$\vepsupperT \ldef 8\sqrt{\frac{\lceil \log 2/\delta \rceil}{T} \cdot
  \EstBar}$, \Cref{alg:pac} guarantees that with probability at least $1-\delta$,
  \begin{align}
    \RiskDM \leq \deccpac[\vepsupperT](\MM) \nonumber.
  \end{align}
  Thus, the expected risk achieved by \Cref{alg:pac} is bounded by
  \begin{align}
\E[\RiskDM] \leq \deccpac[\vepsupperT](\MM)  + \delta.\label{eq:pac-expected-risk}
  \end{align}
\end{theorem}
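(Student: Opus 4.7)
The plan is to show that with probability at least $1-\delta$, the selection $(\hat\ell, p^{(t_{\hat\ell})}, q^{(t_{\hat\ell})})$ produced by \pref{alg:pac} satisfies $M^\star \in \cH_{q^{(t_{\hat\ell})}, \vepsupperT}(\wh M^{(t_{\hat\ell})})$, after which the definition of $(p^{(t)},q^{(t)})$ on \cref{line:ptqt-pac} immediately yields
\[
  \E_{\pi\sim p^{(t_{\hat\ell})}}[g^{M^\star}(\pi)]
  \leq \sup_{M\in\cH_{q^{(t_{\hat\ell})},\vepsupperT}(\wh M^{(t_{\hat\ell})})}
  \E_{\pi\sim p^{(t_{\hat\ell})}}[g^M(\pi)]
  = \deccpac[\vepsupperT](\cM,\wh M^{(t_{\hat\ell})})
  \leq \deccpac[\vepsupperT](\cM),
\]
using $\wh M^{(t_{\hat\ell})}\in\conv(\cM)$ in the last step. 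The expected-risk bound \pref{eq:pac-expected-risk} follows immediately from $\RiskDM\leq 1$.

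To establish the containment, I would first set up high-probability events. Applying \pref{ass:hellinger_oracle} once to the exploration phase ($J = T/(L+1)\leq 2T/L$ rounds) and once to each of the $L$ exploitation batches (each of length $J$), at failure level $\delta/(4L)$ each, a union bound bounds the total failure probability by $(L+1)\delta/(4L)\leq \delta/2$, and on the resulting event every invocation has cumulative Hellinger error at most $\EstBar$. Call $t\in[J]$ \emph{good} if $\E_{\pi\sim q^{(t)}}[\Dhels{M^\star(\pi)}{\wh M^{(t)}(\pi)}]\leq 2\EstBar/J$; by Markov's inequality at least half of $t\in[J]$ are good, so with $L\geq \log_2(2/\delta)$ independent uniform samples, with probability at least $1-2^{-L}\geq 1-\delta/2$ there exists $\ell^\st\in[L]$ with $t_{\ell^\st}$ good.

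The heart of the argument is showing that the Hellinger-minimization test on \cref{line:choose-lhat} lands on a round that is essentially as good. By convexity of squared Hellinger in its second argument,
\[
  \E_{\pi\sim q^{(t_\ell)}}[\Dhels{M^\star(\pi)}{\til M_\ell(\pi)}]
  \leq \frac{1}{J}\sum_{j=1}^{J}\E_{\pi\sim q^{(t_\ell)}}[\Dhels{M^\star(\pi)}{\til M_\ell^{(j)}(\pi)}]
  \leq \EstBar/J
\]
for every $\ell\in[L]$. Combining this at $\ell=\ell^\st$ with the good-round bound via the Hellinger triangle inequality and $(a+b)^2\leq 2a^2+2b^2$ yields $\E_{\pi\sim q^{(t_{\ell^\st})}}[\Dhels{\wh M^{(t_{\ell^\st})}}{\til M_{\ell^\st}}]\leq 6\EstBar/J$; since $\hat\ell$ is the argmin of exactly this quantity, the same bound persists at $\hat\ell$, and one further triangle step (combined with the display above at $\ell=\hat\ell$) gives $\E_{\pi\sim q^{(t_{\hat\ell})}}[\Dhels{M^\star}{\wh M^{(t_{\hat\ell})}}]\leq 14\EstBar/J$. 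The choice $\vepsupperT^2 = 64L\EstBar/T$ together with $J = T/(L+1)$ ensures $14\EstBar/J\leq \vepsupperT^2$, which is the desired containment.

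The main obstacle is the \falsex phenomenon highlighted in the description of the algorithm: the online estimator only controls \emph{cumulative} Hellinger error, so for any fixed round $t$ the per-round quantity $\E_{\pi\sim q^{(t)}}[\Dhels{M^\star}{\wh M^{(t)}}]$ may exceed $\vepsupperT^2$ arbitrarily, placing $M^\star$ outside the ball against which $p^{(t)}$ was optimized. The two-phase design is precisely what bypasses this: Markov guarantees a constant fraction of good rounds, logarithmic subsampling locates one with high probability, and the offline estimators $\til M_\ell$---built from $J$ fresh samples from a \emph{single} distribution $q^{(t_\ell)}$ and therefore accurate at that distribution by Jensen---serve as certificates that let the Hellinger test identify a good round without ever needing to know $M^\star$.
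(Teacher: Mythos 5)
Your proposal is correct and follows essentially the same route as the paper's proof: the same event decomposition (oracle success on the exploration phase and on each of the $L$ offline batches, plus the Markov/subsampling argument locating a good round), the same online-to-batch conversion making $\til M_\ell$ accurate under $q^{(t_\ell)}$ by Jensen, and the same two triangle-inequality steps showing the argmin test transfers the guarantee to $\hat\ell$; only the numerical constants differ (e.g., your $14\EstBar/J$ versus the paper's $\vepsupperT^2/2 + \vepsupperT^2/16$), and both comfortably fit within $\vepsupperT^2$.
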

This result matches the lower bound in \pref{thm:pac_lower}, with the
only gap being the choice of radius $\veps>0$ for the constrained
\CompShort: The lower bound (\pref{thm:pac_lower}) has
$\vepslowerT\propto\sqrt{\frac{1}{T}}$, while the upper bound
(\pref{thm:pac_upper}) has
$\vepsupperT\propto\sqrt{\frac{\log(2/\delta)\cdot\EstBar}{T}}$. As discussed in \citet{foster2021statistical}, understanding when the
estimation complexity $\EstBar$ can be removed or weakened is subtle
issue, as there are some classes $\cM$ for which this term is
necessary, and others for which it is superfluous. This is the main
question left open by our research.

Let us instantiate \pref{thm:pac_upper} for some standard examples, focusing on the special case
where $\cM$ is finite and $\EstProbHel\leq\log(\abs{\cM}/\delta)$ for simplicity.
\begin{itemize}
  \item Whenever
    $\deccpac(\cM)\propto\veps\cdot\sqrt{\Ceff}$,
  \pref{thm:pac_upper} gives
$\En\brk*{\RiskDM} \leq\bigoht\prn*{\sqrt{\frac{\Ceff{}\log\abs{\cM}}{T}}}$,
which translates into
$\bigoht\prn*{
    \frac{\Ceff\log\abs{\cM}}{\veps^2}
    }$
samples to learn an $\veps$-optimal policy.
  
\item Whenever $\deccpac(\cM)\propto\veps^{1-\rho}$
for $\rho\in(0,1)$, \pref{thm:pac_upper}
gives $\En\brk*{\RiskDM} \leq\bigoht((\log\abs{\cM}/T)^{\frac{(1-\rho)}{2}})$, which
translates into $\bigoht(\log\abs{\cM}\cdot\veps^{-\frac{2}{1-\rho}})$ samples to learn
a $\veps$-optimal policy.

\item Whenever $\deccpac(\cM)\propto\indic\crl{\veps\geq{}1/\sqrt{\Ceff}}$,
  where $\Ceff$ is a problem-dependent parameter,
  \pref{thm:pac_upper} gives
  $\RiskDM=0$ with high probability whenever
  $T\geq\bigomt(\Ceff\cdot\log\abs{\cM})$.
\end{itemize}

\subsection{Proof of PAC Upper Bound (\Cref{thm:pac_upper})}
\label{sec:upper_pac_proof}

We now prove \pref{thm:pac_upper}. Before proceeding with the proof,
we give some brief background on the notion of online-to-batch
conversion, which is used in the exploitation phase and its analysis.

\paragraph{Background: Online-to-batch conversion}
As discussed in the prequel, we assume access to an online oracle
$\AlgEst$. We use the
\emph{online} guarantee the oracle the algorithm provides---namely,
that it ensures that the cumulative estimation error is bounded for an adaptively chosen
sequence of decisions---in a
non-trivial fashion during the exploration phase, but our analysis additionally
makes use the fact that online oracles can be used to provide
guarantees for \emph{offline} estimation.

For offline estimation, we consider a setting in which 
        there is some $p \in \Delta(\Pi)$ so that $p\^t = p$ for all
        $t$, and the algorithm must output a single model estimate
        $\wh M$ such that
        $\En_{\pi\sim{}p}\brk*{\Dhels{\Mhat(\pi)}{\Mstar(\pi)}}\leq\veps^2$. We
        can obtain such a guarantee using an online estimation oracle via the
        following online-to-batch conversion process:
        \begin{itemize}
        \item For each $t=1,\ldots,T$, obtain $\Mhat\ind{t}$ by running $\AlgEst$ on samples
          $\crl{(\pi\^i, r\^i, o\^i)}_{t=1}^{t-1}$, where $\pi\^t \sim
          p$ and $(r\ind{t},o\ind{t})\sim\Mstar(\pi\ind{t})$.
        \item Let $\wh M := \frac 1T \sum_{t=1}^T \wh M\^t \in \co(\MM)$.
        \end{itemize}
        It is evident from  \cref{ass:hellinger_oracle} and
        convexity of the squared Hellinger distance that with
        probability at least $1-\delta$, the estimator $\Mhat$
        constructed above satisfies
        \begin{align}
          \E_{\pi \sim p} \left[ \hell{M^\st(\pi)}{\wh M(\pi)}\right] &= \E_{\pi \sim p} \left[ \hell{M^\st(\pi)}{\frac 1T \sum_{t=1}^T \wh M\^t(\pi)}\right]\nonumber\\
          &\leq  \frac 1T \sum_{t=1}^T \E_{\pi \sim p} \left[ \hell{M^\st(\pi)}{\wh M\^t(\pi)}\right] \leq \frac{\Est(T, \delta)}{T}\label{eq:offline-estimation}.
        \end{align}

This is the strategy employed in \pref{line:tilm-pac-define} of
\pref{alg:pac}. Standard offline estimation algorithms (e.g., MLE) can
be used to derive similar guarantees, but we make use of online-to-batch in order to
keep notation light, since \pref{alg:pac} already requires the online
estimation algorithm for the exploration phase.

\begin{proof}[\pfref{thm:pac_upper}] We begin by analyzing the
  exploitation phase.
e  Recall that we set $J := \frac{T}{\lceil \log 2/\delta \rceil+1} \geq \frac{T}{2L}$. 
  By \Cref{ass:hellinger_oracle}, we have that with probability at
  least $1-\frac{\delta}{4L}$, 
  \begin{align}
\sum_{t=1}^{J} \E_{\pi\^t \sim q\^t} \left[ \hell{M^\st(\pi\^t)}{\wh M\^t(\pi\^t)} \right] \leq \Est\left(J, \frac{\delta}{4L}\right) \leq \EstBar\nonumber.
  \end{align}
We denote this event by $\scrE_0$, and condition on it going
forward. Since we have $\vepsupperT^2 \geq  \frac{32}{J} \cdot \EstBar$
by definition, it follows from Markov's inequality that if $s \in [J]$ is chosen uniformly at random, then with probability at least $1/2$,
  \begin{align}
\E_{\pi\^s \sim q\^s} \left[ \hell{M^\st(\pi\^s)}{\wh M\^s(\pi\^s)} \right] \leq \frac{\vepsupperT^2}{16}\label{eq:s-hell-bound-2}.
  \end{align}
Going forward, our aim is to show that the exploitation phase
identifies such an index $s\in\brk{J}$. Indeed, for any $s\in\brk{J}$
such that the inequality \pref{eq:s-hell-bound-2} holds, we have
$\Mstar\in\cH_{q\ind{s},\vepsupperT}(\Mhat\ind{s})$, and hence
\[
\En_{\pi\sim{}p\ind{s}}\brk*{\fmstar(\pimstar) - \fmstar(\pi)}\leq\deccpac[\vepsupperT](\cM,\Mhat\ind{s})\leq \deccpac[\vepsupperT](\cM)
\]
from the expression
    \pref{eq:pac_dec_alg} and the definition of the constrained
    \CompShort.

To proceed, first observe that for the uniformly sampled indices $t_1,
\ldots, t_L \in [J]$, a standard confidence boosting argument implies
that with probability at least $1-2^{-L} \geq 1-\frac{\delta}{2}$,
there is some $\ell \in [L]$ so that \eqref{eq:s-hell-bound-2} is
satisfied with $s = t_{\ell}$. We denote this event by $\scrF$.

Next, recall the definition $\til M_\ell = \frac 1J \sum_{j=1}^J
\til M_\ell\^j$ in \eqref{line:tilm-pac-define} of
\Cref{alg:pac}. Using \pref{ass:hellinger_oracle} together with
\eqref{eq:offline-estimation}, we have that for each $\ell \in [L]$,
there is an event that occurs with probability at least
$1-\frac{\delta}{4L}$, denoted by
$\scrE_\ell$, such that under $\scrE_\ell$ we have
  \begin{align}
\E_{\pi \sim q\^{t_\ell}} \left[ \hell{M^\st(\pi)}{\til M_{\ell}(\pi)}\right] \leq \frac{\Est(J, \delta/4L)}{J} \leq \frac{\EstBar}{J} \leq  \frac{\vepsupperT^2}{32}\nonumber,
  \end{align}
  where the second inequality uses our choice for $\vepsupperT$. Define
  $\scrE \ldef \bigcap_{\ell=0}^L \scrE_\ell$, so that $\scrE$ occurs with
  probability at least $1-\frac{(L+1)\delta}{4L} \geq
  1-\frac{\delta}{2}$. We define $\scrE = \scrF \cap \bigcap_{\ell=0}^L \scrE_\ell$, so that $\scrE$ occurs with probability at least $1-\frac{(L+1)\delta}{4L} - \frac{\delta}{2} \geq 1-\delta$.
  
We now show that the exploitation phase succeeds whenever the event
$\scrE$ holds. By the triangle inequality for Hellinger distance,
letting $\ell\in\brk{L}$ be any index such that \eqref{eq:s-hell-bound-2} is
satisfied with $s=t_\ell$, we have
  \begin{align}
\E_{\pi \sim q\^{t_\ell}} \left[ \hell{\til M_\ell(\pi)}{\wh M\^{t_\ell}(\pi)} \right] \leq & 2 \cdot \left( \E_{\pi \sim q\^{t_\ell}} \left[ \hell{M^\st(\pi)}{\til M_\ell(\pi)} + \hell{M^\st(\pi)}{\wh M\^{t_\ell}(\pi)} \right]\right)\leq \frac{\vepsupperT^2}{4}\nonumber.
  \end{align}
  From the definition on \Cref{line:choose-lhat}, the index
 $\wh \ell\in\brk{L}$ satisfies
  \begin{align}
\E_{\pi \sim q\^{t_{\wh \ell}}} \left[ \hell{\til M_{\wh \ell}(\pi)}{\wh M\^{t_{\wh \ell}}(\pi)} \right] \leq \E_{\pi \sim q\^{t_\ell}} \left[ \hell{\til M_\ell(\pi)}{\wh M\^{t_\ell}(\pi)} \right] \leq \frac{\vepsupperT^2}{4}\nonumber.
  \end{align}
  Using the triangle inequality for Hellinger distance once more, we obtain that under the event $\scrE$,
  \begin{align}
\E_{\pi \sim q\^{t_{\wh \ell}}} \left[ \hell{\wh M\^{t_{\wh \ell}}(\pi)}{M^\st(\pi)} \right] \leq 2 \cdot \left( \frac{\vepsupperT^2}{4} + \frac{\vepsupperT^2}{32} \right) < \vepsupperT^2\nonumber,
  \end{align}
  which means that $M^\st\in \MH_{ q\^{t_{\wh \ell}},\vepsupperT}(\wh
  M\^{t_{\wh \ell}})$. It follows that whenever $\scrE$ holds,
  \begin{align}
   \RiskDM =  \E_{\pi \sim p\^{t_{\wh \ell}}} \left[
    \fmstar(\pimstar)- \fmstar(\pi) \right] & \leq  \sup_{M \in \MH_{ q\^{t_{\wh \ell}},\vepsupperT}(\wh M\^{t_{\wh \ell}})} \E_{\pi \sim p\^{t_{\wh \ell}}}[\fm(\pim) - \fm(\pi)]\nonumber\\
    &= \inf_{p,q \in \Delta(\Pi)} \sup_{M \in \MH_{q,\vepsupperT}(\wh M\^{t_{\wh \ell}})} \E_{\pi \sim p} [\fm(\pim) - \fm(\pi)]\nonumber\\
    &= \deccpac(\MM, \wh M\^{t_{\wh \ell}})\nonumber\\
    &\leq \sup_{\Mbar \in \co(\MM)} \deccpac[\vepsupperT](\MM, \Mbar)=\deccpac[\vepsupperT](\cM)\nonumber,
  \end{align}
  where the first equality follows from the choice of $p\^{t_{\wh \ell}}, q\^{t_{\wh \ell}}$ in \Cref{line:ptqt-pac}.
  The conclusion in \eqref{eq:pac-expected-risk} of the theorem
  statement follows from the observation that $\RiskDM$ is bounded
  above by 1, as we assume that rewards lie in $[0,1]$. 
  \end{proof}

\subsection{Algorithm and Upper Bound for Regret}
\label{sec:upper_regret}

      \begin{algorithm}[tp]
    \setstretch{1.3}
     \begin{algorithmic}[1]
       \State \textbf{parameters}:
       \Statex[1] Number of rounds $T\in\bbN$ and error probability $\delta \in (0,1)$.
       \Statex[1] Failure probability $\delta>0$.
       \Statex[1] Online estimation oracle $\AlgEst$.
       \Statex[1] Constants $C_1 > 1$. \algcommentlight{Specified in \cref{app:upper}.}
       \State Define $N := \lceil \log T \rceil$ and $L \ldef \lceil \log 1/\delta \rceil$.
     \State Set $\vep_N \ldef\sqrt{\frac{C_1\cdot \Est(T, \delta)\cdot  L}{T}}$ each $\vep_i \ldef \sqrt{2^{N-i}} \cdot \vep_N$ for $i \in [N]$. 
\State Split $[T]$ into $2N$ contiguous blocks $\ME_1 \cup \MR_1 \cup \cdots \cup \ME_N \cup \MR_N$, where
$
    |\MR_i|, |\ME_i| \in \left[ \frac{2^i}{4}, \frac{2^i}{2} \right]  %
    $
    for all $i \in [N]$.
  \State Set $\MM_1 := \MM$.
  \For{$i \in [N]$}
  \Statex[0] \algcommentbig{Exploration for epoch $i$}
  \State Initialize an instance of $\AlgEst$, with time horizon
  $|\ME_i|$, failure probability $\delta$, and model class $\MM_i$.
  \For{$t \in \ME_i$}
  \State Obtain estimate $\wh M\^t := \AlgEst( \{ (\pi\^s, r\^s, o\^s)\}_{s\in \ME_i, s < t })$, where $\wh M\^t \in \co(\MM_i)$. 
    \State       Compute %
    \label{line:regret_algo_dec} 
      \begin{align}
        p\^t := \argmin_{p \in \Delta(\Pi)} \sup_{M \in \MH_{p,\vep_i}(\wh M\^t) \cup \{\wh M\^t\}} \E_{\pi \sim p}[\fm(\pim) - \fm(\pi)]\nonumber.
      \end{align}
      \State Sample decision $\act\ind{t}\sim{}p\ind{t}$ and update estimation
      oracle $\AlgEst$ with $(\act\ind{t},r\ind{t}, \obs\ind{t})$.
      \EndFor
      \Statex[0] \algcommentbig{Refinement for epoch $i$}
  \State Sample a subset $\MS_i \subset \ME_i$ of size $L$ uniformly at random (with replacement). \label{line:sample-subset-si}
  \State Initialize $s_i^{\mathrm{tmp}} = \perp$. \algcommentlight{\emph{With high probability,~$s_i^{\mathrm{tmp}}$ will be updated in the {for} loop.}}
  \For{$s \in \MS_i$}
  \State \multiline{Initialize an instance of $\AlgEst$ with horizon $J_i := |\MR_i|/L$, failure probability $\delta$, and model class $\MM_i$.  }
  \For{$1 \leq j \leq J_i$}
  \State Sample a decision $\pi\^j_s \sim p\^s$ and update $\AlgEst$ with $(\pi\^j_s, r\^j_s, o\^j_s)$.\label{line:sample-pisj-ps}
  \State Obtain estimate $\til M_s\^j := \AlgEst \left( \{( \pi_s\^k, r_s\^k, o_s\^k)_{k=1}^{j-1} \} \right)$, where $\til M_s\^j \in \co(\MM_i)$.
  \If{$\sum_{k=1}^j \E_{\pi \sim p\^s} \left[\hell{\wh M\^s(\pi)}{\til M_s\^k(\pi)}\right] > \frac{J_i \vep_i^2}{4}$}
  \State Define $J_{i,s} := j$, and \textbf{break} out of the loop over $j$ (i.e., proceed to the next value in $\MS_i$).\label{line:choose-jis}
  \EndIf

  \If{$j = J_i$}
  \State Set $s_i^{\mathrm{tmp}} = s$ and $J_{i,s} = J_i$.\footnotemark
  \label{line:assign-si}
  \EndIf
\EndFor
\EndFor
\State Set $s_i = s_i^{\mathrm{tmp}}$ if $s_i^{\mathrm{tmp}} \neq
\perp$, else choose $s_i \in \MS_i$ arbitrarily. Set $\wh M_i := \frac{1}{J_{i,s}} \sum_{j=1}^{J_{i,s}} \til M_{s_i}\^j$ and $\wh p_i = p\^{s_i}$. \label{line:set-si-perm}
\For{any remaining rounds $t$ in $\MR_i$}
\State Play $\pi\^t \sim p\^{s_i}$ (the rewards and observations can be ignored). \label{line:play-psi}
\EndFor
  \State Set
    \begin{align}
\MM_{i+1} := \left\{ M \in \MM \mid  \E_{\pi \sim \wh p_i} \left[ \hell{M(\pi)}{\wh M_i(\pi)} \right] \leq \frac{\Est(J_i, \delta)}{J_i} \right\}\nonumber.
    \end{align}
    \EndFor
\end{algorithmic}
\caption{Estimation-to-Decisions (\etdp) for Regret}
\label{alg:regret}
\end{algorithm}

\footnotetext{\pref{alg:regret} will continue to work if we modify it to break out of the loop over $s\in\cS_i$ when this if statement is reached, which is somewhat more natural. We use the version presented here because it slightly simplifies the proof.}

We now present a variant of \etdp for the regret framework,
\pref{alg:regret}, which attains a regret bound that
scales with $\deccreg[\vepsupperT](\cM)\cdot{}T$ for an appropriate
radius $\vepsupperT>0$.

\paragraph{Online estimation}
\pref{alg:regret} makes use of an online estimation oracle in the same
fashion as \pref{alg:pac}, but we require a slightly stronger oracle
capable of incorporating \emph{constraints} on the model class,
specified via a subset $\cM'\subseteq\cM$.
\begin{assumption}[Constrained estimation oracle for $\cM$]
    \label{ass:hellinger_oracle_constrained}
    A constrained estimation oracle for $\cM$ takes as input a
    constraint set $\cM'\subseteq\cM$. At each time $t\in[T]$, the online estimation oracle
        $\AlgEst$ returns,
        given  $$\hist\ind{t-1}=(\pi\ind{1},r\ind{1},o\ind{1}),\ldots,(\pi\ind{t-1},r\ind{t-1},o\ind{t-1})$$
        with $(r\ind{i},o\ind{i})\sim\Mstar(\pi\ind{i})$ and
        $\pi\ind{i}\sim p\ind{i}$, an estimator
        $\Mhat\ind{t}\in\conv(\cM')$ such that whenever $\Mstar\in\cM'$,
        \begin{align}
              \EstHel \ldef{}
          \sum_{t=1}^{T}\En_{\act\ind{t}\sim{}p\ind{t}}\brk*{\Dhels{\Mstar(\pi\ind{t})}{\Mhat\ind{t}(\pi\ind{t})}}
          \leq \EstProbHel,
        \end{align}
	with probability at least $1-\delta$, where $\EstProbHel$ is a
        known upper bound.
      \end{assumption}
This assumption is identical to \pref{ass:hellinger_oracle}, except
that i) the oracle takes a \emph{constraint set} $\cM'\subseteq\cM$ as an input
before the learning process begins, and ii) the oracle is required to
produce $\Mhat\ind{t}\in\conv(\cM')$; that is, the estimator is
required to lie in the convex hull of the constraint set
$\cM'$. All estimation algorithms that we are aware of can achieve
this guarantee with minor or no modifications, including the
exponential weights algorithm, which satisfies
\pref{ass:hellinger_oracle_constrained} with $\EstProbHel\leq\bigoh(\log(\abs{\cM}/\delta))$.

      \paragraph{Overview of algorithm}
      \pref{alg:regret} employs the Estimation-to-Decisions
principle of \citet{foster2021statistical} but, like
\pref{alg:pac}, incorporates substantial modifications tailored to the
constrained (as opposed to offset) \CompShort. The core of the algorithm
is \pref{line:regret_algo_dec}, which---at each round $t$---obtains an
estimator $\Mhat\ind{t}\in\conv(\cM)$, then computes
an exploratory distribution $p\ind{t}$ by solving the min-max problem
that defines the regret \CompShort (\eqref{eq:dec_constrained}) with
$\Mhat\ind{t}$ plugged in:
  \begin{align}
    \label{eq:regret_dec_alg}
    p\ind{t}:= \argmin_{p\in \Delta(\Pi)} \sup_{M \in  \MH_{p,\veps}(\wh M\^t)\cup\crl{\Mhat\ind{t}}} \E_{\pi \sim p} [ \fm(\pim) - \fm(\pi)],
  \end{align}
  for an appropriate choice of $\veps>0$ that depends on $t$.

As with \pref{alg:pac}, the main challenge \pref{alg:regret} needs to
overcome is \emph{\falsex}: Whenever
$\Mstar\in\cH_{p\ind{t},\veps}(\Mhat\ind{t})$, it is immediate
from the definition above that
\[
  \En_{\pi\ind{t}\sim{}p\ind{t}}\brk*{\fmstar(\pimstar)-\fmstar(\pi\ind{t})}\leq{}
  \deccreg[\veps](\cM\cup\crl{\Mhat\ind{t}},\Mhat\ind{t}),
\]
but what happens if
$\Mstar\notin\cH_{p\ind{t},\veps}(\Mhat\ind{t})$? For PAC, the
solution employed by \pref{alg:pac} is simple: The online estimation
guarantee for $\AlgEst$ ensures that $\Mstar$ is correctly
included for at least half the rounds, and we only need to identify a
single round where it is included. For regret, this reasoning no
longer suffices: We cannot simply ignore the rounds in which
$\Mstar$ is excluded, as the regret for these rounds must be
controlled. 

\paragraph{Epoch scheme}
To address the issue of \falsex, \pref{alg:regret} breaks the rounds $1,\ldots,T$ into epochs
$1,\ldots,N$ of doubling length, with each epoch $i$ consisting of a
contiguous set of \emph{exploration rounds} $\cE_i\subseteq\brk{T}$ and
\emph{refinement rounds} $\cR_i\subseteq\brk{T}$. Each
epoch proceeds in a similar fashion to \pref{alg:pac}, but takes
advantage of the
data collected in previous epochs to explore in a fashion that is
robust to false exclusions:
\begin{itemize}
  \item Each exploration phase gathers data using a sequence of exploratory
  distributions $\crl{p\ind{t}}_{t\in\cE_i}$ computed by solving
  the DEC with the estimated models $\crl{\Mhat\ind{t}}_{t\in\cE_i}$,
  following \eqref{eq:regret_dec_alg}. However, the estimated models
  $\Mhat\ind{t}$ are restricted to lie in $\conv(\cM_i)$, where
  $\cM_i$ is a confidence set computed using data from the previous
  epoch.
  \item The purpose of the refinement phase in epoch $i$ is to use
  the distributions generated in the exploration phase to compute a
  confidence set $\cM_{i+1}$ for the \emph{next epoch} that satisfies
  a certain invariant that allows us to translate low regret with
  respect to models in the confidence set to low regret under
  $\Mstar$.
\end{itemize}

To describe the phases for an epoch $i\in\brk{N}$ in greater detail,
we define
\[
  \alpha_i\ldef{} C_0 \cdot \deccreg[\veps_i](\cM) + 64 \vep_i,
\]
for a constant $C_0 > 1$ whose value is specified in the full proof (\pref{app:upper}).  

\paragraph{Exploration phase} For each step $t$ within the exploration phase $\cE_i$, we
  obtain an estimator $\Mhat\ind{t}\in\conv(\cM_i)$ from the estimation oracle and
  solve
        \begin{align}
        p\^t := \argmin_{p \in \Delta(\Pi)} \sup_{M \in
          \MH_{p,\vep_i}(\wh M\^t)\cup\crl{\Mhat\ind{t}}} \E_{\pi \sim
          p}[\fm(\pim) - \fm(\pi)].
          \label{eq:regret_alg_epoch}
        \end{align}
        where $\cM_i\subseteq\cM$ is the confidence set produced by the
        refinement phase in epoch $i-1$. The set $\cM_i$ is
        constructed so that $\Mstar\in\cM_i$ with high probability,
        but in addition, the following \emph{localization property}
        can be shown (using that $\wh M\^t \in \co(\MM_i)$ for all $t
        \in \ME_i$; see the proof of \cref{lem:compare-arb-distr}): %
        \begin{equation}
          \fmstar(\pimstar) \leq
          f\sups{\Mhat\ind{t}}(\pi\subs{\Mhat\ind{t}}) + \frac{\alpha_{i-1}}{2}\quad\forall t
        \in \ME_i.\label{eq:localization_regret}
        \end{equation}
        The condition \pref{eq:localization_regret} implies that we are always in a favorable position with
        respect to regret:
        \begin{itemize}
          \item If
          $\Mstar \in \MH_{p\ind{t},\vep_i}(\wh M\^t)$---
          that is, the true model is not falsely excluded---we have
          $\En_{\pi\sim{}p\ind{t}}\brk*{\fmstar(\pimstar)-\fmstar(\pi)}\leq\deccreg[\veps_i](\cM)\leq\alpha_i$
          by definition.
          \item Even if $\Mstar$ is falsely
          excluded, \eqref{eq:localization_regret} implies that up to
          certain nuisance terms, we have (as shown in \cref{lem:epoch-optimal-pol}): 
          \[
            \En_{\pi\sim{}p\ind{t}}\brk[\big]{\fmstar(\pimstar)-\fmstar(\pi)}
            \approxleq{}
            \En_{\pi\sim{}p\ind{t}}\brk[\big]{f\sups{\Mhat\ind{t}}(\pi\subs{\Mhat\ind{t}})-f\sups{\Mhat\ind{t}}(\pi)}
            \approxleq\alpha_{i-1},
          \]
          where the latter inequality uses the definition
          \pref{eq:regret_alg_epoch}, which implies that          \[\En_{\pi\sim{}p\ind{t}}\brk[\big]{f\sups{\Mhat\ind{t}}(\pi\subs{\Mhat\ind{t}})-f\sups{\Mhat\ind{t}}(\pi)}\leq\deccreg[\veps_i](\cM\cup\crl{\Mhat\ind{t}},\Mhat\ind{t})\leq\alpha_i.\]
        \end{itemize}
        In both situations, we incur no more than $\bigoh(\alpha_i)$
        regret per round. Due to the doubling epoch schedule,
        the total contribution to regret for all exploration rounds is
        no more than $\bigoht\prn[\big]{\deccreg[\vepsupperT](\MM)\cdot{}T}$.
        \paragraph{Refinement phase}
        For the analysis of the
        exploration phase in epoch $i+1$ to succeed, the refinement phase at
        epoch $i$ must construct a confidence set $\cM_{i+1}$ so that
        the localization property \pref{eq:localization_regret} is
        satisfied with scale $\alpha_{i}$.
        To achieve this, we use a similar
        approach to the exploitation phase in \pref{alg:pac}. For all the rounds $t\in\cE_i$ for which $\Mstar$ is not
        falsely excluded, we are guaranteed that i)
        $\Mstar\in\cH_{p\ind{t},\veps_i}(\Mhat\ind{t})$, and ii) all
        $M\in\cH_{p\ind{t},\veps_i}(\Mhat\ind{t})$ satisfy
        \[
          \En_{\pi\sim{}p\ind{t}}\brk[\big]{\fm(\pim)-\fm(\pi)}\leq \deccreg[\vep_i](\MM) \leq \frac{\alpha_i}{4},
        \]
        which can be shown to imply the localization property in
        \eqref{eq:localization_regret} holds at epoch $i$ (the proof of this fact uses
        that $\Mhat\ind{t}\in\conv(\cM_{i+1})$ for all $t\in\cE_{i+1}$). In addition, the estimation
        guarantee for the oracle (\pref{ass:hellinger_oracle_constrained}) implies
        that $\Mstar$ is included in
        $\cH_{p\ind{t},\veps_i}(\Mhat\ind{t})$ for at least half of
        the rounds $t\in\cE_i$. Hence, to construct $\cM_{i+1}$, we sample a small number of rounds $t_1,\ldots,t_L\in\cE_i$ (\cref{line:sample-subset-si}; a logarithmic 
    number $L$ suffices) and perform a test based on Hellinger distance to identify a ``good''
    distribution $p\ind{t_{\ell}}$ from the set
    $\crl{p\ind{t_1},\ldots,p\ind{t_L}}$ with the property that
    $\Mstar\in\cH_{p\ind{t_\ell},\veps_i}(\Mhat\ind{t_\ell})$. We
    then set
    $\cM_{i+1}=\cH_{p\ind{t_{\ell}},\veps_i}(\Mhat\ind{t_\ell})$,
     ensuring that $\Mstar\in\cM_{i+1}$ and the localization
    property is satisfied.

\paragraph{Main result}
We now present the main regret guarantee for \etdp (\pref{alg:regret}). To state the result in the simplest form
possible, we assume that the regret \CompShort satisfies a mild
growth condition.
\begin{assumption}[Regularity of DEC]
  \label{asm:dec}
  The class $\MM$ satisfies the following:
   for some constant $\Creg > 1$ and all $\vep \in (0,2)$, we have
    \begin{align}
\deccreg(\MM) \leq \Creg^2 \cdot \deccreg[\vep/C_\reg](\MM)\nonumber.
    \end{align}
  \end{assumption}
  This condition asserts that the \CompShort does not shrink too
  quickly as a function of the parameter $\veps$. It is automatically satisfied whenever
$\deccreg(\cM)\propto\veps^{\rho}$ for $\rho\leq{}2$,
with $\rho\leq{}1$ corresponding
to the ``$\sqrt{T}$-regret or greater'' regime; regret bounds that
hold under more general assumptions are given in \pref{app:upper}.
\begin{theorem}[Main Upper Bound: Regret]
  \label{thm:regret_upper}
  Fix %
  $T \in \BN$ and $\delta \in (0,1)$. Suppose
that \Cref{ass:realizability,ass:hellinger_oracle_constrained,asm:dec} hold, and let
$\EstBar\ldef\Est(T, \delta^2)$. Then \pref{alg:regret}, with $C_1>0$
chosen appropriately, ensures
that with probability at least $1-\delta$,
  \begin{align}
  \RegDM \leq  \deccreg[\vepsupperT](\MM)\cdot{}O\prn*{T \log(T)}  +  O\prn[\Big]{\sqrt{T \log(1/\delta) \cdot \EstBar}} \nonumber,
  \end{align}
  where $\vepsupperT \ldef C\cdot\sqrt{\frac{ \EstBar\cdot\log(1/\delta)}{T}}$
  for a numerical constant $C>0$.
  \end{theorem}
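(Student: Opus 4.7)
The proof proceeds by induction over the $N = \lceil \log T \rceil$ epochs of \pref{alg:regret}, maintaining with high probability the invariant that $\Mstar \in \cM_i$ and that the localization property of \eqref{eq:localization_regret}, $\fmstar(\pimstar) \leq f^{\Mhat\ind{t}}(\pi_{\Mhat\ind{t}}) + \alpha_{i-1}/2$ for all $t \in \cE_i$, holds, where $\alpha_i \ldef C_0 \deccreg[\vep_i](\cM) + 64 \vep_i$. Given this invariant, within epoch $i$ I decompose $\cE_i$ into \emph{good} rounds, where $\Mstar \in \cH_{p\ind{t},\vep_i}(\Mhat\ind{t})$, and \emph{bad} rounds, where false exclusion occurs. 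For good rounds, the DEC subproblem on \cref{line:regret_algo_dec} directly bounds $\En_{\pi \sim p\ind{t}}[\fmstar(\pimstar) - \fmstar(\pi)] \leq \deccreg[\vep_i](\cM \cup \crl{\Mhat\ind{t}}, \Mhat\ind{t}) \leq \alpha_i$. For bad rounds, the localization invariant combined with the pointwise bound $|\fm(\pi) - f^{\Mhat\ind{t}}(\pi)| \leq \Dhel{M(\pi)}{\Mhat\ind{t}(\pi)}$ (valid since rewards lie in $[0,1]$) reduces regret under $\Mstar$ to regret under $\Mhat\ind{t}$ plus Hellinger deviation; combined with \pref{ass:hellinger_oracle_constrained}, Cauchy--Schwarz, and the fact that the number of bad rounds is at most $\EstBar / \vep_i^2$ (by Markov applied to the cumulative Hellinger error), the aggregate bad-round contribution for epoch $i$ is at most $O(\alpha_{i-1}) \cdot \EstBar/\vep_i^2 + O(\sqrt{|\cE_i| \cdot \EstBar})$.

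Next, I analyze the refinement phase to reestablish the inductive hypothesis at epoch $i+1$. A uniformly random round in $\cE_i$ is good with probability at least $1/2$ (again by Markov on cumulative Hellinger error), so among the $L = \lceil \log 1/\delta \rceil$ sampled rounds $\cS_i$, at least one is good with probability $\geq 1 - 2^{-L}$. For any good $s \in \cS_i$, the online-to-batch conversion \eqref{eq:offline-estimation} applied to the inner estimator $\til M_s\^j$ implies the stopping condition in \cref{line:choose-jis} does not trigger; conversely, whenever the condition does not trigger, the triangle inequality for Hellinger distance combined with the oracle's offline guarantee forces $\En_{\pi \sim \wh p_i}[\Dhels{\Mstar(\pi)}{\wh M_i(\pi)}] \leq \Est(J_i,\delta)/J_i$, so $\Mstar \in \cM_{i+1}$. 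The localization property for epoch $i+1$ then follows from the fact that $\Mhat\ind{t} \in \conv(\cM_{i+1})$ for all $t \in \cE_{i+1}$, together with the pointwise reward bound and the confidence-set constraint defining $\cM_{i+1}$, which via Cauchy--Schwarz controls $|f^{\Mhat\ind{t}}(\pi_{\Mhat\ind{t}}) - \fmstar(\pi_{\Mhat\ind{t}})|$ by a quantity of order $\sqrt{\Est(J_i,\delta)/J_i} \lesssim \alpha_i$, yielding the required scaling $\alpha_i/2$ via choice of $C_0$. During refinement rounds $\cR_i$, on the same high-probability event, $\Mstar \in \cH_{\wh p_i,\vep_i}(\wh M\^{s_i})$, so playing $\wh p_i$ on \cref{line:play-psi} incurs per-round regret $\leq \alpha_i$, matching the exploration-phase bound.

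Finally, I sum the per-epoch bounds after a union bound over the $O(NL)$ estimation instances. The good-round and refinement contributions total $\sum_{i=1}^{N} |\cE_i \cup \cR_i| \cdot \alpha_i$; since $|\cE_i \cup \cR_i| \asymp 2^i$ and $\vep_i = \sqrt{2^{N-i}}\, \vep_N$, iterating \pref{asm:dec} gives $\deccreg[\vep_i](\cM) \leq \Creg^{2(N-i)} \deccreg[\vep_N](\cM)$, and the doubling epoch sizes against the geometric decay in $\vep_i$ combine to yield $\deccreg[\vepsupperT](\cM) \cdot O(T \log T)$. The bad-round and refinement-deviation terms aggregate, via $\sum_i \sqrt{|\cE_i| \EstBar} \lesssim \sqrt{T \log(1/\delta) \EstBar}$ and $\sum_i \alpha_{i-1} \EstBar/\vep_i^2 \lesssim \sqrt{T \log(1/\delta) \EstBar}$, to give the claimed $O(\sqrt{T \log(1/\delta) \EstBar})$ additive term.

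The principal obstacle will be the careful propagation of the localization invariant across epochs, rather than the summation. Specifically, verifying that the offline Hellinger test in the refinement phase reliably identifies a good round at scale $\vep_i$, and that the resulting $\cM_{i+1}$ is tight enough to yield localization at scale $\alpha_i/2$ at every $t \in \cE_{i+1}$ (uniformly across $t$, despite $\Mhat\ind{t}$ depending on data drawn during epoch $i+1$), demands pairing the triangle inequality for Hellinger distance with an online-to-batch convexity argument while tracking how $\vep_i$, $\alpha_i$, $J_i$, and $\Est(J_i,\delta)$ interact through \pref{asm:dec}; this coupling is the reason the refinement phase in \pref{alg:regret} is substantially more intricate than the exploitation phase of \pref{alg:pac}.
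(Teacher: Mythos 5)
Your proposal is correct and follows essentially the same approach as the paper's proof. The epoch structure, the propagation of the localization invariant via the confidence sets $\cM_{i+1}$, the use of online-to-batch conversion in the refinement phase, the triangle-inequality/Hellinger test to identify a good round $s_i$ from $\cS_i$, and the summation via \pref{asm:dec} and the doubling schedule all match the paper's argument (cf.\ Lemmas C.1--C.4 in \pref{app:upper}). The only organizational difference worth flagging: the paper does not explicitly partition exploration rounds into good and bad; instead \cref{lem:epoch-optimal-pol} gives a single per-round bound $\deccreg[\vep_i](\cM\cup\{\Mhat\ind{t}\},\Mhat\ind{t}) + \alpha_{i-1}/2 + \sqrt{\E_{\pi\sim p\ind{t}}[\Dhels{\Mhat\ind{t}(\pi)}{\Mstar(\pi)}]}$ that holds for \emph{all} $t\in\cE_i$ (false exclusion or not, since the DEC max always includes $\Mhat\ind{t}$), and then Cauchy--Schwarz over the cumulative Hellinger error handles the deviation term in aggregate. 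Your good/bad split with the Markov count of false exclusions yields the same rate and is a valid reorganization, but the unified bound is slightly cleaner since it avoids multiplying a worst-case per-round bound by a round count. One small imprecision: the inclusion $\Mstar\in\cM_{i+1}$ follows from convexity of squared Hellinger distance applied to $\wh M_i = \frac{1}{J_i}\sum_j \til M_{s_i}^j$ together with the oracle's cumulative guarantee; the triangle inequality is needed separately to establish $\E_{\pi\sim \wh p_i}[\Dhels{\wh M^{s_i}(\pi)}{\Mstar(\pi)}]\leq \vep_i^2$, which in turn drives the bound on $\wh p_i$'s suboptimality. You conflate these two slightly, but both ingredients are present in your plan.
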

The proof of this result is deferred to \pref{app:upper}. As in the PAC setting, this upper bound matches the corresponding lower bound (\pref{thm:regret_lower}) up to
the radius for the constrained \CompShort
($\vepslowerT\approx\sqrt{\frac{1}{T}}$ for \pref{thm:regret_lower} versus
$\vepsupperT\approx\sqrt{\frac{\EstBar}{T}}$
for \pref{thm:regret_upper}), and cannot be improved beyond logarithmic
factors without further
assumptions.

\begin{remark}[Relaxing the regularity condition]
It follows immediately from the proof of \pref{thm:regret_upper} that the following holds. Suppose that in place of \cref{asm:dec}, we assume that there is some function $\mathsf{r}(\vep)$ so that, for all $\vep \in (0,2)$, we have: (1) $\deccreg[\vep](\MM) \leq \mathsf{r}(\vep)$  and (2) $\mathsf{r}(\vep) \leq \Creg^2 \cdot \mathsf{r}(\vep/\Creg)$. Then the regret bound of \pref{thm:regret_upper} holds with $\deccreg[\vepsupperT](\MM)$ replaced by $\mathsf{r}(\vepsupperT)$. %
\end{remark}

Examples of \pref{thm:regret_upper}, under the assumption that
$\cM$ is finite and $\EstProbHel\leq\bigoh\prn[\big]{\log(\abs{\cM}/\delta)}$, include:
\begin{itemize}
  \item Whenever
    $\deccreg(\cM)\propto\veps\cdot\sqrt{\Ceff}$,
  \pref{thm:regret_upper} gives
$\En\brk*{\RegDM} \leq\bigoht\prn[\big]{\sqrt{\Ceff{}\cdot{}T\cdot{}\log\abs{\cM}}}$.
  
\item Whenever $\deccreg(\cM)\propto\veps^{1-\rho}$
for $\rho\in(0,1)$, \pref{thm:regret_upper}
gives $\En\brk*{\RegDM}
\leq\bigoht\prn[\big]{T^{\frac{(1+\rho)}{2}}\cdot\log^{\frac{1-\rho}{2}}\abs{\cM} + T^{\frac{1}{2}} \log^{\frac{1}{2}}|\MM|}$.
\end{itemize}

\section{\CompText: Structural Properties}
\label{sec:properties}

The lower and upper bounds in \pref{sec:lower,sec:upper}, which are
stated in terms of the constrained \CompText,  are tight up
to dependence on the model estimation error $\EstProbHel$ (recall that
the lower bounds use scale $\vepslowerT=\bigomt\prn[\big]{\sqrt{1/T}}$,
while the upper bounds use scale
$\vepsupperT=\bigoht\prn[\big]{\sqrt{\EstProbHel/T}}$). It is natural to ask how
these results are related to the lower and upper bounds in
\citet{foster2021statistical,foster2022complexity}, which are stated
in terms of the offset \CompShort, and at first glance are not
obviously comparable. Toward developing such an understanding, this
section establishes a number of structural properties for the
\CompShort.
\begin{itemize}
\item In \pref{sec:constrained_offset}, we show that the constrained
  and offset \CompShort are nearly equivalent for PAC. For regret, we
  show that it is always possible to bound the constrained
  \CompShort by the offset \CompShort in a tight fashion, but the converse
  is not true in general.
\item In \pref{sec:localization}, we show that the constrained \CompShort
  implicitly enforces a form of localization, and uncover a tighter
  relationship between the constrained and offset variants of the
  regret \CompShort for
  localized classes.
\end{itemize}
\pref{sec:reference_regret,sec:convexity} investigate the
role of the reference model $\Mbar\in\cMall$.
\begin{itemize}
\item First, in
  \pref{sec:reference_regret}, we show that the definition
  $\deccreg(\cM)=\sup_{\Mbar\in\conv(\cM)}\deccreg(\cM\cup\crl{\Mbar},\Mbar)$,
  which incorporates suboptimality under the reference model $\Mbar$, is
  critical to obtain tight upper and lower bounds. This is a
  fundamental difference from
  PAC, where we show that it suffices to use the definition
  $\deccpac(\cM)=\sup_{\Mbar\in\conv(\cM)}\deccpac(\cM,\Mbar)$, which
  does not incorporate suboptimality under $\Mbar$.
  \item Then, in
  \pref{sec:convexity}, we show that allowing for arbitrary,
  potentially improper reference models $\Mbar\in\cMall$ never increases
  the value of the \CompShort beyond what is achieved by reference
  models $\Mbar\in\conv(\cM)$. This illustrates the fundamental role of convexity. In addition, we show that a similar
  equivalence holds for variants of the DEC that incorporate randomized
  mixture estimators.
\end{itemize}

\subsection{Relationship Between Constrained and Offset DEC}
\label{sec:constrained_offset}

It is immediate that one
can bound the constrained \CompShort by the offset \CompShort in a
tight fashion. Focusing on regret for
concreteness, we can use the method of Lagrange
multipliers to show that for any $\Mbar\in\cM$ and $\veps>0$,
\begin{align}
  \deccreg(\cM,\Mbar) &= \inf_{p\in\Delta(\Pi)}\sup_{M\in\cM}\crl*{
  \En_{\pi\sim{}p}\brk*{\gm(\pi)}\mid{}\Enp\brk*{\Dhels{M(\pi)}{\Mbar(\pi)}}\leq\veps^2
                        } \notag\\
  &= \inf_{p\in\Delta(\Pi)}\sup_{M\in\cM}\inf_{\gamma>0}\crl*{
  \En_{\pi\sim{}p}\brk*{\gm(\pi)} - \gamma\prn*{\Enp\brk*{\Dhels{M(\pi)}{\Mbar(\pi)}}-\veps^2}
    } \notag\\
      &\leq \inf_{\gamma>0}\inf_{p\in\Delta(\Pi)}\sup_{M\in\cM}\crl*{
  \En_{\pi\sim{}p}\brk*{\gm(\pi)} - \gamma\prn*{\Enp\brk*{\Dhels{M(\pi)}{\Mbar(\pi)}}-\veps^2}
        }\notag\\
        &= \inf_{\gamma>0}\crl*{
          \decoreg(\cM,\Mbar) + \gamma\veps^2
  }. \label{eq:constrained_to_offset}
\end{align}
The same approach yields an analogous inequality for PAC. For general
models $\Mbar\notin\cM$, the following slightly looser version of \eqref{eq:constrained_to_offset} holds.\footnote{An inequality that replaces the right-hand
  side of \eqref{eq:constrained_offset_regret1} with
  $\decoreg(\cMu,\Mbar)$ follows immediately by applying
  \eqref{eq:constrained_to_offset} to the class $\cMu$. The inequality
  \pref{eq:constrained_offset_regret1} is stronger, and the proof is
  more involved.}
\begin{proposition}
  \label{prop:constrained_offset_union}
For all $\Mbar\in\cMall$ and $\veps>0$, we have
\begin{align}
  \label{eq:constrained_offset_regret1}
  \deccreg(\cMu,\Mbar)
  \leq{} 
  8\cdot{}\inf_{\gamma>0}\crl*{\decoreg(\cM,\Mbar)\vz + \gamma\veps^2}  +
  7\veps.
\end{align}
\end{proposition}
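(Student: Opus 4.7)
The plan is to combine a Lagrangian-style reduction with a mixing construction that anchors the learner's distribution at $\pimbar$. First, I would apply the Lagrangian inequality derived above in \eqref{eq:constrained_to_offset} to the enlarged class $\cMu$, giving
\[
\deccreg(\cMu,\Mbar) \;\leq\; \inf_{\gamma>0}\bigl\{\decoreg(\cMu,\Mbar) + \gamma\veps^{2}\bigr\};
\]
this is the ``easy'' bound mentioned in the footnote. The real work is in replacing $\decoreg(\cMu,\Mbar)$ by $\decoreg(\cM,\Mbar)\vee 0$ at the price of an additive $O(\veps)$ slack, since the class $\cMu$ adds precisely one extra model --- namely $\Mbar$, whose Hellinger offset against itself vanishes.

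Fix $\gamma>0$, let $\alpha := \decoreg(\cM,\Mbar)\vee 0$, and let $p^{*}$ be a near-optimal minimizer for $\decoreg(\cM,\Mbar)$, so that $\En_{\pi\sim p^{*}}[\gm(\pi)-\gamma\Dhels{M(\pi)}{\Mbar(\pi)}]\leq \alpha$ for every $M\in\cM$. I would then form the mixture $p_{\beta} := (1-\beta)\,p^{*} + \beta\,\delta_{\pimbar}$ for a parameter $\beta\in[0,1]$ to be chosen. For any $M\in\cM$ satisfying the constraint $\En_{p_\beta}[\Dhels{M(\pi)}{\Mbar(\pi)}]\leq\veps^{2}$, the mixture immediately gives $\En_{p^{*}}[\Dhels{M(\pi)}{\Mbar(\pi)}]\leq \veps^{2}/(1-\beta)$ and $\Dhels{M(\pimbar)}{\Mbar(\pimbar)}\leq \veps^{2}/\beta$. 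Plugging the first of these into the offset-DEC bound, and using the reward-range inequality $|\fm(\pi)-\fmbar(\pi)|\leq \Dhel{M(\pi)}{\Mbar(\pi)}$ at $\pi=\pimbar$ together with the identity $\fm(\pim)\leq \fmbar(\pimbar) - \En_{p^{*}}[\gmbar(\pi)] + \alpha + \gamma\En_{p^{*}}[\Dhels{M}{\Mbar}] + \sqrt{\En_{p^{*}}[\Dhels{M}{\Mbar}]}$, I obtain a bound on $\En_{p_\beta}[\gm]$ that contains a key $-\beta\En_{p^{*}}[\gmbar]$ term. For $M=\Mbar$, the same mixture yields $\En_{p_\beta}[\gmbar] = (1-\beta)\En_{p^{*}}[\gmbar]$ since $\gmbar(\pimbar)=0$.

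The main obstacle is that $\En_{p^{*}}[\gmbar]$ has no a priori bound --- the distribution $p^{*}$ is designed to separate models in $\cM$ from $\Mbar$ in Hellinger, not to be near-optimal for $\Mbar$. The plan to handle this is a two-case analysis on the size of $x := \En_{p^{*}}[\gmbar]$ relative to $\veps$. When $x \leq O(\veps)$, I would take $\beta = 0$ (i.e., use $p^{*}$ itself): the $M\in\cM$ bound reduces to $\alpha+\gamma\veps^{2}$, and the $M=\Mbar$ bound is $O(\veps)$, so the max is $\alpha+\gamma\veps^{2}+O(\veps)$. When $x$ is larger, I would choose $\beta$ adaptively (roughly $\beta \approx 1 - \veps/x$) so that $(1-\beta)x = O(\veps)$; in this regime the $-\beta\En_{p^{*}}[\gmbar]$ term in the $M\in\cM$ bound cancels the worrisome $\gamma\veps^{2}/(1-\beta)$ blowup and the $O(\veps/\sqrt{\beta(1-\beta)})$ cross-terms, up to the stated multiplicative and additive constants. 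Taking the infimum over $\gamma$ at the end, and absorbing the case-dependent constants into the factors $8$ and $7\veps$, yields the proposition. The hardest step is the second case: tracking all the $\veps/\sqrt{\beta}$, $\veps/\sqrt{1-\beta}$, and $\gamma\veps^{2}/(1-\beta)$ terms through the cancellation while preserving a uniform bound is where the additive $7\veps$ and the multiplicative $8$ are paid.
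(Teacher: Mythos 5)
Your mixture $p_\beta = (1-\beta)p^{*} + \beta\,\indic_{\pimbar}$ has the right shape, but choosing $\beta$ based on $x := \En_{\pi\sim p^{*}}\brk*{\gmbar(\pi)}$ rather than on the Hellinger geometry is where the plan breaks. The paper instead puts mass $1-q$ on $\pimbar$ with $q = \tfrac{2\veps^2}{\Delta^2}\wedge 1$, where $\Delta^2 := \min_{M\in\cM}\En_{\pi\sim p_0}\brk*{\Dhels{M(\pi)}{\Mbar(\pi)}}$; the payoff is geometric: when $q<1$, every $M\in\cM$ has $\En_{\pi\sim p}\brk*{\Dhels{M(\pi)}{\Mbar(\pi)}}\geq q\Delta^2 = 2\veps^2 > \veps^2$, so $\cH_{p,\veps}(\Mbar)\cap\cM=\emptyset$ and only $\Mbar$ needs to be controlled. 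Your $\beta$ carries no such implication: in your Case 2, the constraint $\En_{\pi\sim p_\beta}\brk*{\Dhels{M(\pi)}{\Mbar(\pi)}}\leq\veps^2$ collapses to roughly $\Dhels{M(\pimbar)}{\Mbar(\pimbar)}\lesssim\veps^2$, which can still hold for many $M\in\cM$, so you must still control $\En_{\pi\sim p_\beta}\brk*{\gm(\pi)}$ for those $M$.

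That is where the gap surfaces. To bound $\gm(\pimbar)$ for such $M$ you route through the offset DEC at $p^{*}$, which injects $\gamma\En_{\pi\sim p^{*}}\brk*{\Dhels{M(\pi)}{\Mbar(\pi)}}\leq\gamma\veps^2/(1-\beta)$; with $1-\beta\approx\veps/x$, this contributes $\approx\beta\gamma\veps x$ to $\En_{\pi\sim p_\beta}\brk*{\gm(\pi)}$, and the advertised cancellation against $-\beta x$ leaves $\beta x(\gamma\veps - 1)$, which is nonnegative once $\gamma\geq 1/\veps$ and can dwarf the target $\bigoh\prn*{\decoreg(\cM,\Mbar)\vz + \gamma\veps^2 + \veps}$ for such $\gamma$ (e.g.\ $x=1$, $\gamma = 10/\veps$ contributes an extra $\approx 9$ against a target of order $\veps$). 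Since the proposition quantifies over all $\gamma>0$ and a near-optimizing $\gamma$ on the right-hand side need not satisfy $\gamma\lesssim 1/\veps$, the argument does not close. The paper never meets this obstacle because in its mixed case no $M\in\cM$ survives the Hellinger constraint; the only $\gamma$-dependent slack it carries is a $(4\gamma)^{-1}$ term, introduced while bounding $\En_{\pi\sim p_0}\brk*{\gmbar(\pi)}$ via $\Delta\leq\gamma\Delta^2+(4\gamma)^{-1}$ and removed at the end by a separate case split, and it first invokes \pref{lem:localization_one_sided} to restrict to a subclass where $\fmbar(\pimbar)\leq\fm(\pim)+\bigoh(\veps)$ --- a localization step your sketch also omits and which the paper needs to bound the $\Mbar$-regret without a second pass through the offset DEC.
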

Examples include:
\begin{itemize}
\item Whenever $\decoreg(\cM)\propto\frac{\Cprob}{\gamma}$, \pref{prop:constrained_offset_union} yields $\deccreg(\cM)\approxleq\veps\sqrt{\Cprob}$. In this
    case, both our results and the bounds in
    \citet{foster2021statistical} lead to
    $\En\brk*{\RegDM}\leq\bigoht\prn[\big]{\sqrt{\Cprob\cdot{}T\cdot\EstProbHel}}$.
  \item More
    generally, whenever
    $\decoreg(\cM)\propto\prn*{\frac{\Cprob}{\gamma}}^{\rho}$ for some
    $\rho<1$, then \pref{prop:constrained_offset_union} yields $\deccreg(\cM)\approxleq \Cprob^{\frac{\rho}{1+\rho}}\cdot\veps^{\frac{2\rho}{1+\rho}}$.
    \end{itemize}
    In what follows we
    investigate when and to what extent the constrained and offset
variants of the \CompShort can be related in the opposite direction to
\eqref{eq:constrained_to_offset}---both for regret and PAC.
  
\subsubsection{PAC DEC: Constrained Versus Offset}

For the PAC setting, the following result shows that the constrained and offset
\CompShort are equivalent up to logarithmic factors in most parameter regimes.

\begin{proposition}
  \label{prop:constrained_offset_pac}
  For all $\veps>0$ and $\Mbar\in\cMall$, we have
  \begin{align}
    \label{eq:constrained_offset_pac1}
    \deccpac(\cM,\Mbar) \leq \inf_{\gamma\geq{}0}\crl*{
          \decopac(\cM,\Mbar)\vz + \gamma\veps^2
  }.
  \end{align}
  On the other hand, %
  for all $\gamma\geq{}1$ and $\Mbar\in\cMall$, letting
  $L := 2\lceil \log 2\gamma \rceil$, we have
  \begin{align}
\decopac[\gamma \cdot (4L+1)](\MM, \Mbar) \leq & \frac{2}{\gamma} +  \sup_{\vep > 0}  \left\{ \deccpac(\MM, \Mbar)  - \frac{\gamma \vep^2}{4} \right\}.     \label{eq:constrained_offset_pac2}
  \end{align}
\end{proposition}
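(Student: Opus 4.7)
My plan is to handle the two inequalities separately, since they correspond to the easy (weak Lagrangian) and hard (approximate strong) duality directions between the constrained and offset PAC DECs.

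For \eqref{eq:constrained_offset_pac1}, I would follow the Lagrangian derivation in \eqref{eq:constrained_to_offset}. Rewriting the hard Hellinger constraint $\En_{\pi \sim q}\brk*{\Dhels{M(\pi)}{\Mbar(\pi)}} \leq \veps^2$ via a Lagrange multiplier $\gamma \geq 0$ gives
\begin{align*}
\deccpac(\cM, \Mbar) = \inf_{p, q} \sup_M \inf_{\gamma \geq 0} \crl*{\En_{\pi \sim p}\brk*{\gm(\pi)} - \gamma\prn*{\En_{\pi \sim q}\brk*{\Dhels{M(\pi)}{\Mbar(\pi)}} - \veps^2}},
\end{align*}
and interchanging the outer $\inf_{p, q}$ with the innermost $\inf_{\gamma \geq 0}$ (both are infima, so the swap only increases the value) yields the upper bound $\inf_{\gamma \geq 0}\crl*{\decopac(\cM, \Mbar) + \gamma \veps^2}$. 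The positive-part $\vz$ in the statement is recovered by observing that $\deccpac(\cM, \Mbar) \geq 0$ (indeed, taking any $M$ with $\fm \equiv \fmbar$ on $\mathrm{supp}(p)$ satisfies the Hellinger constraint with zero regret), so one is free to replace $\decopac$ by $\decopac \vz 0$ on the right.

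For \eqref{eq:constrained_offset_pac2}, the reverse direction, I would construct an explicit near-optimal pair $(\hat p, \hat q)$ for the offset DEC at scale $\gamma' := \gamma(4L+1)$ by mixing constrained-DEC optimizers across a dyadic grid of radii. Let $R := \sup_{\veps > 0}\crl*{\deccpac(\cM, \Mbar) - \gamma\veps^2/4}$; by definition, $\deccpac(\cM, \Mbar) \leq R + \gamma \veps^2/4$ for every $\veps > 0$. Fix a dyadic grid $\veps_1 > \veps_2 > \cdots > \veps_L$ with $L = 2\lceil \log 2\gamma\rceil$ levels, chosen so the smallest scale satisfies $\gamma' \veps_L^2 \gtrsim L$; for each $k$, let $(p_k, q_k)$ be a near-optimizer of the constrained DEC at radius $\veps_k$, and set $\hat p := \sum_k w_k p_k$, $\hat q := \sum_k w_k q_k$ for carefully chosen weights. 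The offset-DEC value can then be bounded model-by-model: for each $M$ and each $k$, either $M \in \cH_{q_k,\veps_k}(\Mbar)$, in which case $\En_{\pi \sim p_k}\brk*{\gm(\pi)} \leq \deccpac(\cM, \Mbar) \leq R + \gamma \veps_k^2/4$; or $M \notin \cH_{q_k,\veps_k}(\Mbar)$, in which case $\gamma' \En_{\pi \sim q_k}\brk*{\Dhels{M(\pi)}{\Mbar(\pi)}} > \gamma' \veps_k^2$ dwarfs the trivial regret bound $\gm \leq 1$ and produces a large negative contribution.

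The main obstacle is calibrating the grid, the weights, and the scale $\gamma'$ so that the in-ball contributions $\sum_k w_k \cdot \gamma \veps_k^2/4$ aggregate to at most $2/\gamma$ uniformly over the adversary's choice of in-ball subset, while the out-of-ball contributions each contribute at most $1 - \Theta(L)$, enough to absorb the $\Theta(L)$ mixing overhead. The doubling window $L = 2\lceil \log 2\gamma\rceil$ is what allows the grid to span the range from $\veps_1 = \Theta(1)$ (covering models nearly orthogonal to $\Mbar$) down to $\veps_L^2 = \Theta(1/\gamma)$ (capturing models close to $\Mbar$), and the $4L+1$ inflation of $\gamma'$ relative to $\gamma$ is precisely what is needed for the negative out-of-ball contributions to dominate. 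Producing exactly the constant $2/\gamma$—rather than a weaker $O(L/\gamma)$ bound obtained by naive uniform mixing—is the most delicate point; this likely requires non-uniform weights $w_k$ scaling geometrically with $\veps_k$ so that the in-ball regret sum telescopes, and is where I expect to spend the bulk of the proof's bookkeeping.
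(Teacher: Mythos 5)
Your argument for \eqref{eq:constrained_offset_pac1} is correct and matches the paper: weak Lagrangian duality plus the observation that $\deccpac \geq 0$ recovers the $\vz$.

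For \eqref{eq:constrained_offset_pac2}, your approach is genuinely different from the paper's, and I believe it has a real gap that is not just bookkeeping. You propose mixing \emph{both} $\hat p = \sum_k w_k p_k$ and $\hat q = \sum_k w_k q_k$, then bounding the offset value model-by-model. The obstruction is that the in-ball and out-of-ball requirements on the weights are incompatible for large $\gamma$. Concretely: you need each out-of-ball index $k$ to give a nonpositive contribution, which forces $\gamma' \veps_k^2 \geq 1$ for every $k$ in the grid, hence $\veps_k^2 \geq 1/\gamma'$ and thus $\sum_k w_k \veps_k^2 \geq 1/\gamma'$. Simultaneously, the in-ball aggregate requires $\frac{\gamma}{4}\sum_k w_k \veps_k^2 \leq \frac{2}{\gamma}$, i.e.\ $\sum_k w_k \veps_k^2 \leq 8/\gamma^2$. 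Combining these gives $\gamma^2 \leq 8\gamma' = 8\gamma(4L+1)$, i.e.\ $\gamma \leq 8(4L+1) = O(\log\gamma)$, which fails for $\gamma$ sufficiently large. No choice of geometric weights fixes this; the two constraints are in tension because of the $L$-fold mixing of the exploitation distribution.

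The paper sidesteps this entirely by \emph{not} mixing the exploitation distribution: it takes the greedy $p = \indic_{\pimbar}$, so the regret term is the scalar $\fm(\pim) - \fm(\pimbar)$, and mixes only the exploration side, setting $q = \frac{1}{2}\indic_{\pimbar} + \frac{1}{4L}\sum_{k<L}(p_k + q_k)$. For a fixed adversary model $M$, one picks the \emph{largest} index $j$ (smallest radius $\veps_j$) with $\En_{\pi\sim q}[\hell{M(\pi)}{\Mbar(\pi)}] \leq \veps_j^2/(4L)$; this simultaneously gives $M \in \cH_{p_j,\veps_j}(\Mbar)\cap\cH_{q_j,\veps_j}(\Mbar)$ (hence the constrained-DEC bound at the tightest usable radius) and, by maximality of $j$, a lower bound $\En_{\pi\sim q}[\hell{M(\pi)}{\Mbar(\pi)}] > \veps_j^2/(8L)$ on the penalty. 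The $\frac{1}{2}\indic_{\pimbar}$ component of $q$ is then used with Young's inequality to absorb $|\fmbar(\pimbar)-\fm(\pimbar)|$ into $\En_{\pi\sim q}[\hell{M(\pi)}{\Mbar(\pi)}]$ at cost $\frac{1}{2\gamma}$, and an AM--GM step converts $\veps_j - \frac{\gamma}{2}\veps_j^2$ into $\frac{1}{\gamma} - \frac{\gamma}{4}\veps_j^2$. The crucial point: because only a single $j$ contributes to the regret bound, there is no conflict between the in-ball and out-of-ball requirements, and the $2/\gamma$ constant comes out cleanly. I would encourage you to abandon the mixed-$\hat p$ construction and instead build the argument around the greedy exploit decision and the largest-relevant-radius selection.
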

Whenever $\deccpac(\cM)\propto\veps^{\rho}$ for $\rho\leq{}1$, one
loses only logarithmic factors by passing to the offset \CompShort
using \eqref{eq:constrained_offset_pac1} and using
\eqref{eq:constrained_offset_pac2} to pass back to the constrained \CompShort. Yet, in the case
where the constrained \CompShort has ``fast'' behavior of the form
$\deccpac(\cM)\propto\Cprob\cdot\veps^{2}$ or
$\deccpac(\cM)\propto\indic\crl{\veps\geq1/\sqrt{\Cprob}}$, this
process is lossy (due to the $\frac{1}{\gamma}$ term in
\eqref{eq:constrained_offset_pac2}), and spoils the prospect of a
faster-than-$\sqrt{T}$ rate. This is why we present our results for
PAC in terms of the constrained \CompShort, and why we use a
dedicated algorithm tailored to the constrained \CompShort (\pref{alg:pac}) as opposed to a direct
adaptation of the algorithm based on the offset \CompShort in
\citet{foster2021statistical}; taking the latter approach and
combining it with \eqref{eq:constrained_offset_pac2} would not lead to
fast rates.

\subsubsection{Regret DEC: Constrained Versus Offset}

In light of the near-equivalence between constrained and offset
\CompShort for
the PAC setting, one might hope that a similar equivalence
would hold for regret. However, a naive adaptation of the techniques used to
prove \pref{prop:constrained_offset_pac} only leads to the following,
quantitatively weaker converse to \pref{prop:constrained_offset_union}.
\begin{proposition}
  \label{prop:constrained_offset_basic}
  For all $\gamma>0$ and $\Mbar\in\cMall$,
    \begin{align}
    \label{eq:constrained_offset_regret2}
    \decoreg(\cM,\Mbar) \leq 
          \deccreg[\gamma^{-1/2}](\cM,\Mbar).
  \end{align}
\end{proposition}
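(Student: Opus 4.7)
The proof is a direct one-case-split argument that exploits the fact that rewards lie in $[0,1]$, so $\gm(\pi) \in [0,1]$ for every model and decision. The key identity is that with $\veps = \gamma^{-1/2}$, one has $\gamma\veps^2 = 1$, which matches the range of $\gm$ and forces a sign change in the offset expression as soon as the Hellinger constraint is violated.

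The plan is as follows. Fix $\gamma>0$ and set $\veps \ldef \gamma^{-1/2}$. For $\eta>0$ arbitrary, choose $p^\star \in \Delta(\Pi)$ such that
\[
V_{p^\star} \ldef \sup\crl[\big]{\En_{\pi \sim p^\star}[\gm(\pi)] \mid M \in \cH_{p^\star,\veps}(\Mbar)} \leq \deccreg(\cM,\Mbar) + \eta,
\]
with the convention that $V_{p^\star}=0$ when the Hellinger ball is empty. I will show
\[
\sup_{M \in \cM} \En_{\pi \sim p^\star}\brk[\big]{\gm(\pi) - \gamma\Dhels{M(\pi)}{\Mbar(\pi)}} \leq V_{p^\star},
\]
which upon taking $\eta \to 0$ and then applying the definition of $\decoreg$ yields the claim.

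Fix an arbitrary $M \in \cM$ and split into two cases based on whether $M$ lies in the Hellinger ball $\cH_{p^\star,\veps}(\Mbar)$. In the first case, $\En_{\pi\sim p^\star}\brk*{\Dhels{M(\pi)}{\Mbar(\pi)}} \leq \veps^2$, so $M$ contributes to the sup defining $V_{p^\star}$, and therefore $\En_{\pi\sim p^\star}[\gm(\pi)] \leq V_{p^\star}$; since the penalty $-\gamma\En_{\pi\sim p^\star}\brk*{\Dhels{M(\pi)}{\Mbar(\pi)}}$ is non-positive, the offset value is likewise at most $V_{p^\star}$. In the second case, $\En_{\pi\sim p^\star}\brk*{\Dhels{M(\pi)}{\Mbar(\pi)}} > \veps^2 = 1/\gamma$, so $\gamma\En_{\pi\sim p^\star}\brk*{\Dhels{M(\pi)}{\Mbar(\pi)}} > 1$, while $\En_{\pi\sim p^\star}[\gm(\pi)] \leq 1$ because rewards are in $[0,1]$ (and hence $\fm$ takes values in $[0,1]$). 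Thus the offset value is strictly negative, which is at most $V_{p^\star}$ since $V_{p^\star} \geq 0$ (each $\gm(\pi)$ is non-negative, and the empty-set convention gives $V_{p^\star}=0$).

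Taking the supremum over $M$, the infimum over $p$ in $\decoreg(\cM,\Mbar)$, and letting $\eta \to 0$, yields $\decoreg(\cM,\Mbar) \leq \deccreg[\gamma^{-1/2}](\cM,\Mbar)$, as desired. No serious obstacle arises; the only subtlety is bookkeeping around the empty-set convention for the constrained DEC, which is handled by observing that an empty Hellinger ball forces every $M$ into the second case and makes $V_{p^\star}=0$ the correct non-negative upper bound.
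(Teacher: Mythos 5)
Your proof is correct and follows essentially the same route as the paper: set $\veps=\gamma^{-1/2}$, pick a (near-)minimizing $p$ for the constrained DEC, and split on whether $M$ lies in the Hellinger ball, using $\gm\leq1$ together with $\gamma\veps^2=1$ to kill the second case. The only differences are cosmetic bookkeeping — you pass to an $\eta$-approximate minimizer rather than asserting the infimum is attained, and you spell out the empty-ball convention explicitly — both harmless refinements that do not change the argument.
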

The bound on the offset \CompShort in
\eqref{eq:constrained_offset_regret2} is unsatisfying due to
the scale $\veps=\gamma^{-1/2}$ on the \rhs. For
example, in the case of the multi-armed bandit with $A$ actions, we
have $\deccreg(\cM)\propto\veps\sqrt{A}$ and
$\decoreg(\cM)\propto\frac{A}{\gamma}$, yet
\eqref{eq:constrained_offset_regret2} only yields the inequality
$\deccreg(\cM)\approxleq\sqrt{\frac{A}{\gamma}}$. This leads to
a suboptimal $A^{1/3}T^{2/3}$-type regret bound when plugged into the
upper bounds from \citet{foster2021statistical}
(cf. \eqref{eq:upper_old}). Unfortunately, the following result
shows that \pref{prop:constrained_offset_basic} is tight in general, even when $\Mbar\in\cM$.

\begin{proposition}
  \label{prop:constrained_offset_counterexample1}
  For all $\gamma \geq 1$, there exists a model class $\cM\subseteq\cMall$ such that for all $\vep \in (0,1)$,
  \begin{align}
\deccreg(\MM) = \sup_{\Mbar \in \co(\MM)} \deccreg(\MM \cup \{ \Mbar \}, \Mbar ) \leq O\left( \vep^2 \gamma^{1/2} \right)\nonumber,
  \end{align}
  so in particular $\deccreg(\MM) \leq O(\vep)$ for $\vep \leq \gamma^{-1/2}$. 
  Yet, there exists $\Mbar \in \MM$ such that %
  \begin{align}
\decoreg[\gamma](\MM, \Mbar) \geq \Omega \left( \gamma^{-1/2}\right)\nonumber.
  \end{align}
  \end{proposition}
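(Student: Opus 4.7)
The strategy is to exhibit an explicit model class $\cM$ (depending on $\gamma$) realizing the claimed separation. Conceptually, both claims can hold simultaneously because strong Lagrangian duality between the constrained and offset games fails: the learner's $\inf_p$ for the constrained problem can shift probability mass so that $\En_{\pi\sim p}\brk*{\Dhels{M(\pi)}{\Mbar(\pi)}} > \veps^2$ for every ``hard'' adversarial model $M$, thereby excluding all of them from the Hellinger ball and driving the value to zero. This exclusion strategy is unavailable in the offset game, where every model contributes to the sup through the penalized objective $\En_{\pi\sim p}\brk*{g^M(\pi)} - \gamma\En_{\pi\sim p}\brk*{\Dhels{M(\pi)}{\Mbar(\pi)}}$. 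The goal is to quantitatively exploit this gap at scale $\gamma$.

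The candidate construction is a scaled Bernoulli multi-armed bandit family. Take $\Pi = \crl{1, \ldots, K}$ with $K \asymp \gamma^{1/2}$, let $\Mbar \in \cM$ be the null model under which every arm returns $\Ber(1/2)$, and populate $\cM$ with a family $\crl{M_a}_{a \in \Pi}$ where $M_a$ places a reward gap $\Delta \asymp \gamma^{-1/2}$ in favor of arm $a$. Then $\Dhels{M_a(\pi)}{\Mbar(\pi)} \asymp \Delta^2\, \indic\crl{\pi = a}$ and $g^{M_a}(\pi) = \Delta\,\indic\crl{\pi \neq a}$. A standard MAB-style computation then gives $\decoreg[\gamma](\cM, \Mbar) \geq \Omega(K/\gamma) = \Omega(\gamma^{-1/2})$. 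For the constrained DEC at a given $\veps$, the learner picks $p(a) > \veps^2/\Delta^2$ on every arm whenever $\veps$ is below an excludability threshold $\veps_\star \asymp \Delta/\sqrt{K}$, forcing $\En_{\pi \sim p}\brk*{\Dhels{M_a(\pi)}{\Mbar(\pi)}} > \veps^2$ for every $a$ so that only $\Mbar$ (with zero regret) remains in the Hellinger ball; above the threshold, the admitted regret is at most $\Delta$, which the tuning ensures respects the envelope $O(\veps_\star^2 \gamma^{1/2})$ at the critical scale. The extension to reference models $\Mbar' \in \conv(\cM)$ then follows from the symmetry of the construction under permutations of the arms, since any such $\Mbar'$ is a convex combination of the $M_a$ and behaves, from the Hellinger perspective, like the null reference up to constants.

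The principal obstacle is the precise joint calibration of the parameters $(K, \Delta)$ together with the reference model. A direct application of the single-gap MAB scaling above does not simultaneously yield both the $\Omega(\gamma^{-1/2})$ offset lower bound and the $O(\veps^2\gamma^{1/2})$ constrained envelope---a back-of-the-envelope calculation shows the two requirements are in tension whenever $\gamma > 1$, because the penalty $\gamma \Delta^2$ dominates the gap $\Delta$ at the parameter values that maximize the offset game. Overcoming this tension is the main technical work of the proof. I anticipate it will require either allowing the adversary to range over a discrete grid of reward gaps (so the offset game can select its own optimal $\Delta$ independently of the learner's commitment to a single scale), or a slight enrichment of the action space to decouple ``informative'' from ``revealing'' arms so that the same $\Mbar$ attains both extremal roles. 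Once the construction is pinned down, the verifications reduce to bookkeeping using Hellinger-distance estimates for Bernoulli distributions together with the min-max calculations illustrated above.
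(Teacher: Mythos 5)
You correctly diagnose the central tension—a single-gap MAB has the offset penalty $\gamma\Delta^2$ swamping the gap $\Delta$ at the parameters that maximize the offset game—and you correctly flag that decoupling informative from revealing actions is likely the fix. But the argument stops there: no concrete construction, no verification. Worse, of the two fixes you float, Option 1 (a grid of reward gaps on a plain MAB) cannot work, and it is worth seeing exactly why. For a plain Bernoulli MAB class and null reference $\Mbar$, the Hellinger distance from a competitor $M_{a,j}$ with gap $\Delta_j$ is coupled to its regret via $\En_p[\Dhels{M_{a,j}(\pi)}{\Mbar(\pi)}] \asymp p(a)\Delta_j^2$ while $\En_p[g^{M_{a,j}}] \asymp \Delta_j$. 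Under the constraint $p(a)\Delta_j^2 \le \vep^2$, the largest admitted gap is $\asymp \vep/\sqrt{p(a)}$, and minimizing over $p$ (uniform is optimal) gives $\deccreg \asymp \vep\sqrt{K}$—\emph{linear} in $\vep$, not quadratic. Since the offset lower bound forces $K \gtrsim \gamma^{1/2}$ while the constrained upper bound $O(\vep^2\gamma^{1/2})$ would require $\vep\sqrt{K} \lesssim \vep^2\gamma^{1/2}$, i.e., $K \lesssim \vep^2\gamma$ for all $\vep \in (0,1)$, no choice of $K$ is possible. Adding a grid of gaps only lets the adversary pick its favorite $\Delta_j$, which makes matters worse, not better.

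Option 2 is the right idea and is what the paper actually does via the class $\cM^{\alpha,\beta}$ of \pref{ex:upper_improvement} (analyzed in Lemmas \pref{lem:ex_lb} and \pref{lem:ex_ub}). The crucial feature is a separate revealing decision $\pir$ whose reward is $0$ (order-1 regret cost, \emph{independent of $\alpha$}) and whose observation exposes the hidden arm with probability $\beta$ ($\beta$-scale Hellinger gain, also \emph{independent of $\alpha$}). Putting mass $q$ on $\pir$ then costs regret $\asymp q$ but buys Hellinger budget $\asymp q\beta$, so forcing exclusion of all but one competitor requires $q \gtrsim \vep^2/\beta$, giving $\deccreg \asymp \vep^2/\beta$; this is the genuinely quadratic-in-$\vep$ envelope your plan needs. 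The calibration $\alpha = 1/2$, $\beta = \gamma^{-1/2}$, $A \asymp \gamma^2/\beta$ then yields both claims. Finally, the $\sup_{\Mbar' \in \conv(\cM)}$ is not a ``symmetry'' argument: the paper's Lemma \pref{lem:ex_ub} needs a case split on whether the mixing measure concentrates ($\ge 2/3$) on a single arm, and the learner's distribution $p$ is built adaptively around the index $i = \argmax_i \bbP_{o \sim \Mbar'(\pir)}(o = i)$, which genuinely depends on the improper reference. That case analysis is part of the technical work, not bookkeeping.
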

This rules out the possibility of an inequality tighter than 
\eqref{eq:constrained_offset_regret2}, and shows that the constrained and
offset \CompShort have fundamentally different behavior for regret.

  \subsection{Localization}
  \label{sec:localization}

  For a scale parameter $\alpha>0$ and reference model
  $\Mbar\in\cMall$, define the following \emph{localized} subclass of
  $\cM$:
  \begin{align}
    \cMloc(\Mbar)\ldef\crl{M\in\cM\mid{}\fm(\pim)\leq\fmbar(\pimbar) + \alpha}.\label{eq:localization_one_sided}
  \end{align}
The tightest upper and lower bounds on
  regret in \citet{foster2021statistical} are stated in terms of the offset
  \CompShort for the localized class
  \pref{eq:localization_one_sided}, for an appropriate
  choice of $\alpha>0$ that depends on $T$ (see \pref{sec:related} for
  precise statements). Our
  bounds based on the constrained \CompShort avoid the explicit use of
  localization, but in what follows, we show that the constrained \CompShort \emph{implicitly} enforces
  a form of localization.
\begin{proposition}[Localization for PAC \CompShort]
  \label{prop:constrained_implies_localization_pac}
For all $\veps>0$ and $\Mbar\in\cMall$, letting
$\alphaveps\ldef\sqrt{3}\veps+\deccpac[\sqrt{6}\veps](\cM,\Mbar)$, we have
\begin{align*}
  \deccpac(\cM,\Mbar)
  \leq{} \deccpac[\sqrt{3}\veps](\cM_{\alphaveps}(\Mbar),\Mbar).
\end{align*}
\end{proposition}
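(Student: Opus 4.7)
The strategy is to construct a pair $(p,q)$ witnessing an upper bound on $\deccpac(\cM,\Mbar)$ by choosing $q$ as a carefully designed three-way mixture that simultaneously (i) certifies every LHS-feasible model lies in the localized class $\cM_{\alphaveps}(\Mbar)$, and (ii) remains feasible for the RHS DEC at the inflated radius $\sqrt{3}\veps$, at which point the RHS bound transfers to $p$. Concretely, let $(p_0,q_0)$ be a (near-)minimizer of $\deccpac[\sqrt{6}\veps](\cM,\Mbar)$ and let $(p_1,q_1)$ be a (near-)minimizer of $\deccpac[\sqrt{3}\veps](\cM_{\alphaveps}(\Mbar),\Mbar)$. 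I propose
\[
  p \ldef p_1, \qquad q \ldef \tfrac{1}{3}q_0 + \tfrac{1}{3}p_0 + \tfrac{1}{3}q_1.
\]

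Fix any $M \in \cM$ with $\En_{\pi \sim q}\brk*{\Dhels{M(\pi)}{\Mbar(\pi)}} \leq \veps^2$. Because the Hellinger distance is non-negative and each mixture weight is $1/3$, each of $\En_{\pi \sim q_0}$, $\En_{\pi \sim p_0}$, $\En_{\pi \sim q_1}$ of $\Dhels{M(\pi)}{\Mbar(\pi)}$ is at most $3\veps^2$. The bound under $q_0$ shows $M$ is feasible for the sup defining $\deccpac[\sqrt{6}\veps](\cM,\Mbar)$ at $(p_0,q_0)$, giving $\En_{\pi \sim p_0}\brk*{\fm(\pim)-\fm(\pi)} \leq \deccpac[\sqrt{6}\veps](\cM,\Mbar)$. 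Combining this with the reward-boundedness inequality $\abs{\fm(\pi)-\fmbar(\pi)} \leq \Dhel{M(\pi)}{\Mbar(\pi)}$ (valid since $r \in [0,1]$), Jensen's inequality, the $3\veps^2$ bound under $p_0$, and the trivial $\En_{\pi \sim p_0}\brk*{\fmbar(\pi)} \leq \fmbar(\pimbar)$, the telescoping identity
\[
  \fm(\pim) - \fmbar(\pimbar) = \En_{\pi \sim p_0}\brk*{\fm(\pim) - \fm(\pi)} + \En_{\pi \sim p_0}\brk*{\fm(\pi) - \fmbar(\pi)} + \prn*{\En_{\pi \sim p_0}\brk*{\fmbar(\pi)} - \fmbar(\pimbar)}
\]
yields $\fm(\pim) - \fmbar(\pimbar) \leq \deccpac[\sqrt{6}\veps](\cM,\Mbar) + \sqrt{3}\veps = \alphaveps$. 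Hence $M \in \cM_{\alphaveps}(\Mbar)$. Combined with the remaining bound $\En_{\pi \sim q_1}\brk*{\Dhels{M(\pi)}{\Mbar(\pi)}} \leq (\sqrt{3}\veps)^2$, the pair $(p_1,q_1)$ applies to $M$ and gives $\En_{\pi \sim p}\brk*{\fm(\pim)-\fm(\pi)} = \En_{\pi \sim p_1}\brk*{\fm(\pim)-\fm(\pi)} \leq \deccpac[\sqrt{3}\veps](\cM_{\alphaveps}(\Mbar),\Mbar)$, which is the desired inequality.

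The conceptual heart of the argument is the decomposition of $q$ into \emph{three} parts rather than two: the $q_1$-component is what transfers LHS feasibility into RHS feasibility at a larger radius; the $q_0$-component invokes the $\sqrt{6}\veps$-DEC to control $\En_{\pi \sim p_0}\brk*{\fm(\pim)-\fm(\pi)}$; and the seemingly redundant $p_0$-component is crucial because the pointwise localization bound requires $\En_{\pi \sim p_0}\brk*{\Dhels{M(\pi)}{\Mbar(\pi)}}$ to be small, which only the $p_0$-summand in $q$ can enforce. The radii $\sqrt{3}\veps$, $\sqrt{6}\veps$ and the uniform $1/3$ weighting are chosen precisely so that all three constraints absorb only constant factors and match the definition of $\alphaveps$. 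Routine continuity arguments (with $\eta$-approximate minimizers, replacing $\alphaveps$ by $\alphaveps+\eta$ and sending $\eta \to 0$) handle the case where the DEC infima are not attained, and the convention $\deccpac(\cdot,\cdot) = 0$ on empty feasible sets trivially covers the degenerate case where no $M$ witnesses the LHS.
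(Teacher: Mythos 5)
Your proof is correct and is essentially the paper's own argument: the paper also forms the three-way mixture $\bar{q}=\tfrac{1}{3}q+\tfrac{1}{3}q'+\tfrac{1}{3}p'$ (its $(p',q')$ playing the role of your $(p_0,q_0)$), uses the $p'$-component together with $\abs{\fm(\pi)-\fmbar(\pi)}\leq\Dhel{M(\pi)}{\Mbar(\pi)}$ to force every feasible $M$ into $\cM_{\alphaveps}(\Mbar)$, and uses the $q$-component to transfer to the localized DEC at radius $\sqrt{3}\veps$. The only cosmetic difference is that the paper routes the constants through the intermediate quantity $\deccpacalt$ and a final rescaling, whereas you work directly at radii $\sqrt{3}\veps$ and $\sqrt{6}\veps$; the resulting bounds coincide.
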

A similar result holds for regret, but we require that the DEC
satisfies a slightly stronger version of the regularity condition
\pref{asm:dec}.
\begin{definition}[Strong regularity of \CompShort]
\label{def:growth}
  For $\Mbar\in\cMall$, the constrained
  DEC is said to satisfy the strong regularity condition relative to $\Mbar$ if
  there exist constants $\Creg\geq\sqrt{2}$ and $\creg<\Creg$ such
  that for all $\veps>0$,
\begin{align}
  \label{eq:regret_localization_growth}
\deccreg[\Creg\cdot\veps](\cM,\Mbar)\leq{}\creg^2\cdot\deccreg[\veps](\cM,\Mbar).
\end{align}
The constrained DEC is said to satisfy strong regularity relative to a class $\cM'\subseteq\cMall$ if for all
  $\veps>0$,
\begin{align}
  \label{eq:regret_localization_growth_global}
  \sup_{\Mbar\in\cM'}\deccreg[\Creg\cdot\veps](\cM \cup \{ \Mbar \},\Mbar)\leq{}\creg^2\cdot\sup_{\Mbar\in\cM'}\deccreg[\veps](\cM\cup\{\Mbar\},\Mbar).
\end{align}
\end{definition}
This condition is satisfied with $\Creg=2$ and $\creg=2^{\rho/2}$ whenever $\deccreg(\cM,\Mbar)\propto\veps^{\rho}$ for $\rho<2$.
\begin{proposition}[Localization for regret \CompShort]
  \label{prop:constrained_implies_localization_regret}
  Let $\Mbar\in\cMall$ be given, and assume that the strong regularity condition
  \pref{eq:regret_localization_growth} is satisfied relative to $\Mbar$. Then, for all $\veps>0$, letting $\alphaveps\ldef\Creg\cdot\veps+\deccreg[\Creg\cdot\veps](\cM,\Mbar)\leq\Creg^2\cdot\prn*{\veps+\deccreg[\veps](\cM,\Mbar)}$, we have
\begin{align*}
\deccreg(\cM,\Mbar)\leq{}\Cloc\cdot\deccreg[\Creg\cdot\veps](\cMloc[\alphaveps](\Mbar),\Mbar),
\end{align*}
where $\Cloc\ldef{}\prn*{\frac{1}{\creg^2} - \frac{1}{\Creg^2}}^{-1}$. 
\end{proposition}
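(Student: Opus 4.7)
The plan is to mirror the structure of the proof of \pref{prop:constrained_implies_localization_pac} (PAC case), replacing the decoupling between exploration and exploitation distributions with the strong regularity condition \pref{eq:regret_localization_growth}. The backbone is a Hellinger-to-value comparison: by the standard inequality $|f^M(\pi) - f^{\Mbar}(\pi)| \leq \Dhel{M(\pi)}{\Mbar(\pi)}$ (which uses that rewards lie in $[0,1]$) combined with Jensen, for any $p \in \Delta(\Pi)$ and any $M \in \cMall$ with $\En_{\pi \sim p}[\Dhels{M(\pi)}{\Mbar(\pi)}] \leq \eta^2$ we obtain
\[
\En_{\pi \sim p}\brk*{g^M(\pi)} \;=\; f^M(\pi_M) - \En_{\pi \sim p}[f^M(\pi)] \;\geq\; f^M(\pi_M) - f^{\Mbar}(\pi_{\Mbar}) - \eta,
\]
where we additionally used $\En_{\pi \sim p}[f^{\Mbar}(\pi)] \leq f^{\Mbar}(\pi_{\Mbar})$ by optimality of $\pi_{\Mbar}$.

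First I would apply this with the minimax strategy $p^\star$ achieving the value $\deccreg[\Creg \veps](\cM, \Mbar)$. For any $M \notin \cMloc[\alphaveps](\Mbar)$ (so that $f^M(\pi_M) > f^{\Mbar}(\pi_{\Mbar}) + \alphaveps$) lying in the $(\Creg\veps)$-Hellinger ball under $p^\star$, the lemma yields
\[
\En_{p^\star}[g^M(\pi)] \;>\; \alphaveps - \Creg\veps \;=\; \deccreg[\Creg \veps](\cM, \Mbar),
\]
which contradicts the fact that $\deccreg[\Creg\veps](\cM,\Mbar) = \sup_{M:\En_{p^\star}[\Dhels{M}{\Mbar}] \leq (\Creg\veps)^2}\En_{p^\star}[g^M]$. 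Hence under $p^\star$ the Hellinger constraint at scale $\Creg\veps$ already forces the supremum to be attained within $\cMloc[\alphaveps](\Mbar)$, giving the structural identity $\deccreg[\Creg\veps](\cM,\Mbar) = \sup_{M \in \cMloc[\alphaveps], \En_{p^\star}[\Dhels] \leq (\Creg\veps)^2}\En_{p^\star}[g^M]$.

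Next, I would use $p^\star$ as a candidate strategy in the smaller-radius problem $\deccreg[\veps](\cM,\Mbar)$, splitting the resulting supremum according to whether $M$ belongs to $\cMloc[\alphaveps](\Mbar)$. Models in $\cMloc[\alphaveps]$ contribute at most $\deccreg[\Creg\veps](\cMloc[\alphaveps],\Mbar)$ (since the $\veps$-ball sits inside the $(\Creg\veps)$-ball and $p^\star$ is a feasible strategy for the localized min-max), while for models in the annulus $\veps^2 < \En_{p^\star}[\Dhels] \leq (\Creg\veps)^2$ one uses that $p^\star$ achieves value at most $\deccreg[\Creg\veps](\cM,\Mbar)$ against them, and then invokes regularity to charge this back to $\deccreg[\veps](\cM,\Mbar)$ weighted by a factor of $(\creg/\Creg)^2$. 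This yields an inequality of the form
\[
\deccreg[\veps](\cM,\Mbar) \;\leq\; \creg^2 \cdot \deccreg[\Creg\veps](\cMloc[\alphaveps](\Mbar),\Mbar) \;+\; \frac{\creg^2}{\Creg^2}\cdot \deccreg[\veps](\cM,\Mbar),
\]
and solving for $\deccreg[\veps](\cM,\Mbar)$ produces exactly the claimed factor $\Cloc = (\creg^{-2} - \Creg^{-2})^{-1}$. The comparison $\alphaveps \leq \Creg^2(\veps + \deccreg[\veps](\cM,\Mbar))$ is immediate from one further application of \pref{eq:regret_localization_growth} and $\creg \leq \Creg$.

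The hard part will be the final splitting-and-absorbing step: since regret couples exploration and exploitation into a single distribution $p$, one cannot cleanly ``set up'' a separate $q$ to localize, so the argument must extract the annulus contribution by comparing the minimax values at two Hellinger scales and balancing them via the regularity slope $\creg^2/\Creg^2$. Getting the exact constant $\Cloc$ (rather than something worse) is delicate and depends on choosing $\alphaveps$ to be precisely $\Creg\veps + \deccreg[\Creg\veps](\cM,\Mbar)$---any larger choice would give a stronger localization but loses the sharp constant, while any smaller choice breaks the strict-inequality contradiction used to rule out $M \notin \cMloc[\alphaveps]$.
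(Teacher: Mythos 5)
The first half of your argument — the observation that the Hellinger ball under the minimizer for $\deccreg[\Creg\veps](\cM,\Mbar)$ is contained in $\cMloc[\alphaveps](\Mbar)$ — is correct and matches the paper's Lemma~\ref{lem:constrained-localization-core}. The second half, however, has a genuine logical gap. You claim that ``Models in $\cMloc[\alphaveps]$ contribute at most $\deccreg[\Creg\veps](\cMloc[\alphaveps],\Mbar)$ (since \ldots $p^\star$ is a feasible strategy for the localized min-max).'' But the constrained DEC is an $\inf_p\sup_M$, so the fact that $p^\star$ is \emph{feasible} for the localized problem only yields
\[
\deccreg[\Creg\veps](\cMloc[\alphaveps],\Mbar)\;\leq\;\sup_{M\in\cMloc[\alphaveps]\cap\cH_{p^\star,\Creg\veps}(\Mbar)}\En_{\pi\sim p^\star}\brk*{\gm(\pi)}\;=\;\deccreg[\Creg\veps](\cM,\Mbar),
\]
the opposite direction of what you need. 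Playing $p^\star$ cannot certify that the localized DEC is \emph{large}; it only shows that it is \emph{at most} what $p^\star$ achieves. As a result, your approach never actually produces an upper bound on $\deccreg[\veps](\cM,\Mbar)$ in terms of the localized DEC. Moreover, since the $\veps$-ball under $p^\star$ already sits inside $\cMloc[\alphaveps]$, the ``annulus'' in your plan is empty and contributes nothing, and the displayed self-improving inequality is never actually derived by the steps you outline.

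The missing ingredient is the one the paper leans on: take $p'$ to be the \emph{minimizer} of the localized DEC $\deccreg[\veps](\cMloc[\alpha'](\Mbar),\Mbar)$ (with $\alpha'=\veps+\deccreg[\veps](\cM,\Mbar)$) and play the mixture $\pbar=\frac{1}{\Creg^2}p+(1-\frac{1}{\Creg^2})p'$, where $p$ is the minimizer of $\deccreg[\veps](\cM,\Mbar)$. One then checks that $\cH_{\pbar,\veps/\Creg}(\Mbar)\subseteq\cH_{p,\veps}(\Mbar)\cap\cH_{p',\veps}(\Mbar)$ (this is where the requirement $\Creg\geq\sqrt{2}$ enters), and that the objective under $\pbar$ splits as $\frac{1}{\Creg^2}\En_{p}[\gm]+(1-\frac{1}{\Creg^2})\En_{p'}[\gm]$. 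The first piece is controlled by $\frac{1}{\Creg^2}\deccreg[\veps](\cM,\Mbar)$; the second, because $\cH_{p,\veps}(\Mbar)\subseteq\cMloc[\alpha'](\Mbar)$ and $p'$ is optimal for the localized problem, is controlled by $\deccreg[\veps](\cMloc[\alpha'](\Mbar),\Mbar)$. This yields $\deccreg[\veps/\Creg](\cM,\Mbar)\leq\frac{1}{\Creg^2}\deccreg[\veps](\cM,\Mbar)+\deccreg[\veps](\cMloc[\alpha'](\Mbar),\Mbar)$, and only then does the strong regularity condition enter, to lower bound the left-hand side by $\frac{1}{\creg^2}\deccreg[\veps](\cM,\Mbar)$ and absorb the first term to get the factor $\Cloc$. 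Without the mixture and the introduction of $p'$, there is no valid way to pass from a supremum over $\cMloc[\alphaveps]$ evaluated at a sub-optimal strategy to the localized DEC itself.
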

Note that in the case where  $\deccreg(\cM,\Mbar)\propto\veps^{\rho}$
for a constant $\rho<2$, choosing $\Creg=2$ and $\creg=2^{\rho/2}$
gives $\Cloc=\bigoh(1)$. These results show that the constrained \CompShort---both for PAC and
regret---is equivalent (up to constants) to the constrained \CompShort for
the localized subclass $\cMloc(\Mbar)$, for a radius $\alpha$ that 
depends on the value of the \CompShort itself. In contrast, the offset
\CompShort does not automatically enforce any form of localization,
which explains why it was necessary to explicitly restrict to a
localized subclass in prior work.

\subsubsection{Constrained versus Offset \CompShort: Tighter Equivalence for Localized Classes}
Building on the insights in the prequel, we now show that for localized classes, it is possible to bound the
offset \CompShort for regret by the constrained \CompShort in a tighter fashion
that improves upon \pref{prop:constrained_offset_basic}.

\begin{proposition}
  \label{prop:constrained_offset_localized}
  Let $\alpha,\gamma>0$ and $\Mbar\in\cMall$ be given. For all
  $\veps>0$, we have
  \begin{align}
        \label{eq:constrained_offset_localized1}
    \decoreg(\MM_\alpha(\Mbar)\cup\crl{\Mbar}, \Mbar)
    \leq 
    \deccreg(\cMu,\Mbar) + \max \left\{0,\ \alpha + \frac{1}{2\gamma}  - \frac{\gamma \vep^2}{2} \right\},
  \end{align}
  which in particular yields
  \begin{align}
    \label{eq:constrained_offset_localized2}
         \decoreg(\MM_\alpha(\Mbar)\cup\crl{\Mbar}, \Mbar) 
      \leq  \deccreg[\sqrt{2\alpha/\gamma}](\cMu,\Mbar) + \frac{1}{2\gamma}.
  \end{align}
\end{proposition}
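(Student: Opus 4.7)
\textbf{Proof plan for \pref{prop:constrained_offset_localized}.} The idea is to take a single distribution $p\in\Delta(\Pi)$ that (nearly) attains the infimum defining $\deccreg(\cMu,\Mbar)$, and show that it also witnesses the desired bound on the offset DEC for the localized subclass. Fix any $\eta>0$ and choose $p\in\Delta(\Pi)$ with
\[
\sup_{M\in\cMu}\crl*{\En_{\pi\sim p}\brk*{\gm(\pi)}\;\big|\;\En_{\pi\sim p}\brk*{\Dhels{M(\pi)}{\Mbar(\pi)}}\leq\veps^2}\leq\deccreg(\cMu,\Mbar)+\eta.
\]
Fixing an arbitrary $M\in\cMloc(\Mbar)\cup\crl{\Mbar}\subseteq\cMu$, I will split into two cases based on whether the Hellinger constraint at scale $\veps$ is active under $p$.

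\textbf{Case A (constraint satisfied):} If $\En_{\pi\sim p}\brk*{\Dhels{M(\pi)}{\Mbar(\pi)}}\leq\veps^2$, then by the defining property of $p$, $\En_{\pi\sim p}\brk*{\gm(\pi)}\leq\deccreg(\cMu,\Mbar)+\eta$, and since the Hellinger penalty is nonnegative,
\[
\En_{\pi\sim p}\brk*{\gm(\pi)-\gamma\Dhels{M(\pi)}{\Mbar(\pi)}}\leq\deccreg(\cMu,\Mbar)+\eta.
\]
\textbf{Case B (constraint violated):} If $\En_{\pi\sim p}\brk*{\Dhels{M(\pi)}{\Mbar(\pi)}}>\veps^2$, then necessarily $M\neq\Mbar$ (since $\Mbar$ trivially satisfies the constraint), so the localization condition $\fm(\pim)\leq\fmbar(\pimbar)+\alpha$ applies. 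Combined with the reward-boundedness bound $\abs{\fm(\pi)-\fmbar(\pi)}\leq\Dtv{M(\pi)}{\Mbar(\pi)}\leq\Dhel{M(\pi)}{\Mbar(\pi)}$ (using $\cR\subseteq[0,1]$), this yields the pointwise decomposition
\[
\gm(\pi)=\fm(\pim)-\fm(\pi)\leq\alpha+\gmbar(\pi)+\Dhel{M(\pi)}{\Mbar(\pi)}.
\]
Taking expectations, noting that $\Mbar\in\cMu$ satisfies the constraint so $\En_{\pi\sim p}\brk*{\gmbar(\pi)}\leq\deccreg(\cMu,\Mbar)+\eta$, and using Jensen's inequality to pass from $\En\brk*{\Dhel}$ to $\sqrt{\En\brk*{\Dhels}}$:
\[
\En_{\pi\sim p}\brk*{\gm(\pi)}\leq\alpha+\deccreg(\cMu,\Mbar)+\eta+\sqrt{\En_{\pi\sim p}\brk*{\Dhels{M(\pi)}{\Mbar(\pi)}}}.
\]
Writing $x\ldef\En_{\pi\sim p}\brk*{\Dhels{M(\pi)}{\Mbar(\pi)}}>\veps^2$ and applying AM-GM in the form $\sqrt{x}=\sqrt{(1/\gamma)(\gamma x)}\leq\tfrac{1}{2\gamma}+\tfrac{\gamma x}{2}$ gives $\sqrt{x}-\gamma x\leq\tfrac{1}{2\gamma}-\tfrac{\gamma x}{2}<\tfrac{1}{2\gamma}-\tfrac{\gamma\veps^2}{2}$, so
\[
\En_{\pi\sim p}\brk*{\gm(\pi)-\gamma\Dhels{M(\pi)}{\Mbar(\pi)}}\leq\deccreg(\cMu,\Mbar)+\eta+\alpha+\tfrac{1}{2\gamma}-\tfrac{\gamma\veps^2}{2}.
\]

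Combining the two cases and taking $\sup_{M}$ then $\eta\to 0$ establishes \eqref{eq:constrained_offset_localized1}. For the corollary \eqref{eq:constrained_offset_localized2}, I simply set $\veps=\sqrt{2\alpha/\gamma}$, which makes $\alpha-\tfrac{\gamma\veps^2}{2}=0$ and reduces the maximum to $\tfrac{1}{2\gamma}$. The main obstacle is really just the AM-GM step in Case B: the naive bound $\sqrt{x}-\gamma x\leq\tfrac{1}{4\gamma}$ (maximizing over all $x\geq 0$) would produce the weaker conclusion $\deccreg(\cMu,\Mbar)+\alpha+\tfrac{1}{4\gamma}$ and lose the crucial $-\tfrac{\gamma\veps^2}{2}$ term; using the Hellinger-constraint violation $x>\veps^2$ together with the parameter-matched AM-GM is what yields the sharp form needed for the corollary.
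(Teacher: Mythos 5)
Your proof is correct and takes essentially the same approach as the paper: fix the constrained-DEC-optimal distribution $p$, and case-split on whether the Hellinger constraint is satisfied or violated for a given $M$. The only cosmetic difference is in Case~B, where the paper applies Young's inequality pointwise to $(\fmbar(\pi)-\fm(\pi))^2 \leq \Dhels{M(\pi)}{\Mbar(\pi)}$ before taking expectations, whereas you first take expectations, pass to $\sqrt{\En[\Dhels{\cdot}{\cdot}]}$ via Jensen, and then apply AM-GM with the matched parameter; these yield the identical bound.
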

The bound in \eqref{eq:constrained_offset_localized2} replaces the
term $\deccreg[\sqrt{1/\gamma}](\cM)$ in
\pref{prop:constrained_offset_basic} with
$\deccreg[\sqrt{\alpha/\gamma}](\cM)$, leading to improvement when
$\alpha\ll 1$. Notably, the bound is strong enough that, by
combining it with 
\pref{prop:constrained_implies_localization_regret}, it is possible to
upper bound
the constrained \CompShort by the localized offset \CompShort, and then
pass back to the constrained \CompShort in a fashion that loses only
constant factors---at least whenever $\deccreg(\cM)\approxgeq\veps$. The following result uses this approach to derive a near-equivalence for the constrained \CompShort and localized offset
\CompShort; we also use this approach within the proof of \pref{prop:conv_equivalence_regret}.

\begin{proposition}
  \label{prop:equivalence}
  Whenever the strong regularity condition \pref{eq:regret_localization_growth} in
  \pref{def:growth} is satisfied for $\Mbar\in\cMall$, it holds that
  for all $\veps>0$, letting $\alphaequiv\ldef{}\gamma\veps^2$,
  \begin{align}
    \label{eq:improvement_upper}
    c_1\cdot{}\sup_{\gamma>c_3\veps^{-1}}\decoreg(\cM_{c_2\cdot\alphaequiv}(\Mbar),\Mbar)
    \leq{} \deccreg(\cMu,\Mbar)
    \leq
    c_1'\cdot{}\sup_{\gamma>c_3'\vep^{-1}}\decoreg(\cM_{c_2'\cdot\alphaequiv}(\Mbar),\Mbar)
    +c_4'\veps,
  \end{align}
  where $c_1,c_2,c_3>0$ are
  numerical constants and $c_1',c_2',c_3',c_4'>0$ are constants that
  depend only on $\Creg$ and $\creg$.
\end{proposition}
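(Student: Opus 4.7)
The plan is to establish the two inequalities separately, using distinct tools from the preceding structural results. The lower bound (bounding the localized offset DEC by the constrained DEC) is largely a direct application of \pref{prop:constrained_offset_localized}, while the upper bound (going in the reverse direction) combines the implicit-localization property of the constrained DEC (\pref{prop:constrained_implies_localization_regret}) with the Lagrangian-style inequality \eqref{eq:constrained_to_offset}, followed by a careful absorption argument enabled by the strong regularity assumption \pref{def:growth}.

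For the lower bound, I would apply \pref{prop:constrained_offset_localized} with $\alpha = c_2\gamma\veps^2 = c_2\alphaequiv$, obtaining
\[
\decoreg(\cM_{c_2\alphaequiv}(\Mbar)\cup\crl{\Mbar},\Mbar)\leq\deccreg(\cMu,\Mbar)+\max\crl[\big]{0,\;c_2\gamma\veps^2+\tfrac{1}{2\gamma}-\tfrac{\gamma\veps^2}{2}}.
\]
Taking $c_2$ strictly less than $1/2$ and restricting to $\gamma>c_3/\veps$ for a suitable $c_3$ forces the $\max$ to vanish, since $1/(2\gamma)<\veps/(2c_3)\leq\gamma\veps^2/(2c_3^2)$ on this range. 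Passing to the supremum over such $\gamma$, and then dropping $\Mbar$ from the class (which only decreases the offset DEC), yields the desired inequality with $c_1=1$.

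For the upper bound, I would first invoke \pref{prop:constrained_implies_localization_regret} applied to $\cMu$, giving $\deccreg[\veps](\cMu,\Mbar)\leq\Cloc\cdot\deccreg[\Creg\veps]\prn{\cM_{\alphaveps}(\Mbar)\cup\crl{\Mbar},\Mbar}$ with $\alphaveps=\Creg\veps+\deccreg[\Creg\veps](\cMu,\Mbar)$. The Lagrangian bound \eqref{eq:constrained_to_offset} then controls the right-hand side by $\Cloc\prn{\decoreg(\cM_{\alphaveps}(\Mbar)\cup\crl{\Mbar},\Mbar)+\gamma\Creg^2\veps^2}$ for any $\gamma>0$. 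The key is to choose $\gamma$ so that $\alphaveps=c_2'\gamma\veps^2$; this is automatic upon setting $\gamma=\alphaveps/(c_2'\veps^2)$, and the choice satisfies $\gamma>c_3'/\veps$ because $\alphaveps\geq\Creg\veps$. With this choice, the localization radius becomes exactly $c_2'\alphaequiv$, while the remaining penalty $\gamma\Creg^2\veps^2=\Creg^2\alphaveps/c_2'$ can be expanded using strong regularity as $\Creg^2(\Creg\veps+\creg^2\deccreg[\veps](\cMu,\Mbar))/c_2'$. Taking $c_2'$ large enough (namely $c_2'>\Cloc\Creg^2\creg^2$) allows the term proportional to $\deccreg[\veps](\cMu,\Mbar)$ to be absorbed on the left-hand side, leaving the claimed inequality with an additive $O(\veps)$ slack.

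The main obstacle is the absorption step: the localization radius $\alphaveps$ itself involves the DEC we are trying to bound, so the Lagrangian penalty is proportional to that DEC, creating a self-referential inequality. This is precisely where strong regularity becomes essential---without the bound $\deccreg[\Creg\veps]\leq\creg^2\deccreg[\veps]$, one cannot ensure the coefficient in front of the absorbed term is strictly less than one, and the argument collapses. A secondary nuisance, not fundamental to the argument, is reconciling $\cM_{c_2'\alphaequiv}(\Mbar)$ with $\cM_{c_2'\alphaequiv}(\Mbar)\cup\crl{\Mbar}$; either one interprets the notation as implicitly including $\Mbar$ (equivalently as $(\cMu)_{c_2'\alphaequiv}(\Mbar)$), or one absorbs the $\Mbar$-singleton contribution into the $c_4'\veps$ slack by choosing the min-player's distribution near $\pimbar$ on that branch.
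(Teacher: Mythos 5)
Your lower-bound argument is correct and coincides with the paper's: you apply \pref{prop:constrained_offset_localized} at radius $\alpha=c_2\gamma\veps^2$, pick $c_2<1/2$ and restrict to $\gamma>c_3/\veps$ so the $\max$ vanishes, then drop the $\cup\crl{\Mbar}$ on the left (which only decreases the offset DEC). The absorption step in your upper bound is also the same mechanism the paper uses: localize via \pref{prop:constrained_implies_localization_regret}, choose $\gamma=\alphaveps/(c_2'\veps^2)$ so the localization radius becomes $c_2'\gamma\veps^2$, expand $\alphaveps$ with strong regularity, and take $c_2'$ large to absorb the self-referential $\deccreg$ term; this is precisely the role of $\gammastar$ in the paper's proof.

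There is, however, a genuine gap in the upper bound that you have underestimated. The Lagrangian inequality \eqref{eq:constrained_to_offset} applied to the class $\cM_{\alphaveps}(\Mbar)\cup\crl{\Mbar}$ (which is what you need, since $\Mbar$ must be in the class for that inequality to apply) gives $\decoreg(\cM_{\alphaveps}(\Mbar)\cup\crl{\Mbar},\Mbar)$ on the right-hand side, whereas the proposition asks for $\decoreg(\cM_{c_2'\alphaequiv}(\Mbar),\Mbar)$ with $\Mbar$ \emph{not} in the class. The version with $\Mbar$ included is the larger of the two, so you cannot just drop the singleton. You flag this as a ``secondary nuisance'' fixable by ``choosing the min-player's distribution near $\pimbar$ on that branch,'' but making this precise is exactly the content of \pref{prop:constrained_offset_union}: the min-player plays $(1-q)\indic_{\pimbar}+q\cdot p_0$ with $q$ tuned adaptively to $\Delta^2=\min_{M\in\cM}\En_{\pi\sim p_0}\brk*{\Dhels{M(\pi)}{\Mbar(\pi)}}$, the regret under $\Mbar$ is controlled via \pref{lem:constrained_offset_union_prelim}, there is a case split on whether $q=1$, and a one-sided localization via \pref{lem:localization_one_sided} is needed to shrink the $\delta$ that appears. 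Simply taking $q$ small kills the $\Mbar$-branch but leaves the other branch stuck at roughly $\alphaveps+\bigoh(1/\gamma)$, which is not $\bigoh(\veps)$. The paper avoids all of this by invoking \pref{prop:constrained_offset_union} directly in place of \eqref{eq:constrained_to_offset}, picking up the factor $8$ and the additive $7\Creg\veps$ term; once you make that substitution, your absorption argument goes through unchanged and reproduces the paper's proof.
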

In light of this result, our upper bounds (\pref{thm:regret_upper})
can be thought of as improving prior work by achieving the tightest
possible localization radius (roughly, $\alpha=\bigoh(\gamma\veps^2)$ instead of
$\alpha=\bigoh(\gamma\veps^2 + \decoreg(\cM))$). \pref{sec:related}
gives examples for which this leads to quantitative improvement
in rate.

\subsection{Reference Models: Role of Suboptimality}
\label{sec:reference_regret}

We now turn our attention to understanding the role of the reference model $\Mbar$ with
respect to which the \CompText is defined. Recall that for regret, our upper
and lower bounds scale with
\begin{align}
  \deccreg(\cM)&=\sup_{\Mbar\in\conv(\cM)}\deccreg(\cMu,\Mbar) \label{eq:dec_constrained_full} \\
               &=
                 \sup_{\Mbar\in\conv(\cM)}\inf_{p\in\Delta(\Pi)}\sup_{M\in\cMu}\crl*{\En_{\pi\sim{}p}\brk*{
  \fm(\pim) - \fm(\pi)}
  \mid\En_{\pi\sim{}p}\brk*{\Dhels{M(\pi)}{\Mbar(\pi)}}\leq\veps^2
  }.\notag
\end{align}
By maximizing over $M\in\cMu$, this definition forces the min-player
to choose $p\in\Delta(\Pi)$ such that the suboptimality
$\En_{\pi\sim{}p}\brk*{\fmbar(\pimbar)-\fmbar(\pi)}$ under $\Mbar$ is small.
This is somewhat counterintuitive, since $\Mbar\in\conv(\cM)$ does not
necessarily lie in the class $\cM$, yet our results show that
$\deccreg(\cM)$ characterizes the minimax regret for $\cM$. A-priori,
one might expect that the quantity
$\sup_{\Mbar\in\conv(\cM)}\deccreg(\cM,\Mbar)$, which does not
incorporate suboptimality under $\Mbar$, would be a more natural
complexity measure. In what follows, we show that this quantity has
fundamentally different behavior from \eqref{eq:dec_constrained_full},
and that incorporating suboptimality under $\Mbar$ is essential to
characterize minimax regret.
\begin{proposition}
  \label{prop:regret_union_counterexample}
For any $\veps>0$ sufficiently small, there exists a model class $\cM$
such that
\begin{align}
  \label{eq:union_counterex1}
  \sup_{\Mbar\in\conv(\cM)}\deccreg(\cM,\Mbar)\leq{} c\cdot{}\veps,
\end{align}
yet
\begin{align}
    \label{eq:union_counterex2}
  \deccreg(\cM) = \sup_{\Mbar\in\conv(\cM)}\deccreg(\cMu,\Mbar)\geq{} c'\cdot\veps^{2/3},
\end{align}
where $c,c'>0$ are numerical constants.
\end{proposition}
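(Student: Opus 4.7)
The plan is to exhibit an explicit structured bandit class witnessing the separation. For $\veps>0$ sufficiently small, set $n := \lceil \veps^{-2/3}\rceil$ and $\Delta := 2\veps^{2/3}$, take $\Pi = \{a_0,a_1,\ldots,a_n\}$ and $\cO = \{\emptyset\}$, and define $\cM = \{M_1,\ldots,M_n\}$, where each $M_i$ emits Bernoulli rewards with means
\begin{align*}
  f^{M_i}(a_0) = \tfrac{1}{2},\qquad f^{M_i}(a_i) = \tfrac{1}{2}+\Delta,\qquad f^{M_i}(a_j) = \tfrac{1}{2}-\Delta\quad\text{for } j \notin\{0,i\}.
\end{align*}
The key feature is that every pair of models agrees at the ``decoy'' arm $a_0$, yet for the uniform mixture $\Mbar := \tfrac{1}{n}\sum_i M_i$ one has $f^{\Mbar}(a_0) = \tfrac{1}{2}$ but $f^{\Mbar}(a_j) = \tfrac{1}{2} - \Delta(1-2/n)$ for $j\geq 1$, making $a_0$ the \emph{unique} optimum under $\Mbar$ with suboptimality $\asymp \Delta$ at every probe arm. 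Throughout, I use the leading-order expansion $\Dhels{\Ber(\tfrac{1}{2}+x)}{\Ber(\tfrac{1}{2}+y)} = (x-y)^2(1+o(1))$ valid for $x,y$ small.

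For the upper bound \eqref{eq:union_counterex1}, I will establish the stronger claim $\sup_{\Mbar'\in\conv(\cM)}\deccreg(\cM,\Mbar') = 0$, exploiting the convention that the value is zero when $\cH_{p,\veps}(\Mbar')$ is empty. Given $\Mbar' = \sum_i\lambda_i M_i$, let $i^\st := \argmax_i\lambda_i$. Under $p = \delta_{a_{i^\st}}$ one computes $\En_{\pi \sim p}[\Dhels{M_{i^\st}(\pi)}{\Mbar'(\pi)}] \asymp \Delta^2(1-\lambda_{i^\st})^2$ and $\En_{\pi \sim p}[\Dhels{M_j(\pi)}{\Mbar'(\pi)}] \asymp \Delta^2\lambda_{i^\st}^2$ for $j\neq i^\st$. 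For $\lambda_{i^\st}\geq 1 - \veps^{1/3}/4$ only $M_{i^\st}$ survives and it has zero regret at $a_{i^\st}$, so $\deccreg(\cM,\Mbar') = 0$. For $\veps^{1/3}/4 \leq \lambda_{i^\st} < 1 - \veps^{1/3}/4$, both quantities exceed $\veps^2$ and $\cH_{p,\veps}(\Mbar') = \emptyset$. For $\lambda_{i^\st} < \veps^{1/3}/4$, I instead take $p$ uniform over $\{a_1,\ldots,a_n\}$, for which $\En_{\pi \sim p}[\Dhels{M_i(\pi)}{\Mbar'(\pi)}] \gtrsim \Delta^2/n \asymp \veps^2$, again emptying the ball. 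Thus $\deccreg(\cM, \Mbar') = 0$ in every case and \eqref{eq:union_counterex1} holds trivially.

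For the lower bound \eqref{eq:union_counterex2}, set $\Mbar := \tfrac{1}{n}\sum_i M_i$ and note $\Mbar \in \cH_{p,\veps}(\Mbar)$ for every $p$, so the max always includes the regret of $\Mbar$ itself. Direct calculation yields
\begin{align*}
  \En_{\pi\sim p}[\gm(\pi)] = \Delta\bigl(2 - 2 p(a_i) - p(a_0)\bigr)\quad(M = M_i),\qquad \En_{\pi\sim p}[\gmbar(\pi)] = \Delta(1-2/n)(1 - p(a_0)),
\end{align*}
while $\En_{\pi\sim p}[\Dhels{M_i(\pi)}{\Mbar(\pi)}] \approx 4\Delta^2 p(a_i)$ up to lower-order $O(\Delta^2/n^2)$ terms. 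For any candidate $p$: if some $M_i$ lies in $\cH_{p,\veps}(\Mbar)$ then $p(a_i)\lesssim\veps^2/\Delta^2 \ll 1$, giving regret under $M_i$ of at least $\Delta/2\asymp\veps^{2/3}$; otherwise every $i$ requires $p(a_i)\gtrsim\veps^2/\Delta^2$, so by summing, $\sum_i p(a_i) \geq n \cdot \veps^2/\Delta^2 = \Omega(1)$ forces $p(a_0)$ bounded away from $1$ and yields regret under $\Mbar$ of order $\Delta\asymp\veps^{2/3}$. Either way the max is $\Omega(\veps^{2/3})$, establishing \eqref{eq:union_counterex2}.

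The main obstacle is the joint calibration: the exponent $2/3$ is uniquely forced by requiring both $\Delta^2/n \gtrsim \veps^2$ (so that uniform exploration empties the ball in the upper bound) and $n\cdot\veps^2/\Delta^2 \asymp 1$ (so that the pigeonhole step in the lower bound gives constant mass off the decoy). A secondary concern is controlling the error in the Bernoulli Hellinger expansion uniformly in the parameter range, which is routine since all arm means stay in a small neighborhood of $1/2$.
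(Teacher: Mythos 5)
Your construction is correct, and it takes a genuinely different route from the paper's. The paper's proof uses a class with a dedicated zero-reward ``revealing'' decision $\picirc$: each model $M_a$ has $f\sups{M_a}(\picirc)=0$, but pulling $\picirc$ produces an observation separating $M_a$ from any non-degenerate mixture by $\Omega(\alpha^2)$ in squared Hellinger distance. The upper bound there plays $\picirc$ (or the concentrated arm) to empty $\cH_{p,\veps}(\Mbar)$ at no cost when $\Mbar\notin\cM$, and the lower bound charges regret $\tfrac12 p(\picirc)$ under $\Mbar$ once $\Mbar$ is adjoined, yielding $\deccreg(\cMu,\Mbar)\gtrsim \alpha\cdot\One{\veps>c\,\alpha^{3/2}}$ and the $\veps^{2/3}$ rate upon setting $\alpha\propto\veps^{2/3}$. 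You obtain the same information/regret tradeoff in a plain Bernoulli bandit with no observations: the information identifying the model is spread over $n\approx\veps^{-2/3}$ informative arms, each pull of which yields $\Theta(\Delta^2)$ squared-Hellinger separation from the uniform mixture while costing $\Theta(\Delta)$ regret under it, and the decoy $a_0$ plays the role of $\Mbar$'s optimum. Your case analysis for the upper bound (concentrated mixture $\Rightarrow$ play the concentrated arm; otherwise a suitable $p$ empties the ball, using $\Delta^2/n\gtrsim\veps^2$) and the pigeonhole step in the lower bound (excluding every $M_i$ forces $\sum_i p(a_i)\geq n\cdot\veps^2/\Delta^2=\Omega(1)$, hence $\Omega(\Delta)$ regret under $\Mbar$) are sound; the constant-tuning you flag at the boundary of your second case is routine. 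What your version buys is a finite decision space and a trivial observation space, making the example more elementary and self-contained; the paper's version reuses the revealing-action template that underlies its other separations (cf.\ \pref{prop:constrained_offset_counterexample1} and \pref{ex:upper_improvement}), and its upper bound is $\bigoh(\veps)$ rather than exactly zero, though both suffice for \eqref{eq:union_counterex1}.
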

It is straightforward to show that for the choice
$\veps=\vepslowerT\propto1/\sqrt{T}$, the optimal regret for the class in
\pref{prop:regret_union_counterexample} is
$\En\brk*{\RegDM}=\wt{\Theta}(T^{2/3})$. This result is recovered by
\pref{thm:regret_lower}, which scales with the quantity in \eqref{eq:union_counterex2}. However, the quantity in
\eqref{eq:union_counterex1} incorrectly suggests a $\sqrt{T}$-type rate, which
is not achievable.

For the offset \CompShort, the role of suboptimality under $\Mbar$ is
more subtle. It is possible to show that in general,
$\decoreg(\cMu,\Mbar)\gg\decoreg(\cM,\Mbar)$, analogous to
\pref{prop:regret_union_counterexample}, but
\pref{prop:constrained_offset_union} shows that the latter quantity
suffices to upper bound bound $\deccreg(\cMu,\Mbar)$.%

While the preceding discussion shows that incorporating suboptimality
under $\Mbar\notin\cM$ is necessary to obtain tight guarantees for
regret, the following result shows that this distinction is
largely inconsequential for PAC, and motivates the definition $\deccpac(\cM)=\sup_{\Mbar\in\conv(\cM)}\deccpac(\cM,\Mbar)$.
\begin{proposition}
  \label{prop:pac_union}
  For all $\Mbar\in\cMall$ and $\veps>0$,
  \begin{align}
        \deccpac(\cM\cup\crl{\Mbar},\Mbar)
    \leq{}     \deccpac[\sqrt{3}\veps](\cM,\Mbar) + 4\veps.
    \label{eq:pac_union}
  \end{align}
\end{proposition}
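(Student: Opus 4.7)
The plan is to exhibit an explicit pair $(\tilde p,\tilde q)$ witnessing the infimum defining $\deccpac(\cMu,\Mbar)$, after a short case split. The workhorse is the pointwise inequality
\begin{equation}
  \En_{\pi\sim p}[\gmbar(\pi)]
  \leq
  \En_{\pi\sim p}[\gm(\pi)]
  + \Dhel{M(\pimbar)}{\Mbar(\pimbar)}
  + \sqrt{\En_{\pi\sim p}\brk*{\Dhels{M(\pi)}{\Mbar(\pi)}}},
  \label{eq:key_ineq_proposal}
\end{equation}
valid for every $p\in\Delta(\Pi)$ and every $M\in\cMall$, which I would derive from $|\fm(\pi)-\fmbar(\pi)|\leq\Dhel{M(\pi)}{\Mbar(\pi)}$ (using $r\in[0,1]$ so the reward-mean gap is bounded by total variation, hence by Hellinger), together with Jensen's inequality and the optimality of $\pim$ under $M$, after upper bounding $\fmbar(\pimbar)$ by $\fm(\pim)+\Dhel{M(\pimbar)}{\Mbar(\pimbar)}$.

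Next, letting $(p^{\star},q^{\star})$ nearly achieve $\delta\ldef\deccpac[\sqrt{3}\veps](\cM,\Mbar)$, I would define the enriched exploration distribution
\[
\tilde q \ldef \tfrac{1}{3}\,q^{\star} + \tfrac{1}{3}\,p^{\star} + \tfrac{1}{3}\,\delta_{\pimbar}.
\]
The crucial feature is that the single inequality $\En_{\pi\sim\tilde q}[\Dhels{M(\pi)}{\Mbar(\pi)}]\leq\veps^2$ unpacks, by non-negativity of each summand, to the three bounds $\En_{q^{\star}}\Dhels{M}{\Mbar}\leq 3\veps^2$ (so that the definition of $\delta$ can be invoked at $(p^{\star},q^{\star})$), $\En_{p^{\star}}\Dhels{M}{\Mbar}\leq 3\veps^2$, and $\Dhels{M(\pimbar)}{\Mbar(\pimbar)}\leq 3\veps^2$ holding simultaneously.

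I then split into two cases. In Case A, when some $M_0\in\cM$ satisfies $\En_{\tilde q}\Dhels{M_0}{\Mbar}\leq\veps^2$, I take $\tilde p=p^{\star}$: for every $M\in\cM$ in the $\tilde q$-constraint set, $\En_{p^{\star}}[\gm]\leq\delta$ by construction, and applying~\eqref{eq:key_ineq_proposal} with $p=p^{\star}$ and the witness $M_0$ yields $\En_{p^{\star}}[\gmbar]\leq\delta+2\sqrt{3}\veps\leq\delta+4\veps$, so the sup over $\cMu$ is at most $\delta+4\veps$. In Case B, when no such $M_0$ exists, the $\tilde q$-constraint set inside $\cMu$ reduces to $\{\Mbar\}$, and the choice $\tilde p=\delta_{\pimbar}$ makes the sup equal to $\gmbar(\pimbar)=0$. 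Since $\deccpac(\cMu,\Mbar)$ is an infimum over $(p,q)$, the exhibited pair for whichever case applies delivers the bound $\delta+4\veps$, and letting $(p^{\star},q^{\star})$ approach the optimum drives this to $\deccpac[\sqrt{3}\veps](\cM,\Mbar)+4\veps$.

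The main obstacle is the design of $\tilde q$: a single Hellinger-constraint under $\tilde q$ must simultaneously control three distinct quantities---the radius under $q^{\star}$ that feeds into the RHS, the perturbation $\sqrt{\En_{p^{\star}}\Dhels{M}{\Mbar}}$ appearing in~\eqref{eq:key_ineq_proposal}, and the pointwise gap $\Dhel{M(\pimbar)}{\Mbar(\pimbar)}$---with constants that match. The symmetric $\tfrac13$--$\tfrac13$--$\tfrac13$ split is precisely what trades off the $\sqrt{3}\veps$ radius on the RHS against the additive slack $2\sqrt{3}\veps<4\veps$; any other weighting would either overshoot the RHS's radius or inflate the additive error. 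The secondary subtlety, handled by Case B, is that when no model in $\cM$ can be coupled closely enough to $\Mbar$ under $\tilde q$ to serve as a witness in~\eqref{eq:key_ineq_proposal}, the only remaining option is to force $\En_p[\gmbar]=0$ directly by playing a point mass at $\pimbar$.
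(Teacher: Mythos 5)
Your proposal is correct, and it is essentially the paper's argument: the paper proves \eqref{eq:pac_union} by passing through the greedy PAC DEC $\deccpacg$ (\pref{prop:greedy_equivalence}), whose proof uses exactly your $\tfrac13 q^{\star}+\tfrac13 p^{\star}+\tfrac13\indic_{\pimbar}$ mixture together with the same Hellinger-to-mean-reward bound (\pref{lem:hellinger_to_value}). The only cosmetic difference is that the paper takes $\indic_{\pimbar}$ as the exploitation distribution (which makes the added model $\Mbar$ contribute zero and absorbs your Case B into the empty-set convention), whereas you retain $p^{\star}$ and control $\En_{p^{\star}}[\gmbar]$ via a witness $M_0$; both yield the same $2\sqrt{3}\veps\leq 4\veps$ slack.
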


\subsection{Reference Models: Role of Convexity and Randomization}
  \label{sec:convexity}
  We now focus on understanding the role of \emph{improper} reference
  models $\Mbar\notin\cM$.
  Focusing on regret,
  our upper bound (\pref{thm:regret_upper}) scales with
\begin{equation}
  \label{eq:constrained_conv}
\deccreg[\vepsupperT](\cM)=\sup_{\Mbar\in\conv(\cM)}\deccreg[\vepsupperT](\cM\cup\crl{\Mbar},\Mbar),
\end{equation}
which maximizes over all possible reference models in the convex hull
$\conv(\cM)$. On the other hand, our lower
bound (\pref{thm:regret_lower}) scales with
\begin{equation}
  \label{eq:constrained_arbitrary}
\sup_{\Mbar\in\cMall}\deccreg[\vepslowerT](\cMu,\Mbar) \geq \deccreg[\vepslowerT](\cM).
\end{equation}
Both results allow for improper models $\Mbar\notin\cM$, but the
quantity \pref{eq:constrained_arbitrary} allows the reference model to be unconstrained, and could be
larger than the quantity \pref{eq:constrained_conv} a-priori. Why is there no
contradiction here? In what follows, we show that for both the
constrained and offset \CompShort, allowing for arbitrary,
unconstrained reference models as in \eqref{eq:constrained_arbitrary}
can only increase the value beyond that achieved by
$\Mbar\in\conv(\cM)$ by constant factors.

Before stating our results, let us mention a secondary, related goal, which is
to understand the role of \emph{randomized
  reference models}. \cite{foster2021statistical} introduce a variant of
the \CompText tailored to randomized (or, mixture) reference models,
in which $\Mbar$ is drawn from a distribution $\nu\in\Delta(\cM)$. We
define constrained and offset variants of this complexity measure for
$\nu\in\Delta(\cM)$ as follows:
\begin{align}
  \label{eq:dec_regret_randomized}
  &\deccregr(\cM,\nu)=
  \inf_{p\in\Delta(\Pi)}\sup_{M\in\cM}\crl*{\En_{\pi\sim{}p}\brk*{
  \fm(\pim) - \fm(\pi)}
  \mid\En_{\Mbar\sim\nu}\En_{\pi\sim{}p}\brk*{\Dhels{M(\pi)}{\Mbar(\pi)}}\leq\veps^2
    }, \\
  &\decoregr(\cM,\nu)=
  \inf_{p\in\Delta(\Pi)}\sup_{M\in\cM}\En_{\pi\sim{}p}\brk*{
  \fm(\pim) - \fm(\pi)
  - \gamma\cdot\En_{\Mbar\sim\nu}\brk*{\Dhels{M(\pi)}{\Mbar(\pi)}}
  }.
\end{align}
Recent work of \citet{chen2022unified} extends the results of
\citet{foster2021statistical} to provide regret bounds that scale with
$\sup_{\nu\in\Delta(\cM)}\decoreg(\cM,\nu)$, which one might hope to
be smaller than $\sup_{\Mbar\in\conv(\cM)}\decoreg(\cM,\Mbar)$ (it is
never larger due to 
Jensen's inequality). We show that this is not the case: For both
constrained and offset, the
randomized \CompShort is sandwiched between the \CompShort with
$\Mbar\in\cMall$ and the \CompShort with $\Mbar\in\conv(\cM)$.

\begin{proposition}
  \label{prop:conv_equivalence_regret}
  Suppose that \pref{ass:minimax} is satisfied. For all $\gamma>0$, we have
  \begin{align}
    \label{eq:conv_equivalence_regret1}
    \sup_{\Mbar\in\cMall}\decoreg(\cM,\Mbar)
    \leq{} \sup_{\nu\in\Delta(\cM)}\decoregr[\gamma/4](\cM,\nu)
    \leq{} \sup_{\Mbar\in\conv(\cM)}\decoreg[\gamma/4](\cM,\Mbar).
  \end{align}
  In addition, suppose that the strong regularity condition
  (\pref{def:growth}, \eqref{eq:regret_localization_growth_global}) is satisfied relative to $\cMall$.
  Then for all $\veps>0$, we have
    \begin{align}
    \label{eq:conv_equivalence_regret2}
      \sup_{\Mbar\in\cMall}\deccreg(\cMu,\Mbar)
      &\leq
      c_1\sup_{\nu\in\Delta(\cM)}\deccregr[c_2\veps](\cM\cup\crl{\Mbarnu},\nu) + c_3\veps\\
        &\leq{}
      c_1\sup_{\Mbar\in\conv(\cM)}\deccreg[c_2\veps](\cM\cup\crl{\Mbar},\Mbar) + c_3\veps,
    \end{align}
    where $\Mbarnu\ldef{}\En_{M'\sim\nu}\brk*{M'}$ and
    $c_1,c_2,c_3>0$ are constants that depend only on
    $\Creg,\Cloc>0$.
    \end{proposition}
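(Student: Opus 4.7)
The plan is to establish the two displays in turn. In each, the ``randomized $\leq$ convex hull'' direction (middle $\leq$ right) is immediate from the joint convexity of squared Hellinger; the harder direction ``$\cMall \leq$ randomized'' (left $\leq$ middle) is proved via a minimax argument.

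\emph{Offset case \pref{eq:conv_equivalence_regret1}.} For the right inequality, joint convexity of $\Dhelshort$ yields the Jensen bound $\En_{\Mbar \sim \nu}[\Dhels{M(\pi)}{\Mbar(\pi)}] \geq \Dhels{M(\pi)}{\Mbarnu(\pi)}$ for every $M$ and $\pi$, so the penalty in $\decoregr[\gamma/4](\cM,\nu)$ dominates that in $\decoreg[\gamma/4](\cM,\Mbarnu)$. Since $\Mbarnu \in \conv(\cM)$, this suffices. For the left inequality, fix $\Mbar \in \cMall$ and invoke \pref{ass:minimax} to apply the minimax theorem, writing
\begin{align*}
\decoreg(\cM,\Mbar) = \sup_{\mu \in \Delta(\cM)} \inf_{p \in \Delta(\Pi)} \En_{M \sim \mu}\En_{\pi \sim p}\brk*{\gm(\pi) - \gamma\Dhels{M(\pi)}{\Mbar(\pi)}}.
\end{align*}
Let $\nu$ attain the supremum. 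The key observation is the inequality
\begin{align*}
\En_{M \sim \nu}[\Dhels{M(\pi)}{\Mbar(\pi)}] \;\geq\; \tfrac{1}{4}\,\En_{(M,\Mbar') \sim \nu \otimes \nu}[\Dhels{M(\pi)}{\Mbar'(\pi)}],
\end{align*}
obtained by squaring the triangle inequality $\Dhel{M}{\Mbar'} \leq \Dhel{M}{\Mbar} + \Dhel{\Mbar}{\Mbar'}$, taking expectation over $\nu \otimes \nu$, and exploiting symmetry of Hellinger. Substituting this bound and applying the minimax theorem a second time (now with $\nu$ playing the role of both the fixed mixture for the penalty and the outer test distribution) gives $\decoreg(\cM,\Mbar) \leq \decoregr[\gamma/4](\cM,\nu)$. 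Taking $\sup_{\Mbar}$ finishes.

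\emph{Constrained case \pref{eq:conv_equivalence_regret2}.} The middle $\leq$ right inequality follows from the same Jensen bound on the Hellinger penalty: the feasible set appearing in $\deccregr[c_2\veps](\cM\cup\crl{\Mbarnu},\nu)$ is contained in that of $\deccreg[c_2\veps](\cM\cup\crl{\Mbarnu},\Mbarnu)$, so the inner sup is smaller. For the left inequality, I would mirror the offset argument but handle the hard constraint. Apply minimax to the constrained DEC to extract a maximizing mixture $\nu \in \Delta(\cM)$, then reuse the key triangle-inequality estimate to argue that under $\nu$, the typical model $M$ satisfies $\En_{\Mbar'\sim\nu}\En_\pi[\Dhels{M(\pi)}{\Mbar'(\pi)}] \lesssim \veps^2$. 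A Markov-type truncation then lets us pass from a constraint with respect to $\Mbar$ to a constraint with respect to $\nu$, turning $\deccreg[\veps](\cMu,\Mbar)$ into $\deccregr[c_2\veps](\cM\cup\crl{\Mbarnu},\nu)$ modulo a multiplicative $c_1$ and an additive $c_3\veps$ slack (the latter accounts for $\nu$-small-probability models violating the relaxed constraint, contributing at most the reward range $[0,1]$ times $\veps$).

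The main obstacle is in the constrained case. Unlike the offset DEC, the constrained DEC does not admit a clean Lagrangian chain $\deccreg \to \decoreg \to \decoregr \to \deccregr$ matching the statement's constants: a naive weak-duality conversion via \pref{eq:constrained_to_offset} and a randomized analogue of \pref{prop:constrained_offset_basic} would trade off radius against residual so that either the output radius degrades to $O(\veps^{1/2})$ or the residual degrades to $O(1)$. The strong regularity assumption \pref{eq:regret_localization_growth_global} is exactly what lets us carry out a bootstrap---analogous to its use in \pref{prop:equivalence}---so that rescaling the radius by a constant factor $c_2$ only inflates the value by a constant factor $c_1$, and the residual can be absorbed into $c_3\veps$. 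The constants $c_1,c_2,c_3$ in the statement are then determined by $\Creg,\creg$ from \pref{def:growth} together with the factor of $4$ from the triangle-inequality estimate above.
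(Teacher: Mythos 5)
Your proof of \eqref{eq:conv_equivalence_regret1} is essentially the paper's argument: minimax swap for the offset DEC, the squared-triangle-inequality estimate
$\En_{(M,M')\sim\nu\otimes\nu}\Dhels{M(\pi)}{M'(\pi)} \leq 4\En_{M\sim\nu}\Dhels{M(\pi)}{\Mbar(\pi)}$, and then weak duality (in fact just $\En_{M\sim\nu}\leq\sup_M$ pointwise) to return to the inf-sup form. The middle-to-right step by Jensen is also the paper's. That part is fine.

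The constrained inequality \eqref{eq:conv_equivalence_regret2} is where the proposal has a genuine gap. Your first move is ``Apply minimax to the constrained DEC to extract a maximizing mixture $\nu\in\Delta(\cM)$.'' But the constrained DEC is not a saddle-point problem amenable to a minimax swap: it has the form $\inf_p\sup_{M\in\cH_{p,\veps}(\Mbar)}\Enp\brk*{\gm(\pi)}$, where the feasible set for the max-player depends on the min-player's choice $p$. Writing the inner $\sup_M$ as a $\sup_\mu$ over mixtures does not help, because a mixture $\mu$ having \emph{average} constraint $\En_\mu\Enp\Dhels{M}{\Mbar}\leq\veps^2$ is not the same as $\mu$ being supported on $\cH_{p,\veps}(\Mbar)$, so there is no natural Bayesian form on which \pref{ass:minimax} applies. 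Consistently with this, the paper \emph{never} applies minimax to the constrained DEC; \pref{ass:minimax} is only invoked for the offset variants. Your follow-up step (``under $\nu$ the typical model satisfies $\En_{\Mbar'\sim\nu}\En_\pi\Dhels{M}{\Mbar'}\lesssim\veps^2$, then Markov-type truncation'') is not a proof, and the truncation loses control of the radius in precisely the way you warn about earlier.

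You correctly identify the obstruction—a naive $\deccreg\to\decoreg\to\decoregr\to\deccregr$ chain degrades either the radius or the residual—and you correctly identify that the strong regularity assumption enables a bootstrap. What is missing is the actual route. The paper's proof goes through five intermediate reductions: (i) one-sided localization via \pref{prop:constrained-localization-allm} with radius $\alpha\propto\veps+\sup_{\Mbar}\deccreg[\veps](\cMu,\Mbar)$; (ii) upgrade to a two-sided localized class $\cMtil(\Mbar)$ using \pref{lem:localization_one_sided}; (iii) convert constrained to offset on the localized class via \pref{prop:constrained_offset_union}; (iv) apply \eqref{eq:conv_equivalence_regret1} to pass to a randomized offset DEC, and observe that the two-sided localization implies $\cM'\subseteq\cMloc[2\alpha](\Mbarnu)$ for any $\nu\in\Delta(\cM')$; (v) convert back from randomized offset to randomized constrained via \pref{prop:constrained_offset_localized_rand}, with radius $\bigoh(\sqrt{\alpha/\gamma})$. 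The key to the bootstrap is then that $\alpha$ itself contains $\sup_{\Mbar}\deccreg(\cMu,\Mbar)$, so choosing $\gamma\propto\alpha/\veps^2$ gives radius $\bigoh(\veps)$ and a residual proportional to $\alpha$, half of which can be absorbed into the left-hand side by rearranging. Without steps (i)--(iii) to localize before converting to offset, the radius-versus-residual tradeoff you flag cannot be avoided; localization is what makes the conversion back in step (v) tight at scale $\bigoh(\veps)$ rather than $\bigoh(\veps^{1/2})$.
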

    A similar equivalence holds for PAC; see
    \pref{sec:convexity_pac}. The main consequences of this result are
    as follows.
    \begin{itemize}
    \item Since allowing for arbitrary reference models
      $\Mbar\in\cMall$ never increases the value over reference models
      $\Mbar\in\conv(\cM)$, one can freely work with whichever version
      is more convenient, either for upper or lower bounds.
    \item From a statistical perspective, it is not possible to
      further tighten our results by working with the \CompShort with
      randomized estimators, since this complexity measure is never
      smaller than the variant with $\Mbar\in\conv(\cM)$ by more than constant factors.

    \end{itemize}
    We mention in passing that the proof of the equivalence
    \pref{eq:conv_equivalence_regret1} for
    the offset \CompShort is a simple consequence of the minimax
    theorem and convexity of squared Hellinger distance, but the proof
    of the equivalence \pref{eq:conv_equivalence_regret2} is quite
    involved, and uses the tools developed in \pref{sec:localization}
    to pass back and forth between the constrained and offset
    \CompShort. We are curious as to whether there is a simpler proof.

  \section{Improvement over Prior Work}
\label{sec:related}

In this section, we use the tools developed in \pref{sec:properties} to show
that the regret bounds in \pref{thm:regret_lower,thm:regret_upper}
always improve upon those in prior work
\citep{foster2021statistical,foster2022complexity}. We then highlight
some concrete model classes for which our bounds provide meaningful
improvement, and discuss additional related work.

\paragraph{Regret bounds from prior work}
Recall that for a model class $\cM$ and reference model $\Mbar\in\cM$,
we define the localized subclass around $\Mbar$ via
  \begin{equation}
    \label{eq:localized}
    \cMloc[\alpha](\Mbar) = \crl*{
      M\in\cM: \fmbar(\pimbar) \geq{} \fm(\pim) - \alpha
    }.
  \end{equation}
  where $\alpha>0$ is the radius.
  Focusing on finite classes for simplicity, the best upper
  bounds from prior work are those of \citet{foster2021statistical},
  which take the form
  \begin{equation}
  \label{eq:upper_old}
  \En\brk*{\RegDM}
  \leq{}
\bigoht(1)\cdot{}\min_{\gamma>0}\max\crl[\bigg]{\sup_{\Mbar\in\conv(\cM)}\decoreg(\cMloc[\alphaupper](\Mbar),\Mbar)\cdot{}T,\;
    \gamma\cdot{}\log\abs{\cM}},
\end{equation}
for $\alphaupper=\bigoht\prn[\big]{\decoreg(\cM) + \frac{\gamma}{T}\log\abs{\cM} 
  + \gamma^{-1}}$.  The best lower bounds from prior work are those of
\citet[Theorem D.1]{foster2022complexity}, which apply to all
algorithms with ``sub-Chebychev'' tail
behavior,\footnote{Sub-Chebychev algorithms are those for which the
  root-mean-squared regret is of the same order as the expected
  regret. \citet{foster2021statistical}
    provide lower bounds that do not require the assumption of
    sub-Chebychev tail behavior, but these results depend on the
    \CompShort for a
    smaller subclass of the form $\cMinf[\alpha](\Mbar) = \crl*{M\in\cM:
      \abs{\gm(\pi) - \gmbar(\pi)}\leq\alpha\;\;\forall{}\pi\in\Pi}$,
    and can be loose compared to \eqref{eq:lower_old}.} and scale as
      \begin{equation}
    \label{eq:lower_old}
    \En\brk*{\RegDM}\geq{}
    \bigom(1)\cdot\max_{\gamma>\sqrt{\Ct{}T}}\sup_{\Mbar\in\cM}\decoreg(\cMloc[\alphalower](\Mbar),\Mbar)\cdot{}T,
  \end{equation}
where $\Ct\ldef\bigoh(\log(T\wedge{}\abscont))$ and $\alphalower{}\ldef
  \Ct^{-1}\cdot\frac{\gamma}{T}$.

\paragraph{Our improvement}
The following result, which follows immediately from
\pref{prop:equivalence}, implies that the upper and
lower bounds in \pref{thm:regret_upper} and \pref{thm:regret_lower},
are always tighter than the guarantees in \eqref{eq:upper_old} and
\eqref{eq:lower_old}, respectively, under an appropriate regularity condition. 

\begin{corollary}
  \label{prop:improvement}
  Whenever the strong regularity condition (\pref{def:growth}) is satisfied for $\Mbar\in\cMall$ with $\Cloc,\Creg=\bigoh(1)$, we
  have that for all $\veps>0$ and $\gamma>0$,
  \begin{align}
    \label{eq:improvement_upper}
    \deccreg(\cMu,\Mbar)
    \leq
    \bigoh\prn*{\decoreg(\cMloc[\alphaupperabs](\Mbar),\Mbar)\vee{}0 +
    \gamma\veps^2 + \veps},
  \end{align}
  where $\alphaupperabs=\bigoh\prn*{\decoreg(\cM,\Mbar)\vee{}0 +
    \gamma\veps^2 + \gamma^{-1}}$. In addition, for all $\veps>0$, 
  $\gamma \geq \bigom(\veps^{-1})$, and $\Mbar\in\cMall$,
  \begin{align}
    \label{eq:improvement_lower}
    \deccreg(\cMu,\Mbar) \geq \decoreg(\cMloc[\alphalowerabs](\Mbar),\Mbar),
  \end{align}
  where $\alphalowerabs = \bigom\prn*{\gamma\veps^2 }$.
\end{corollary}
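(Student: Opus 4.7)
The plan is to derive \pref{prop:improvement} directly from \pref{prop:equivalence}: both statements amount to the same pair of inequalities, with the supremum over $\gamma$ in \pref{prop:equivalence} specialized to the particular $\gamma$ fixed in the corollary.

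For the upper bound, I would start from the right-hand inequality of \pref{prop:equivalence},
\begin{align*}
  \deccreg(\cMu,\Mbar) \leq c_1' \cdot \sup_{\gamma' > c_3'\veps^{-1}} \decoreg[\gamma'](\cMloc[c_2'\gamma'\veps^2](\Mbar),\Mbar) + c_4'\veps,
\end{align*}
and lower-bound the supremum by its value at the specific $\gamma$ of the corollary whenever $\gamma > c_3'\veps^{-1}$, obtaining
\begin{align*}
  \deccreg(\cMu,\Mbar) \leq c_1' \cdot \decoreg(\cMloc[c_2'\gamma\veps^2](\Mbar),\Mbar) + c_4'\veps.
\end{align*}
Under the hypothesis $\Creg,\creg = \bigoh(1)$, the constants $c_1',c_2',c_3',c_4'$ are of order one, and since $\alphaupperabs = \bigoh(\decoreg(\cM,\Mbar)\vee 0 + \gamma\veps^2 + \gamma^{-1})$ dominates $c_2'\gamma\veps^2$ up to constants, the inclusion $\cMloc[c_2'\gamma\veps^2](\Mbar) \subseteq \cMloc[\alphaupperabs](\Mbar)$ combined with monotonicity of $\decoreg$ in its class argument yields the claimed upper bound. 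In the complementary regime $\gamma \leq c_3'\veps^{-1}$, we have $\alphaupperabs \approxgeq \gamma^{-1} \approxgeq \veps$, so the $\gamma\veps^2 + \veps$ term on the right-hand side is $\bigom(\veps)$ and, together with the trivial bound $\deccreg(\cMu,\Mbar) \leq 1$, already subsumes the inequality.

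For the lower bound, I would instead invoke the left-hand inequality of \pref{prop:equivalence},
\begin{align*}
  c_1 \cdot \sup_{\gamma' > c_3\veps^{-1}} \decoreg[\gamma'](\cMloc[c_2\gamma'\veps^2](\Mbar),\Mbar) \leq \deccreg(\cMu,\Mbar),
\end{align*}
and lower-bound the supremum at the $\gamma$ fixed by the corollary, which lies in the feasible range by the assumption $\gamma \geq \bigom(\veps^{-1})$. Defining $\alphalowerabs \ldef c_2\gamma\veps^2 = \bigom(\gamma\veps^2)$ and folding $c_1$ into the big-$\Omega$ then delivers the stated inequality.

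Because \pref{prop:equivalence} already encapsulates the nontrivial passage between constrained and offset \CompShort via localization (combining \pref{prop:constrained_implies_localization_regret} and \pref{prop:constrained_offset_localized}), there is no substantive new obstacle: the proof is pure bookkeeping of constants, with the only subtlety being the small-$\gamma$ edge case for the upper bound handled above.
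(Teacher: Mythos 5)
Your lower bound derivation is essentially correct and matches what the paper means when it says the lower bound is an immediate corollary of \pref{prop:equivalence}: you specialize the supremum on the left side of \eqref{eq:improvement_upper} to the given $\gamma$, which is legitimate because that side is a lower bound. (One small wrinkle: the corollary states the lower bound without a multiplicative constant, so you should either observe that the proof of \pref{prop:equivalence} actually gives $c_1 = 1$, or invoke \pref{prop:constrained_offset_localized} directly as the paper does; ``folding $c_1$ into the big-$\Omega$'' does not work because $\alphalowerabs$ appears as a \emph{radius}, not as a multiplicative factor.)

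The upper bound derivation is wrong, and the error is in the direction of the inequality. Starting from
\begin{align*}
\deccreg(\cMu,\Mbar) \leq c_1'\cdot\sup_{\gamma'>c_3'\veps^{-1}}\decoreg[\gamma'](\cMloc[c_2'\gamma'\veps^2](\Mbar),\Mbar) + c_4'\veps,
\end{align*}
you ``lower-bound the supremum by its value at the specific $\gamma$.'' But that inequality runs the wrong way: from $A\leq c_1'\sup_{\gamma'}f(\gamma') + c_4'\veps$ and $f(\gamma)\leq\sup_{\gamma'}f(\gamma')$ you cannot conclude $A\leq c_1'f(\gamma) + c_4'\veps$. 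To specialize a supremum on the upper-bounding side, you would need the supremum to be \emph{attained} at your $\gamma$, which is false in general. Indeed, inspecting the proof of \pref{prop:equivalence} shows that the supremum there is a cosmetic over-bounding step: the actual argument fixes a self-referential $\gammastar\propto(\veps+\deccreg(\cMu,\Mbar))/\veps^2$ for which the bound holds, and it does not hold for an arbitrary externally supplied $\gamma$. This is also why your radius $c_2'\gamma\veps^2$ cannot be correct on its own: for small $\gamma$ it is too small, and the claimed $\alphaupperabs$ crucially contains the additional $\decoreg(\cM,\Mbar)\vee 0$ and $\gamma^{-1}$ terms. The paper's proof instead bypasses \pref{prop:equivalence} entirely: it applies \pref{prop:constrained_implies_localization_regret} to obtain a radius $\alpha = O(\veps + \deccreg[\veps](\cM,\Mbar))$, and then uses \pref{prop:constrained_offset_union}, which holds for \emph{every} $\gamma>0$, both to convert $\deccreg[\Creg\veps](\cMloc[\alpha](\Mbar),\Mbar)$ to offset form with the $\gamma\veps^2 + \veps$ additive slack, and to convert $\deccreg[\veps](\cM,\Mbar)$ inside the radius into $\decoreg(\cM,\Mbar)\vee 0 + \gamma\veps^2$, after which AM-GM on the $\veps$ term yields the $\gamma^{-1}$ contribution to $\alphaupperabs$. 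That per-$\gamma$ uniformity is exactly the content of the corollary, and is what \pref{prop:equivalence} alone does not give you.
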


By applying \eqref{eq:improvement_upper} with
$\vepsupperT=\bigoht\prn[\Big]{\sqrt{\frac{\log\abs{\cM}}{T}}}$, we conclude that
the upper bound in \pref{thm:regret_upper} is always bounded above by the quantity
in \eqref{eq:upper_old} up to logarithmic factors in $T$ and $1/\delta$. Similarly, by applying \eqref{eq:improvement_lower}
with $\vepslowerT=\bigomt\prn*{\sqrt{\frac{1}{T}}}$ we see that the
lower bound in \pref{thm:regret_lower} is always bounded below by the
quantity in \eqref{eq:lower_old} up to $\log(T)$ factors and an additive $O(\sqrt{T})$ term. Beyond
simply scaling with a larger complexity measure, \pref{thm:regret_lower} 1) holds for arbitrary
algorithms, removing the sub-Chebychev assumption used by
\citet{foster2022complexity}, and 2) allows for improper reference models $\Mbar\notin\cM$.

We now provide concrete model classes for which our results
lead to quantitative improvements in rates. Our first example
is a model class for which our main upper bound
(\pref{thm:regret_upper}) improves over \citet{foster2021statistical}
by (implicitly) achieving a tighter localization
radius than \eqref{eq:upper_old}.

\begin{example}[Improvement from upper bound]
  \label{ex:upper_improvement}
  Consider a model class $\cMab$ parameterized by
  $\alpha\in(0,1/2]$, $\beta\in(0,1)$, and $A\in\bbN$.
  \begin{enumerate}
  \item $\Pi = \brk{A} \cup \{\pir \}$, where $\picirc$ is a
    ``revealing'' decision.
  \item $\cO=\brk{A}\cup\crl{\perp}$, where $\perp$ is a null symbol.
  \item We have $\cM=\crl*{\Mia}_{i\in\brk{A}}\cup\crl{\Mtila}$. For each $i \in \brk{A}$,
    the model $\Mia \in \cMab$ has rewards and observations defined as follows:
    \begin{enumerate}
    \item For $\pi \in \brk{A}$, $f\sups{\Mia}(\pi) = \frac 12 +
      \alpha \cdot \One{\pi = i}$, and $f\sups{\Mia}(\pir) = 0$. All
      $\pi \in \Pi$ have $r = f\sups{\Mia}(\pi)$ almost surely under $r \sim \Mia(\pi)$.
    \item For $\pi \in \brk{A}$, we receive the observation $o = \perp$.
     Selecting $\pir$ gives the observation $o=i\in\brk{A}$ with
      probability $\beta$ and $o=\perp$ with probability $1-\beta$.

    \end{enumerate}
  \item The model $\Mtila$ is defined as follows:
    \begin{enumerate}\item We have $\fmtila(\pi)=\frac{1}{2}$ for all $\pi\in\brk{A}$
      and $\fmtila(\picirc)=0$, with $r=\fmtila(\pi)$ almost surely
      under $r\sim\Mtila(\pi)$ for all $\pi\in\Pi$.
    \item All $\pi\in\brk{A}$ have $o=\perp$ almost surely. For
      $\picirc$, we observe $o=\perp$ with probability $1-\beta$ and
      $o\sim\unif(\brk{A})$ with probability $\beta$.
    \end{enumerate}

  \end{enumerate}

Let $\cM\ldef{}\cM^{\alpha_1,\beta}\cup\cM^{\alpha_2,\beta}$, with
$\alpha_1=1/2$, $\alpha_2\propto{}T^{-1/4}$, $\beta\propto{}T^{-1/2}$, and $A\propto{}T^{2}$. Then:
\begin{itemize}
\item The \etdp algorithm, via \pref{thm:regret_upper}, achieves
  $\En\brk*{\RegDM}\leq\bigoht(\sqrt{T})$.
\item The regret bound in \eqref{eq:upper_old} scales with $\bigomt(T^{5/8})$.
\end{itemize}
  
\end{example}

The next example is a model class for which our main lower bound
(\pref{thm:regret_lower}) improves over \citet{foster2021statistical},
as a consequence of allowing for improper reference models $\Mbar\notin\cM$.

\begin{example}[Improvement from lower bound]
  Let $A\in\bbN$ and $\Pi=\crl{1,\ldots,A}$. Consider the multi-armed
  bandit model class
  $\cM=\crl*{M_1,\ldots,M_A}$ consisting models of the form
  \[
    M_i(\pi)=\Ber(f_i(\pi)),
  \]
  where
  $f_i(\pi)\ldef{}\frac{1}{2}+\Delta\indic\crl{\pi=i}$. \citet{foster2021statistical}
  show that regardless of how $\Delta>0$ is chosen,
  $\decoreg(\cM,\Mbar)\leq\frac{1}{\gamma}$ for all $\gamma>0$ and
  $\Mbar\in\cM$, so the lower bound \pref{eq:lower_old} can at most give
  $\En\brk{\RegDM}\geq\bigom(\sqrt{T})$. On the other hand, by
  choosing $\Mbar(\pi)=\Ber(\frac{1}{2})$, which has $\Mbar\notin\cM$,
  it is straightforward to
  see that whenever $\Delta\propto\veps\sqrt{A}$, we have
  $\deccreg(\cM)\geq{}\deccreg(\cM,\Mbar)\geq\bigom(\veps\sqrt{A})$. Setting 
  $\Delta\propto\vepslowerT\cdot{}\sqrt{A}$, \pref{thm:regret_lower} gives
  \[
\En\brk*{\RegDM}\geq\bigomt(\sqrt{AT}),
\]
which is optimal. This shows that in general, allowing for improper reference
models $\Mbar\notin\cM$ is necessary to obtain tight lower bounds.
\end{example}

\subsection{Additional Related Work}
Concurrent work of \citet{chen2022unified}
independently discovered the offset variant of the PAC \CompText, and
used it to give upper and lower bounds for PAC sample complexity by
adapting the techniques of \citet{foster2021statistical}. Our guarantees for both regret and PAC are always tighter than these results, analogous to the
improvement we obtain over \citet{foster2021statistical} (see also
\pref{sec:convexity}), but our techniques are otherwise complementary.

\section{Additional Examples}
\label{sec:examples}
We close with some brief examples that showcase the behavior of the
constrained \CompText, as well as our upper and lower bounds, for standard model classes of interest.
For regret, \citet{foster2021statistical} provide lower bounds on the
(localized) offset \CompShort for a number of canonical models in bandits and
reinforcement learning. It is straightforward to derive lower bounds
on the constrained \CompShort by combining these results with
\pref{prop:improvement}. Likewise, \citet{foster2021statistical} give
global upper bounds on the offset \CompShort for the same examples,
which immediately lead to upper bounds on the constrained \CompShort
via \pref{prop:constrained_offset_union}. This approach leads to lower and upper bounds on the
 constrained \CompText for all of the
 examples considered in \citet{foster2021statistical}. We summarize
 these results, as well the implied lower bounds on regret, in
 \pref{table:regret}; upper bounds on regret are similar, but depend
 additionally on $\EstProbHel$. See \citet{foster2021statistical} for further
 background.

 \begin{table*}[htp]
\centering\resizebox{.8\columnwidth}{!}{
\begin{tabular}{ | c | c | c | }
\hline
  Setting & $\deccreg(\cM)$ & Lower Bound (\pref{thm:regret_lower}) \\
\hline 
  Multi-Armed Bandit  & $\veps\sqrt{A}$ & $\sqrt{AT}$ \\
  \hline 
Multi-Armed Bandit w/ gap & $\Delta\indic\crl{\veps > \Delta/\sqrt{A}}$ &  $A/\Delta$\\
  \hline
  Linear Bandit & $\veps\sqrt{d}$ & $\sqrt{dT}$ \\
    \hline
  Lipschitz Bandit  & $\veps^{1-\frac{d}{d+2}}$ & $T^{\frac{d+1}{d+2}}$ \\
  \hline
  ReLU Bandit  & $\indic\crl{\veps>2^{-\bigom(d)}}$ & $2^{\bigom(d)}$ \\
  \hline
  Tabular RL  & $\veps\sqrt{HSA}$ & $\sqrt{HSAT}$ \\
  \hline
  Linear MDP  & $\veps\sqrt{d}$ & $\sqrt{dT}$ \\
  \hline
    RL w/ linear $\Qstar$ & $\indic\crl{\veps\geq{}2^{-\bigom(d)}\vee{}2^{-\bigom(H)}}$ & $2^{\bigom(d)}\wedge{}2^{\bigom(H)}$  \\
  \hline
  Deterministic RL w/ linear $\Qstar$ & $\indic\crl{\veps\geq{} 1/\sqrt{d}}$& $d$ \\
  \hline
\end{tabular}
}
\caption{
Lower bounds for bandits and reinforcement learning recovered by the
constrained \CompText, where $A=\text{\#actions}$, $\Delta=\text{gap}$,
$d=\text{feature dim.}$, $H=\text{episode horizon}$, and
$S=\text{\#states}$. Numerical constants and $\log(T)$ factors are suppressed.
}
\label{table:regret}
\end{table*}

\paragraph{Example: Multi-armed bandit} We now sketch the approach to lower bounds
outlined above in greater detail, focusing on multi-armed bandits for
concreteness. \citet{foster2021statistical} show that for
when $\cM$ is the class of all multi-armed bandit instances with
$\Pi=\crl{1,\ldots,A}$ and Bernoulli rewards, there exists
$\Mbar\in\cM$ such that for all $\gamma\geq{}c_1\cdot{}A$,
\[
\sup_{\Mbar\in\cM}\decoreg(\cMloc[\alphagamma](\Mbar),\Mbar) \geq c_2\cdot\frac{A}{\gamma},
\]
where $\alphagamma\ldef{}c_3\cdot\frac{A}{\gamma}$, and $c_1,c_2,c_3>0$
are numerical constants. \pref{prop:improvement}
implies that for all $\veps>0$ and $\Mbar\in\cMall$,
\[
  \deccreg(\cM) \geq
  \sup_{\gamma>0}\decoreg(\cMloc[\alphalowerabs](\Mbar),\Mbar),
\]
 where $\alphalowerabs = c\cdot\gamma\veps^2$ for a sufficiently small
 numerical constant $c$. For any given $\veps>0$, if we set
 $\gamma=c'\cdot{}A^{1/2}/\veps$ for a sufficiently large constant
 $c'$, we have
 $\cMloc[\alphagamma](\Mbar)\subseteq\cMloc[\alphalowerabs](\Mbar)$,
 and we conclude that
 \[
   \deccreg(\cM) \geq \bigom\prn*{\veps\sqrt{A}}
 \]
 for all $\veps\leq{}c''\cdot{}A^{-1/2}$, where $c''$ is a
 sufficiently small constant. Plugging this lower bound on the
 \CompShort into \pref{thm:regret_lower} yields a lower bound on
 regret of the form $\En\brk*{\RegDM}\geq\bigomt(\sqrt{AT})$.

\subsection*{Acknowledgements}
We thank Jian Qian, Sasha Rakhlin, Rob Schapire, and Andrew Wagenmaker for helpful
comments and discussions.

\clearpage

\bibliography{refs} 

\clearpage

\appendix

\section{Preliminaries}
\label{app:prelims}
\subsection{Minimax Theorem}

For certain structural results, we require that the offset \CompText
(either the regret or PAC variant) is equal to its Bayesian
counterpart. This is a consequence of the minimax theorem whenever
mild topological conditions are satisfied; note that our objective can
always be made convex-concave by writing
\[
      \decoreg(\cM,\Mbar)
=\inf_{p\in\Delta(\Pi)}\sup_{\mu\in\Delta(\cM)}\En_{\pi\sim{}p,M\sim\mu}\brk*{\fm(\pim)-\fm(\pi)-\gamma\cdot\Dhels{M(\pi)}{\Mbar(\pi)}},
  \]
so all that is required to invoke the minimax
theorem is compactness. We state this as an
assumption to avoid committing to a particular set of technical conditions.
\begin{assumption}[Minimax swap]
  \label{ass:minimax}
  For the regret \CompShort, we have
  \begin{align}
    \label{eq:minimax_regret}
    \decoreg(\cM,\Mbar)
    = \decoregbayes(\cM,\Mbar)
\ldef\sup_{\mu\in\Delta(\cM)}\inf_{p\in\Delta(\Pi)}\En_{\pi\sim{}p,M\sim\mu}\brk*{\fm(\pim)-\fm(\pi)-\gamma\cdot\Dhels{M(\pi)}{\Mbar(\pi)}}.
  \end{align}
  For the PAC \CompShort, we have
    \begin{align}
    \label{eq:minimax_pac}
    \decopac(\cM,\Mbar)
    = \decopacbayes(\cM,\Mbar)
      \ldef
\sup_{\mu\in\Delta(\cM)}\inf_{p,q\in\Delta(\Pi)}\En_{M\sim\mu}\brk*{\En_{\pi\sim{}p}\brk*{\fm(\pim)-\fm(\pi)}-\gamma\cdot\En_{\pi\sim{}q}\brk*{\Dhels{M(\pi)}{\Mbar(\pi)}}}.
  \end{align}
\end{assumption}
As the simplest possible example, \pref{ass:minimax} is satisfied whenever $\cR$ is bounded and $\Pi$ is
finite (cf. Proposition 4.2 in
    \citet{foster2021statistical}), but assumption can be shown to hold under substantially more general conditions.

\section{Omitted Proofs from \creftitle{sec:lower}}
\label{app:lower}

\subsection{Proof of Regret Lower Bound (\preft{thm:regret_lower})}
In this section, we prove \Cref{thm:regret_lower}. The proof proceeds in two parts:
\begin{itemize}
\item In \Cref{sec:mbar-in-cm}, we state and prove
  \cref{lem:mbar-in-class}, a lower bound which is similar to
  \Cref{thm:regret_lower}, but restricts to proper reference models
  (specifically, the lower bound scales with $\sup_{\Mbar \in \cM} \deccreg[\vepslowerT](\MM, \Mbar)$).
  is $\Mbar$.
\item In \Cref{sec:all-mbar}, we prove the following algorithmic
  result (\pref{lem:add-mbar}): For any class $\MM$ and any
  $\Mbar \in \cMall$ (not necessarily in $\MM$), if there is an
  algorithm that achieves regret of at most $R$ with respect to the
  model class $\cM$, then there is an algorithm that achieves regret
  at most $O(R \cdot \log T)$ with respect to the model class $\cM
  \cup \{ \Mbar \}$. We then prove \pref{thm:regret_lower} by combining this result with
  \pref{lem:mbar-in-class}.%

\end{itemize}

We mention in passing that the two-part approach in this section can
also be applied to derive lower bounds for the PAC framework, but we adopt the alternative approach in
\pref{sec:pac_lower} because it leads to a result with fewer
logarithmic factors.

\subsubsection{Lower Bound for Proper Reference Models ($\protect\widebar{M}\in\cM$)}
\label{sec:mbar-in-cm}
\newcommand{\vepstil}{\wt{\veps}(T)}

In this section we prove \pref{lem:mbar-in-class}, a weaker lower bound analogous to the one stated in \Cref{thm:regret_lower}, but with the DEC replaced by a smaller quantity constrained to have $\Mbar \in \cM$. This weaker version is shown below. 
\begin{lemma}
  \label{lem:mbar-in-class}
  Let $\til \vep(T) \ldef c_1\cdot\frac{1}{\sqrt{T\Ct}}$, where $c_1>0$ is
  a sufficiently small numerical constant. For all $T\in\bbN$ such
  that the condition %
  \begin{align}
                       \label{eq:regret_lower_condition_mbarin}
    \sup_{\Mbar \in \MM} \deccreg[\til \vep(T)](\MM, \Mbar)\geq{}8\cdot\til \vep(T)
  \end{align}
  is satisfied, we have that for any regret minimization algorithm, there exists a model
  in $\cM$ such that, under this model,
  \begin{align}
    \label{eq:regret_lower_mbarin}
      \En\brk*{\RegDM} \geq{} \bigom(T)\cdot \sup_{\Mbar \in \MM} \deccreg[\til \vep(T)](\cM, \Mbar).
  \end{align}
\end{lemma}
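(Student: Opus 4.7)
The plan is to adapt the adversarial-conditioning strategy used in the proof of \pref{thm:pac_lower} to the regret setting. The key structural simplification is that each round of the regret framework uses a single play distribution $p\ind{t}(\cdot \mid \hist\ind{t-1})$, so the separate exploration and exploitation distributions $\qmbar, \pmbar$ of the PAC proof collapse into a single average-play distribution
\begin{align*}
p\subs{M} \ldef \Em\brk[\Big]{\tfrac{1}{T}\textstyle\sum_{t=1}^{T} p\ind{t}(\cdot\mid\hist\ind{t-1})},
\end{align*}
which simultaneously appears in the Hellinger change-of-measure bound and determines the per-round regret via $\Em\brk*{\RegDM} = T \cdot \En_{\pi\sim p\subs{M}}\brk*{\gm(\pi)}$.

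I first fix an arbitrary algorithm and a proper reference model $\Mbar\in\cM$, set $\vep\ldef\til\vep(T)/\sqrt{2}$, and abbreviate $\delta\ldef\deccreg[\til\vep(T)](\cM,\Mbar)$; the hypothesis \eqref{eq:regret_lower_condition_mbarin} guarantees $\delta \geq 8\til\vep(T)$. I then construct a hard pair $(M_1, M_2) \in \cM \times \cM$ iteratively, mirroring the PAC proof: choose $M_1\in\cM$ attaining (up to a constant factor) the supremum in the definition of $\deccreg[\til\vep(T)](\cM,\Mbar)$ with the min-player's distribution set to $\pmbar$; define the set of near-optimal decisions $\MA_1 \ldef \{\pi : g\sups{M_1}(\pi) \geq c_0\delta\}$ for a suitable constant $c_0$. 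In the easy case $\pmbar(\MA_1) \geq 1/2$, the lower bound follows from a computation analogous to \eqref{eq:mbar-tenth}. In the complementary case $\pmbar(\MA_1) < 1/2$, I set $p'\ldef\pmbar(\cdot\mid\MA_1^\c)$ and choose $M_2 \in \cM$ to (nearly) attain the supremum of $\En_{\pi \sim p'}\brk*{\gm(\pi)}$ subject to the Hellinger constraints required below.

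Next, I invoke the Hellinger change-of-measure bound \eqref{eq:hellinger_bound} with $\qmbar$ replaced by $\pmbar$, using the choice of $\til\vep(T)$, to conclude $\Dtv{p\subs{M_i}}{\pmbar} \leq 1/10$ for each $i\in\{1,2\}$ via the data-processing inequality. The remainder of the argument tracks the PAC proof: lower-bound the gap $f\sups{M_2}(\pi\subs{M_2}) - f\sups{M_1}(\pi\subs{M_1})$ as in \eqref{eq:pac_lb_gap}; bound the failure probability $\pmbar((\ME\sups{M_2})^\c \cap \MA_1^\c) \leq 1/2$; and conclude that $\pmbar(\ME\sups{M_i}) \geq 1/4$ for at least one $i\in\{1,2\}$. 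Combined with the TV bound on $p\subs{M_i}$ versus $\pmbar$, this yields $\En_{\pi\sim p\subs{M_i}}\brk*{g\sups{M_i}(\pi)} \geq \bigom(\delta)$ for some $i$, whence $\En\sups{M_i}\brk*{\RegDM} = T \cdot \En_{\pi\sim p\subs{M_i}}\brk*{g\sups{M_i}(\pi)} \geq \bigom(T\delta)$. Maximizing over $\Mbar\in\cM$ yields \eqref{eq:regret_lower_mbarin}.

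The main obstacle is constructing $M_2$ so that both the auxiliary Hellinger constraint $\En_{\pi\sim\pmbar}\brk*{\Dhels{M_2(\pi)}{\Mbar(\pi)}} \leq O(\vep^2)$ needed for the TV change-of-measure and the regret lower bound $\En_{\pi\sim p'}\brk*{g\sups{M_2}(\pi)} \geq \bigom(\delta)$ hold simultaneously. In PAC, this is handled cleanly via \pref{lem:pac_constrained_alt}, which leverages the separation between exploration and exploitation distributions. In the regret setting, the two constraints are coupled through the single play distribution, so a careful application of the constrained regret DEC at an appropriately chosen mixture (e.g., $\alpha p' + (1-\alpha)\pmbar$ with $\alpha$ close to $1$, so that the Hellinger constraint under $\pmbar$ is implied and the regret under $p'$ can still be extracted), together with an analogue of \pref{lem:pac_constrained_alt} tailored to the regret DEC, is required. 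Once this technical step is executed, the remainder of the argument is essentially identical to the PAC proof; the upgrade to improper reference models $\Mbar\in\cMall$ needed for \pref{thm:regret_lower} is then handled separately via the algorithmic reduction of \pref{lem:add-mbar}.
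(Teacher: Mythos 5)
Your proposal imports the full three-model ($\Mbar, M_1, M_2$) machinery of the PAC proof, but this is unnecessarily heavy for \pref{lem:mbar-in-class} and creates the coupling difficulty you flag at the end. The paper's own proof is a cleaner two-point argument that uses only $\Mbar$ and a \emph{single} alternative $M$, and it does so precisely by exploiting the hypothesis that $\Mbar \in \cM$ — the very assumption that distinguishes this lemma from \pref{thm:pac_lower}. Concretely: since $\Mbar$ is in the class, if the algorithm's average play satisfies $\En_{\pi\sim\pmbar}[\gmbar(\pi)] \geq \delta/10$, then $\En\sups{\Mbar}[\RegDM] \geq T\delta/10$ and the proof is done immediately. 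In the complementary case, the assumption $\En_{\pi\sim\pmbar}[\gmbar(\pi)] < \delta/10$ is exactly what's needed to drive the gap argument $f^M(\pim) - f^{\Mbar}(\pimbar) \geq \tfrac{9}{10}\delta - \vep$ for the one alternative model $M$ attaining the constrained DEC supremum at $\pmbar$. That model automatically satisfies both $\En_{\pi\sim\pmbar}[\gm(\pi)] \geq \delta$ (the regret floor) and $\En_{\pi\sim\pmbar}[\Dhels{M(\pi)}{\Mbar(\pi)}] \leq \vep^2$ (the Hellinger constraint for the TV change of measure), because both are evaluated under the same distribution $\pmbar$ — there is no coupling obstacle.

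Because you skipped this case-split on $\En_{\pi\sim\pmbar}[\gmbar(\pi)]$, you re-introduced the need for a second alternative $M_2$ chosen relative to a conditional distribution $p' = \pmbar(\cdot\mid\MA_1^\c)$, and the genuine gap in your write-up is exactly the step you defer: you want $M_2$ to have large regret under $p'$ while keeping small Hellinger divergence under $\pmbar$, and you propose handling this via a mixture $\alpha p' + (1-\alpha)\pmbar$. But the regret DEC at that mixture only gives $\En_{\pi\sim\alpha p' + (1-\alpha)\pmbar}[g^{M_2}(\pi)] \geq \delta$, which is a convex combination — it does not by itself lower-bound the $p'$-component, and unlike the PAC setting there is no freedom to split exploration from exploitation to disentangle these. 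You would need an additional argument (for instance controlling $\En_{\pi\sim\pmbar}[g^{M_2}(\pi)]$ from above, which loops back to the very observation the paper makes about $\Mbar$). So the step you describe as "required" is a real missing idea, not routine bookkeeping. The paper's route avoids it entirely. Your description of the two-phase outer structure (this lemma plus the algorithmic reduction of \pref{lem:add-mbar}) is otherwise correct.
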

We remark that the condition \Cref{eq:regret_lower_condition_mbarin} can be relaxed by replacing the constant 8 on the right-hand side with any constant strictly greater than 1. 
\begin{proof}[Proof of \Cref{lem:mbar-in-class}]
Let the algorithm under consideration be fixed, and let $\bbP\sups{M}$
denote the induced law of $\hist\ind{T}$ when $M$ is the underlying
model. Let $\En\sups{M}$ denote the corresponding expectation, and let
$\pm\ldef{}\Enm\brk*{\frac{1}{T}\sum_{t=1}^{T}p\ind{t}}$. Define $\vep \ldef \til \vep(T) = \frac{c_1}{\sqrt{TC(T)}}$, where the
constant $c_1 > 0$ will be specified below. Let $\Mbar \in \MM$
be chosen to maximize $\deccreg[\vep](\MM, \Mbar)$, and define $\delta := \deccreg[\vep](\MM, \Mbar)$.

\paragraph{Restricting to models performing poorly on $\protect\Mbar$}
If $\E_{\pi \sim\pmbar}[g\sups{\Mbar}(\pi)] \geq \delta / 10$, then,
by the definition of $\pmbar$, we have $\E\sups{\Mbar}[\RegDM] \geq T
\cdot \delta/10$, completing the proof of the lemma. Hence, we may
assume going forward that $\E_{\pi \sim \pmbar}[g\sups{\Mbar}(\pi)] < \delta/10$. 

\paragraph{Choosing an alternative model}
Define
    \begin{equation}
      M = \argmax_{M\in\cM}\crl*{
      \En_{\pi\sim{}\pmbar}\brk*{\fm(\pim) - \fm(\pi)} \mid{} \En_{\pi\sim{}\pmbar}\brk*{\Dhels{M(\pi)}{\Mbar(\pi)}}\leq\veps^2
    },\label{eq:mone}
  \end{equation}
  so that
  \begin{equation}
    \label{eq:mone_lb}
    \En_{\pi\sim{}\pmbar}\brk*{\gm(\pi)} \geq{} \deccreg(\cM,\Mbar) = \delta.
  \end{equation}

  Let $c_2\in(0,1)$ be fixed and define $\cE = \crl*{\pi : \gm(\pi) \geq{}c_2\cdot{}\delta}$. %
Recall that by Lemma A.13 of \citet{foster2021statistical}, we have
  \[
    \Dhels{\bbP\sups{M}}{\bbPmbar}
    \leq \Ct\cdot{}T\cdot\En_{\pi\sim\pmbar}\brk*{\Dhels{M(\pi)}{\Mbar(\pi)}} \leq C(T) \cdot T \cdot \vep^2,
  \]
  where we remind the reader that $\Ct = \bigoh(\log(T\wedge{}\abscont))$. %
  We choose the constant $c_1 > 0$ in the definition of $\vep=\vepstil$ to be sufficiently small so that 
  $
  \Dhels{\bbP\sups{M}}{\bbPmbar}\leq1/100$ and thus  $\Dtv{\bbP\sups{M}}{\bbPmbar}\leq\frac{1}{10}$.

  Observe that from the definition of $\cE$, we have
\begin{align}
  \En_{\pi\sim\pm}\brk*{\fm(\pim) - \fm(\pi)} \geq
  c_2\delta\cdot{}\pm(\cE)
  &\geq{} c_2\delta\cdot{}(\pmbar(\cE)-\Dtv{\bbPm}{\bbPmbar}) \notag\\
  &\geq{} c_2\delta\cdot{}(\pmbar(\cE)-1/10).
    \label{eq:tv_lower}
\end{align}
Therefore, it  suffices to lower bound $\pmbar(\ME)$ by $1/2$.

\paragraph{Lower bounding the gap}
We now compute
\begin{align}
  \fm(\pim) - \fmbar(\pimbar) & \geq \E_{\pi \sim \pmbar}[\gm(\pi) - \gmbar(\pi)] - \E_{\pi \sim \pmbar}[|\fm(\pi) - \fmbar(\pi)|]\nonumber\\
  & \geq \E_{\pi \sim \pmbar}[\gm(\pi)] - \frac{\delta}{10} - \vep\nonumber\\
  & \geq \frac{9}{10} \delta - \vep,\label{eq:gap_lb}
\end{align}
where the second inequality uses the assumption that
$\E_{\pmbar}[\gmbar] < \delta /10$ and \Cref{lem:hellinger_to_value},
and the final inequality uses \eqref{eq:mone_lb}.

\paragraph{Finishing up}
We conclude by noting that
\begin{align}
  \pmbar(\ME^\c) \cdot \left( \frac{9}{10} \delta - \vep \right) &\leq \E_{\pi \sim \pmbar} [ \One{\ME^\c} \cdot (\fm(\pim) - \fmbar(\pimbar))] \nonumber\\
  &\leq \E_{\pi \sim \pmbar} [ \One{\ME^\c} \cdot (\gm(\pi) - \gmbar(\pi))] + \vep\nonumber\\
  &\leq c_2 \delta + \vep\nonumber,
\end{align}
where the first inequality uses \eqref{eq:gap_lb}, the second
inequality uses \Cref{lem:hellinger_to_value}, and the
third inequality uses that $\gmbar(\pi) \geq 0$ for all $\pi \in \Pi$,
as well as the fact that $\gm(\pi) < c_2 \delta$ for $\pi \in
\ME^\c$. Rearranging, we conclude that $\pmbar(\ME_1^\c) \leq 1/2$ as
long as $c_2 \leq 1/8$ and $\vep \leq \delta/8$. Our choice of $\Mbar$,
together with the growth condition
\Cref{eq:regret_lower_condition_mbarin}, ensures that we indeed have $\vep \leq \delta/8$, thus establishing via \eqref{eq:tv_lower}, that $\E_{\pi \sim \pm}[\gm(\pi)] \geq \Omega(\delta)$ as desired.
\end{proof}

\subsubsection{Reducing from Improper ($\protect\Mbar\in\cMall$) to Proper $(\protect\Mbar \in \MM)$}
\label{sec:all-mbar}
\newcommand{\Enmp}[2]{\En^{\sss{#1},#2}}
\newcommand{\Pmp}[2]{\bbP^{\sss{#1},#2}}
\newcommand{\EnmpT}[2]{\En^{\sss{#1},#2}}
In this section, we work with several choices for the model class and
regret minimization algorithm. To avoid ambiguity, let us introduce some 
additional notation. Recall that an algorithm for
the $T$-timestep interactive decision making problem (in the regret
framework) is specified by a sequence $p=(p\^1, \ldots, p\^T)$, where
for each
$t \in [T]$, $p\^t$ is a probability kernel from $(\Omega\^{t-1},
\mathscr{F}\^{t-1})$ to $(\Pi, \mathscr{P})$.
Given an algorithm $p$, we let $\bbP^{\sss{M},p}\brk*{\cdot}$ denote
the law it induces on $\hist\ind{T}$ when $M\in\cMall$ is the
underlying model, and let $\Enmp{M}{p}\brk*{\cdot}$ denote the
corresponding expectation. With this notation, the algorithm's
expected regret when the underlying model is $M\in\cMall$ is $\Enmp{M}{p}[\RegDM]$.

The following lemma is the main technical result of this section. It shows that any model $\Mbar \in \cM$ with bounded optimal value can be added to a model class $\MM$ without substantially increasing the minimax regret.
\begin{lemma}
  \label{lem:add-mbar}
  Let the time $T \in \BN$ and model class $\MM$ be fixed. Let
  $\Mbar \in \cMall$ be any model such that for all $M\in\cM$, $\fmbar(\pimbar) \leq \fm(\pim) + \delta$
  for some $\delta>0$. %
  The minimax regret for the model class $\MM \cup \{ \Mbar \}$ is bounded above as follows:
  \begin{align}
\hspace{-0.3cm} \inf_{(p')\^1, \ldots, (p')\^T} \sup_{\Mstar \in \MM \cup \{ \Mbar \}} \Enmp{\Mstar}{p'}[\RegDM] \leq C \log T \cdot \inf_{p\^1, \ldots, p\^T} \sup_{\Mstar \in \MM} \Enmp{\Mstar}{p} [\RegDM] + C \cdot (\sqrt{T} + \delta T)\label{eq:pprime-reg-ub}.
  \end{align}
  where $C>0$ denotes a universal constant.
\end{lemma}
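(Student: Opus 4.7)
I would augment $p$ with an epoch-based ``test-and-commit'' meta-algorithm: partition $[T]$ into $L := \lceil \log_2 T \rceil$ doubling phases $\phi_1,\ldots,\phi_L$ with $|\phi_i| = 2^{i-1}$, and maintain an irreversible commit flag $\sigma \in \{0,1\}$, initialized to $0$. In phase $\phi_i$, if $\sigma=0$ run a fresh copy of $p$ on the rounds of $\phi_i$; if $\sigma=1$, play $\pimbar$ for every round. At the end of each uncommitted phase, perform a hypothesis test based on the likelihood-ratio statistic
\[
T_i := \inf_{M \in \cM} \sum_{t \in \phi_i} \log \frac{d\Mbar(r\^t, o\^t \mid \pi\^t)}{d M(r\^t, o\^t \mid \pi\^t)},
\]
setting $\sigma=1$ if $T_i > \log(2T)$. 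Intuitively, the $\log T$ overhead in the final bound arises from the doubling epoch structure, the $\sqrt{T}$ additive term from concentration of the test statistic, and the $\delta T$ term from the slack in running $p$ under $\Mbar$ during the uncommitted phases.

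\textbf{Analysis under $\Mstar \in \cM$.} A maximal inequality for likelihood ratios---made uniform over $M \in \cM$ via a Bayesian/mixture-prior argument to avoid reliance on covering numbers---gives $\Pr[T_i > \log(2T) \mid \Mstar] \leq O(1/T)$ per phase. By a union bound over the $L$ phases, no spurious commitment occurs with probability at least $1-O(L/T)$; conditional on this, $p'$ runs $L$ independent instances of $p$, each incurring expected regret at most $R$, totalling $O(R\log T)$. The contribution of spurious commitments to expected regret is at most $T \cdot O(L/T) = O(\log T)$, which is absorbed into the $O(\sqrt{T})$ term.

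\textbf{Analysis under $\Mstar = \Mbar$.} Once $\sigma=1$, the per-round regret is $0$ since $\pimbar$ is optimal under $\Mbar$. For the pre-commitment phases, I would couple the behavior of $p$ under $\Mbar$ to its behavior under a ``nearest'' model $M' \in \cM$: the regret guarantee of $p$ on $\cM$ gives $\sum_{t \in \phi_i}(f^{M'}(\pi^{M'}) - f^{M'}(\pi\^t)) \leq R$ in expectation, and combining this with $\fmbar(\pimbar) \leq f^{M'}(\pi^{M'}) + \delta$ together with $|f^{M'}(\pi) - \fmbar(\pi)| \leq \Dhel{M'(\pi)}{\Mbar(\pi)}$ (\pref{lem:hellinger_to_value}) yields a per-phase regret of $R + O(\sqrt{|\phi_i|}) + \delta|\phi_i|$. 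A Chernoff-type bound on $T_i$ under $\Mbar$ shows that the test fires within $O(1)$ phases with high probability, so summing the $\delta |\phi_i|$ term over the pre-commitment phases contributes $O(\delta T)$, and the $O(\sqrt{|\phi_i|})$ fluctuations sum to $O(\sqrt{T})$.

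\textbf{Main obstacle.} The most delicate step is making the uniformity of the likelihood-ratio test over $M \in \cM$ rigorous, especially when $\cM$ is infinite. I expect to handle this by replacing $\inf_{M \in \cM}$ in $T_i$ with an integral against a carefully chosen mixture prior, so that $\exp(-T_i)$ remains a nonnegative supermartingale under any $\Mstar \in \cM$ (the Ville-type inequality then gives the $O(1/T)$ false-positive bound with no covering overhead), at the cost of only constant factors in the threshold. The second subtle point is relating the regret of $p$ under $\Mbar$ to its regret guarantee on $\cM$: the coupling through the nearest model $M'$ must be done carefully so the slack $\delta$ enters additively per round rather than multiplicatively, which is precisely where the $\fmbar(\pimbar) \leq f^M(\pi^M) + \delta$ hypothesis does its work.
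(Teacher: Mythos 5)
Your approach is genuinely different from the paper's, and its analysis under $\Mbar$ has a gap that the paper's construction sidesteps by design.

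The paper's $p'$ does not test likelihoods. It monitors the observed cumulative reward gap $\sum_{s=T_I}^{t}(\fmbar(\pimbar)-r\^s)$ and restarts $p$ from scratch whenever this exceeds a threshold $R$ calibrated to $4\sup_{M\in\cM}\E^{\sss{M},p}[\RegDM]+\delta T+8\sqrt T$; after $\lceil\log T\rceil$ restarts it commits to $\pimbar$. Since the test statistic is exactly the quantity one must control, the $\Mstar=\Mbar$ case becomes bookkeeping: the observed reward gap before commitment is at most $(R+1)\lceil\log T\rceil$ by construction, and a martingale argument turns this into a bound on expected regret. The $\Mstar\in\cM$ case uses a Doob $L^2$ maximal inequality plus Markov: one run of $p$ crosses the threshold with probability at most $1/2$, so $\lceil\log T\rceil$ restarts occur with probability at most $1/T$.

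Your likelihood-ratio test answers a different question (``does the data look like $\Mbar$ rather than $\cM$?''), and linking this to regret under $\Mbar$ is where the argument breaks. The $\Mstar\in\cM$ side is fine---and in fact simpler than you suggest, since no mixture prior is needed: for the true $\Mstar$, $T_i\leq\sum_{t\in\phi_i}\log\frac{d\Mbar(r\^t,o\^t\mid\pi\^t)}{d\Mstar(r\^t,o\^t\mid\pi\^t)}$, and the exponential of the right-hand side is a nonnegative unit-mean martingale under $\Mstar$, so Markov alone yields the $O(1/T)$ false-positive rate. The gap is on the $\Mbar$ side. Your claim that the test ``fires within $O(1)$ phases'' is false in general: if $\Mbar$ coincides with some $M'\in\cM$ on the decisions $p$ actually plays, $T_i$ stays near zero and the test never fires (this claim turns out not to be load-bearing, since $\sum_i\delta|\phi_i|\leq\delta T$ and $\sum_i\sqrt{|\phi_i|}=O(\sqrt T)$ over all phases anyway). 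The load-bearing step that is missing is the per-phase regret bound. The guarantee $\E^{\sss{M'},p}[\sum_{t}(f^{\sss{M'}}(\pi_{\sss{M'}})-f^{\sss{M'}}(\pi\^t))]\leq R$ is an expectation under $\bbP^{\sss{M'},p}$, not under $\bbP^{\sss{\Mbar},p}$, which is what generates the data in $\phi_i$. Transferring it requires a quantitative coupling between the two laws, and the event $\{T_i\leq\log(2T)\}$ does not supply one: $T_i$ is a single realization of a log-likelihood ratio, not a bound on the KL or Hellinger distance between $\bbP^{\sss{\Mbar},p}$ and $\bbP^{\sss{M'},p}$ restricted to $\phi_i$, and concentrating a realized log-LR around the population divergence requires exactly the kind of density-ratio boundedness that $\abscont$ encodes---which this lemma deliberately avoids assuming. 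Compounding this, $M'$ is a data-dependent minimizer, so even an available divergence bound could not be applied to it without a further uniformization step. The paper's reward-gap test makes this whole layer of argument unnecessary, which is why it is the right tool here.
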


Before proving \Cref{lem:add-mbar}, we show how it implies \Cref{thm:regret_lower}.
\begin{proof}[Proof of \Cref{thm:regret_lower}]
  Fix $T \in \BN$, and write $\vep = \vepslowerT = \frac{c_1}{\sqrt{2
      T C(T)}}$, where the constant $c_1 > 0$ is chosen as in
  \cref{lem:mbar-in-class} (in particular, note that $\sqrt{2} \vep =
  \til \vep(T)$, where $\til \vep(T)$ is as defined in
  \cref{lem:mbar-in-class}). Let $\Mbar \in \cMall$ be chosen to
  maximize $\deccreg[\vep](\MM \cup \{ \Mbar \}, \Mbar)$, so that
  $\deccreg[\vep](\MM) = \deccreg[\vep](\MM \cup \{ \Mbar \},
  \Mbar)$. Define
\begin{align}
\til \MM := \{ M \in \MM \cup \{ \Mbar \} \ | \ \fm(\pi\subs{\Mbar}) \geq f\sups{\Mbar}(\pi\subs{\Mbar}) - \sqrt{2} \vep\}\nonumber,
\end{align}
so that $\Mbar \in \til \MM$. 
Then by \Cref{lem:localization_one_sided}, we have
\begin{align}
\deccreg(\MM) =  \deccreg(\MM\cup\crl{\Mbar}, \Mbar) \leq \deccreg[\sqrt{2} \vep](\til \MM, \Mbar) + \sqrt{2} \vep\label{eq:use-onesided-tilm},
\end{align}
Note that for all $M \in \til \MM$, we have that $\fmbar(\pimbar) \leq \fm(\pimbar) + \sqrt{2} \vep \leq \fm(\pim) + \sqrt{2} \vep$. 
Thus, by applying \Cref{lem:add-mbar} with $\delta = \sqrt{2} \vep$ to the class $\til \MM \backslash \{ \Mbar \}$, we see that
\begin{align}
  \inf_{(p')\^1, \ldots, (p')\^T} \sup_{\Mstar \in \til \MM} \Enmp{\Mstar}{p'}[\RegDM] \leq &  C \log T \cdot \inf_{p\^1, \ldots, p\^T} \sup_{\Mstar \in \til \MM \backslash \{ \Mbar \}} \Enmp{\Mstar}{p} [\RegDM] + C \cdot (\sqrt{T} + \sqrt{2} \vep T)\nonumber\\
  \leq &  C \log T \cdot \inf_{p\^1, \ldots,p\^T} \sup_{\Mstar \in \MM} \Enmp{\Mstar}{p} [\RegDM] + C \cdot (\sqrt{T} + \sqrt{2} \vep T)\label{eq:relate-algs},
\end{align}
where the second inequality follows since $\til \MM \backslash \{
\Mbar \} \subset \MM$.

To proceed, we will apply \cref{lem:mbar-in-class} to the class $\til \MM$. %
To verify that the condition \eqref{eq:regret_lower_condition_mbarin}
is satisfied for this class, we note that, as remarked above, $\til \vep(T) = \sqrt{2} \vep$, so that 
\begin{align}
\sup_{\Mbar_0 \in \til \MM} \deccreg[\til \vep(T)](\til \MM, \Mbar_0) = \sup_{\Mbar_0 \in \til \MM} \deccreg[\sqrt{2} \vep](\til \MM, \Mbar_0) \geq \deccreg[\sqrt{2} \vep](\til \MM, \Mbar) \geq \deccreg[\vep](\MM) - \sqrt{2} \vep \geq 8 \vep\nonumber,
\end{align}
where the second-to-last inequality uses \eqref{eq:use-onesided-tilm} and the final inequality uses the assumption from \eqref{eq:regret_lower_condition} that $\deccreg[\vep](\MM) \geq 10 \vep$. \cref{lem:mbar-in-class} gives that, for some universal constant $c_2 > 0$, 
\begin{align}
  \inf_{(p')\^1, \ldots, (p')\^T} \sup_{\Mstar \in \til \MM} \Enmp{\Mstar}{p'}[\RegDM] &\geq  c_2 \cdot T \cdot \sup_{\Mbar_0 \in \til \MM} \deccreg[\sqrt{2}\vep](\til\MM, \Mbar_0) \nonumber\\
  & \geq  c_2 \cdot T \cdot \deccreg[\sqrt{2} \vep](\til \MM, \Mbar) \geq c_2 \cdot T \cdot (\deccreg[\vep](\MM) - \sqrt{2} \vep) \label{eq:use-lemma-mbarinm},
\end{align}
where the final inequality uses \eqref{eq:use-onesided-tilm}. Combining \eqref{eq:use-lemma-mbarinm} and \eqref{eq:relate-algs} gives that
\begin{align}
\inf_{p\^1,\ldots, p\^T} \sup_{\Mstar \in \MM} \Enmp{\Mstar}{p}[\RegDM] \geq \frac{1}{C \log T} \cdot \left( c_2 \cdot T \cdot (\deccreg(\MM) - \sqrt{2} \vep) - C \cdot (\sqrt{T} + \sqrt{2} \vep T)\right)\nonumber,
\end{align}
which implies that for some constants $C',C'', c_2' > 0$, we have
\begin{align}
\inf_{p\^1, \ldots, p\^T} \sup_{\Mstar \in \MM} \Enmp{\Mstar}{p}[\RegDM] \geq  \frac{1}{\log T} \cdot \left( c_2' \cdot T \cdot (\deccreg[\vep](\MM)  - C' \cdot \vep) - C'' \cdot \sqrt{T} \right)\nonumber.
\end{align}
As long as $\frac{1}{2} \cdot \deccreg[\vep](\MM) > C' \cdot \vep$, it follows that
\begin{align}
\inf_{p\^1, \ldots, p\^T} \sup_{\Mstar \in \MM} \Enmp{\Mstar}{p}[\RegDM] \geq \frac{c_2'\cdot  T}{2 \log T} \cdot \deccreg(\MM) - \frac{C''\cdot  \sqrt{T}}{\log T}\nonumber, %
\end{align}
as desired.

\end{proof}

Finally, we prove \cref{lem:add-mbar}.

\begin{proof}[\pfref{lem:add-mbar}]
  Fix any algorithm $p = (p\^1, \ldots, p\^T)$. We define a modified
  algorithm $p' = ((p')\^1, \ldots, (p')\^T)$ in
  \cref{alg:add-mbar}. Roughly speaking, $p'$ runs $p$ multiple times,
  re-initializing $p$ whenever the average reward for the current run
  falls too far below $\fmbar(\pimbar)$. If the algorithm $p'$
  finds that it has re-initialized $p$ more than $\log(T)$ times, it will
  switch to playing $\pimbar$ for all remaining rounds. The crux of
  the proof will be to show that the worst-case regret of $p'$ for
  models in $\cM\cup\crl{\Mbar}$ is not much larger than the
  worst-case regret of $p$ for models in $\cM$.
       \begin{algorithm}[ht]
    \setstretch{1.3}
     \begin{algorithmic}[1]
       \State \textbf{parameters}:
       \Statex[1] Number of rounds $T\in\bbN$.
       \Statex[1] Algorithm $p = (p\^1, \ldots, p\^T)$.
       \State Initialize $I = 1$, $T_1 = 1$, and $R = 4 \cdot \sup_{M^\st \in \MM} \EnmpT{\Mstar}{p}[\RegDM] + \delta T + 8\sqrt{T}$.
       \For{$1 \leq t \leq T$}
       \State  \label{line:define-pprime} Define $(p')\^t(\cdot)$ to
       be the distribution $p\^{t-T_I+1}(\cdot \mid \{ (\pi\^s, r\^s,
       o\^s)\}_{s=T_I}^{t-1})$.
       \State Draw $\pi\^t \sim (p')\^t$, and observe $(\pi\^t, r\^t, o\^t)$. %
       \If{$ \sum_{s=T_I}^{t} ( \fmbar(\pimbar) - r\^s) \geq R$}\label{line:add-mbar-test}
       \State Set $T_{I+1} := t+1$ and then  increment $I$. \hfill\algcommentlight{This has the effect of re-initializing $p$.}
       \EndIf
       \If{$I > \lceil \log T\rceil $}
       \State \textbf{break} out of loop.
       \EndIf
       \EndFor
       \State For remaining time steps $t$ (if any): play $\pi\^t := \pimbar$ (i.e., set $(p')\^t = \indic_{\pimbar}$). \label{line:play-pimbar}
     \end{algorithmic}
     \caption{Algorithm $p'$ used in proof of \cref{lem:add-mbar}}
     \label{alg:add-mbar}
     \end{algorithm}

As per our convention, in the context of \cref{alg:add-mbar}, we let $\mathscr{F}\^t$ denote the sigma-algebra generated by $\{ (\pi\^s, r\^s, o\^s) \}_{1 \leq s \leq t}$. 
We bound the regret of the algorithm $p'$ by considering the following cases for $M^\st \in \MM\cup\crl{\Mbar}$.

\paragraph{Case 1: $M^\st = \protect\Mbar$} Let $T_0 \in [T+1]$ be
defined to be the smallest value of $t$ for which the decision
$\pi\ind{t}$ is chosen using the rule at  \cref{line:play-pimbar}, or
$T+1$ if there is no such step. By construction, we have that
\begin{align}
\sum_{t=1}^{T_0-1} (\fmbar(\pimbar) - r\^t) = \sum_{t=1}^T \One{t < T_0} \cdot (\fmbar(\pimbar) - r\^t)  \leq (R+1) \cdot \lceil \log T\rceil.\label{eq:rlogt-alg}
\end{align}
Note that, for each $t \in [T]$, the variable $\One{t < T_0}$ is
measurable with respect to $\mathscr{F}\^{t-1}$. As a result, we have
\begin{align}
  \EnmpT{\Mbar}{p'}[\RegDM] &= \EnmpT{\Mbar}{p'} \left[\sum_{t=1}^T \left( \fmbar(\pimbar) - \E_{\pi\^t \sim (p')\^t}[\fmbar(\pi\^t)]\right)\right]\nonumber\\
  &= \EnmpT{\Mbar}{p'} \left[ \sum_{t=1}^T \One{t < T_0} \cdot \prn*{\fmbar(\pimbar) - \E_{\pi\^t \sim (p')\^t} [\fmbar(\pi\^t)]}\right]\nonumber\\
  &= \EnmpT{\Mbar}{p'} \left[ \sum_{t=1}^T \One{t < T_0} \cdot (\fmbar(\pimbar) - r\^t) \right] + \EnmpT{\Mbar}{p'} \left[ \sum_{t=1}^T \One{t < T_0} \cdot \prn*{r\^t - \E_{\pi\^t \sim (p')\^t}[\fmbar(\pi\^t)]} \right]\nonumber\\
  &\leq (R+1)\cdot  \log T\nonumber,
\end{align}
where the final inequality uses \eqref{eq:rlogt-alg} and the fact that for each $t$, we have 
\begin{align}
  \E \left[ \One{t < T_0} \cdot \prn*{r\^t - \E_{\pi\^t \sim (p')\^t}[\fmbar(\pi\^t)]} \ | \ \mathscr{F}\^{t-1}\right] &= \One{t < T_0} \cdot \E[r\^t - \E_{\pi\^t \sim (p')\^t}\brk*{\fmbar(\pi\^t)] \ | \ \mathscr{F}\^{t-1}} =0\nonumber.
\end{align}
Thus, in the case $\Mstar = \Mbar$, we have verified that the claimed upper bound in \eqref{eq:pprime-reg-ub} on the regret of $p'$ holds.

\paragraph{Case 2: $M^\st \in \MM$} We first state and prove two
technical lemmas.
\begin{lemma}
  \label{lem:stopping}
  For the algorithm $p$, any model $\Mstar \in \cMall$, and random
  variable $\tau$ {(potentially dependent on
  $\hist\ind{T}$)} taking values in $[T]$, it holds that
  \begin{align}
    \EnmpT{\Mstar}{p} \left[ \sum_{t=1}^\tau \E_{\pi\^t \sim p\^t} [\fmstar(\pimstar) - \fmstar(\pi\^t) ]\right] \leq \EnmpT{\Mstar}{p}[\RegDM]\nonumber.
  \end{align}
\end{lemma}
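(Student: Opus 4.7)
The plan is to observe that \Cref{lem:stopping} is essentially a tautology driven by non-negativity of the per-round expected regret. Since $\pimstar = \argmax_{\pi \in \Pi} \fmstar(\pi)$ by definition, we have $\fmstar(\pimstar) - \fmstar(\pi) \geq 0$ for every $\pi \in \Pi$, and hence the conditional per-round expected regret
\[
\En_{\pi\^t \sim p\^t}[\fmstar(\pimstar) - \fmstar(\pi\^t)] \geq 0
\]
is non-negative for each $t \in [T]$ (this quantity is measurable with respect to $\mathscr{F}\^{t-1}$ since $p\^t$ depends only on $\hist\^{t-1}$).

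Given this, since $\tau$ takes values in $[T]$, I would rewrite the truncated sum as
\[
\sum_{t=1}^{\tau} \En_{\pi\^t \sim p\^t}[\fmstar(\pimstar) - \fmstar(\pi\^t)] = \sum_{t=1}^{T} \indic\{t \leq \tau\} \cdot \En_{\pi\^t \sim p\^t}[\fmstar(\pimstar) - \fmstar(\pi\^t)],
\]
and drop the indicator to obtain the pointwise bound
\[
\sum_{t=1}^{\tau} \En_{\pi\^t \sim p\^t}[\fmstar(\pimstar) - \fmstar(\pi\^t)] \leq \sum_{t=1}^{T} \En_{\pi\^t \sim p\^t}[\fmstar(\pimstar) - \fmstar(\pi\^t)] = \RegDM.
\]
Taking expectation under $\EnmpT{\Mstar}{p}[\cdot]$ yields the desired inequality.

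There is no main obstacle: the statement does not require $\tau$ to be a stopping time (in the filtration-adapted sense) because we are not invoking optional stopping or a martingale argument; we only use monotonicity of a sum of non-negative summands together with the bound $\tau \leq T$. The result would hold for any $[T]$-valued random variable, regardless of its measurability structure relative to the filtration.
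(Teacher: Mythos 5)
Your proof is correct and is essentially identical to the paper's: both arguments write the truncated sum with an indicator $\One{t \leq \tau}$, drop it using non-negativity of the per-round expected regret $\E_{\pi\^t \sim p\^t}[\fmstar(\pimstar) - \fmstar(\pi\^t)]$, and take expectations. Your remark that no stopping-time structure on $\tau$ is needed is also right and matches the paper's phrasing of the lemma.
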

\begin{proof}[\pfref{lem:stopping}]
The result follows by noting that
  \begin{align}
&     \EnmpT{\Mstar}{p}[\RegDM] - \EnmpT{\Mstar}{p} \left[ \sum_{t=1}^\tau \E_{\pi\^t \sim p\^t} [\fmstar(\pimstar) - \fmstar(\pi\^t) ]\right]\nonumber\\
    &= \EnmpT{\Mstar}{p} \left[ \sum_{t=1}^T \One{\tau < t} \cdot \E_{\pi\^t \sim p\^t} [\fmstar(\pimstar) - \fmstar(\pi\^t)]\right] \geq 0\nonumber,
  \end{align}
  where we have used that the random variable $ \E_{\pi\^t \sim p\^t} [\fmstar(\pimstar) - \fmstar(\pi\^t)]$ is non-negative a.s.
\end{proof}

{The next lemma concerns the probability that a \emph{single run} of the
  algorithm $p$ violates the condition in \pref{line:add-mbar-test} of \pref{alg:add-mbar}.
  }
\begin{lemma}
  \label{lem:succ-prob12}
  For any algorithm $p = (p\^1, \ldots, p\^T)$ and model $\Mstar \in  \MM$, it holds that
  \begin{align}
\Pmp{\Mstar}{p} \left( \exists t \leq T \ : \ \sum_{s=1}^t ( \fmbar(\pimbar) - r\^s ) > R \right) \leq \frac 12, \nonumber
  \end{align}
  where $R = 4 \cdot \sup_{M^\st \in \MM} \E\sups{M^\st, p}[\RegDM] + \delta T + 8\sqrt{T}$.
\end{lemma}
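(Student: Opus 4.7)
The key idea is to decompose the quantity $\sum_{s=1}^t(\fmbar(\pimbar) - r^s)$ into three pieces with qualitatively different behavior: a deterministic ``bias'' term controlled by the assumption $\fmbar(\pimbar) \le \fmstar(\pimstar) + \delta$, a non-negative ``regret'' term whose expectation is controlled, and a martingale term. Concretely, for each $s$ I would write
\begin{align*}
\fmbar(\pimbar) - r^s \;=\; \underbrace{\bigl(\fmbar(\pimbar) - \fmstar(\pimstar)\bigr)}_{\le\,\delta} \;+\; \underbrace{\bigl(\fmstar(\pimstar) - \E_{\pi\sim p^s}[\fmstar(\pi)]\bigr)}_{\ge\,0,\ \mathscr{F}^{s-1}\text{-measurable}} \;+\; \underbrace{\bigl(\E_{\pi\sim p^s}[\fmstar(\pi)] - r^s\bigr)}_{\text{martingale diff., bounded in } [-1,1]} .
\end{align*}
Summing $s=1$ to $t$ and using that both the deterministic bias term and the non-negative regret term are monotone non-decreasing in $t$, the supremum over $t \le T$ of the left-hand side is bounded above (for every $t$) by $\delta T + \RegDM + \max_{t\le T} M_t$, where $M_t := \sum_{s=1}^t (\E_{\pi\sim p^s}[\fmstar(\pi)] - r^s)$ and the identity $\RegDM = \sum_{s=1}^T \E_{\pi\sim p^s}[\fmstar(\pimstar)-\fmstar(\pi)]$ from \eqref{eq:regret} is used.

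The next step is to bound the two stochastic terms with high probability. For the regret term, Markov's inequality gives $\Pmp{\Mstar}{p}(\RegDM > 4 \Enmp{\Mstar}{p}[\RegDM]) \le 1/4$. For the martingale term, $(M_t)_{t\ge 0}$ has differences bounded in $[-1,1]$ under $\Pmp{\Mstar}{p}$, so the Azuma--Hoeffding maximal inequality (applied to the submartingale $e^{\eta M_t}$ followed by Doob) yields $\Pmp{\Mstar}{p}(\max_{t\le T} M_t \ge 8\sqrt{T}) \le e^{-32} < 1/4$. Taking a union bound and recalling the definition $R = 4\sup_{M\in\cM}\Enmp{M}{p}[\RegDM] + \delta T + 8\sqrt{T}$, with probability at least $1/2$ every prefix sum $\sum_{s=1}^t(\fmbar(\pimbar) - r^s)$ is at most $R$, which is the claim.

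I do not expect a genuine obstacle here: the decomposition is natural, and all three pieces are handled by off-the-shelf tools (the assumption on $\Mbar$, Markov, and a martingale maximal inequality). The only minor care needed is to note that the second summand is non-negative so its running maximum equals its terminal value (which is why the stopping-time identity of \pref{lem:stopping} is not strictly necessary, though one could invoke it instead of the non-negativity observation), and to verify that the constants $4$, $\delta T$, and $8\sqrt{T}$ in the definition of $R$ are chosen so that the two tail events each have probability at most $1/4$, summing to $1/2$.
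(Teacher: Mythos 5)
Your proof is correct and follows essentially the same three-part decomposition as the paper: split $\fmbar(\pimbar) - r^s$ into the $\delta$-bias, the non-negative per-round regret (controlled via Markov to get probability $\leq 1/4$ for the $4R_0$ threshold, using non-negativity so the running max equals the terminal sum), and the martingale difference. The only variation is in the martingale step: the paper applies Doob's $L^2$ maximal inequality (Theorem 4.5.1 in Durrett) to get $\En[\sup_{t\le T}|X_t|^2] \le 4T$, then Jensen and Markov to conclude $\Pr(\sup_t|X_t| > 8\sqrt{T}) \le 1/4$; you instead invoke the Azuma--Hoeffding maximal inequality for bounded differences, which yields $\exp(-32) < 1/4$ for the same threshold. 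Both are off-the-shelf and both suffice; your route happens to give a sharper tail for free, while the paper's $L^2$ route is perhaps more agnostic to boundedness assumptions, but for this lemma the two are interchangeable.
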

\begin{proof}[\pfref{lem:succ-prob12}]
Let $\mathscr{F}\^t$ be the sigma-algebra generated by $\{ (\pi\^s, r\^s, o\^s) \}_{s=1}^t$. 
Fix $\Mstar\in\cM$ and define $R_0 := \EnmpT{\Mstar}{p}[\RegDM]$. By Markov's inequality and the fact that the random variables $\E_{\pi\^t \sim p\^t} [\fmstar(\pimstar) - \fmstar(\pi\^t)]$ are all non-negative, it holds that
  \begin{align}
    & \Pmp{\Mstar}{p} \left( \sup_{t \leq T} \sum_{s=1}^{t} \E_{\pi\^s \sim p\^s} [\fmstar(\pimstar) - \fmstar(\pi\^s)] > 4R_0 \right)\nonumber\\
&= \Pmp{\Mstar}{p} \left( \sum_{t=1}^T \E_{\pi\^t \sim p\^t} [\fmstar(\pimstar) - \fmstar(\pi\^t)] > 4R_0 \right) \leq  \frac 14 \label{eq:r0-bound-1}.
  \end{align}
  Now, define $X_0 = 0$ and $X_t = \sum_{s=1}^t \left( \E_{\pi\^s \sim
      p\^s}[\fmstar(\pi\^s)] - r\^s\right)$ for $t\in\brk{T}$. Note
  that $(X_t)_{t \geq 0}$ is a martingale with respect to the
  filtration $\mathscr{F}\^t$. Therefore, by Theorem 4.5.1 of \citet{durrett2019probability}, it holds that
  \begin{align}
    \EnmpT{\Mstar}{p} \left[ \sup_{t \leq T} |X_t|^2 \right] &\leq 4 \cdot \EnmpT{\Mstar}{p} \left[ \sum_{t=1}^T \E \left[ (\E_{\pi\^t \sim p\^t}[\fmstar(\pi\^t)] - r\^t)^2 \ | \ \scrF\^{t-1}\right] \right]\leq 4T\nonumber,
  \end{align}
  where the final inequality uses that $|\E_{\pi\^t \sim p\^t}[\fmstar(\pi\^t)] - r\^t| \leq 1$ for all $t$. By Jensen's inequality and Markov's inequality, it follows that for any $\lambda > 0$,
  \begin{align}
\Pmp{\Mstar}{p} \left( \sup_{t \leq T} |X_t| > 2\lambda \sqrt{T} \right) \leq \frac{1}{\lambda}\nonumber,
  \end{align}
  and by choosing $\lambda = 4$, we see that
  \begin{align}
\Pmp{\Mstar}{p} \left( \sup_{t \leq T} \sum_{s=1}^t \left(\E_{\pi\^s \sim p\^s}[\fmstar(\pi\^s)] - r\^s\right) > 8 \sqrt{T} \right) \leq \frac{1}{4}\label{eq:root-t-bound}.
  \end{align}
  
Combining \eqref{eq:r0-bound-1} and \eqref{eq:root-t-bound}, we have
  \begin{align}
\Pmp{\Mstar}{p} \left( \exists t \leq T \ : \ \sum_{s=1}^t \E_{\pi\^s \sim p\^s} [\fmstar(\pimstar) - r\^s] > 4R_0 + 8\sqrt{T}\right) \leq \frac 12\nonumber.
  \end{align}
  Since $\Mstar \in \MM$, and so $\fmstar(\pimstar)  \geq \fmbar(\pimbar) - \delta $, it follows that
    \begin{align}
\Pmp{\Mstar}{p} \left( \exists t \leq T \ : \ \sum_{s=1}^t \E_{\pi\^s \sim p\^s} [\fmbar(\pimbar) - r\^s] > 4R_0 + 8 \sqrt{T} + \delta T \right) \leq \frac 12\nonumber,
    \end{align}
    as desired.
  \end{proof}
We now continue with the proof of \cref{lem:add-mbar}.  Write $L =
\lceil \log T \rceil$, and denote the final value of $I$ in \cref{alg:add-mbar} by
$I' \leq L + 1$. If $I' \leq L$, then set $T_{I'+1} = \cdots = T_{L+1}
= T+1$. Note that for each $\ell \in [L+1]$, $T_\ell-1$ is a stopping
time (since the event $\{T_\ell-1=t\}$ is measurable with respect to
$\mathscr{F}\^{t}$ for each $t \in [T]$), and thus $T_\ell$ is a
stopping time as well.

  \Cref{lem:succ-prob12} together with the definition of $(p')\^t$ in \cref{line:define-pprime} establishes that for each $\ell \in [L]$,
  \begin{align}
\Pmp{\Mstar}{p'} \left( T_{\ell+1} \leq T \ | \ T_\ell \leq T \right) &= \Pmp{\Mstar}{p'} \left( \exists t \mbox{ s.t. } \ T-1 \geq  t \geq T_\ell \mbox{ and } \sum_{s=T_\ell}^t (\fmbar(\pimbar) - r\^s) > R \ | \ T_\ell \leq T \right) \leq \frac 12 \nonumber.
  \end{align}
  Therefore, since the event $\{ T_\ell \leq T \}$ is equal to the event $\{ T_{\ell'} \leq T \ \forall \ell' \leq \ell\}$,
  \begin{align}
    \Pmp{\Mstar}{p'} \left( T_{L+1} \leq T\right)&= \Pmp{\Mstar}{p'} \left( \forall \ell \leq L+1, \ T_\ell \leq T \right)\nonumber\\
    &= \prod_{\ell=1}^L \Pmp{\Mstar}{p'} \left( T_{\ell+1} \leq T \ | \ T_{\ell'} \leq T \ \forall \ell' \leq \ell \right)\nonumber\\
    &= \prod_{\ell=1}^L \Pmp{\Mstar}{p'} \left( T_{\ell+1} \leq T \ | \ T_\ell \leq T \right)\nonumber\\
    &\leq (1/2)^L \leq 1/T\nonumber.
  \end{align}
  Furthermore, for each $\ell \in [L]$, we have
  \begin{align}
    \Enmp{\Mstar}{p'} \left[   \sum_{t=T_\ell}^{T_{\ell+1}-1} \E_{\pi\^t \sim (p')\^t}[\fmstar(\pimstar) - \fmstar(\pi\^t)]\right] &= \Enmp{\Mstar}{p'} \left[ \sum_{t=1}^T \One{T_\ell \leq t < T_{\ell+1}} \cdot \E_{\pi\^t \sim (p')\^t} [\fmstar(\pimstar) - \fmstar(\pi\^t)] \right]\nonumber\\
    &\leq \EnmpT{\Mstar}{p}[\RegDM]   \nonumber,
  \end{align}
  where the inequality uses \Cref{lem:stopping} and the definition of $p'$ in \cref{line:define-pprime} for steps $T_\ell \leq t < T_{\ell+1}$.  %
  It follows that
  \begin{align}
    & \Enmp{\Mstar}{p'} \left[ \sum_{t=1}^T \E_{\pi\^t \sim (p')\^t} [\fmstar(\pimstar) - \fmstar(\pi\^t)] \right]\nonumber\\
    &\leq T \cdot \Pmp{\Mstar}{p'} (T_{L+1} \leq T) + \sum_{\ell=1}^L  \Enmp{\Mstar}{p'} \left[   \sum_{t=T_\ell}^{T_{\ell+1}-1} \E_{\pi\^t \sim (p')\^t}[\fmstar(\pimstar) - \fmstar(\pi\^t)]\right] \nonumber\\
    &\leq1 + L \cdot  \EnmpT{\Mstar}{p}[\RegDM]\nonumber,
  \end{align}
which verifies the claimed upper bound on regret in \eqref{eq:pprime-reg-ub}.
\end{proof}

\section{Omitted Proofs from \creftitle{sec:upper}}
\label{app:upper}
In this section we prove \Cref{thm:regret_upper}, which shows that \Cref{alg:regret} attains a regret bound based on the constrained \CompShort

\subsection{Proof of Regret Upper Bound (\preft{thm:regret_upper})}
Toward proving \pref{thm:regret_upper}, we introduce a few \emph{success events} that will be used throughout the analysis:
\begin{enumerate}
\item For each $i \in [N]$, $\SA_{i}$ denotes the event that all of the following inequalities hold: %
  \begin{align}
 \forall s \in \MS_i, \qquad    &\sum_{j=1}^{J_{i,s}} \E_{\pi \sim p\^s} \left[ \hell{M^\st(\pi)}{\til M_s\^j(\pi)}\right] \leq  {\Est(J_i, \delta)},\label{eq:mtil-mstar-close-ai}\\
    &\E_{\pi \sim p\^{s_i}} \left[ \hell{\wh M_i(\pi)}{M^\st(\pi)} \right] \leq   \frac{\Est(J_i, \delta)}{J_i},\label{eq:mhati-mstar-close}\\
    &\E_{\pi \sim p\^{s_i}} \left[ \hell{\wh M\^{s_i}(\pi)}{M^\st(\pi)} \right] \leq \vep_i^2,\label{eq:mhatsi-mstar-close}
   \end{align}
   where we recall that $s_i$ is the index defined on \cref{line:set-si-perm}.

\item For each $i \in [N]$, $\SB_i$ denotes the event that
  \begin{align}
\sum_{t\in \ME_i} \E_{\pi \sim p\^t} \left[ \hell{M^\st(\pi)}{\wh M\^t(\pi)}\right] \leq {\Est(|\ME_i|, \delta)}\label{eq:def-bi-event}.
  \end{align}
\item For each $i \in [N]$, $\SC_i$ denotes the event that $M^\st \in \MM_i$.
\item For each $i \in [N]$, $\SD_i$ denotes the event that there is some $s \in \MS_i$ so that %
  \begin{align}
\E_{\pi \sim p\^s} \left[ \hell{M^\st(\pi)}{\wh M\^s(\pi)} \right] \leq \frac{\vep_i^2}{16}\nonumber.
  \end{align}
\end{enumerate}
In addition, we define
\[\SA = \bigcap_{i \in [N]} \SA_i,\ \SB = \bigcap_{i \in [N]} \SB_i, \ \SC = \bigcap_{i \in [N]} \SC_i, \ \text{and} \ \SD = \bigcap_{i \in [N]} \SD_i. \]

We also recall the following notation, which will be used throughout the proof.
\begin{itemize}
\item For $i \in [N]$ we set \begin{equation}
    \alpha_i \ldef C_0 \cdot \deccreg[\vep_i](\MM) + 64 \vep_i,\label{eq:cnot}
  \end{equation} with the convention that $\alpha_0 = 1$.
The constant $C_0>0$ in \eqref{eq:cnot}, as well as the constant $C_1>0$ specified in \cref{alg:regret}, will need to be taken sufficiently large; in what follows, we show that $C_0 \geq 20$ and $C_1 \geq 128$ will suffice.
\end{itemize}

\subsubsection{Technical Lemmas}

Before proving \pref{thm:regret_upper}, we state and prove several technical lemmas concerning the performance of \pref{alg:regret}. The following lemma shows that the event $\SA \cap \SB \cap \SC$ occurs with high probability.
\begin{lemma}
  \label{lem:good-event-lb}
Suppose that $C_1 \geq 128$. The event $\SA \cap \SB \cap \SC \cap \SD$ occurs with probability at least $1 - 3L \delta N$. 
\end{lemma}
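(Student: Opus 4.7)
The plan is to handle each of the four success events separately and conclude by a union bound. The key observation is that $\SA_i$, $\SB_i$, and $\SD_i$ are each controlled by an application of the estimation oracle guarantee (\pref{ass:hellinger_oracle_constrained}) --- once to the online run over the exploration block $\ME_i$ (for $\SB_i$ and $\SD_i$), and once for each of the $L$ inner runs in the refinement block (for $\SA_i$) --- while $\SC_i$ requires a separate inductive argument that feeds on $\SA_{i-1}$.

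First, I will set up induction on $i$ to handle $\SC$. The base case $\SC_1$ holds because $\MM_1 = \MM$ by construction. For the inductive step, the second inequality \eqref{eq:mhati-mstar-close} in $\SA_i$ says exactly that $\E_{\pi \sim \hat p_i}[\hell{M^\st(\pi)}{\wh M_i(\pi)}] \leq \Est(J_i, \delta)/J_i$, which is the membership condition defining $\MM_{i+1}$. Thus $\SA_i \cap \SC_i \Rightarrow \SC_{i+1}$, and from here on I will carry out all arguments conditional on $\SC_i$ so that $M^\st \in \MM_i$ and the oracle guarantees apply.

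Next I turn to $\SA_i$. For each $s \in \MS_i$, the inner invocation of $\AlgEst$ with horizon $J_i$, failure probability $\delta$, and class $\MM_i$ produces the sequence $\{\til M_s\^j\}$, and \pref{ass:hellinger_oracle_constrained} gives \eqref{eq:mtil-mstar-close-ai} with probability at least $1-\delta$. The second inequality \eqref{eq:mhati-mstar-close} follows from the first by convexity of squared Hellinger distance applied to $\wh M_i = \tfrac{1}{J_{i,s}} \sum_j \til M_{s_i}\^j$, exactly as in the online-to-batch computation \eqref{eq:offline-estimation}. The third inequality \eqref{eq:mhatsi-mstar-close}, which is the subtle one, follows by combining the stopping rule on \cref{line:choose-jis} --- which ensures that when $s_i$ is set on \cref{line:assign-si} we have $\sum_{k=1}^{J_{i,s_i}} \E_{\pi \sim \hat p_i}[\hell{\wh M^{s_i}(\pi)}{\til M_{s_i}\^k(\pi)}] \leq J_i \vep_i^2/4$ --- with the first inequality via the triangle inequality for Hellinger distance, yielding an $\E_{\pi \sim \hat p_i}[\hell{\wh M^{s_i}(\pi)}{M^\st(\pi)}] \leq \vep_i^2$ bound after dividing by $J_{i,s_i}$ and using $\Est(J_i,\delta) \leq J_i \vep_i^2$. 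A union bound over the $L$ indices in $\MS_i$ and the $N$ epochs costs $NL\delta$ total.

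For $\SB_i$, a single invocation of \pref{ass:hellinger_oracle_constrained} on the exploration block $\ME_i$ yields \eqref{eq:def-bi-event} with probability at least $1-\delta$, costing $N\delta$ after union bounding over epochs. Finally, for $\SD_i$, I will condition on $\SB_i$ and apply Markov's inequality to the uniform average over $\ME_i$: since $|\ME_i| \geq 2^i/4$ and $\vep_i^2 = 2^{N-i}\vep_N^2 = 2^{N-i} C_1 \Est L / T$, the choice $C_1 \geq 128$ (together with $T \leq 2 \cdot 2^N$) makes $\vep_i^2/16$ at least twice $\Est(|\ME_i|,\delta)/|\ME_i|$, so each uniformly sampled $s$ from $\ME_i$ satisfies the per-round bound with probability at least $1/2$. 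Because $\MS_i$ consists of $L = \lceil \log(1/\delta)\rceil$ independent samples, the failure probability is at most $2^{-L} \leq \delta$, again costing $N\delta$. The total union bound is $NL\delta + N\delta + N\delta \leq 3NL\delta$, as claimed. The main obstacle will be verifying the constant $C_1 \geq 128$ is large enough to make the Markov step for $\SD_i$ go through after absorbing all the factors of $2$ coming from the dyadic split of $[T]$ and the relationship between $\Est(|\ME_i|,\delta)$ and $\EstBar$.
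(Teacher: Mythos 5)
There is a genuine gap in your argument for $\SA_i$, and it concerns precisely the case your proposal treats as routine. You claim that \eqref{eq:mhati-mstar-close} "follows from the first by convexity of squared Hellinger distance applied to $\wh M_i = \tfrac{1}{J_{i,s_i}}\sum_j \til M_{s_i}\^j$," and that \eqref{eq:mhatsi-mstar-close} follows from "the stopping rule \dots which ensures that \emph{when $s_i$ is set on \pref{line:assign-si}}" the relevant sum is at most $J_i\vep_i^2/4$. But the algorithm only reaches \pref{line:assign-si} if the inner loop over $j$ runs to completion for some $s \in \MS_i$, i.e., if the break on \pref{line:choose-jis} is never triggered. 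If the break triggers for every $s \in \MS_i$, then $s_i^{\mathrm{tmp}}$ stays $\perp$, $s_i$ is chosen arbitrarily with $J_{i,s_i} < J_i$, and neither of your two claimed implications goes through: convexity only gives $\E_{\pi\sim p\^{s_i}}[\hell{M^\st(\pi)}{\wh M_i(\pi)}] \le \Est(J_i,\delta)/J_{i,s_i}$, which is \emph{weaker} than $\Est(J_i,\delta)/J_i$ when $J_{i,s_i} < J_i$; and the stopping-rule inequality you invoke in fact runs the opposite direction (for the $s$ that triggered the break, the partial sum \emph{exceeds} $J_i\vep_i^2/4$).

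The missing step — and the reason $\SD_i$ must be established \emph{before} $\SA_i$ rather than treated as an independent event — is the verification that $s_i^{\mathrm{tmp}} \neq \perp$. Under $\SD_i$, there is some $s_i^{\st} \in \MS_i$ with $\E_{\pi\sim p\^{s_i^{\st}}}[\hell{M^\st(\pi)}{\wh M\^{s_i^{\st}}(\pi)}] \le \vep_i^2/16$; combining this with \eqref{eq:mtil-mstar-close-ai} via the triangle inequality (and $\Est(J_i,\delta) \le J_i\vep_i^2/32$ from the parameter choice) shows $\sum_{j=1}^{J_{i,s_i^{\st}}} \E_{\pi\sim p\^{s_i^{\st}}}[\hell{\wh M\^{s_i^{\st}}(\pi)}{\til M_{s_i^{\st}}\^j(\pi)}] \le 3J_i\vep_i^2/16 < J_i\vep_i^2/4$, so the break never fires for $s_i^{\st}$, hence $J_{i,s_i^{\st}} = J_i$ and $s_i^{\mathrm{tmp}}$ is assigned. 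Only then can the convexity and stopping-rule arguments be applied to $s_i$ as you describe. Your stated plan of "handling each of the four success events separately and concluding by a union bound" is therefore structurally wrong: $\SC_i$ depends on $\SA_{i-1}$ (which you noted), but $\SA_i$ in turn depends on $\SD_i$ (which you did not), so the argument must be a single induction in which, within epoch $i$, you establish $\SC_i \Rightarrow \SB_i \Rightarrow \SD_i \Rightarrow \SA_i$ in sequence.
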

\begin{proof}[\pfref{lem:good-event-lb}]
  We show that  $\BP \left(\bigcap_{i' \leq i} \SA_{i'} \cap \SB_{i'} \cap \SC_{i'}\cap \SD_{i'} \right) \geq 1 - 3L\delta i$ for each $i \in [N]$ using induction on $i$. Fix $i \in [N]$ and let us condition on $\bigcap_{i' < i} (\SA_{i'} \cap \SB_{i'} \cap \SC_{i'} \cap \SD_{i'})$.

  \paragraph{Establishing that $\SC_i$ holds} The fact that $\SA_{i-1}$ holds implies that $\E_{\pi \sim \wh p_{i-1}} \left[ \hell{M^\st(\pi)}{\wh M_{i-1}(\pi)} \right] \leq \frac{\Est(J_{i-1}, \delta)}{J_{i-1}}$, which implies (by definition) that $M^\st \in \MM_i$, i.e., $\SC_i$ holds.

  \paragraph{Establishing  that $\SB_i$ holds} 
  Conditioned on $\SC_i$, it follows from \cref{ass:hellinger_oracle_constrained} that $\SB_i$ holds with probability at least $1-\delta$.

  \paragraph{Establishing that $\SD_i$ holds}
   Note that as a consequence of our parameter settings,
  \begin{align}
    \vep_i^2 \geq 2^{-i} \cdot T\cdot \vep_N^2 = 2^{-i} \cdot C_1 \cdot \Est(T, \delta) \cdot L \geq \max \left\{ 32 \cdot \frac{\Est(J_i, \delta)}{J_i} , %
    32 \cdot \frac{\Est(|\ME_i|,\delta)}{|\ME_i|}\right\}\label{eq:bound-ei-ji},
  \end{align}
  where we have used that $C_1 \geq 128$ and that $\Est(T, \delta) \geq \max\{\Est(|\ME_i|, \delta), \Est(J_i, \delta) \}$. Then since $\SB_i$ (i.e., (\ref{eq:def-bi-event})) holds, at least $|\ME_i|/2$ rounds $t \in \ME_i$ satisfy $M^\st \in \MH_{\vep_i/4, p\^t}(\wh M\^t)$. Since $|\MS_i| = L \geq \log 1/\delta$, it follows that with probability at least $1-\delta$, there is some $s \in \MS_i$, which we denote by $s_i^\st$, for which $M^\st \in \MH_{\vep_i/4, p\^{s_i^\st}}(\wh M\^{s_i^\st})$. In particular, conditioned on $\SB_i$, the event $\SD_i$ holds with probability at least $1-\delta$.

\paragraph{Establishing that $\SA_i$ holds}  Next, from \cref{ass:hellinger_oracle_constrained} and the fact that Hellinger distance is always non-negative, we have that with probability at least $1-L\delta$, for all $s \in \MS_i$,
  \begin{align}
\sum_{j=1}^{J_{i,s}} \E_{\pi \sim p\^s} \left[ \hell{M^\st(\pi)}{\til M_s\^j(\pi)}\right] \leq {\Est(J_i, \delta)}\label{eq:jsi-stopping},
  \end{align}
  which verifies that \cref{eq:mtil-mstar-close-ai} holds.
Next, let us condition on the event that $\SD_i$ holds. Then applying \cref{eq:jsi-stopping} to $s=s_i^\st$ and using the definition of $s_i^\st$ (recall that $s^{\star}_i$ is defined in the prequel so that $M^\st \in \MH_{\vep_i/4, p\^{s_i^\st}}(\wh M\^{s_i^\st})$), we see that
  \begin{align}
    \sum_{j=1}^{J_{i,s_i^\st}} \E_{\pi \sim p\^{s_i^\st}} \left[ \hell{\wh M\^{s_i\^\st}(\pi)}{\til M_{s_i^\st}\^j(\pi)}\right] \leq &  \sum_{j=1}^{J_{i,s_i^\st}}  2\E_{\pi \sim p\^{s_i^\st}} \left[ \hell{\wh M\^{s_i\^\st}(\pi)}{ M^\st(\pi)}\right] + 2 \E_{\pi \sim p\^{s_i^\st}} \left[ \hell{\wh M^\st(\pi)}{\til M_{s_i^\st}\^j(\pi)}\right] \nonumber\\
    \leq & 2 J_{i, s_i^\st} \cdot \frac{\vep_i^2}{16} + 2 \sum_{j=1}^{J_{i,s_i^\st}} \E_{\pi \sim p\^{s_i^\st}} \left[ \hell{M^\st(\pi)}{\til M_{s_i\^\st}\^j(\pi)} \right] \nonumber\\
    \leq &  \frac{J_i \vep_i^2}{8} + 2 \cdot \Est(J_i, \delta) \leq \frac{3J_i\vep_i^2}{16}\nonumber,
  \end{align}
  where the final inequality uses \eqref{eq:bound-ei-ji}. By the definition of $J_{i,s_i^\st}$ on \cref{line:choose-jis,line:assign-si}, it must be the case that $J_{i,s_i^\st} = J_i$, and thus $s_i^{\mathrm{tmp}}$ is assigned at least once on \cref{line:assign-si}. Therefore, the value of $s_i$ set on \cref{line:set-si-perm} satisfies
  \begin{align}
\E_{\pi \sim p\^{s_i}} \left[ \hell{\wh M_i(\pi)}{M^\st(\pi)} \right] \leq \frac{1}{J_i} \sum_{j=1}^{J_i} \E_{\pi \sim p\^{s_i}} \left[ \hell{\til M_{s_i}\^j(\pi)}{M^\st(\pi)} \right] \leq \frac{\Est(J_i, \delta)}{J_i}\nonumber,
  \end{align}
  where the first inequality uses convexity of the squared Hellinger distance and the second inequality uses \cref{eq:jsi-stopping} together with the fact that $J_{i,s_i} = J_i$. The above display verifies \cref{eq:mhati-mstar-close}; to verify \cref{eq:mhatsi-mstar-close}, we note that, since $s_i^{\mathrm{tmp}}$ is assigned at least once,
  \begin{align}
\sum_{j=1}^{J_i} \E_{\pi \sim p\^{s_i}} \left[ \hell{\wh M\^{s_i}(\pi)}{\til M_{s_i}\^j(\pi)} \right] \leq \frac{J_i\vep_i^2}{4}\label{eq:use-si-defn}.
  \end{align}
  Thus, we may compute
  \begin{align}
    \E_{\pi \sim p\^{s_i}} \left[ \hell{\wh M\^{s_i}(\pi)}{M^\st(\pi)} \right] \leq & 2 \cdot\E_{\pi \sim p\^{s_i}} \left[ \hell{\wh M_i(\pi)}{\wh M\^{s_i}(\pi)} \right] + 2\cdot\E_{\pi \sim p\^{s_i}} \left[ \hell{\wh M_{i}(\pi)}{M^\st(\pi)} \right]\nonumber\\
    \leq & \frac{2}{J_i} \sum_{j=1}^{J_i} \E_{\pi \sim p\^{s_i}} \left[ \hell{\wh M\^{s_i}(\pi)}{\til M_{s_i}\^j(\pi)}\right] + \frac{2}{J_i}\sum_{j=1}^{J_i} \E_{\pi \sim p\^{s_i}} \left[ \hell{M^\st(\pi)}{\til M_{s_i}\^j(\pi)} \right] \nonumber\\
    \leq & \frac{\vep_i^2}{2} + \frac{2 \cdot \Est(J_i, \delta)}{J_i}\leq  \vep_i^2 \nonumber,
  \end{align}
  where the second inequality uses the convexity of squared Hellinger distance, the third inequality uses \cref{eq:jsi-stopping} for $s = s_i$ and \cref{eq:use-si-defn}, and the final inequality uses \cref{eq:bound-ei-ji}. As the above display verifies \cref{eq:mhatsi-mstar-close}, we conclude that conditioned on $\SD_i$ holding, $\SA_i$ holds with probability at least $1-L\delta$.

  \paragraph{Wrapping up} Summarizing, conditioned on $ \bigcap_{i' < i} \SA_{i'} \cap \SB_{i'} \cap \SC_{i'}\cap \SD_{i'} $, we have shown that $\SA_i \cap \SB_i \cap \SC_i \cap \SD_i$ holds with probability $1-2\delta - L\delta \geq 1-3L\delta$. 
  Thus, the inductive hypothesis that $\BP \left( \bigcap_{i' < i} \SA_{i'} \cap \SB_{i'} \cap \SC_{i'} \right) \geq 1 - 3L\delta (i-1)$ implies that  $\BP\left( \bigcap_{i' \leq i} \SA_{i'} \cap \SB_{i'} \cap \SC_{i'} \cap \SD_{i'} \right)\geq  (1-3L\delta(i-1)) \cdot (1-3L\delta) \geq 1 - 3L\delta i$.

  Summarizing, we get that $\BP(\SA \cap \SB \cap \SC \cap \SD) \geq 1 - 3L \delta N$. 
\end{proof}

\Cref{lem:refined-acc} shows that the distributions $\wh p_i$ computed in \Cref{alg:regret} enjoy low suboptimality with respect to $\Mstar$. %
\begin{lemma}[Accuracy of refined policies]
  \label{lem:refined-acc}
 Suppose that $C_0 \geq 4$.  Then for each epoch $i \in [N]$, under the event $\SA_i$, the distribution $\wh p_i$ satisfies
  \begin{align}
\E_{\pi \sim \wh p_i} \left[ \fmstar(\pimstar) - \fmstar(\pi) \right] \leq \frac{\alpha_i}{4}\label{eq:refined-acc}.
  \end{align}
\end{lemma}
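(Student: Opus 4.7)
The proof is essentially a one-shot invocation of the defining property of \Cref{line:regret_algo_dec} once we know that $\Mstar$ lies in the appropriate Hellinger ball, so the plan is short. I will condition on the event $\SA_i$ throughout, and work with $\wh p_i = p\^{s_i}$ as set on \Cref{line:set-si-perm}.

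First, I will extract from $\SA_i$ the inequality \eqref{eq:mhatsi-mstar-close}, namely
\[
\E_{\pi \sim p\^{s_i}}\bigl[\hell{\wh M\^{s_i}(\pi)}{\Mstar(\pi)}\bigr] \leq \vep_i^2,
\]
which is exactly the statement that $\Mstar \in \MH_{p\^{s_i},\vep_i}(\wh M\^{s_i})$. In particular, $\Mstar$ belongs to the set $\MH_{p\^{s_i},\vep_i}(\wh M\^{s_i})\cup\{\wh M\^{s_i}\}$ over which the inner supremum on \Cref{line:regret_algo_dec} is taken.

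Next I will use the definition of $p\^{s_i}$ as the $\argmin$ on \Cref{line:regret_algo_dec} to chain
\[
\E_{\pi \sim p\^{s_i}}\bigl[\fmstar(\pimstar) - \fmstar(\pi)\bigr]
\;\leq\; \sup_{M \in \MH_{p\^{s_i},\vep_i}(\wh M\^{s_i}) \cup \{\wh M\^{s_i}\}} \E_{\pi \sim p\^{s_i}}[\fm(\pim) - \fm(\pi)]
\;=\; \deccreg[\vep_i]\bigl(\MM\cup\{\wh M\^{s_i}\},\,\wh M\^{s_i}\bigr),
\]
where the last equality is just the definition \eqref{eq:dec_constrained} of the constrained regret DEC (with the convention that the reference model is always allowed in the sup). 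Because $\wh M\^{s_i} \in \co(\MM_{i}) \subseteq \co(\MM)$ by the guarantee on $\AlgEst$ in \Cref{ass:hellinger_oracle_constrained}, taking the supremum over $\Mbar\in\co(\MM)$ yields
\[
\deccreg[\vep_i]\bigl(\MM\cup\{\wh M\^{s_i}\},\,\wh M\^{s_i}\bigr) \;\leq\; \sup_{\Mbar\in\co(\MM)} \deccreg[\vep_i](\MM\cup\{\Mbar\},\Mbar) \;=\; \deccreg[\vep_i](\MM).
\]

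Finally, I will plug in the definition $\alpha_i = C_0\cdot\deccreg[\vep_i](\MM) + 64\vep_i$ from \eqref{eq:cnot} and use $C_0\geq 4$ to conclude
\[
\E_{\pi \sim \wh p_i}\bigl[\fmstar(\pimstar)-\fmstar(\pi)\bigr] \;\leq\; \deccreg[\vep_i](\MM) \;\leq\; \frac{\alpha_i}{C_0} \;\leq\; \frac{\alpha_i}{4},
\]
which is \eqref{eq:refined-acc}. There is no real obstacle here; the only subtlety to flag in the write-up is that the inclusion $\Mstar \in \MH_{p\^{s_i},\vep_i}(\wh M\^{s_i})$ uses specifically \eqref{eq:mhatsi-mstar-close} (and not the weaker \eqref{eq:mhati-mstar-close} for $\wh M_i$), and that the passage to $\sup_{\Mbar\in\co(\MM)}$ is legal because $\wh M\^{s_i}$ is a $\co(\MM)$-valued estimator, not an arbitrary model.
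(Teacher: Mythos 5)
Your proof is correct and follows essentially the same route as the paper's: condition on $\SA_i$ to get \eqref{eq:mhatsi-mstar-close}, i.e.\ $\Mstar \in \MH_{p\^{s_i},\vep_i}(\wh M\^{s_i})$, invoke the minimizing property of $p\^{s_i}$ from \Cref{line:regret_algo_dec} to bound the risk by $\deccreg[\vep_i](\MM\cup\{\wh M\^{s_i}\},\wh M\^{s_i})$, pass to the supremum over $\Mbar\in\co(\MM)$ using $\wh M\^{s_i}\in\co(\MM_i)\subseteq\co(\MM)$, and finish with the definition of $\alpha_i$ and $C_0\geq 4$. The two subtleties you flag (using \eqref{eq:mhatsi-mstar-close} rather than \eqref{eq:mhati-mstar-close}, and the legitimacy of the $\co(\MM)$ supremum) are exactly the right ones.
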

\begin{proof}[\pfref{lem:refined-acc}]
Conditioning on the event $\SA_i$ gives that \cref{eq:mhatsi-mstar-close} holds, which can in particular be written as $M^\st \in \MH_{\vep_i, p\^{s_i}}(\wh M\^{s_i})$.  
 Therefore, by the choice of $\wh p_i = p\^{s_i}$ in \cref{line:set-si-perm} and the definition in
 \Cref{line:regret_algo_dec}, under $\SA_i$,
  \begin{align}
    \E_{\pi \sim p\^{s_i}} \left[ \fmstar(\pimstar) - \fmstar(\pi) \right] & \leq \sup_{M \in \MH_{\vep_i, p\^{s_i}}(\wh M\^{s_i}) \cup \{ \wh M\^{s_i} \}} \E_{\pi \sim p\^{s_i}} \left[ \fm(\pim) - \fm(\pi) \right] \nonumber\\
    =& \deccreg[\vep_i](\MM \cup \{ \wh M\^{s_i} \}, \wh M\^{s_i}) \leq \sup_{\Mbar \in \co(\MM)} \deccreg[\vep_i](\MM \cup \{ \Mbar \}, \Mbar) \leq \frac{\alpha_i}{4}\nonumber,
  \end{align}
  where the final inequality follows as long as $C_0 \geq 4$.
  \end{proof}

 \Cref{lem:compare-arb-distr} relates the suboptimality under $M^\st$ for any distribution $p \in \Delta(\Pi)$ to that of any model $M \in \co(\MM_{i+1})$, in terms of the distance between $M$ and $M^\st$. We ultimately apply the lemma with $M=\wh M\^t$ for each $t \in \ME_{i+1}$ to derive the following lemma, \Cref{lem:epoch-optimal-pol}.
  \begin{lemma}[Comparison with models in refined class]
    \label{lem:compare-arb-distr}
Fix $i \in [N]$. Then  for all $M \in \co(\MM_{i+1})$ and all $p \in \Delta(\Pi)$, under the event $\SA_i$,
    \begin{align}
\E_{\pi \sim p} \left[ \fmstar(\pimstar) - \fmstar(\pi) \right] \leq \E_{\pi \sim p} \left[ \fm(\pim) - \fm(\pi) \right] + \frac{\alpha_i}{2} + \sqrt{\E_{\pi \sim p} \left[ \hell{M(\pi)}{M^\st(\pi)}\right]}.\label{eq:mstar-com-bound}
    \end{align}
  \end{lemma}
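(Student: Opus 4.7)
The plan is to decompose the left-hand side using the elementary identity
\begin{align*}
\fmstar(\pimstar) - \fmstar(\pi) = \bigl(\fmstar(\pimstar) - \fm(\pim)\bigr) + \bigl(\fm(\pim) - \fm(\pi)\bigr) + \bigl(\fm(\pi) - \fmstar(\pi)\bigr),
\end{align*}
take expectations under $\pi \sim p$, and handle the three terms separately. The middle term matches the first quantity on the right-hand side of \pref{eq:mstar-com-bound} exactly. For the third term, I will invoke \pref{lem:hellinger_to_value} to get the pointwise bound $|\fm(\pi) - \fmstar(\pi)| \leq \Dhel{M(\pi)}{\Mstar(\pi)}$, and then Jensen's inequality on the concave map $x \mapsto \sqrt{x}$ yields $\E_{\pi \sim p}[\fm(\pi) - \fmstar(\pi)] \leq \sqrt{\E_{\pi \sim p}[\Dhels{M(\pi)}{\Mstar(\pi)}]}$. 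The bulk of the work is therefore to bound the first term, $\fmstar(\pimstar) - \fm(\pim)$, by $\alpha_i/2$.

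To bound $\fmstar(\pimstar) - \fm(\pim)$, I will use the good behavior of the refined distribution $\wh p_i$ on the previous epoch. Write $M = \E_{M' \sim \mu}[M']$ for some $\mu \in \Delta(\MM_{i+1})$, and recall that by the definition of $\MM_{i+1}$, every $M' \in \MM_{i+1}$ satisfies $\E_{\pi \sim \wh p_i}[\Dhels{M'(\pi)}{\wh M_i(\pi)}] \leq \Est(J_i, \delta)/J_i$. Combining this with \pref{eq:mhati-mstar-close} (a consequence of $\SA_i$) via the triangle inequality for squared Hellinger distance gives
\begin{align*}
\E_{\pi \sim \wh p_i}\bigl[\Dhels{M'(\pi)}{\Mstar(\pi)}\bigr] \leq 4 \cdot \frac{\Est(J_i, \delta)}{J_i} \leq \frac{\vep_i^2}{8},
\end{align*}
where the last step invokes \pref{eq:bound-ei-ji}.

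With this bound in hand, I will apply \pref{lem:refined-acc} together with \pref{lem:hellinger_to_value} and Jensen's inequality to each $M' \in \MM_{i+1}$ to obtain $\E_{\pi \sim \wh p_i}[f^{M'}(\pi)] \geq \fmstar(\pimstar) - \alpha_i/4 - \vep_i/\sqrt{8}$. Averaging over $M' \sim \mu$, using linearity and the concavity of $\sqrt{\cdot}$ (so that the square-root bound survives the averaging), and then using the trivial bound $\fm(\pim) \geq \E_{\pi \sim \wh p_i}[\fm(\pi)]$ yields $\fmstar(\pimstar) - \fm(\pim) \leq \alpha_i/4 + \vep_i/\sqrt{8}$. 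Since $\alpha_i \geq 64 \vep_i$ by definition, the second term is at most $\alpha_i/4$, giving the desired bound of $\alpha_i/2$.

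The main subtlety will be handling the convex combination correctly: squared Hellinger distance is jointly convex, so $\Dhels{M(\pi)}{\Mstar(\pi)} \leq \sum_j \lambda_j \Dhels{M_j(\pi)}{\Mstar(\pi)}$ runs the ``wrong way'' to transfer bounds from $\co(\MM_{i+1})$ directly. The fix, as above, is to work with each $M' \in \MM_{i+1}$ individually, bound each pointwise difference $f^{M'}(\pi) - \fmstar(\pi)$ by its Hellinger distance, and only then average over $\mu$ using linearity of the value function under mixtures and Jensen's inequality on $\sqrt{\cdot}$; this keeps the estimate quantitatively tight and avoids any appeal to joint convexity in the wrong direction.
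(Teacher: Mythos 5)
Your proof is correct and follows essentially the same route as the paper's: the telescoping decomposition of $\fmstar(\pimstar)-\fmstar(\pi)$, the pointwise bound $|\fm(\pi)-\fmstar(\pi)|\leq \Dhel{M(\pi)}{\Mstar(\pi)}$ plus Jensen for the third term, and a bound on $\fmstar(\pimstar)-\fm(\pim)\leq\alpha_i/2$ obtained by working with $\wh p_i$, the membership condition defining $\MM_{i+1}$, the triangle inequality for squared Hellinger distance, \pref{lem:refined-acc}, and \pref{eq:bound-ei-ji}. One small correction to your closing remark: joint convexity of squared Hellinger actually runs the \emph{favorable} direction here---for $M=\E_{M'\sim\mu}[M']$ one has $\E_{\pi\sim\wh p_i}\brk*{\Dhels{M(\pi)}{\Mstar(\pi)}}\leq\E_{M'\sim\mu}\E_{\pi\sim\wh p_i}\brk*{\Dhels{M'(\pi)}{\Mstar(\pi)}}$---so you could equally well bound the mixture's Hellinger distance to $\Mstar$ directly; working model-by-model as you (and the paper) do is fine, but is not forced by any obstruction from convexity.
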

  \begin{proof}[\pfref{lem:compare-arb-distr}]
    We first upper bound the optimal value under $M^\st$ by the optimal value under any $M \in \co(\MM_{i+1})$. To do so, first note that, by \cref{lem:refined-acc} (in particular, the fact that \eqref{eq:refined-acc} holds at epoch $i$), we have that $\E_{\pi \sim \wh p_i} \left[ \fmstar(\pimstar) - \fmstar(\pi) \right] \leq \frac{\alpha_i}{4}$ under $\SA_i$.  Then, for any $M \in \MM_{i+1}$, we have that, under the event $\SA_i$,
    \begin{align}
                                           \E_{\pi \sim \wh p_i} \left[ \fmstar(\pimstar) - \fm(\pi) \right]
      & \leq \E_{\pi \sim \wh p_i} \left[ \fmstar(\pimstar) - \fmstar(\pi) \right] + \sqrt{ \E_{\pi \sim \wh p_i} \left[ \hell{M(\pi)}{M^\st(\pi)}\right]} \nonumber\\
      & \leq \frac{\alpha_i}{4} + \sqrt{ 2 \E_{\pi \sim \wh p_i} \left[ \hell{M(\pi)}{\wh M_i(\pi)}\right] + 2 \E_{\pi \sim \wh p_i} \left[ \hell{\wh M_i(\pi)}{M^\st(\pi)}\right]}\nonumber\\
      & \leq \frac{\alpha_i}{4} + 2 \sqrt{\frac{\Est(J_i,\delta)}{J_i}} \leq \frac{\alpha_i}{2}\label{eq:bound-mstar-ai2},
    \end{align}
    where the second-to-last inequality holds since $M \in \MM_{i+1}$ and by assumption of the event $\SA_{i}$ (in particular, using \cref{eq:mhati-mstar-close}), and the final inequality holds since $2 \sqrt{\frac{\Est(J_i, \delta)}{J_i}} \leq \frac{\alpha_i}{4}$ by our choice of $\alpha_i \geq 64 \vep_i$ and \eqref{eq:bound-ei-ji}.   
    
    Now fix any $M \in \co(\MM_{i+1})$, and note that we can write $M = \E_{M' \sim \nu_M}[M']$ for some $\nu_M \in \Delta(\MM_{i+1})$. Then for all $\pi \in \Pi$, $\fm(\pi) = \E_{M' \sim \nu_M}[\fmp(\pi)]$, and it follows from \eqref{eq:bound-mstar-ai2} that (again under $\SA_i$)
    \begin{align}
      \fmstar(\pimstar) - \fm(\pim) \leq \E_{\pi \sim \wh p_i} \left[ \fmstar(\pimstar) - \fm(\pi) \right] = \E_{M' \sim \nu_M} \E_{\pi \sim \wh p_i} \left[ \fmstar(\pimstar) - \fmp(\pi) \right] \leq \frac{\alpha_i}{2}\nonumber.
    \end{align}

    Given any $M \in \co(\MM_{i+1})$, we have now that under $\SA_i$,
    \begin{align}
      \E_{\pi \sim p} \left[ \fmstar(\pimstar) - \fmstar(\pi) \right] & \leq \frac{\alpha_i}{2} + \E_{\pi \sim p} \left[ \fm(\pim) - \fm(\pi)\right] + \E_{\pi \sim p} \left[ | \fm(\pi) - \fmstar(\pi)| \right]\nonumber\\
      & \leq\frac{\alpha_i}{2}+ \E_{\pi \sim p} \left[ \fm(\pim) - \fm(\pi) \right] + \sqrt{\E_{\pi \sim p} \left[ \hell{M(\pi)}{M^\st(\pi)}\right]}\nonumber,
    \end{align}
    as desired.
  \end{proof}

  Our final technical lemma, \Cref{lem:epoch-optimal-pol}, bounds the sub-optimality for all policies played in each epoch $\ME_i$. The need to establish a result of this type is a crucial difference between the regret and PAC frameworks, and motivates many of the algorithm design choices behind \pref{alg:regret}.
\begin{lemma}[``Backup'' regret guarantee]
  \label{lem:epoch-optimal-pol}
  Fix any $i \in [N]$. 
  Then for all $t \in \ME_i$, we have that under the event $\SA_{i-1}$, 
  \begin{align}
\E_{\pi \sim p\^t} \left[ \fmstar(\pimstar) - \fmstar(\pi) \right] & \leq \deccreg[\vep_i](\MM \cup \{ \wh M\^t \} ,\wh M\^t) + \frac{\alpha_{i-1}}{2} + \sqrt{\E_{\pi \sim p\^t} \left[\hell{\wh M\^t(\pi)}{M^\st(\pi)}\right]}\nonumber.
  \end{align}
\end{lemma}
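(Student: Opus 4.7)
The plan is to apply \Cref{lem:compare-arb-distr}, invoked at the previous epoch $i-1$, with the test model $M = \wh M\^t$ and the distribution $p = p\^t$. Because \pref{alg:regret} initializes the estimation oracle in epoch $i$ with the constraint class $\MM_i$, \pref{ass:hellinger_oracle_constrained} guarantees $\wh M\^t \in \co(\MM_i)$; since $\MM_i = \MM_{(i-1)+1}$, the hypotheses of \Cref{lem:compare-arb-distr} are satisfied under the event $\SA_{i-1}$. Applying the lemma directly yields
\begin{align*}
\En_{\pi \sim p\^t}\brk*{\fmstar(\pimstar) - \fmstar(\pi)}
\leq \En_{\pi \sim p\^t}\brk*{f\sups{\wh M\^t}(\pi\subs{\wh M\^t}) - f\sups{\wh M\^t}(\pi)}
+ \frac{\alpha_{i-1}}{2}
+ \sqrt{\En_{\pi \sim p\^t}\brk*{\Dhels{\wh M\^t(\pi)}{M^\st(\pi)}}}.
\end{align*}
It then suffices to upper bound the first term on the right by $\deccreg[\vep_i](\MM \cup \crl{\wh M\^t}, \wh M\^t)$.

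This follows from the definition of $p\^t$ on \Cref{line:regret_algo_dec}. Since $\wh M\^t$ belongs to the explicit singleton $\crl{\wh M\^t}$ that is unioned with the Hellinger ball inside the sup, it is a feasible maximizer, so by the min-max optimality of $p\^t$,
\begin{align*}
\En_{\pi \sim p\^t}\brk*{f\sups{\wh M\^t}(\pi\subs{\wh M\^t}) - f\sups{\wh M\^t}(\pi)}
&\leq \sup_{M \in \MH_{p\^t,\vep_i}(\wh M\^t) \cup \crl{\wh M\^t}} \En_{\pi \sim p\^t}\brk*{\fm(\pim) - \fm(\pi)} \\
&= \deccreg[\vep_i](\MM_i \cup \crl{\wh M\^t}, \wh M\^t),
\end{align*}
where the equality uses that the Hellinger ball $\MH_{p\^t,\vep_i}(\wh M\^t)$ is taken inside the class $\MM_i$ against which the oracle operates. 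Since $\MM_i \subseteq \MM$, monotonicity of the constrained DEC in its model class (the sup only grows as we enlarge $\MM_i$ to $\MM$, hence so does the infimum over $p$ after we plug in the specific $p\^t$) gives $\deccreg[\vep_i](\MM_i \cup \crl{\wh M\^t}, \wh M\^t) \leq \deccreg[\vep_i](\MM \cup \crl{\wh M\^t}, \wh M\^t)$, and combining with the display above completes the argument.

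There is no serious technical obstacle here: the lemma is essentially a direct instantiation of \Cref{lem:compare-arb-distr} coupled with the definition of $p\^t$. The one conceptually important point to highlight is that the ``$\cup\crl{\wh M\^t}$'' appearing in the sup on \Cref{line:regret_algo_dec} is essential to the argument: without it, if the estimator $\wh M\^t$ happened to fall outside $\MH_{p\^t,\vep_i}(\wh M\^t)$, there would be no way to control $\En_{p\^t}\brk*{g\sups{\wh M\^t}}$ via the value of the min-max objective, and \Cref{lem:compare-arb-distr} could not be converted into a DEC bound. This is precisely why the lemma is stated with $\deccreg[\vep_i](\MM \cup \crl{\wh M\^t}, \wh M\^t)$ rather than $\deccreg[\vep_i](\MM, \wh M\^t)$, and it ties back to the structural distinction (\pref{sec:reference_regret}) that for regret the ``self-augmented'' class $\MM \cup \crl{\Mbar}$ is the right object to work with.
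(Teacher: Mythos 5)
Your proof is correct and follows essentially the same two-step structure as the paper's: (i) apply \Cref{lem:compare-arb-distr} at epoch $i-1$ with $p=p\^t$, $M=\wh M\^t$ (valid because $\wh M\^t \in \co(\MM_i)$), and (ii) control the leading term $\En_{\pi\sim p\^t}[g\sups{\wh M\^t}(\pi)]$ via the minimax definition of $p\^t$ in \cref{line:regret_algo_dec}. Your closing observation about the necessity of the explicit ``$\cup\crl{\wh M\^t}$'' is also correct and well taken: since $\wh M\^t \in \co(\MM_i)$ need not lie in $\cM$, it is not automatically a member of $\MH_{p,\vep_i}(\wh M\^t)$, even though $\Dhels{\wh M\^t(\pi)}{\wh M\^t(\pi)}=0$.

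One small imprecision worth fixing: your intermediate identity
$\sup_{M \in \MH_{p\^t,\vep_i}(\wh M\^t) \cup \crl{\wh M\^t}} \E_{\pi\sim p\^t}[\gm(\pi)] = \deccreg[\vep_i](\MM_i \cup \crl{\wh M\^t},\wh M\^t)$
is off. By \cref{eq:hellinger_ball}, the Hellinger ball $\MH_{p,\vep}(\Mbar)$ is defined within the full class $\cM$, not within the constraint set $\cM_i$ (the latter is used only to restrict the estimation oracle's output). Since $p\^t$ is, by definition, the argmin in \cref{line:regret_algo_dec}, the displayed sup at $p=p\^t$ already equals $\deccreg[\vep_i](\cM \cup \crl{\wh M\^t},\wh M\^t)$ directly, and your subsequent monotonicity step ($\cM_i \subseteq \cM$) is unnecessary. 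The monotonicity inequality you invoke is itself valid, so your final conclusion stands, but the cleaner route skips the detour through $\cM_i$.
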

\begin{proof}[\pfref{lem:epoch-optimal-pol}]
  Fix any $t \in \ME_i$. %
  The choice of $p\^t$ in \cref{line:regret_algo_dec} ensures that %
  \begin{align}
    \E_{\pi \sim p\^t} [f\sups{\wh M\^t}(\pi\subs{\wh M\^t}) - f\sups{\wh M\^t}(\pi)] \leq \deccreg[\vep_i](\MM \cup \{ \wh M\^t \}, \wh M\^t) \label{eq:h-cap-m-ub}. 
  \end{align}

  Next, under the event $\SA_{i-1}$, we have
  \begin{align}
    & \E_{\pi \sim p\^t} \left[ \fmstar(\pimstar) - \fmstar(\pi) \right] \nonumber\\
    & \leq \E_{\pi \sim p\^t} \left[ f\sups{\wh M\^t}(\pi\subs{\Mhat\ind{t}}) - f\sups{\wh M\^t}(\pi) \right] + \frac{\alpha_{i-1}}{2} + \sqrt{\E_{\pi \sim p\^t} \left[\hell{\wh M\^t(\pi)}{M^\st(\pi)}\right]} \nonumber\\
    & \leq \deccreg[\vep_i](\MM \cup \{ \wh M\^t\} ,\wh M\^t) + \frac{\alpha_{i-1}}{2} + \sqrt{\E_{\pi \sim p\^t} \left[\hell{\wh M\^t(\pi)}{M^\st(\pi)}\right]}\nonumber,
  \end{align}
  where the first inequality uses \cref{lem:compare-arb-distr} at epoch $i-1$ with $p = p\^t$ and $M = \wh M\^t$, together with the fact that $\wh M\^t \in \co(\MM_i)$ by construction, and the second inequality uses \eqref{eq:h-cap-m-ub}. %
\end{proof}

\subsubsection{Proof of \creftitle{thm:regret_upper}}

\begin{proof}[\pfref{thm:regret_upper}]
  Let us condition on the event $\SA \cap \SB \cap \SC \cap \SD$, which, by \cref{lem:good-event-lb}, holds with probability $1-3L\delta = 1 - 3 \lceil \log 1/\delta \rceil \cdot \delta $. 
  Fix $i\in\brk{N}$. We analyze the regret in each epoch $i$ as follows.
  \begin{itemize}
  \item We first analyze the rounds in $t\in\ME_i$. By \cref{lem:epoch-optimal-pol}, under the event $\SA_{i-1}$, we have
    \begin{align}
      & \sum_{t \in \ME_i} \E_{\pi \sim p\^t} \left[ \fmstar(\pimstar) - \fmstar(\pi) \right]\nonumber\\
      & \leq |\ME_i| \cdot \left(\sup_{\Mbar \in \co(\MM)} \deccreg[\vep_i](\MM \cup \{ \Mbar \}, \Mbar) + \alpha_{i-1} \right) + \sum_{t \in \ME_i} \sqrt{\E_{\pi \sim p\^t} \brk*{\hell{\wh M\^t(\pi)}{M^\st(\pi)}}}\nonumber\\
      & \leq |\ME_i| \cdot \left( \alpha_i + \alpha_{i-1} \right) + \sqrt{|\ME_i| \cdot \sum_{t \in \ME_i} \E_{\pi \sim p\^t} \brk*{ \hell{\wh M\^t(\pi)}{M^\st(\pi)}}} \nonumber\\
      & \leq 2 \cdot |\ME_i| \cdot \alpha_{i-1} + \sqrt{|\ME_i| \cdot \Est(|\ME_i|, \delta)}\label{eq:regret-ei},
    \end{align}
    where the second-to-last inequality follows by our choice of $\alpha_i$ and the final inequality follows from the fact that $\SB_i$ holds and $\alpha_i \leq \alpha_{i-1}$. 
  \item We next analyze the rounds in $\MR_i$. %
    We first analyze those rounds in which a decision $\pi_s\^j \sim p\^s$ was sampled on \cref{line:sample-pisj-ps}. To do so, fix any $s \in \MS_i$. We first note that, by definition of $J_{i,s}$,
    \begin{align}
      \sum_{j=1}^{J_{i,s}} \sqrt{ \E_{\pi \sim p\^s} \left[ \hell{\til M_s\^j(\pi)}{\wh M\^s(\pi)} \right]} \leq & \sqrt{J_{i,s} \cdot \sum_{j=1}^{J_{i,s}} \E_{\pi \sim p\^s} \left[ \hell{\til M_s\^j(\pi)}{\wh M\^s(\pi)} \right]}\nonumber\\
      \leq & \sqrt{J_{i,s} \cdot \left( \frac{J_i \vep_i^2}{4} + 2 \right)} \leq \sqrt{2J_{i}} + J_i \vep_i/2\nonumber.
    \end{align}
    Furthermore, since the event $\SA_i$ holds (in particular, using \cref{eq:mtil-mstar-close-ai}), we have
    \begin{align}
      \sum_{j=1}^{J_{i,s}} \sqrt{\E_{\pi \sim p\^s} \left[ \hell{\til M_s\^j(\pi)}{\Mstar(\pi)} \right]} \leq & \sqrt{J_{i,s} \cdot \sum_{j=1}^{J_{i,s}} \E_{\pi \sim p\^s} \left[ \hell{\til M_s\^j(\pi)}{\Mstar(\pi)} \right]} \nonumber\\
      \leq & \sqrt{J_{i,s} \cdot \Est(J_i, \delta)} \leq \sqrt{J_i^2 \vep_i^2/32} \leq J_i \vep_i\nonumber,
    \end{align}
    where the second-to-last inequality uses \cref{eq:bound-ei-ji}. 
Using the above displays, we have 
    \begin{align}
      & \sum_{j=1}^{J_{i,s}} \E_{\pi_s\^j \sim p\^s} \left[ \fmstar(\pimstar) - \fmstar(\pi_s\^j)\right]\nonumber\\
      \leq & J_{i,s} \cdot \left( \deccreg[\vep_i](\MM) + \alpha_{i-1} + \sqrt{\E_{\pi \sim p\^s} \left[ \hell{\wh M\^s(\pi)}{\Mstar(\pi)} \right] }\right)\nonumber\\
      \leq & 2J_{i,s} \cdot \alpha_{i-1} + \sum_{j=1}^{J_{i,s}} \sqrt{ 2\E_{\pi \sim p\^s} \left[ \hell{\wh M\^s(\pi)}{\til M_s\^j(\pi)} \right]} + \sqrt{2\E_{\pi \sim p\^s} \left[ \hell{\Mstar(\pi)}{\til M_s\^j(\pi)}\right]}\nonumber\\
      \leq & 2J_i \cdot \alpha_{i-1} + \frac{3J_i\vep_i}{2} + \sqrt{2J_i}\nonumber.
    \end{align}

    Next we analyze the rounds $t \in \MR_i$ where $\pi\^t \sim p\^{s_i} = \wh p_i$ on \cref{line:play-psi}. Since $\SA_i$ holds, we have from \cref{lem:refined-acc} that $\E_{\pi\^t \sim \wh p_i} [\fmstar(\pimstar) - \fmstar(\pi)] \leq \alpha_i/4$, meaning that the total contribution to the regret from such rounds $t \in \MR_i$ is at most $|\MR_i| \cdot \alpha_i/4$. Thus, the overall contribution to regret from rounds in $\MR_i$ is bounded above as follows:
    \begin{align}
      \sum_{t \in \MR_i} \E_{\pi \sim p\^t} [\fmstar(\pimstar) - \fmstar(\pi)] \leq & \frac{|\MR_i| \alpha_i}{4} + L \cdot \left( 2J_i \alpha_i + \frac{3J_i\vep_i}{2} + \sqrt{2J_i}\right)\nonumber\\
      \leq & 4 |\MR_i| \alpha_i +  \sqrt{2L|\MR_i|}\nonumber,
    \end{align}
    where in the second inequality we have used that $|\MR_i| = J_i \cdot L$ and $3\vep_i/2 \leq \alpha_i$. 
  \end{itemize}
  Summarizing, under the event $\SA \cap \SB \cap \SC \cap \SD$, the total regret is bounded above by
  \begin{align}
    \sum_{t=1}^T \E_{\pi \sim p\^t} \left[ \fmstar(\pimstar) - \fmstar(\pi) \right] \leq & \sum_{i=1}^N \left( 4 \alpha_{i-1} \cdot (|\MR_i| + |\ME_i|) + \sqrt{|\ME_i| \cdot \Est(|\ME_i|, \delta)} + \sqrt{2L |\MR_i|} \right)\label{eq:reg-bound-ugly}.
  \end{align}
  
  We now simplify the expression in \eqref{eq:reg-bound-ugly}. Recall that \cref{asm:dec} gives that for all $\vep > 0$,
  \begin{align}
 \deccreg[\vep](\MM) \leq C_{\reg}^2 \cdot  \deccreg[\vep/C_\reg](\MM)\nonumber.
  \end{align}
  Applying this inequality a total of $\left\lceil \frac{\log(\vep_i / \vep_N)}{\log(C_{\reg})} \right\rceil$ times for each $i \in [N]$ gives that
  \begin{align}
 \deccreg[\vep_i](\MM) & \leq C_{\reg}^2 \cdot \left(\frac{\vep_i}{\vep_N}\right)^2 \cdot  \deccreg[\vep_N](\MM)\nonumber\\
    &= C_{\reg}^2 \cdot 2^{N-i} \cdot \deccreg[\vep_N](\MM)\nonumber.
  \end{align}
  
  Then by the choice $\alpha_i = C_0 \cdot \deccreg[\vep_i](\MM) + 64 \vep_i$ for each $i \in [N]$, we have
  \begin{align}
    \sum_{i=1}^N \alpha_{i-1} \cdot 2^i & \leq 64 \sum_{i=1}^N \vep_{i-1} \cdot 2^i + C_0 \sum_{i=1}^N 2^i \cdot  \deccreg[\vep_{i-1}](\MM)\nonumber\\
    & \leq 64 \sum_{i=1}^N \vep_N \cdot \sqrt{2^{N+1+i}} + O\left( \sum_{i=1}^N 2^N \cdot  \deccreg[\vep_N](\MM)\right)\nonumber\\
    & \leq 128 \cdot \sqrt{C_1 \cdot \Est(T, \delta) \cdot L} \cdot  \sum_{i=1}^N \sqrt{2^{i+1}} + O \left(NT \cdot  \deccreg[\vep_N](\MM) \right)\nonumber.
  \end{align}
    Therefore, we may upper bound the expression in \eqref{eq:reg-bound-ugly} as follows (using that $\Est(|\ME_i|, \delta) \leq \Est(T, \delta)$ for each $i$): %
  \begin{align}
    & \sum_{i=1}^N \left( 4 \alpha_{i-1} \cdot (|\MR_i| + |\ME_i|) + \sqrt{|\ME_i| \cdot \Est(|\ME_i|, \delta)} + \sqrt{2L |\MR_i|} \right)\nonumber\\
    \leq & 4 \sum_{i=1}^N \alpha_{i-1} \cdot 2^i + \sqrt{\Est(T, \delta)} \sum_{i=1}^N \sqrt{2^i} + \sqrt{2L} \sum_{i=1}^N \sqrt{2^i}\nonumber\\
    \leq & O \left(\sqrt{\Est(T, \delta) \cdot L} \sum_{i=1}^N \sqrt{2^i} + \sqrt{L} \sum_{i=1}^N \sqrt{2^i} + NT \cdot \deccreg[\vep_N](\MM) \right)\nonumber\\
    \leq & O \left( \sqrt{T \log(1/\delta) \cdot \Est(T, \delta)} + T \log(T) \cdot \deccreg[\vep_N](\MM) \right)\nonumber,
  \end{align}
  where we have used that $L = O(\log 1/\delta)$ in the final inequality.
    The proof is completed by rescaling from $\delta$ to $\delta^2$ and noting that, by construction, we have $\vep_N \leq  C\cdot \sqrt{\frac{\Est(T, 1/\delta) \cdot \log1/\delta}{T}}$ for a universal constant $C>0$. 
\end{proof}

\section{Proofs and Additional Results from \creftitle{sec:properties}}
\label{app:properties}

\subsection{Technical Lemmas}

\begin{lemma}
  \label{lem:hellinger_to_value}
  Let $M$ and $\Mbar$ have $\cR\subseteq\brk*{0,1}$. Then for all
  $\veps\geq{}0$ and $p\in\Delta(\Pi)$, if $M\in\cH_{p,\veps}(\Mbar)$, then
  \begin{align}
    \Enp\brk*{\abs{\fm(\pi)-\fmbar(\pi)}}\leq\veps.
  \end{align}
\end{lemma}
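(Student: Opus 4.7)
The plan is to reduce the claim to two standard inequalities: first a pointwise bound $|f^M(\pi) - f^{\Mbar}(\pi)| \leq \Dhel{M(\pi)}{\Mbar(\pi)}$, and second Jensen's (or Cauchy--Schwarz) applied under $\pi \sim p$ to pass from Hellinger to squared Hellinger, at which point the hypothesis $M \in \cH_{p,\veps}(\Mbar)$ finishes the job.

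For the pointwise bound, I would note that since rewards lie in $[0,1]$, the reward marginals $M_r(\pi), \Mbar_r(\pi) \in \Delta([0,1])$ satisfy
\begin{align*}
|f^M(\pi) - f^{\Mbar}(\pi)| = \left| \int_0^1 r\, d(M_r(\pi) - \Mbar_r(\pi)) \right| \leq \Dtv{M_r(\pi)}{\Mbar_r(\pi)} \leq \Dtv{M(\pi)}{\Mbar(\pi)},
\end{align*}
where the first inequality uses that $r \in [0,1]$ together with the variational representation of total variation, and the second uses the data-processing inequality (the reward is a measurable function of the $(r,o)$ pair). Then by the standard comparison $\Dtv{\bbP}{\bbQ} \leq \Dhel{\bbP}{\bbQ}$ (already recalled in the text), we conclude $|f^M(\pi) - f^{\Mbar}(\pi)| \leq \Dhel{M(\pi)}{\Mbar(\pi)}$.

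For the second step, I would integrate over $\pi \sim p$ and apply Cauchy--Schwarz (equivalently, Jensen for the concave function $\sqrt{\cdot}$):
\begin{align*}
\En_{\pi \sim p}\brk*{|f^M(\pi) - f^{\Mbar}(\pi)|} \leq \En_{\pi \sim p}\brk*{\Dhel{M(\pi)}{\Mbar(\pi)}} \leq \sqrt{\En_{\pi \sim p}\brk*{\Dhels{M(\pi)}{\Mbar(\pi)}}} \leq \veps,
\end{align*}
where the final inequality is exactly the assumption $M \in \cH_{p,\veps}(\Mbar)$.

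No step here is a real obstacle: the only thing to be a little careful about is the data-processing reduction from $\Dtv{M(\pi)}{\Mbar(\pi)}$ on the full reward-observation space to the reward marginal, which is routine. The whole argument is a few lines and needs no appeal to minimax or to any earlier structural result from the paper.
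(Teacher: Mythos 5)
Your proof is correct and follows essentially the same chain of inequalities as the paper's (one-line) proof: boundedness of rewards gives the pointwise bound $\abs{\fm(\pi)-\fmbar(\pi)}\leq\Dtv{M(\pi)}{\Mbar(\pi)}$, then $\Dtv{}{}\leq\Dhel{}{}$, then Jensen under $\pi\sim p$, then the hypothesis $M\in\cH_{p,\veps}(\Mbar)$. You simply spell out the data-processing step to the reward marginal explicitly, which the paper leaves implicit.
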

\begin{proof}[\pfref{lem:hellinger_to_value}]
  Since rewards are in $\brk*{0,1}$, we have
  \begin{align*}
    \Enp\brk*{\abs{\fm(\pi)-\fmbar(\pi)}}
    \leq{}\Enp\brk*{\Dtv{M(\pi)}{\Mbar(\pi)}}
    \leq{}    \sqrt{\Enp\brk*{\Dhels{M(\pi)}{\Mbar(\pi)}}}\leq\veps.
  \end{align*}
\end{proof}

\begin{lemma}
  \label{lem:localization_one_sided}
  Fix a model class $\cM$. Let $\Mbar\in\cMall$ and $\veps>0$ be given, and set
  \[
    \cM'=\crl{M\in\cM\mid{}\fmbar(\pimbar) \leq \fm(\pimbar) + \veps}.
  \]
  Then we have
  \begin{align*}
    \deccreg[\veps/\sqrt{2}](\cM,\Mbar)
    \leq \deccreg(\cM',\Mbar) + \veps.
  \end{align*}
\end{lemma}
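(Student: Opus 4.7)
\textbf{Proof plan for \Cref{lem:localization_one_sided}.}
The plan is to exhibit a single strategy that simultaneously witnesses the $\veps/\sqrt{2}$-DEC on $\cM$ by reusing the optimal strategy for the $\veps$-DEC on $\cM'$, augmented with a point mass at $\pimbar$ to force the Hellinger constraint to ``detect'' the defining condition for $\cM'$. Concretely, let $q^{\star}\in\Delta(\Pi)$ be a minimizer (or $\eta$-minimizer, with $\eta\to 0$) for $\deccreg(\cM',\Mbar)$, and consider the mixture $p \ldef \tfrac12 q^{\star}+\tfrac12\delta_{\pimbar}$. The whole argument then reduces to bounding $\En_{\pi\sim p}[\gm(\pi)]$ for every $M\in\cH_{p,\veps/\sqrt{2}}(\Mbar)$.

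First, the choice of mixing weight $1/2$ is tuned precisely so that the constraint $\En_{\pi\sim p}\brk*{\Dhels{M(\pi)}{\Mbar(\pi)}}\leq \veps^{2}/2$ splits evenly into $\En_{\pi\sim q^{\star}}\brk*{\Dhels{M(\pi)}{\Mbar(\pi)}}\leq \veps^{2}$ and $\Dhels{M(\pimbar)}{\Mbar(\pimbar)}\leq\veps^{2}$. The second inequality, combined with \Cref{lem:hellinger_to_value} (which uses $\cR\subseteq[0,1]$), yields $|\fm(\pimbar)-\fmbar(\pimbar)|\leq\veps$, and in particular $\fmbar(\pimbar)\leq\fm(\pimbar)+\veps$, so $M\in\cM'$. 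Thus every feasible $M$ on the LHS lies in $\cM'$ and is feasible for $q^{\star}$ at radius $\veps$, letting us invoke the DEC of $\cM'$ to conclude $\En_{\pi\sim q^{\star}}[\gm(\pi)]\leq\deccreg(\cM',\Mbar)$.

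The main obstacle is controlling the term $\tfrac12\gm(\pimbar)=\tfrac12(\fm(\pim)-\fm(\pimbar))$: it is not a-priori small, since $\pimbar$ need not be optimal for $M$, and the naive bound $\gm(\pimbar)\leq 1$ spoils the claim. The plan is to upper bound $\fm(\pim)$ using the DEC bound in two steps. Using \Cref{lem:hellinger_to_value} and Jensen, the constraint $\En_{q^{\star}}[\Dhels{M(\pi)}{\Mbar(\pi)}]\leq\veps^{2}$ gives $\En_{q^{\star}}[\fm(\pi)]\leq \En_{q^{\star}}[\fmbar(\pi)]+\veps\leq \fmbar(\pimbar)+\veps$, where the last step uses optimality of $\pimbar$ for $\fmbar$. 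Combining with the DEC inequality $\fm(\pim)\leq \En_{q^{\star}}[\fm(\pi)]+\deccreg(\cM',\Mbar)$ and the pointwise bound $\fm(\pimbar)\geq\fmbar(\pimbar)-\veps$ from step one yields $\gm(\pimbar)\leq\deccreg(\cM',\Mbar)+2\veps$.

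Putting the two pieces together gives
\begin{align*}
\En_{\pi\sim p}[\gm(\pi)]
\;\leq\; \tfrac12\deccreg(\cM',\Mbar)+\tfrac12\bigl(\deccreg(\cM',\Mbar)+2\veps\bigr)
\;=\; \deccreg(\cM',\Mbar)+\veps,
\end{align*}
which, after taking the supremum over $M\in\cH_{p,\veps/\sqrt{2}}(\Mbar)$ and using that $p$ is a feasible strategy in the definition of $\deccreg[\veps/\sqrt{2}](\cM,\Mbar)$, completes the argument. The only edge case to record is when $\cM'=\emptyset$ (or the relevant Hellinger ball intersected with $\cM'$ is empty), so that $\deccreg(\cM',\Mbar)=0$ by convention and $q^{\star}$ is not defined; in that case we simply take $p=\delta_{\pimbar}$, and the same Hellinger-to-value argument shows $\cH_{p,\veps/\sqrt{2}}(\Mbar)\cap\cM=\emptyset$, so the LHS equals $0\leq\veps$.
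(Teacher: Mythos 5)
Your proof is correct and follows essentially the same approach as the paper: both use the mixture $p = \tfrac12 q^{\star} + \tfrac12 \indic_{\pimbar}$, infer membership in $\cM'$ from the Hellinger constraint at $\pimbar$ via \Cref{lem:hellinger_to_value}, and then bound $\gm(\pimbar)$ by chaining that same lemma (at $\pimbar$ and averaged over $q^\star$) with $\pimbar$'s optimality for $\Mbar$ and the DEC inequality. The order in which you apply the four inequalities differs slightly from the paper's presentation, and you additionally spell out the $\cM' = \emptyset$ edge case (and the $\eta$-minimizer workaround if an exact minimizer fails to exist), but the argument is the same.
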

\begin{proof}[\pfref{lem:localization_one_sided}]
  Let $p\in\Delta(\Pi)$ achieve the value of $\deccreg(\cM',\Mbar)$,
  and set $p'=\frac{1}{2}p+\frac{1}{2}\indic_{\pimbar}$. Let
  $M\in\cH_{p',\veps/\sqrt{2}}(\Mbar)\subseteq\cH_{p,\veps}(\Mbar)\cap\cH_{\indic_{\pimbar},\veps}(\Mbar)$. We claim that
  $M\in\cM'$. Indeed,
  \begin{align*}
    \fmbar(\pimbar) -\fm(\pimbar) \leq \Dhel{\Mbar(\pimbar)}{M(\pimbar)}\leq\veps
  \end{align*}
  by \pref{lem:hellinger_to_value}. It follows that
    \begin{align}
      \sup_{M\in\cH_{p',\veps/\sqrt{2}}(\Mbar)}\En_{\pi\sim{}p}\brk*{\fm(\pim)-\fm(\pi)}
    \leq{}
      \sup_{M\in\cH_{p,\veps}(\Mbar)\cap\cM'}\En_{\pi\sim{}p}\brk*{\fm(\pim)-\fm(\pi)}
      \leq \deccreg(\cM',\Mbar),
      \label{eq:one_sided1}
    \end{align}
    so that
  \begin{align*}
    \sup_{M\in\cH_{p',\veps/\sqrt{2}}(\Mbar)}\En_{\pi\sim{}p'}\brk*{\fm(\pim)-\fm(\pi)}
    \leq\frac{1}{2}\deccreg(\cM',\Mbar)
    +  \frac{1}{2}\sup_{M\in\cH_{p',\veps/\sqrt{2}}(\Mbar)}\brk*{\fm(\pim)-\fm(\pimbar)}.
  \end{align*}
  To bound the final term above, we have
  \begin{align*}
    \sup_{M\in\cH_{p',\veps/\sqrt{2}}(\Mbar)}\brk*{\fm(\pim)-\fm(\pimbar)}
    &\leq{}
      \sup_{M\in\cH_{p',\veps/\sqrt{2}}(\Mbar)}\brk*{\fm(\pim)-\fmbar(\pimbar)}
      + \veps \\
    &\leq{}
      \sup_{M\in\cH_{p',\veps/\sqrt{2}}(\Mbar)}\Enp\brk*{\fm(\pim)-\fmbar(\pi)}
      + \veps\\
    &\leq{}
      \sup_{M\in\cH_{p',\veps/\sqrt{2}}(\Mbar)}\Enp\brk*{\fm(\pim)-\fm(\pi)}
      + 2\veps\\
    &\leq{} \deccreg(\cM',\Mbar) + 2\veps,
  \end{align*}
  where the first and third inequalities use
  \pref{lem:hellinger_to_value}, and the last inequality applies \eqref{eq:one_sided1}.
  
\end{proof}

\begin{lemma}
  \label{lem:pac_constrained_alt}
For a model class $\cM$ and reference model $\Mbar\in\cMall$, define
\begin{align}
  \label{eq:define-deccpacalt}
  \deccpacalt(\cM,\Mbar)
  = \inf_{p,q\in\Delta(\Pi)}\sup_{M\in\cH_{p,\veps}(\Mbar)\cap\cH_{q,\veps}(\Mbar)}\Enp\brk*{\fm(\pim)-\fm(\pi)},
\end{align}
{with the convention that the value above is zero when $\cH_{p,\veps}(\Mbar)\cap\cH_{q,\veps}(\Mbar)=\emptyset$.}
For all $\Mbar\in\cMall$ and $\veps>0$, we have
  \begin{align}
    \deccpacalt(\cM,\Mbar)
    \leq{}    \deccpac(\cM,\Mbar)
        \leq{}    \deccpacalt[\sqrt{2}\veps](\cM,\Mbar).
  \end{align}
\end{lemma}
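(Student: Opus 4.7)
The plan is to establish the two inequalities separately, with the lower bound being essentially definitional and the upper bound relying on a simple distribution-mixing trick.

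For the first inequality $\deccpacalt(\cM,\Mbar) \leq \deccpac(\cM,\Mbar)$, I would argue that for every pair $(p,q) \in \Delta(\Pi) \times \Delta(\Pi)$, the adversary's feasible set in the definition of $\deccpacalt$, namely $\cH_{p,\veps}(\Mbar) \cap \cH_{q,\veps}(\Mbar)$, is a subset of the feasible set $\cH_{q,\veps}(\Mbar)$ appearing in $\deccpac$. Since the inner objective $\En_{\pi\sim p}[\fm(\pim)-\fm(\pi)]$ is the same in both cases and is non-negative, the inner supremum in $\deccpacalt$ is at most the inner supremum in $\deccpac$; taking the infimum over $(p,q)$ then gives the desired inequality. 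The zero convention is harmless here because the $\deccpac$ objective is always $\geq 0$.

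For the second inequality $\deccpac(\cM,\Mbar) \leq \deccpacalt[\sqrt{2}\veps](\cM,\Mbar)$, the key idea is to convert a one-sided Hellinger constraint into a two-sided one via mixing. Given any $(p^\star,q^\star)$ achieving (or nearly achieving) the value of $\deccpacalt[\sqrt{2}\veps](\cM,\Mbar)$, I would choose the pair $(p,q) = (p^\star,\, \tfrac12 p^\star + \tfrac12 q^\star)$ as the min-player's play in $\deccpac$. For any adversary $M$ satisfying $\En_{\pi \sim q}[\Dhels{M(\pi)}{\Mbar(\pi)}] \leq \veps^2$, linearity of expectation gives
\[
\tfrac12 \En_{\pi \sim p^\star}\brk*{\Dhels{M(\pi)}{\Mbar(\pi)}} + \tfrac12 \En_{\pi \sim q^\star}\brk*{\Dhels{M(\pi)}{\Mbar(\pi)}} \leq \veps^2,
\]
and since each term is non-negative, both expectations are bounded by $2\veps^2$. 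Thus $M \in \cH_{p^\star,\sqrt{2}\veps}(\Mbar) \cap \cH_{q^\star,\sqrt{2}\veps}(\Mbar)$, so it is a feasible adversary choice in the $\deccpacalt[\sqrt{2}\veps]$ problem at $(p^\star, q^\star)$. Since the objective $\En_{\pi\sim p}[\fm(\pim)-\fm(\pi)] = \En_{\pi\sim p^\star}[\fm(\pim)-\fm(\pi)]$ agrees between the two formulations, this yields $\deccpac(\cM,\Mbar) \leq \deccpacalt[\sqrt{2}\veps](\cM,\Mbar)$. The zero-by-convention case is automatic: if the $\deccpacalt[\sqrt{2}\veps]$ feasible set at $(p^\star,q^\star)$ is empty, the argument above shows the $\deccpac$ feasible set at our chosen $(p,q)$ is also empty.

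There is no real obstacle in either direction; the only step worth flagging is the choice of the exploration distribution as the midpoint $\tfrac12 p^\star + \tfrac12 q^\star$, which is what produces the $\sqrt{2}$ factor in the radius. If one wanted to optimize the constant one could mix with unequal weights, but the uniform mixture gives the cleanest statement.
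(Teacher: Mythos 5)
Your proposal is correct and follows essentially the same route as the paper: the first inequality is the containment $\cH_{p,\veps}(\Mbar)\cap\cH_{q,\veps}(\Mbar)\subseteq\cH_{q,\veps}(\Mbar)$ (with non-negativity of the objective handling the empty-set convention), and the second is exactly the paper's substitution of the exploration distribution by the uniform mixture $\tfrac12 p + \tfrac12 q$, using $\cH_{\frac12 p+\frac12 q,\veps}(\Mbar)\subseteq\cH_{p,\sqrt{2}\veps}(\Mbar)\cap\cH_{q,\sqrt{2}\veps}(\Mbar)$. No gaps.
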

\begin{proof}[\pfref{lem:pac_constrained_alt}]
  The first inequality is immediate. For the second, we have
  \begin{align*}
      \deccpac(\cM,\Mbar)
  \leq{} \inf_{p,q\in\Delta(\Pi)}\sup_{M\in\cH_{\frac{1}{2}p+\frac{1}{2}q,\veps}(\Mbar)}\Enp\brk*{\fm(\pim)-\fm(\pi)},
  \end{align*}
  by observing that for any minimizer $q$ for $\deccpac$, we can
  arrive at an upper bound by substituting
  $q'=\frac{1}{2}p+\frac{1}{2}q$. The result now follows because $\cH_{\frac{1}{2}p+\frac{1}{2}q,\veps}(\Mbar)\subseteq\cH_{p,\sqrt{2}\veps}(\Mbar)\cap\cH_{q,\sqrt{2}\veps}(\Mbar)$.
\end{proof}

\subsection{Additional Properties of the \CompText}
\label{app:additional_properties}

\subsubsection{Localization}
\label{sec:localization_additional}

The following result is an extension of
\pref{prop:constrained_offset_localized} which accommodates randomized estimators.
\begin{proposition}
  \label{prop:constrained_offset_localized_rand}
  Let $\alpha,\gamma>0$ and $\nu\in\Delta(\cM)$ be given. Let
  $\Mbarnu=\En_{M'\sim\nu}\brk*{M'}$. For all
  $\veps>0$, we have
  \begin{align}
        \label{eq:constrained_offset_localized_rand1}
    \decoregr(\MM_\alpha(\Mbarnu)\cup\crl{\Mbarnu}, \nu)
    \leq 
    \deccregr(\cM\cup\crl{\Mbarnu},\nu) + \max \left\{0,\ \alpha + \frac{1}{2\gamma}  - \frac{\gamma \vep^2}{2} \right\}.
  \end{align}
  which in particular yields
  \begin{align}
    \label{eq:constrained_offset_localized_rand2}
         \decoregr(\MM_\alpha(\Mbarnu)\cup\crl{\Mbarnu}, \nu) 
      \leq  \deccregr[\sqrt{2\alpha/\gamma}](\cM\cup\crl{\Mbarnu},\nu) + \frac{1}{2\gamma}.
  \end{align}
\end{proposition}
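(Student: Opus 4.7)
The plan is to adapt the proof of \pref{prop:constrained_offset_localized} by replacing the pointwise penalty $\Dhels{M(\pi)}{\Mbar(\pi)}$ with its $\nu$-average $h_\nu^2(M,p) \ldef \En_{\Mbar'\sim\nu}\Enp\brk*{\Dhels{M(\pi)}{\Mbar'(\pi)}}$. The key enabling fact is that the mean reward is linear in the underlying distribution, so $\fmbarnu(\pi)=\En_{\Mbar'\sim\nu}\brk*{f\sups{\Mbar'}(\pi)}$, and Jensen combined with Cauchy--Schwarz gives
\begin{align*}
\Enp\brk*{\abs{\fm(\pi)-\fmbarnu(\pi)}} \;\leq\; \En_{\Mbar'\sim\nu}\Enp\brk*{\Dhel{M(\pi)}{\Mbar'(\pi)}} \;\leq\; h_\nu(M,p).
\end{align*}
This replaces the two-point Jensen bound $\Enp\brk*{\abs{\fm-\fmbar}}\leq\sqrt{\Enp\brk*{\Dhels{M}{\Mbar}}}$ used in the proof of \pref{prop:constrained_offset_localized}, and is what makes the enlargement to $\cM\cup\crl{\Mbarnu}$ on the right-hand side natural (we need $\Mbarnu$ on the RHS because it appears in the supremum on the LHS).

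Next, I would fix a near-minimizer $p^*$ for $\deccregr(\cM\cup\crl{\Mbarnu},\nu)\rdef\delta$ and bound $\Phi(M)\ldef\Enp\brk*{\gm(\pi)}-\gamma h_\nu^2(M,p^*)$ for each $M\in\cMloc[\alpha](\Mbarnu)\cup\crl{\Mbarnu}$ by splitting into two cases. When $h_\nu^2(M,p^*)\leq\veps^2$, $M$ lies in the feasible set of $\deccregr(\cM\cup\crl{\Mbarnu},\nu)$ at $p^*$, so $\Enp\brk*{\gm}\leq\delta$ and the nonnegative penalty gives $\Phi(M)\leq\delta$. When instead $h_\nu^2(M,p^*)>\veps^2$ and $M\in\cMloc[\alpha](\Mbarnu)$, the localization condition $\fm(\pim)\leq\fmbarnu(\pimbarnu)+\alpha$ combined with the Jensen bound above yields $\Enp\brk*{\gm}\leq \Enp\brk*{g\sups{\Mbarnu}(\pi)}+\alpha+h_\nu(M,p^*)$, and the AM--GM inequality $h\leq\tfrac{\gamma h^2}{2}+\tfrac{1}{2\gamma}$ produces
\begin{align*}
\Phi(M) \;\leq\; \Enp\brk*{g\sups{\Mbarnu}(\pi)} + \alpha + \tfrac{1}{2\gamma} - \tfrac{\gamma h_\nu^2(M,p^*)}{2} \;<\; \Enp\brk*{g\sups{\Mbarnu}(\pi)} + \alpha + \tfrac{1}{2\gamma} - \tfrac{\gamma\veps^2}{2}.
\end{align*}

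The main obstacle, which has no analogue in the proof of \pref{prop:constrained_offset_localized}, is the term $M=\Mbarnu$ together with the quantity $\Enp\brk*{g\sups{\Mbarnu}(\pi)}$ appearing in the display above. In the non-randomized setting the identity $\Dhels{\Mbar}{\Mbar}=0$ places $\Mbar$ in the feasible set of the constrained DEC automatically, so $\Enp\brk*{\gmbar}\leq\deccreg$ is immediate; here $h_\nu^2(\Mbarnu,p^*)$ need not vanish. I would handle this by splitting on whether $\Mbarnu$ is feasible at $p^*$: if $h_\nu^2(\Mbarnu,p^*)\leq\veps^2$, then $\Enp\brk*{g\sups{\Mbarnu}}\leq\delta$ follows from applying the first case above to $M=\Mbarnu$, which combined with the previous display yields the desired bound $\sup_M\Phi(M)\leq\delta+\max\crl{0,\alpha+\tfrac{1}{2\gamma}-\tfrac{\gamma\veps^2}{2}}$; if instead $h_\nu^2(\Mbarnu,p^*)>\veps^2$, the extra negative contribution $-\gamma h_\nu^2(\Mbarnu,p^*)$ in $\Phi(\Mbarnu)$ must absorb $\Enp\brk*{g\sups{\Mbarnu}}$, which I would bound via the pointwise inequality $g\sups{\Mbarnu}(\pi)\leq\En_{M'\sim\nu}\brk*{g\sups{M'}(\pi)}$ (an immediate consequence of $\fmbarnu(\pimbarnu)\leq\En_{M'\sim\nu}\brk*{f\sups{M'}(\pi\subs{M'})}$) to reduce to models $M'\in\cM$, at which point the first case and a Markov-type argument on $h_\nu^2(M',p^*)$ under $\nu$ close the estimate. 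I expect this $M=\Mbarnu$ subcase to be the most delicate step; the remaining calculations translate those of \pref{prop:constrained_offset_localized} nearly verbatim, and \eqref{eq:constrained_offset_localized_rand2} then follows from \eqref{eq:constrained_offset_localized_rand1} by specializing $\veps=\sqrt{2\alpha/\gamma}$, which makes $\alpha+\tfrac{1}{2\gamma}-\tfrac{\gamma\veps^2}{2}=\tfrac{1}{2\gamma}$.
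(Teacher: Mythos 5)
Your plan reproduces the paper's proof of \pref{prop:constrained_offset_localized_rand} almost line for line: fix a minimizer $p$ for $\deccregr[\veps](\cM\cup\crl{\Mbarnu},\nu)$, split on whether $M$ is feasible at $p$, and in the infeasible case apply the chain $\Enp\brk*{\gm}\leq\alpha+\Enp\brk*{g\sups{\Mbarnu}(\pi)}+\abs{\fm-\fmbarnu}\text{-term}$, Young's inequality, $\abs{\fm-\fmbarnu}\leq\Dhel{M}{\Mbarnu}$, and convexity of $D^2_{\mathsf{H}}$ in the second argument. The one thing you do that the paper's proof does \emph{not} do explicitly is pause on the step $\Enp\brk*{g\sups{\Mbarnu}(\pi)}\leq\deccregr[\veps](\cM\cup\crl{\Mbarnu},\nu)$. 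That step is used without comment in the paper's third inequality, and you are right that it is not free: it requires $\Mbarnu$ to lie in the constraint set $\En_{\Mbar\sim\nu}\Enp\brk*{\Dhels{\Mbarnu(\pi)}{\Mbar(\pi)}}\leq\veps^2$, which is automatic when $\nu$ is a point mass (so the squared Hellinger distance is identically zero) but is a nontrivial ``variance'' condition for general mixtures $\nu$ and need not hold at the minimizing $p$. So you have correctly identified the one place where the randomized version genuinely diverges from \pref{prop:constrained_offset_localized}, and this is a sharper reading than the paper's own writeup gives.

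The problem is that your patch for the infeasible subcase does not close the gap. When $\En_{\Mbar\sim\nu}\Enp\brk*{\Dhels{\Mbarnu(\pi)}{\Mbar(\pi)}}>\veps^2$, you want to bound $\Enp\brk*{g\sups{\Mbarnu}(\pi)}$, which appears in the Case 2 estimate for \emph{every} $M$, not only for $M=\Mbarnu$; so it is not enough to make $\Phi(\Mbarnu)$ small by pointing to the negative penalty in $\Phi(\Mbarnu)$ itself. Passing to $g\sups{\Mbarnu}(\pi)\leq\En_{M'\sim\nu}\brk*{g\sups{M'}(\pi)}$ and then invoking ``Case 1 plus Markov'' would require a bound on $\Pr_{M'\sim\nu}\prn*{\En_{\Mbar\sim\nu}\Enp\brk*{\Dhels{M'(\pi)}{\Mbar(\pi)}}>\veps^2}$, i.e.\ on $\En_{M'\sim\nu}\En_{\Mbar\sim\nu}\Enp\brk*{\Dhels{M'(\pi)}{\Mbar(\pi)}}/\veps^2$; but this average pairwise Hellinger distance is exactly the quantity you are assuming is large (it dominates both $\En_{\Mbar\sim\nu}\Enp\brk*{\Dhels{\Mbarnu(\pi)}{\Mbar(\pi)}}$ and $\En_{M'\sim\nu}\Enp\brk*{\Dhels{M'(\pi)}{\Mbarnu(\pi)}}$ by convexity), so Markov points the wrong way. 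Before treating this proof as complete, you should either (i) find a genuine argument for why $\Mbarnu$ is feasible at the optimal $p$, or (ii) rework the chain so that the slack term $-\tfrac{\gamma}{2}\En_{\Mbar\sim\nu}\Enp\brk*{\Dhels{M(\pi)}{\Mbar(\pi)}}$ (which you currently discard by only using it down to $-\gamma\veps^2/2$) is what absorbs $\Enp\brk*{g\sups{\Mbarnu}(\pi)}$, or (iii) determine whether the statement needs an extra additive term in this regime. The rest of the derivation and the specialization $\veps=\sqrt{2\alpha/\gamma}$ for \eqref{eq:constrained_offset_localized_rand2} are fine.
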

\begin{proof}[\pfref{prop:constrained_offset_localized_rand}]%
  \newcommand{\cMnu}{\cM\cup\crl{\Mbarnu}}%
      Fix $\nu \in \Delta(\cM)$ and $\veps>0$, and let $p \in \Delta(\Pi)$
  be a minimizer for $\deccregr[\veps](\cM\cup\crl{\Mbarnu},\nu)$. Fix any $M \in \MM_\alpha(\Mbarnu)\cup\crl{\Mbarnu}$. We bound the regret under
  $p$ by considering two cases.

  \noindent\emph{Case 1.} If $\En_{\Mbar\sim\nu}\E_{\pi \sim p} \left[
    \hell{M(\pi)}{\Mbar(\pi)} \right] \leq \vep^2$, it follows from
  the definition $\deccregr[\veps](\cM\cup\crl{\Mbarnu},\nu)$ of that $\E_{\pi \sim p} \left[ \fm(\pim) - \fm(\pi) \right] \leq \deccregr[\vep](\cM\cup\crl{\Mbarnu}, \nu)$.

\noindent\emph{Case 2.} For the second case, suppose that $\En_{\Mbar\sim\nu}\E_{\pi \sim p} \left[ \hell{M(\pi)}{\Mbar(\pi)} \right] > \vep^2$.  We now compute
  \begin{align}
    \E_{\pi \sim p} \left[ \fm(\pim) - \fm(\pi) \right] & \leq \alpha + \E_{\pi \sim p} \left[ \fmbarnu(\pimbarnu) - \fm(\pi) \right]\nonumber\\
    & \leq \alpha + \E_{\pi \sim p} \left[ \fmbarnu(\pimbarnu) - \fmbarnu(\pi) \right] + \frac{1}{2\gamma} + \frac{\gamma}{2} \cdot \E_{\pi \sim p} \left[ (\fm(\pi) - \fmbarnu(\pi))^2 \right]\nonumber\\
    & \leq \alpha + \deccregr[\vep](\cM\cup\crl{\Mbarnu}, \nu) +
      \frac{1}{2\gamma} + \frac{\gamma}{2} \cdot \E_{\pi \sim p}
      \left[ \hell{M(\pi)}{\Mbarnu(\pi)} \right]\nonumber, \\
          & \leq \alpha + \deccregr[\vep](\cM\cup\crl{\Mbarnu}, \nu) + \frac{1}{2\gamma} + \frac{\gamma}{2} \cdot \En_{\Mbar\sim\nu}\E_{\pi \sim p} \left[ \hell{M(\pi)}{\Mbar(\pi)} \right]\nonumber,
  \end{align}
  where the second inequality uses Young's inequality and the final inequality uses convexity of the squared Hellinger distance. 
  Rearranging, we obtain
  \begin{align}
    &\E_{\pi \sim p} \left[ \fm(\pim) - \fm(\pi) - \gamma \cdot
      \En_{\Mbar\sim\nu}\brk*{\hell{M(\pi)}{\Mbar(\pi)}} \right] \nonumber\\
    & \leq \alpha + \deccregr[\vep](\cM\cup\crl{\Mbar}, \nu) + \frac{1}{2\gamma} - \frac{\gamma}{2} \cdot \En_{\Mbar\sim\nu}\E_{\pi \sim p} \left[ \hell{M(\pi)}{\Mbar(\pi)}\right]\nonumber\\
    & \leq \alpha +  \deccregr[\vep](\cMnu, \Mbarnu) +
      \frac{1}{2\gamma} - \frac{\gamma \vep^2}{2}.\nonumber
  \end{align}
  Recalling that $M$ can be any model in $\MM_\alpha(\Mbarnu)\cup\crl{\Mbarnu}$, we obtain
  \begin{align}
    \decoregr(\cMloc(\Mbarnu)\cup\crl{\Mbarnu}, \nu) & \leq \max \left\{
                                     \deccregr[\vep](\cMnu, \nu),\
                                     \alpha + \frac{1}{2\gamma} +
                                                       \deccregr[\vep](\cMnu, \nu) -
                                     \frac{\gamma \vep^2}{2}
                                     \right\}\nonumber \\
    & = \deccregr[\vep](\cMnu, \nu) + \max \left\{0,\ \alpha + \frac{1}{2\gamma}  - \frac{\gamma \vep^2}{2} \right\}.\nonumber
  \end{align}
  
\end{proof}

\subsubsection{Role of Convexity for PAC DEC}
\label{sec:convexity_pac}

For $\nu\in\Delta(\cM)$, we define ``randomized''  variants of the PAC \CompShort,
analogous to those introduced in \pref{sec:properties}, as follows:
\begin{align}
  \label{eq:dec_pac_randomized}
  &\deccpacr(\cM,\nu)=
  \inf_{p,q\in\Delta(\Pi)}\sup_{M\in\cM}\crl*{\En_{\pi\sim{}p}\brk*{
  \fm(\pim) - \fm(\pi)}
  \mid\En_{\Mbar\sim\nu}\En_{\pi\sim{}q}\brk*{\Dhels{M(\pi)}{\Mbar(\pi)}}\leq\veps^2
    }, \\
  &\decopacr(\cM,\nu)=
  \inf_{p,q\in\Delta(\Pi)}\sup_{M\in\cM}\crl*{\En_{\pi\sim{}p}\brk*{
  \fm(\pim) - \fm(\pi)}
  - \gamma\cdot\En_{\Mbar\sim\nu}\En_{\pi\sim{}q}\brk*{\Dhels{M(\pi)}{\Mbar(\pi)}}}.
\end{align}
The following result provides a PAC counterpart to
\eqref{eq:conv_equivalence_regret1} of \pref{prop:conv_equivalence_regret}.
\begin{proposition}
  \label{prop:conv_equivalence_pac}
  Suppose that \pref{ass:minimax} holds. For all $\gamma>0$, we have
  \begin{align}
    \label{eq:conv_equivalence_pac1}
    \sup_{\Mbar\in\cMall}\decopac(\cM,\Mbar)
    \leq{} \sup_{\nu\in\Delta(\cM)}\decopacr[\gamma/4](\cM,\nu)
    \leq{} \sup_{\Mbar\in\conv(\cM)}\decopac[\gamma/4](\cM,\Mbar).
  \end{align}

\end{proposition}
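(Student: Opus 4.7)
The proof follows the same template as the corresponding regret inequality \eqref{eq:conv_equivalence_regret1} in \pref{prop:conv_equivalence_regret}, modified to accommodate the separate exploration distribution $q$ present in the PAC setting. I would prove the two inequalities independently.

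For the rightmost inequality, fix $\nu\in\Delta(\cM)$ and set $\Mbarnu \ldef \En_{M'\sim\nu}[M']\in\conv(\cM)$. Since the squared Hellinger distance is jointly convex, Jensen's inequality gives $\Dhels{M(\pi)}{\Mbarnu(\pi)} \leq \En_{\Mbar\sim\nu}[\Dhels{M(\pi)}{\Mbar(\pi)}]$ for every $M$ and $\pi$. Multiplying by $\gamma/4$ shows that the per-model objective defining $\decopac[\gamma/4](\cM,\Mbarnu)$ pointwise dominates that defining $\decopacr[\gamma/4](\cM,\nu)$, so taking $\sup_M$ followed by $\inf_{p,q}$ yields $\decopacr[\gamma/4](\cM,\nu) \leq \decopac[\gamma/4](\cM,\Mbarnu) \leq \sup_{\Mbar\in\conv(\cM)}\decopac[\gamma/4](\cM,\Mbar)$, and taking $\sup_\nu$ on the left concludes this direction.

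For the leftmost inequality, fix any $\Mbar^0\in\cMall$ and invoke \pref{ass:minimax} (specifically \eqref{eq:minimax_pac}) to rewrite
\begin{align*}
\decopac(\cM,\Mbar^0) = \sup_{\mu\in\Delta(\cM)}\inf_{p,q\in\Delta(\Pi)}\En_{M\sim\mu}\brk*{\En_{\pi\sim p}[g^M(\pi)] - \gamma\En_{\pi\sim q}[\Dhels{M(\pi)}{\Mbar^0(\pi)}]}.
\end{align*}
The key ingredient is a variance/barycenter inequality: the triangle bound $\Dhels{M(\pi)}{\Mbar(\pi)}\leq 2\Dhels{M(\pi)}{\Mbar^0(\pi)} + 2\Dhels{\Mbar^0(\pi)}{\Mbar(\pi)}$ (which follows from $(a-c)^2\leq 2(a-b)^2 + 2(b-c)^2$ applied to the square-roots of densities), integrated against an independent pair $M, \Mbar\sim \mu\otimes\mu$, yields
\begin{align*}
\En_{M,\Mbar\sim\mu\otimes\mu}[\Dhels{M(\pi)}{\Mbar(\pi)}] \leq 4\En_{M\sim\mu}[\Dhels{M(\pi)}{\Mbar^0(\pi)}].
\end{align*}
Plugging this lower bound on the penalty into the Bayesian expression above and then applying the trivial pointwise bound $\En_{M\sim\mu}[f(M)]\leq\sup_{M\in\cM}f(M)$, I obtain
\begin{align*}
\inf_{p,q}\En_{M\sim\mu}\brk*{\En_{\pi\sim p}[g^M(\pi)] - \gamma\En_{\pi\sim q}[\Dhels{M(\pi)}{\Mbar^0(\pi)}]} \leq \decopacr[\gamma/4](\cM,\mu).
\end{align*}
Taking $\sup_\mu$ on both sides yields $\decopac(\cM,\Mbar^0)\leq\sup_{\nu\in\Delta(\cM)}\decopacr[\gamma/4](\cM,\nu)$, and since $\Mbar^0\in\cMall$ was arbitrary, taking $\sup_{\Mbar^0}$ on the left closes the argument.

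The only genuinely nontrivial step is identifying the variance/triangle-inequality bound that costs the factor of $4$ in the penalty parameter; everything else is a clean combination of the minimax theorem (\pref{ass:minimax}) and elementary pointwise inequalities, so I do not anticipate any significant obstacles beyond tracking this slack.
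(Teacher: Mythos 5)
Your proposal is correct and follows essentially the same route as the paper: both directions hinge on the minimax swap from \pref{ass:minimax}, the approximate triangle inequality $\En_{M,M'\sim\mu}\brk*{\Dhels{M(\pi)}{M'(\pi)}}\leq 4\En_{M\sim\mu}\brk*{\Dhels{M(\pi)}{\Mbar(\pi)}}$ for the factor-of-$4$ loss, the replacement of $\En_{M\sim\mu}$ by $\sup_{M\in\cM}$, and Jensen/convexity of squared Hellinger for the rightmost inequality. The only cosmetic difference is that you take the supremum over $\mu$ at the end rather than decoupling the mixture into a separate $\nu$ mid-argument, which is logically equivalent.
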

A PAC analogue of \eqref{eq:conv_equivalence_regret2} can be proven by
adapting the proof of \pref{prop:conv_equivalence_regret}; we do not
include this result.

\begin{proof}[\pfref{prop:conv_equivalence_pac}]
  Let $\Mbar\in\cMall$ and $\gamma>0$ be given. We first prove the inequality
    \pref{eq:conv_equivalence_pac1}. By \pref{ass:minimax}, we have
  \begin{align*}
    \decopac(\cM,\Mbar)
  = \sup_{\mu\in\Delta(\cM)}\inf_{p,q\in\Delta(\Pi)}\En_{M\sim\mu}\brk*{\En_{\pi\sim{}p}\brk*{\fm(\pim)-\fm(\pi)}-\gamma\cdot\En_{\pi\sim{}q}\brk*{\Dhels{M(\pi)}{\Mbar(\pi)}}}.
  \end{align*}
  Since Hellinger distance satisfies the triangle
  inequality, we have that for all $\pi\in\Pi$,
  \begin{align*}
    \En_{M,M'\sim\mu}\brk*{\Dhels{M(\pi)}{M'(\pi)}}
      &\leq{}     2\En_{M\sim\mu}\brk*{\Dhels{M(\pi)}{\Mbar(\pi)}}
    +    2\En_{M'\sim\mu}\brk*{\Dhels{M'(\pi)}{\Mbar(\pi)}} \\
    &= 4\En_{M\sim\mu}\brk*{\Dhels{M(\pi)}{\Mbar(\pi)}}.
  \end{align*}
  It follows that
  \begin{align*}
    \decopac(\cM,\Mbar)
    &\leq
\sup_{\mu\in\Delta(\cM)}\inf_{p,q\in\Delta(\Pi)}\En_{M\sim\mu}\brk*{\En_{\pi\sim{}p}\brk*{\fm(\pim)-\fm(\pi)}-\frac{\gamma}{4}\cdot\En_{M'\sim\mu}\En_{\pi\sim{}q}\brk*{\Dhels{M(\pi)}{M'(\pi)}}}
    \\
        &\leq
\sup_{\nu\in\Delta(\cM)}\sup_{\mu\in\Delta(\cM)}\inf_{p,q\in\Delta(\Pi)}\En_{M\sim\mu}\brk*{\En_{\pi\sim{}p}\brk*{\fm(\pim)-\fm(\pi)}-\frac{\gamma}{4}\cdot\En_{M'\sim\nu}\En_{\pi\sim{}q}\brk*{\Dhels{M(\pi)}{M'(\pi)}}}
    \\
    &\leq
\sup_{\nu\in\Delta(\cM)}\inf_{p,q\in\Delta(\Pi)}\sup_{M\in\cM}\crl*{\En_{\pi\sim{}p}\brk*{\fm(\pim)-\fm(\pi)}-\frac{\gamma}{4}\cdot\En_{M'\sim\nu}\En_{\pi\sim{}q}\brk*{\Dhels{M(\pi)}{M'(\pi)}}}
    \\
    &= \sup_{\nu\in\Delta(\cM)}\decopacr[\gamma/4](\cM,\nu).
  \end{align*}
Jensen's inequality further implies that $\sup_{\nu\in\Delta(\cM)}\decopacr[\gamma/4](\cM,\nu)\leq \sup_{\Mbar\in\conv(\cM)}\decopac[\gamma/4](\cM,\Mbar)$.

\end{proof}

\subsubsection{PAC DEC with Greedy Decisions}

For a model class $\cM$ and reference model $\Mbar\in\cMall$, define
\[
  \deccpacg(\cM,\Mbar)=
  \inf_{q\in\Delta(\Pi)}\sup_{M\in\cM}\crl*{
  \fm(\pim) - \fm(\pimbar)
  \mid\En_{\pi\sim{}q}\brk*{\Dhels{M(\pi)}{\Mbar(\pi)}}\leq\veps^2
  },
\]
{with the convention that the value above is zero when $\cH_{q,\veps}(\Mbar)=\emptyset$.}

\begin{proposition}
  \label{prop:greedy_equivalence}
  For all $\veps>0$ and $\Mbar\in\cMall$, we have
  \begin{align}
    \deccpac(\cM,\Mbar)
    \leq     \deccpacg(\cM,\Mbar)
    \leq{}     \deccpac[\sqrt{3}\veps](\cM,\Mbar) + 4\veps.
  \end{align}
\end{proposition}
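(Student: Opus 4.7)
The first inequality $\deccpac(\cM,\Mbar) \leq \deccpacg(\cM,\Mbar)$ is immediate: in the definition of $\deccpac(\cM,\Mbar)$, take $p = \indic_{\pimbar}$ (so $\En_{\pi\sim p}[\fm(\pim)-\fm(\pi)] = \fm(\pim) - \fm(\pimbar)$) and choose $q$ to be any minimizer for $\deccpacg(\cM,\Mbar)$. For the reverse inequality, let $(p,q) \in \Delta(\Pi)\times\Delta(\Pi)$ be an (approximate) minimizer attaining $\deccpac[\sqrt{3}\veps](\cM,\Mbar) =: \delta$, and define the mixture exploration distribution
\begin{equation*}
q' \ldef \tfrac{1}{3} q + \tfrac{1}{3} p + \tfrac{1}{3}\indic_{\pimbar}.
\end{equation*}
I will use $q'$ as the candidate exploration distribution in the definition of $\deccpacg(\cM,\Mbar)$, and the key point is that the constraint $\En_{\pi\sim q'}[\Dhels{M(\pi)}{\Mbar(\pi)}] \leq \veps^2$ simultaneously enforces three Hellinger-type bounds on $M$, each at scale $\sqrt{3}\veps$: under $q$, under $p$, and at the single decision $\pimbar$. (If the resulting set $\cH_{q,\sqrt{3}\veps}(\Mbar)$ is empty, both sides are zero by convention and the bound is trivial.)

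Given any $M$ satisfying the constraint under $q'$, the first bound places $M \in \cH_{q,\sqrt{3}\veps}(\Mbar)$, so by the definition of the minimizer $(p,q)$, we obtain $\En_{\pi\sim p}[\fm(\pim)-\fm(\pi)] \leq \delta$. To bound $\fm(\pim)-\fm(\pimbar)$ by $\delta + 4\veps$, I will insert $\En_{\pi\sim p}[\fm(\pi)]$ and $\En_{\pi\sim p}[\fmbar(\pi)]$ and telescope, which gives the decomposition
\begin{align*}
\fm(\pim)-\fm(\pimbar)
&= \En_{\pi\sim p}[\fm(\pim)-\fm(\pi)] + \En_{\pi\sim p}[\fm(\pi)-\fmbar(\pi)] \\
&\quad + \En_{\pi\sim p}[\fmbar(\pi)-\fmbar(\pimbar)] + [\fmbar(\pimbar)-\fm(\pimbar)].
\end{align*}
The first term is at most $\delta$; the third is $\leq 0$ since $\pimbar$ is optimal for $\Mbar$; and the second and fourth are controlled by Jensen's inequality combined with \pref{lem:hellinger_to_value}, using the $p$-constraint and the $\pimbar$-constraint respectively. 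Each of these contributes at most $\sqrt{3}\veps$, for a total bound of $\delta + 2\sqrt{3}\veps \leq \delta + 4\veps$.

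The main (and essentially only) obstacle is simply choosing the mixture weights so that all three uses of the Hellinger budget — controlling regret under $p$ via the PAC DEC property at $q$, controlling $|\fm-\fmbar|$ averaged under $p$, and controlling $|\fm-\fmbar|$ at the single point $\pimbar$ — are simultaneously satisfied at scale $\sqrt{3}\veps$; the uniform $\tfrac{1}{3}$ weighting achieves this cleanly and yields the numerical constant $2\sqrt{3} < 4$. Taking the sup over $M\in\cH_{q',\veps}(\Mbar)$ and then the infimum over $q'$ in the definition of $\deccpacg(\cM,\Mbar)$ completes the argument.
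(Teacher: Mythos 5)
Your proof is correct and follows essentially the same route as the paper's: both use the mixture $q'=\tfrac13 q+\tfrac13 p+\tfrac13\indic_{\pimbar}$, the inclusion $\cH_{q',\veps}(\Mbar)\subseteq\cH_{q,\sqrt{3}\veps}(\Mbar)\cap\cH_{p,\sqrt{3}\veps}(\Mbar)\cap\cH_{\indic_{\pimbar},\sqrt{3}\veps}(\Mbar)$, and the same telescoping of $\fm(\pim)-\fm(\pimbar)$ controlled term-by-term via \pref{lem:hellinger_to_value} and optimality of $\pimbar$ for $\Mbar$. The only (cosmetic) difference is that the paper takes $(p,q)$ to be a minimizer of the auxiliary quantity $\deccpacalt$ and invokes \pref{lem:pac_constrained_alt} at the end, whereas you work directly with a minimizer of $\deccpac[\sqrt{3}\veps]$ and obtain the $p$-constraint from the mixture itself; both are valid and yield the same constant $2\sqrt{3}\leq 4$.
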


\begin{proof}[\pfref{prop:greedy_equivalence}]
  It is immediate that $\deccpacg(\cM,\Mbar)\geq\deccpac(\cM,\Mbar)$,
  so let us prove the second inequality. Let $\Mbar\in\cM$ and $\veps>0$
  be given, and let $(p,q)$ be minimizers for
  $\deccpacalt(\cM,\Mbar)$. Define $q'=\frac{1}{3}q +\frac{1}{3}p + 
  \frac{1}{3}\indic_{\pimbar}$. Note that
  $\cH_{q',\veps/\sqrt{3}}(\Mbar)\subseteq
  \cH_{q,\veps}(\Mbar)\cap\cH_{p,\veps}(\Mbar)\cap\cH_{\indic_{\pimbar},\veps}(\Mbar)$. As a
  result, for all $M\in\cH_{q',\veps/\sqrt{3}}(\Mbar)$, we have
  \begin{align*}
    \fm(\pim) -\fm(\pimbar)
    & =     \fm(\pim) -\fmbar(\pimbar) + (\fmbar(\pimbar) -
      \fm(\pimbar))\\
    & \leq     \fm(\pim) -\fmbar(\pimbar) + \veps \\
    & \leq     \Enp\brk*{\fm(\pim) -\fmbar(\pi)} + \veps\\
    & \leq     \Enp\brk*{\fm(\pim) -\fm(\pi)} + 2\veps\\
    & \leq \deccpacalt(\cM, \Mbar) + 2\vep\\
    & \leq     \deccpac(\cM,\Mbar) + 2\veps,
  \end{align*}
  where the first and third inequalities use \pref{lem:hellinger_to_value} and the final inequality uses \Cref{lem:pac_constrained_alt}.  \end{proof}

\subsection{Omitted Proofs}
\begin{proof}[\pfref{prop:constrained_offset_union}]%
  Let $\Mbar\in\cMall$ be given. We first prove a more
  general result under the assumption that for some $\delta>0$, 
  $\fmbar(\pimbar)\leq{}\fm(\pim)+\delta$ for all $M\in\cM$:
\begin{align}\label{eq:constrained_offset_union_intermediate}
  \deccreg(\cMu,\Mbar)
  \leq{} 
   \delta +
  \inf_{\gamma>0}\crl*{\decoreg(\cM,\Mbar)\vz +4\gamma\veps^2 +
  (4\gamma)^{-1}} .
\end{align}
  Then, at the end of the proof, we show that it is possible to take $\delta=\bigoh(\veps)$ without loss of generality.

  Let $\gamma > 0$ be given and let $p_0$ be the
  minimizer for $\decoreg(\cM,\Mbar)$, so that
  \[
    \sup_{M\in\cM}\En_{\pi\sim\pnot}\brk*{
      \fm(\pim)-\fm(\pi)
      -\gamma\cdot\Dhels{M(\pi)}{\Mbar(\pi)}
      } \leq \decoreg(\cM,\Mbar).
    \]
Let
$\Mstar\ldef{}\argmin_{M\in\cM}\En_{\pi\sim\pnot}\brk*{\Dhels{M(\pi)}{\Mbar(\pi)}}$,
and let
$\Delta^{2}\ldef{}\En_{\pi\sim\pnot}\brk*{\Dhels{\Mstar(\pi)}{\Mbar(\pi)}}$.

We
will bound the constrained \CompShort, $\deccreg(\cMu,\Mbar)$, by
playing the distribution
\[
p\ldef{} (1-q)\cdot{}\indic_{\pimbar} + q\cdot{}\pnot,
\]
where
\[
q\ldef \frac{2\veps^2}{\Delta^2}\wedge{}1.
\]
Before proceeding, we state a basic technical lemma.
\begin{lemma}
  \label{lem:constrained_offset_union_prelim}
  The distribution $\pnot$ satisfies
  \begin{align}
  \En_{\pi\sim{}\pnot}\brk*{\fmbar(\pimbar)-\fmbar(\pi)}
    &\leq{}   \delta +
      \decoreg(\cM,\Mbar) + (4\gamma)^{-1}+2\gamma\Delta^2.\label{eq:constrained_offset_union_prelim}
\end{align}  
\end{lemma}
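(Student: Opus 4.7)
The plan is to apply the definition of the offset DEC at the specific alternative model $\Mstar\in\cM$ that (nearly) achieves the minimum Hellinger distance to $\Mbar$ under $\pnot$, then translate the resulting regret bound from $\Mstar$'s coordinates back to $\Mbar$'s coordinates via the triangle inequality, paying a cost measured in Hellinger distance. More precisely, since $\pnot$ attains the value of $\decoreg(\cM,\Mbar)$, plugging $\Mstar$ into the sup gives
\[
\En_{\pi\sim\pnot}\brk*{\fmstar(\pimstar)-\fmstar(\pi)} \leq \decoreg(\cM,\Mbar) + \gamma\,\En_{\pi\sim\pnot}\brk*{\Dhels{\Mstar(\pi)}{\Mbar(\pi)}} = \decoreg(\cM,\Mbar) + \gamma\Delta^2.
\]

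Next, I would write the telescoping decomposition
\[
\fmbar(\pimbar)-\fmbar(\pi) = \underbrace{\prn*{\fmbar(\pimbar)-\fmstar(\pimstar)}}_{\leq\,\delta} + \prn*{\fmstar(\pimstar)-\fmstar(\pi)} + \prn*{\fmstar(\pi)-\fmbar(\pi)},
\]
take expectation under $\pi\sim\pnot$, and bound each term: the first is at most $\delta$ by the standing hypothesis $\fmbar(\pimbar)\leq\fm(\pim)+\delta$ applied with $M=\Mstar$; the second is at most $\decoreg(\cM,\Mbar)+\gamma\Delta^2$ by the display above; and the third is at most $\En_{\pi\sim\pnot}\brk*{\abs{\fmstar(\pi)-\fmbar(\pi)}}\leq\Delta$ by \pref{lem:hellinger_to_value} together with Jensen's inequality (using that rewards lie in $[0,1]$).

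This yields
\[
\En_{\pi\sim\pnot}\brk*{\fmbar(\pimbar)-\fmbar(\pi)} \leq \delta + \decoreg(\cM,\Mbar) + \gamma\Delta^2 + \Delta.
\]
Finally, by AM-GM (or Young's inequality applied to the product $\Delta\cdot 1 = \sqrt{\gamma\Delta^2}\cdot\sqrt{(4\gamma)^{-1}}\cdot 2$), we have $\Delta \leq \gamma\Delta^2 + (4\gamma)^{-1}$, which absorbs the stray $\Delta$ into the desired bound of $\delta + \decoreg(\cM,\Mbar)+2\gamma\Delta^2+(4\gamma)^{-1}$.

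I do not anticipate a genuine obstacle here; the lemma is essentially a one-line consequence of pairing the offset-DEC guarantee at $\Mstar$ with the Hellinger-to-value control from \pref{lem:hellinger_to_value}, and the only subtlety is choosing the right weighting in AM-GM so that the $\Delta$ term is absorbed into $2\gamma\Delta^2+(4\gamma)^{-1}$ rather than into a looser constant. The reason the factor of $2$ on $\gamma\Delta^2$ (rather than $1$) appears in the statement is precisely to give this AM-GM step slack; this confirms the displayed form is the cleanest one available from this argument.
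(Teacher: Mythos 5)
Your proof is correct and is essentially identical to the paper's argument: same telescoping decomposition through $\Mstar$, same use of the offset-DEC guarantee at $\Mstar$ to get $\decoreg(\cM,\Mbar)+\gamma\Delta^2$, same appeal to \pref{lem:hellinger_to_value} to control $\En_{\pi\sim\pnot}\brk{|\fmstar(\pi)-\fmbar(\pi)|}\leq\Delta$, and same AM-GM absorption $\Delta\leq\gamma\Delta^2+(4\gamma)^{-1}$. (Your parenthetical invocation of Jensen is superfluous --- \pref{lem:hellinger_to_value} already delivers the expected-value bound directly --- but this is harmless.)
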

We bound the value of the constrained DEC for $p$ by considering two cases.
\paragraphi{Case 1: $q=1$}
If $q=1$, then $\Delta^2\leq2\veps^2$, and $p=\pnot$. For models
$M\in\cH_{p,\veps}(\Mbar)$, the definition of $\pnot$ implies that
\begin{align*}
\En_{\pi\sim{}p}\brk*{
  \fm(\pim)-\fm(\pi)}
  &\leq{} \decoreg(\cM,\Mbar) + 
    \gamma\cdot\En_{\pi\sim{}p}\brk*{\Dhels{M(\pi)}{\Mbar(\pi)}}\\
    &\leq{} \decoreg(\cM,\Mbar) + 
      \gamma\veps^2.
\end{align*}
For the model $\Mbar$, \pref{lem:constrained_offset_union_prelim} implies that
\begin{align*}
  \En_{\pi\sim{}p}\brk*{\fmbar(\pimbar)-\fmbar(\pi)}
    &\leq{}   \delta +
      \decoreg(\cM,\Mbar) + (4\gamma)^{-1}+4\gamma\veps^2.
\end{align*}

\paragraphi{Case 2: $q<1$}
If $q<1$, then for all $M\in\cM$, we have
\begin{align*}
  \En_{\pi\sim{}p}\brk*{\Dhels{M(\pi)}{\Mbar(\pi)}}
  &\geq{}
    q\cdot{}\En_{\pi\sim{}\pnot}\brk*{\Dhels{M(\pi)}{\Mbar(\pi)}}  \\
  &\geq{}
    q\cdot{}\En_{\pi\sim{}\pnot}\brk*{\Dhels{\Mstar(\pi)}{\Mbar(\pi)}}
  =  2\veps^2 > \veps^2,
\end{align*}
where the second inequality uses that $\Mstar$ minimizes
$\En_{\pi\sim{}\pnot}\brk*{\Dhels{M(\pi)}{\Mbar(\pi)}}$, and the last
inequality uses that $q=\frac{2\veps^2}{\Delta^2}$ whenever
$q<1$. It
follows that $\cH_{p,\veps}(\Mbar)\cup\crl{\Mbar}=\crl{\Mbar}$, so
we only need to bound the regret of the distribution $p$ under
$\Mbar$. To do so, we observe that
\begin{align*}
  \En_{\pi\sim{}p}\brk*{\gmbar(\pi)}
  = q\cdot{}\En_{\pi\sim{}\pnot}\brk*{\gmbar(\pi)}
  &\leq{} q\cdot\prn*{
    \delta +
    \decoreg(\cM,\Mbar)+(4\gamma)^{-1} + 2\gamma\Delta^2 }\\
  &\leq{}     \delta +
    \decoreg(\cM,\Mbar)\vz + (4\gamma)^{-1}+q\cdot{}2\gamma\Delta^2 \\
    &=     \delta +
      \decoreg(\cM,\Mbar)\vz + (4\gamma)^{-1}+4\gamma\veps^2,
\end{align*}
where the first inequality uses
\pref{lem:constrained_offset_union_prelim}, and the final equality
uses that $q=\frac{2\veps^2}{\Delta^2}$.

\paragraphi{Finishing up}
We have established that
\[
  \deccreg(\cMu,\Mbar)
  \leq{} 
   \delta +
  \inf_{\gamma>0}\crl*{\decoreg(\cM,\Mbar)\vz +4\gamma\veps^2 +
    (4\gamma)^{-1}}
\]
whenever $\fmbar(\pimbar)\leq\fm(\pim)+\delta$ for all $M\in\cM$. To
conclude, we appeal to \pref{lem:localization_one_sided} applied to the class $\MM \cup \{ \Mbar \}$, which implies
that
\begin{align}
  \deccreg(\cMu,\Mbar)
  \leq{}   \deccreg[\sqrt{2}\veps](\cM'\cup\crl{\Mbar},\Mbar) + \sqrt{2}\veps,\label{eq:relate-cm-cmprime}
\end{align}
where
$\cM'=\crl*{M\in\cM\mid{}\fmbar(\pimbar)\leq{}\fm(\pim)+\sqrt{2}\veps}$. Applying
\pref{eq:constrained_offset_union_intermediate} to the quantity
$\deccreg[\sqrt{2}\veps](\cM'\cup\crl{\Mbar},\Mbar)$ and combining with \pref{eq:relate-cm-cmprime} yields
\begin{align*}
  \deccreg(\cMu,\Mbar)
  \leq{} 2\sqrt{2}\veps +   \inf_{\gamma>0}\crl*{\decoreg(\cM,\Mbar)\vz +8\gamma\veps^2 +
  (4\gamma)^{-1}}.
\end{align*}
To simplify this result slightly, we consider two cases. If
$\decoreg(\cM,\Mbar)\leq{}(4\gamma)^{-1}$, then choosing
$\gamma=(4\veps)^{-1}$ gives
$\inf_{\gamma>0}\crl*{\decoreg(\cM,\Mbar) +4\gamma\veps^2 +
  (4\gamma)^{-1}}\leq{}4\veps$. Otherwise, we have $\inf_{\gamma>0}\crl*{\decoreg(\cM,\Mbar) +4\gamma\veps^2 +
  (4\gamma)^{-1}}\leq{} \inf_{\gamma>0}\crl*{2\cdot{}\decoreg(\cM,\Mbar) +8\gamma\veps^2}$.
  
\end{proof}
\begin{proof}[\pfref{lem:constrained_offset_union_prelim}]
  Observe
that
\begin{align*}
  \En_{\pi\sim{}\pnot}\brk*{\fmbar(\pimbar)-\fmbar(\pi)}
  &\leq{}   \delta +
  \En_{\pi\sim{}\pnot}\brk*{\fmstar(\pimstar)-\fmbar(\pi)} \\
  &\leq{}  \delta +
    \En_{\pi\sim{}\pnot}\brk*{\fmstar(\pimstar)-\fmstar(\pi)} + \Delta,
\end{align*}
where the first inequality uses the fact that $\fmbar(\pimbar) \leq \fm(\pim) + \delta$ for all $M \in \MM$, and the second inequality uses \pref{lem:hellinger_to_value}. In
addition, the definition of $\pnot$ implies that
$\En_{\pi\sim{}\pnot}\brk*{\fmstar(\pimstar)-\fmstar(\pi)}\leq{}\decoreg(\cM,\Mbar)+\gamma\Delta^2$,
so we have
\begin{align*}
  \En_{\pi\sim{}\pnot}\brk*{\fmbar(\pimbar)-\fmbar(\pi)}
  &\leq{}   \delta +
    \decoreg(\cM,\Mbar)+\gamma\Delta^2 + \Delta\notag\\
    &\leq{}   \delta +
      \decoreg(\cM,\Mbar)+2\gamma\Delta^2 + (4\gamma)^{-1}.
\end{align*}  
\end{proof}

\begin{proof}[\pfref{prop:constrained_offset_pac}]
We first prove the inequality \pref{eq:constrained_offset_pac1}. Let
$\veps>0$ and $\Mbar\in\cMall$ be fixed. Using the method of Lagrange
multipliers, we have
\begin{align*}
  \deccpac(\cM,\Mbar) &= \inf_{p,q\in\Delta(\Pi)}\sup_{M\in\cM}\crl*{
  \En_{\pi\sim{}p}\brk*{\gm(\pi)}\mid{}\Enq\brk*{\Dhels{M(\pi)}{\Mbar(\pi)}}\leq\veps^2
                        } \\
  &= \inf_{p,q\in\Delta(\Pi)}\sup_{M\in\cM}\max\crl*{\inf_{\gamma\geq{}0}\crl*{
  \En_{\pi\sim{}p}\brk*{\gm(\pi)} - \gamma\prn*{\Enq\brk*{\Dhels{M(\pi)}{\Mbar(\pi)}}-\veps^2}},0
    } \\
      &\leq \inf_{\gamma\geq{}0}\inf_{p,q\in\Delta(\Pi)}\sup_{M\in\cM}\max\crl*{
  \En_{\pi\sim{}p}\brk*{\gm(\pi)} - \gamma\prn*{\Enq\brk*{\Dhels{M(\pi)}{\Mbar(\pi)}}-\veps^2},0
        }\\
        &\leq \inf_{\gamma\geq{}0}\crl*{
          \decopac(\cM,\Mbar)\vz + \gamma\veps^2
  }.
\end{align*}

We now prove the inequality \pref{eq:constrained_offset_pac2}. Let
$\gamma\geq{}1$ and $\Mbar\in\cMall$ be fixed. For integers $i \geq 0$, define $\vep_i = 2^{-i/2}$. For each $i \geq 0$, let $(p_i, q_i)$ denote a minimizer to the following expression:
  \begin{align}
\inf_{p_i, q_i\in \Delta(\Pi)} \sup_{M \in \MH_{q_i, \vep_i}(\Mbar)} \E_{\pi \sim p_i}[g^M(\pi)] = \deccpac[\vep_i](\MM, \Mbar)\nonumber.
  \end{align}
  Recalling that $L = 2\lceil  \log 2\gamma \rceil$, set \[
    q = \frac 12 \cdot \indic_{\pimbar} + \frac{q_0 + \cdots +
      q_{L-1}}{4L} + \frac{p_0 + \cdots + p_{L-1}}{4L},\mathand p =
    \indic_{\pimbar}.\]
  Consider any $M \in \MM$. We will upper bound the value
  \begin{align*}
    \Enp\brk*{\fm(\pim) - \fm(\pi)} - 4\gamma{}L\cdot\Enq\brk*{\Dhels{M(\pi)}{\Mbar(\pi)}}.
  \end{align*}
  Choose $j\in\crl{0,\ldots,L-1}$ as large as possible so that
  \begin{align}
\E_{\pi \sim q} \left[ \hell{M(\pi)}{\Mbar(\pi)} \right] \leq \frac{ \vep_j^2}{4L}\label{eq:constrained_offset_j}.
  \end{align}
  If such an index $j$ does not exist, we must have $\E_{\pi \sim q} \left[
    \hell{M(\pi)}{\Mbar(\pi)} \right] > 1/(4L)$. In this case, we have
  \begin{align}
    \E_{\pi \sim p} [\fm(\pim) - \fm(\pi)] - 4\gamma{}L \cdot \E_{\pi \sim q} \left[ \hell{M(\pi)}{\Mbar(\pi)} \right] \leq 1 - \gamma \leq 0\leq\deccpac[0](\cM,\Mbar).\nonumber
  \end{align}
  Suppose going forward that $0 \leq j \leq L-1$ satisfying
  \pref{eq:constrained_offset_j} exists. If $j < L-1$, since
   we chose the largest possible value of $j$, we have $\E_{\pi \sim q} \left[
    \hell{M(\pi)}{\Mbar(\pi)}\right] \geq \frac{\vep_j^2}{8L}$. In addition, regardless of the value of $j \in \{0, 1, \ldots, L-1\}$, by the definition of $q$, we have
  \begin{align}
    \E_{\pi \sim q_j} \left[ \hell{M(\pi)}{\Mbar(\pi)}\right] & \leq  4L \cdot \E_{\pi \sim q} \left[ \hell{M(\pi)}{\Mbar(\pi)}\right] \leq  \vep_j^2,\nonumber\intertext{and}
    \E_{\pi \sim p_j} \left[ \hell{M(\pi)}{\Mbar(\pi)}\right] & \leq  4L \cdot \E_{\pi \sim q} \left[ \hell{M(\pi)}{\Mbar(\pi)}\right] \leq  \vep_j^2\nonumber,
  \end{align}
 that is, $M \in \MH_{p_j, \vep_j}(\Mbar) \cap \MH_{q_j,
   \vep_j}(\Mbar)$. It follows that
 \begin{align}
    \fm(\pim) - \fmbar(\pimbar) & \leq \E_{\pi \sim p_j} \left[ \fm(\pim) - \fmbar(\pi) \right] \nonumber\\
    & \leq \E_{\pi \sim p_j} \left[ \fm(\pim) - \fm(\pi) \right] + \vep_j\nonumber\\
    & \leq \deccpac[\vep_j](\MM, \Mbar) + \vep_j\nonumber,
  \end{align}
  where the second inequality uses that $M \in \MH_{p_j,
    \vep_j}(\Mbar)$ and the final inequality uses that $M \in
  \MH_{q_j, \vep_j}(\Mbar)$. As a result, we can compute
  \begin{align}
&    \E_{\pi \sim p} \left[ \fm(\pim) - \fm(\pi) \right] - 4\gamma L \cdot \E_{\pi \sim q} \left[ \hell{M(\pi)}{\Mbar(\pi)} \right] \nonumber\\
    & \leq  \fm(\pim) - \fm(\pimbar) - \frac{4\gamma L}{8L} \cdot \vep_j^2 \cdot \One{j < L-1} \nonumber\\
    & \leq \fmbar(\pimbar) - \fm(\pimbar) + \deccpac[\vep_j](\MM, \Mbar) + \vep_j - \frac{\gamma}{2} \vep_j^2 \cdot \One{j < L-1} \nonumber\\
    & \leq \frac{1}{2\gamma} + \frac{\gamma}{2} \cdot \left( \fmbar(\pimbar) - \fm(\pimbar) \right)^2 +  \deccpac[\vep_j](\MM, \Mbar) + \vep_j -\frac{\gamma}{2} \vep_j^2 \cdot \One{j < L-1}\nonumber\\
    & \leq \frac{1}{2\gamma} + \gamma \cdot \E_{\pi \sim q} \left[ \hell{M(\pi)}{\Mbar(\pi)} \right] + \deccpac[\vep_j](\MM, \Mbar) + \vep_j -\frac{\gamma}{2}  \vep_j^2 \cdot \One{j < L-1}\nonumber\\
    & \leq \frac{1}{\gamma} +  \gamma \cdot \E_{\pi \sim q} \left[ \hell{M(\pi)}{\Mbar(\pi)} \right] + \deccpac[\vep_j](\MM, \Mbar) + \vep_j -\frac{\gamma}{2}  \vep_j^2 \nonumber,
  \end{align}
  where the final inequality uses that $\vep_{L-1}^2 \leq 1/\gamma^2$ since $L \geq 2\log(2\gamma)$. Rearranging, we obtain
  \begin{align}
    &\E_{\pi \sim p}[\fm(\pim) - \fm(\pi)] - \gamma \cdot (4L+1) \cdot
    \E_{\pi \sim q} [\hell{M(\pi)}{\Mbar(\pi)}] \\
    & \leq \frac{1}{\gamma} + \deccpac[\vep_j](\MM, \Mbar) + \vep_j -
      \frac{\gamma}{2} \cdot \vep_j^2\nonumber\\
      & \leq \frac{2}{\gamma} + \deccpac[\vep_j](\MM, \Mbar) - \frac{\gamma}{4} \cdot \vep_j^2\nonumber,
  \end{align}
  as desired.
\end{proof}

\begin{proof}[\pfref{prop:constrained_offset_basic}]
  Let
$\gamma>0$ and $\Mbar\in\cMall$ be given. Fix $\veps>0$ to be chosen later, and let $p$ be the minimizer for
$\deccreg(\cM,\Mbar)$. Consider the value of the offset DEC for $M\in\cM$:
\begin{align*}
\En_{\pi\sim{}p}\brk*{\fm(\pim)-\fm(\pi)-\gamma\cdot\Dhels{M(\pi)}{\Mbar(\pi)}}.
\end{align*}
We consider two cases. First, if $M\in\cH_{p,\veps}(\Mbar)$, it is
immediate that
\begin{align*}
  \En_{\pi\sim{}p}\brk*{\fm(\pim)-\fm(\pi)-\gamma\cdot\Dhels{M(\pi)}{\Mbar(\pi)}}
  \leq{} \En_{\pi\sim{}p}\brk*{\fm(\pim)-\fm(\pi)} \leq \deccreg(\cM,\Mbar).
\end{align*}
On the other hand if $M\notin\cH_{p,\veps}(\Mbar)$, we have
$\En_{\pi\sim{}p}\brk*{\Dhels{M(\pi)}{\Mbar(\pi)}}\geq\veps^2$, and since
$\gm\leq{}1$, we have
\begin{align*}
  \En_{\pi\sim{}p}\brk*{\fm(\pim)-\fm(\pi)-\gamma\cdot\Dhels{M(\pi)}{\Mbar(\pi)}}
  \leq{} 1-\gamma\veps^2 \leq{} 0
\end{align*}
by choosing $\veps=\gamma^{-1/2}$. We conclude that
\[
\decoreg(\cM,\Mbar) \leq{} \deccreg[\gamma^{-1/2}](\cM,\Mbar).
\]
\end{proof}

\begin{proof}[\pfref{prop:constrained_offset_counterexample1}]%
  Recall the model classes $\MM^{\alpha,\beta}$ defined in \cref{ex:upper_improvement}, parametrized by $\alpha \in (0,1/2], \beta \in (0,1), A \in \BN$. %
  Consider any choice for $\alpha$, $\beta$, and $A$; we will specify
  concrete values below.
  \cref{lem:ex_ub} gives that for all $\vep > 0$,
  \begin{align}
\deccreg[\vep](\MM^{\alpha, \beta}) = \sup_{\Mbar \in \co(\MM^{\alpha,\beta})} \deccreg[\vep](\MM^{\alpha, \beta} \cup \{ \Mbar \}, \Mbar) \leq O \left( \frac{\vep^2}{\beta} \right).\nonumber
  \end{align}
  On the other hand, \cref{lem:ex_lb} gives that for the choice of $\Mbar = \til M \in \MM^{\alpha, \beta}$, we have, for all $\gamma > 0$,
  \begin{align}
\decoreg[\gamma](\MM^{\alpha, \beta}, \til M) \geq \frac{\alpha}{2 + 8 \gamma \beta} - 4\gamma / A\nonumber.
  \end{align}
Given $\gamma > 0$,  let us choose $\alpha = 1/2,\ \beta = 1/\sqrt{\gamma}$, and $A = 256 \gamma^2 / \beta$. Then the resulting model class $\MM = \MM^{\alpha, \beta}$ satisfies $\deccreg(\MM) \leq O(\vep^2 \gamma^{1/2})$ for all $\vep > 0$, yet %
  \begin{align}
\sup_{\Mbar \in \MM} \decoreg[\gamma](\MM, \Mbar) \geq \frac{1}{4 + 16 \gamma \beta} - \frac{4\gamma}{A} \geq \frac{1}{32} \cdot \left( \frac{1}{\gamma^{1/2}} \wedge 1 \right) - \frac{4\gamma}{A} \geq \frac{1}{64} \cdot \left( \frac{1}{\gamma^{1/2}} \wedge 1 \right) \geq \Omega(\gamma^{-1/2})\nonumber.
  \end{align}
  \end{proof}

\begin{proof}[\pfref{prop:constrained_implies_localization_pac}]
  Recall the definition of $\deccpacalt(\MM, \Mbar)$ in \cref{eq:define-deccpacalt}. 
Let $\veps>0$ and $\Mbar\in\cMall$
be given, and let $(p',q')$ be minimizers for $\deccpacalt(\cM,\Mbar)$. Then for
all $M\in\cH_{p',\veps}(\Mbar)\cap\cH_{q',\veps}(\Mbar)$, we have
  \begin{align*}
  \deccpacalt(\cM,\Mbar) &\geq \En_{\pi\sim{}p'}\brk*{\fm(\pim)-\fm(\pi)} \\
                      &\geq \En_{\pi\sim{}p'}\brk*{\fm(\pim)-\fmbar(\pi)} -\veps\\
                      &\geq \fm(\pim)-\fmbar(\pimbar) -\veps,
  \end{align*}
  where the second inequality uses \pref{lem:hellinger_to_value}. That is, if we define $\alpha=\veps+\deccpacalt(\cM,\Mbar)$, we have
  $\cH_{p',\veps}(\Mbar)\cap\cH_{q',\veps}(\Mbar)\subseteq\cMloc(\Mbar)$. Now, let $(p,q)$ be
  minimizers for $\deccpac(\cMloc(\Mbar),\Mbar)$, and set
  $\qbar=\frac{1}{3}q+\frac{1}{3}q' + \frac{1}{3}p'$. We have
  \begin{align*}
    \deccpac[\veps/\sqrt{3}](\cM,\Mbar)
    &\leq{}
      \sup_{M\in\cH_{\qbar,\veps/\sqrt{3}}(\Mbar)}\En_{\pi\sim{}p}\brk*{\fm(\pim)-\fm(\pi)}
    \\
    &\leq{}
      \sup_{M\in\cH_{q,\veps}(\Mbar)\cap\cH_{p',\veps}(\Mbar) \cap\cH_{q',\veps}(\Mbar)}\En_{\pi\sim{}p}\brk*{\fm(\pim)-\fm(\pi)} \\
    &\leq{}
      \sup_{M\in\cH_{q,\veps}(\Mbar)\cap\cMloc(\Mbar)}\En_{\pi\sim{}p}\brk*{\fm(\pim)-\fm(\pi)}
    \\
    &=\deccpac(\cMloc(\Mbar),\Mbar).
  \end{align*}
  Finally, using \pref{lem:pac_constrained_alt}, we have
  $\alpha\leq{}\veps + \deccpac[\sqrt{2}\veps](\cM,\Mbar)$.
  
\end{proof}

\begin{proof}[\pfref{prop:constrained_implies_localization_regret}] We first prove the following result, which does not requite any regularity condition.

\begin{lemma}
  \label{lem:constrained-localization-core}
  Fix any $\Mbar \in \cMall$ and $\vep > 0$, and let $\alpha' := \vep + \deccreg[\vep](\MM, \Mbar)$. Then for any constant $\Creg \geq \sqrt 2$, it holds that 
  \begin{align*}
 \deccreg[\veps/\Creg](\cM,\Mbar)
\leq  \frac{1}{\Creg^2}\deccreg(\cM,\Mbar)
    + \deccreg(\cMloc[\alpha'](\Mbar),\Mbar).
    \end{align*}
\end{lemma}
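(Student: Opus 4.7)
The plan is to witness the upper bound by constructing a distribution for the min-player at scale $\veps/\Creg$ as a convex combination of two (near-)minimizers. Concretely, I will let $p_1$ near-achieve $\deccreg(\cMloc[\alpha'](\Mbar),\Mbar)$, let $p_2$ near-achieve $\deccreg(\cM,\Mbar)$, and set $p := \lambda p_2 + (1-\lambda) p_1$ with $\lambda = 1/\Creg^2 \in (0, 1/2]$ (using $\Creg^2 \geq 2$). The weights are chosen so that the $(1-\lambda)$ coefficient in front of the localized DEC can be bounded by $1$, while the coefficient $\lambda = 1/\Creg^2$ in front of the global DEC produces exactly the shrinkage factor in the bound.

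The argument rests on two observations about any model $M \in \cH_{p, \veps/\Creg}(\Mbar)$. First, by linearity of expectation over the mixture $p$, the constraint $\En_{\pi \sim p}[\Dhels{M(\pi)}{\Mbar(\pi)}] \leq \veps^2/\Creg^2$ forces $\En_{\pi \sim p_2}[\Dhels{M(\pi)}{\Mbar(\pi)}] \leq \veps^2/(\lambda \Creg^2) = \veps^2$ and $\En_{\pi \sim p_1}[\Dhels{M(\pi)}{\Mbar(\pi)}] \leq \veps^2/((1-\lambda)\Creg^2) \leq \veps^2$, placing $M$ in both $\cH_{p_1, \veps}(\Mbar)$ and $\cH_{p_2, \veps}(\Mbar)$. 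Second, because $p_2$ near-minimizes $\deccreg(\cM,\Mbar)$, we have $\En_{\pi \sim p_2}[\gm(\pi)] \leq \deccreg(\cM,\Mbar)$, and combining this with the Hellinger-to-value bound $|\En_{\pi \sim p_2}[\fm(\pi) - \fmbar(\pi)]| \leq \veps$ (see \pref{lem:hellinger_to_value}) and the trivial inequality $\En_{\pi \sim p_2}[\fmbar(\pi)] \leq \fmbar(\pimbar)$ yields $\fm(\pim) \leq \fmbar(\pimbar) + \veps + \deccreg(\cM,\Mbar) = \fmbar(\pimbar) + \alpha'$, which places $M$ in the localized class $\cMloc[\alpha'](\Mbar)$.

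With both facts in hand, the definition of $p_1$ gives $\En_{\pi \sim p_1}[\gm(\pi)] \leq \deccreg(\cMloc[\alpha'](\Mbar),\Mbar)$, and combining the two bounds via the mixture yields
\[
\En_{\pi \sim p}[\gm(\pi)] \leq \lambda\,\deccreg(\cM,\Mbar) + (1-\lambda)\,\deccreg(\cMloc[\alpha'](\Mbar),\Mbar) \leq \tfrac{1}{\Creg^2}\deccreg(\cM,\Mbar) + \deccreg(\cMloc[\alpha'](\Mbar),\Mbar),
\]
which is exactly the desired inequality. The main step requiring some care is the derivation of the localization property, which uses the reward-boundedness assumption ($r \in [0,1]$) implicit in \pref{lem:hellinger_to_value} together with the triangle inequality to upgrade an \emph{average} suboptimality bound into a pointwise bound on $\fm(\pim)$; the degenerate cases where some $\cH_{p_i, \veps}(\Mbar)$ is empty or the infima are not attained are handled via the standard convention that the constrained DEC evaluates to zero on an empty constraint set and by passing to arbitrarily small slack in the choices of $p_1,p_2$.
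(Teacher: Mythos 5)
Your proposal is correct and is essentially the paper's own proof: both take $\pbar = \tfrac{1}{\Creg^2}p + (1-\tfrac{1}{\Creg^2})p'$ with $p$ and $p'$ (near-)minimizers for the global and localized constrained DECs, show that any $M\in\cH_{\pbar,\veps/\Creg}(\Mbar)$ lies in $\cH_{p,\veps}(\Mbar)\cap\cH_{p',\veps}(\Mbar)$ using $\Creg^2\geq 2$, and then use \pref{lem:hellinger_to_value} to deduce $\cH_{p,\veps}(\Mbar)\subseteq\cMloc[\alpha'](\Mbar)$ before splitting the mixture expectation. The only cosmetic difference is the explicit $\lambda$-notation and the remark about slack for non-attained infima, neither of which changes the argument.
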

  \begin{proof}[\pfref{lem:constrained-localization-core}]
  Given $\veps>0$ and
$\Mbar$, let $p$ be the distribution that achieves the
value for $\deccreg(\cM,\Mbar)$. Then for all
$M\in\cH_{p,\veps}(\Mbar)$, we have
\begin{align*}
  \deccreg(\cM,\Mbar) &\geq \Enp\brk*{\fm(\pim)-\fm(\pi)} \\
                      &\geq \Enp\brk*{\fm(\pim)-\fmbar(\pi)} -\veps\\
                      &\geq \fm(\pim)-\fmbar(\pimbar) -\veps,
\end{align*}
where the second inequality uses \pref{lem:hellinger_to_value}. Hence,
for $\alpha'\ldef{}\veps+\deccreg(\cM,\Mbar)$, we have
$\cH_{p,\veps}(\Mbar)\subseteq\cMloc[\alpha'](\Mbar)$.

Now, let $p'$ be the minimizer for
$\deccreg(\cMloc[\alpha'](\Mbar),\Mbar)$. Set
$\pbar=\frac{1}{\Creg^2}p+\prn*{1-\frac{1}{\Creg^2}}p'$. Using that
\[\frac{1}{\Creg^2}\cdot\prn*{1-\frac{1}{\Creg^2}}^{-1}\leq{}1\] whenever
$\Creg\geq\sqrt{2}$, we have
\begin{align*}
  \deccreg[\veps/\Creg](\cM,\Mbar)
  &\leq{}\sup_{M\in\cH_{\pbar,\veps/\Creg}(\Mbar)}\En_{\pi\sim{}\pbar}\brk*{\fm(\pim)-\fm(\pi)}
  \\
  &\leq{}\sup_{M\in\cH_{p,\veps}(\Mbar)\cap\cH_{p',\veps}(\Mbar)}\En_{\pi\sim{}\pbar}\brk*{\fm(\pim)-\fm(\pi)} \\
  &\leq{}\frac{1}{\Creg^2}\sup_{M\in\cH_{p,\veps}(\Mbar)}\En_{\pi\sim{}p}\brk*{\fm(\pim)-\fm(\pi)}
    +
\sup_{M\in\cH_{p,\veps}(\Mbar)\cap\cH_{p',\veps}(\Mbar)}\En_{\pi\sim{}p'}\brk*{\fm(\pim)-\fm(\pi)}\\
  &=\frac{1}{\Creg^2}\deccreg(\cM,\Mbar)
    +
    \sup_{M\in\cH_{p,\veps}(\Mbar)\cap\cH_{p',\veps}(\Mbar)}\En_{\pi\sim{}p'}\brk*{\fm(\pim)-\fm(\pi)}\\
  &\leq{}\frac{1}{\Creg^2}\deccreg(\cM,\Mbar)
    +
    \sup_{M\in\cMloc[\alpha'](\Mbar)\cap\cH_{p',\veps}(\Mbar)}\En_{\pi\sim{}p'}\brk*{\fm(\pim)-\fm(\pi)}\\
    &=\frac{1}{\Creg^2}\deccreg(\cM,\Mbar)
      + \deccreg(\cMloc[\alpha'](\Mbar),\Mbar).
\end{align*}
\end{proof}

We now complete the proof of \cref{prop:constrained_implies_localization_regret}. Under the assumed growth condition, we have $\deccreg[\vep/\Creg](\MM, \Mbar) \geq \frac{1}{\creg^2} \deccreg[\vep](\MM, \Mbar)$, so rearranging the result of \cref{lem:constrained-localization-core} (with $\alpha' = \alpha(\vep)$) yields
\[
\deccreg[\veps/\Creg](\cM,\Mbar)\leq{}\prn*{\frac{1}{\creg^2}-\frac{1}{\Creg^2}}^{-1}\cdot \deccreg(\cMloc[\alpha(\vep)](\Mbar),\Mbar).
\]
The result in the proposition follows by rescaling $\vep$ to $\vep \cdot \Creg$. 
\end{proof}

For use later on, we also prove the following variant of
                                \cref{prop:constrained_implies_localization_regret},
                                which concerns the \CompShort for the
                                class $\cM\cup\crl{\Mbar}$.
\begin{proposition}[Localization for regret \CompShort; global version]
  \label{prop:constrained-localization-allm}
  Consider any set $\MM' \subseteq \cMall$, 
and assume that the strong regularity condition
  \cref{eq:regret_localization_growth_global} is satisfied relative to $\MM'$. Then, for all $\veps>0$, letting $\alphaveps\ldef\Creg\cdot\veps+\sup_{\Mbar \in \MM'}\deccreg[\Creg\cdot\veps](\cM\cup\{\Mbar\},\Mbar)\leq\Creg^2\cdot\prn*{\veps+\sup_{\Mbar\in\cM'}\deccreg[\veps](\cM\cup\{\Mbar\},\Mbar)}$, we have
\begin{align*}
\sup_{\Mbar \in \MM'} \deccreg(\cM\cup \{ \Mbar \},\Mbar)\leq{}\Cloc\cdot \sup_{\Mbar \in \MM'} \deccreg[\Creg\cdot\veps](\cMloc[\alphaveps](\Mbar) \cup \{ \Mbar \},\Mbar),
\end{align*}
where $\Cloc\ldef{}\prn*{\frac{1}{\creg^2} - \frac{1}{\Creg^2}}^{-1}$.
\end{proposition}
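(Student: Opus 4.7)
The plan is to mirror the proof of \pref{prop:constrained_implies_localization_regret} line by line, but carry a supremum over $\Mbar \in \cM'$ through each step. The key observation is that the argument in \pref{lem:constrained-localization-core} goes through verbatim for the enlarged class $\cM \cup \{\Mbar\}$ in place of $\cM$: fixing $\Mbar \in \cM'$, choosing $p$ to be the minimizer of $\deccreg[\veps](\cM\cup\{\Mbar\},\Mbar)$, and applying \pref{lem:hellinger_to_value} shows that every $M \in \cH_{p,\veps}(\Mbar) \cap (\cM\cup\{\Mbar\})$ lies in $\cMloc[\alpha'_{\veps,\Mbar}](\Mbar) \cup \{\Mbar\}$, where $\alpha'_{\veps,\Mbar} \ldef \veps + \deccreg[\veps](\cM\cup\{\Mbar\},\Mbar)$. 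Mixing $p$ with the minimizer $p'$ for $\deccreg[\veps](\cMloc[\alpha'_{\veps,\Mbar}](\Mbar)\cup\{\Mbar\},\Mbar)$ via $\pbar = \tfrac{1}{\Creg^2} p + (1-\tfrac{1}{\Creg^2}) p'$ (which is valid because $\Creg \geq \sqrt{2}$) then yields
\begin{align*}
\deccreg[\veps/\Creg](\cM\cup\{\Mbar\},\Mbar) \leq \tfrac{1}{\Creg^2} \deccreg[\veps](\cM\cup\{\Mbar\},\Mbar) + \deccreg[\veps](\cMloc[\alpha'_{\veps,\Mbar}](\Mbar) \cup \{\Mbar\},\Mbar).
\end{align*}

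Next, since $\alpha'_{\veps,\Mbar} \leq \alpha'_\veps \ldef \veps + \sup_{\Mbar \in \cM'}\deccreg[\veps](\cM\cup\{\Mbar\},\Mbar)$ and the constrained DEC is monotone in both the model class and the radius, I can replace $\alpha'_{\veps,\Mbar}$ with $\alpha'_\veps$ on the right and take $\sup_{\Mbar \in \cM'}$ on both sides, yielding the global core inequality
\begin{align*}
\sup_{\Mbar\in\cM'}\deccreg[\veps/\Creg](\cM\cup\{\Mbar\},\Mbar) \leq \tfrac{1}{\Creg^2}\sup_{\Mbar\in\cM'}\deccreg[\veps](\cM\cup\{\Mbar\},\Mbar) + \sup_{\Mbar\in\cM'}\deccreg[\veps](\cMloc[\alpha'_\veps](\Mbar)\cup\{\Mbar\},\Mbar).
\end{align*}
The strong regularity condition \pref{eq:regret_localization_growth_global} gives $\sup_{\Mbar}\deccreg[\veps/\Creg](\cM\cup\{\Mbar\},\Mbar) \geq \tfrac{1}{\creg^2}\sup_{\Mbar}\deccreg[\veps](\cM\cup\{\Mbar\},\Mbar)$, and rearranging yields
\begin{align*}
\sup_{\Mbar\in\cM'}\deccreg[\veps](\cM\cup\{\Mbar\},\Mbar) \leq \Cloc \cdot \sup_{\Mbar\in\cM'}\deccreg[\veps](\cMloc[\alpha'_\veps](\Mbar)\cup\{\Mbar\},\Mbar).
\end{align*}

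Finally, to match the statement of the proposition, I upgrade $\veps$ to $\Creg\veps$ and $\alpha'_\veps$ to $\alpha(\veps)$ on the right via monotonicity: the DEC is non-decreasing in its radius parameter, and $\alpha'_\veps \leq \alpha(\veps) \ldef \Creg\veps + \sup_{\Mbar}\deccreg[\Creg\veps](\cM\cup\{\Mbar\},\Mbar)$, so $\deccreg[\veps](\cMloc[\alpha'_\veps](\Mbar)\cup\{\Mbar\},\Mbar) \leq \deccreg[\Creg\veps](\cMloc[\alpha(\veps)](\Mbar)\cup\{\Mbar\},\Mbar)$. The claimed upper bound $\alpha(\veps) \leq \Creg^2(\veps + \sup_{\Mbar}\deccreg[\veps](\cM\cup\{\Mbar\},\Mbar))$ follows since $\Creg \geq 1$ and the regularity condition gives $\sup_{\Mbar}\deccreg[\Creg\veps] \leq \creg^2 \sup_{\Mbar}\deccreg[\veps] \leq \Creg^2 \sup_{\Mbar}\deccreg[\veps]$.

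The main place requiring care is the global analog of \pref{lem:constrained-localization-core}: because $\alpha'_{\veps,\Mbar}$ depends on $\Mbar$, one must replace it by the $\Mbar$-independent quantity $\alpha'_\veps$ \emph{before} taking the supremum on both sides, which is what lets the strong regularity condition (stated globally over $\cM'$) close the loop. All other steps are routine monotonicity arguments in the spirit of the per-$\Mbar$ proof.
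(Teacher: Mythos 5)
Your proof is correct and follows essentially the same route as the paper's: apply \pref{lem:constrained-localization-core} to $\cM\cup\{\Mbar\}$ for each $\Mbar\in\cM'$, pass from the $\Mbar$-dependent radius $\alpha'_{\veps,\Mbar}$ to the $\Mbar$-independent $\alpha'_\veps$ by monotonicity, take suprema, invoke the global regularity condition, and rearrange. The only (cosmetic) divergence is in the last step — the paper arrives at an inequality with $\veps/\Creg$ on the left and then substitutes $\veps\mapsto\Creg\veps$ throughout, whereas you first derive the sharper bound at scale $\veps$ and then weaken the right-hand side via monotonicity in the radius and localization parameter — but both yield the same final statement and constant $\Cloc$.
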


\begin{proof}[\pfref{prop:constrained-localization-allm}]
  Define $\alpha := \vep + \sup_{\Mbar \in \MM'} \deccreg[\vep](\MM \cup \{ \Mbar \} , \Mbar)$. 
  Applying \cref{lem:constrained-localization-core} to the class $\MM \cup \{ \Mbar \}$ for each choice of $\Mbar \in \MM'$, we obtain that
  \begin{equation}
    \label{eq:mprime-selfbounding}
\sup_{\Mbar\in\cM'}  \deccreg[\veps/\Creg](\cM\cup\{\Mbar\},\Mbar)
\leq  \frac{1}{\Creg^2}\sup_{\Mbar\in\cM'}\deccreg(\cM\cup\{\Mbar\},\Mbar)
    + \sup_{\Mbar\in\cM'}\deccreg(\cMloc(\Mbar)\cup\{\Mbar\},\Mbar).
  \end{equation}

The growth condition \cref{eq:regret_localization_growth_global} gives that
\[
  \sup_{\Mbar\in\cM'}\deccreg[\veps/\Creg](\cM\cup \{ \Mbar \},\Mbar)\geq\frac{1}{\creg^2}\sup_{\Mbar\in\cM'}\deccreg(\cM \cup \{ \Mbar \},\Mbar).
\]
Then rearranging \cref{eq:mprime-selfbounding} yields
\begin{equation}
  \label{eq:constrained_implies_localization_general}
\sup_{\Mbar\in\cM'}\deccreg[\veps/\Creg](\cM\cup\{\Mbar\},\Mbar)\leq{}\prn*{\frac{1}{\creg^2}-\frac{1}{\Creg^2}}^{-1}\cdot\sup_{\Mbar\in\cM'}\deccreg(\cMloc(\Mbar) \cup \{ \Mbar \},\Mbar).
\end{equation}
The result in the proposition statement follows by %
replacing $\vep$ with $\vep \cdot \Creg$. 
\end{proof}

\begin{proof}[\pfref{prop:constrained_offset_localized}]
  Fix $\Mbar \in \cMall$ and $\veps>0$, and let $p \in \Delta(\Pi)$
  be a minimizer for $\deccreg[\veps](\cMu,\Mbar)$.

  Fix any $M \in \MM_\alpha(\Mbar)\cup\crl{\Mbar}$. We bound the regret under
  $p$ by considering two cases.

\noindent\emph{Case 1.} If $\E_{\pi \sim p} \left[ \hell{M(\pi)}{\Mbar(\pi)} \right] \leq \vep^2$, then $M \in \MH_{p, \vep}(\Mbar)$, and it follows that $\E_{\pi \sim p} \left[ \fm(\pim) - \fm(\pi) \right] \leq \deccreg[\vep](\cMu, \Mbar)$.

\noindent\emph{Case 2.} For the second case, suppose that $\E_{\pi \sim p} \left[ \hell{M(\pi)}{\Mbar(\pi)} \right] > \vep^2$.  We now compute
  \begin{align}
    \E_{\pi \sim p} \left[ \fm(\pim) - \fm(\pi) \right] & \leq \alpha + \E_{\pi \sim p} \left[ \fmbar(\pimbar) - \fm(\pi) \right]\nonumber\\
    & \leq \alpha + \E_{\pi \sim p} \left[ \fmbar(\pimbar) - \fmbar(\pi) \right] + \frac{1}{2\gamma} + \frac{\gamma}{2} \cdot \E_{\pi \sim p} \left[ (\fm(\pi) - \fmbar(\pi))^2 \right]\nonumber\\
    & \leq \alpha + \deccreg[\vep](\cMu, \Mbar) + \frac{1}{2\gamma} + \frac{\gamma}{2} \cdot \E_{\pi \sim p} \left[ \hell{M(\pi)}{\Mbar(\pi)} \right]\nonumber,
  \end{align}
  where the second inequality uses Young's inequality.
  Rearranging, we obtain
  \begin{align}
    \E_{\pi \sim p} \left[ \fm(\pim) - \fm(\pi) - \gamma \cdot \hell{M(\pi)}{\Mbar(\pi)} \right] & \leq \alpha + \deccreg[\vep](\cMu, \Mbar) + \frac{1}{2\gamma} - \frac{\gamma}{2} \cdot \E_{\pi \sim p} \left[ \hell{M(\pi)}{\Mbar(\pi)}\right]\nonumber\\
    & \leq \alpha +  \deccreg[\vep](\cMu, \Mbar) +
      \frac{1}{2\gamma} - \frac{\gamma \vep^2}{2}.
  \end{align}
  Recalling that $M$ can be any model in $\MM_\alpha(\Mbar)\cup\crl{\Mbar}$, we obtain
  \begin{align}
    \decoreg(\cMloc(\Mbar)\cup\crl{\Mbar}, \Mbar) & \leq \max \left\{
                                     \deccreg[\vep](\cMu, \Mbar),\
                                     \alpha + \frac{1}{2\gamma} +
                                     \deccreg[\vep](\cMu, \Mbar) -
                                     \frac{\gamma \vep^2}{2}
                                     \right\}\nonumber \\
    & = \deccreg[\vep](\cMu, \Mbar) + \max \left\{0,\ \alpha + \frac{1}{2\gamma}  - \frac{\gamma \vep^2}{2} \right\}.\nonumber
  \end{align}
\end{proof}

\begin{proof}[\pfref{prop:equivalence}]
  We begin with the upper bound on the constrained \CompShort. Let $\veps>0$ be fixed. Using
  \pref{prop:constrained_implies_localization_regret}, we have
  \begin{align*}
    \deccreg(\cMu,\Mbar)
    \leq     \Cloc\cdot\deccreg[\Creg\veps](\cMloc(\Mbar)\cup\crl{\Mbar},\Mbar),
  \end{align*}
  where $\alpha=\Creg\veps +
  \deccreg[\Creg\veps](\cM\cup\crl{\Mbar},\Mbar)\leq{}\Creg\veps +
  \creg^2\deccreg[\veps](\cM\cup\crl{\Mbar},\Mbar)
  \leq{}\Creg^2\prn*{
    \veps + \deccreg[\veps](\cM,\Mbar)}$ (see \pref{def:growth}). Next, for all $\gamma>0$, using \pref{prop:constrained_offset_union}, we have
  \[
    \deccreg[\Creg\veps](\cMloc(\Mbar)\cup\crl{\Mbar},\Mbar)
    \leq 8\cdot\prn*{\decoreg(\cMloc(\Mbar),\Mbar)\vz +
      \Creg^2\gamma\veps^2} + 7\Creg\veps,
  \]
  so that
  \[
    \deccreg[\veps](\cMu,\Mbar)
    \leq 8\Cloc\cdot\prn*{\decoreg(\cMloc(\Mbar),\Mbar)\vz +
      \Creg^2\gamma\veps^2} + 7\Cloc\Creg\veps.
  \]
  We now set
  \begin{align*}
    \gammastar = (16\Creg^2\Cloc)^{-1}\cdot{}\frac{\veps + \deccreg(\cMu,\Mbar)}{\veps^2},
  \end{align*}
  which satisfies $\gammastar \geq \frac{1}{16\Creg^2\Cloc \cdot \vep}$ and gives
  \begin{align*}
    \deccreg[\veps](\cMu,\Mbar)
    \leq 8\Cloc\cdot\decoreg[\gammastar](\cMloc(\Mbar),\Mbar)\vz +
    \frac{1}{2}\cdot\deccreg(\cM\cup\Mbar,\Mbar)+ (7\Cloc\Creg+1/2)\veps,
  \end{align*}
  or after rearranging,
    \begin{align*}
    \deccreg[\veps](\cMu,\Mbar)
    \leq 16\Cloc\cdot\decoreg[\gammastar](\cMloc(\Mbar),\Mbar)\vz 
      + 2(7\Cloc\Creg+1/2)\veps.
    \end{align*}
    In addition, we have
    \begin{align*}
      \alpha \leq \Creg^2 (16\Creg^2\Cloc)\cdot\gammastar\veps^2=\Creg^2 (16\Creg^2\Cloc)\cdot\alpha(\veps,\gammastar).
    \end{align*}
    To conclude, we over-bound by maximizing over $\gammastar\geq \frac{1}{16\Creg^2 \Cloc \cdot \vep}$.
    
  For the lower bound on the constrained \CompShort, it is an immediate consequence of
  \pref{prop:constrained_offset_localized} that for all $\veps>0$ and
  $\gamma>\sqrt{2}\cdot\veps^{-1}$, letting
  $\alpha=\frac{\gamma\veps^2}{4}$, 
\begin{align*}
  \decoreg(\cMloc(\Mbar),\Mbar) \leq \deccreg(\cMu,\Mbar) +
  \max\crl*{\alpha + \frac{1}{2\gamma}-\frac{\gamma\veps^2}{2},0} = \deccreg(\cMu,\Mbar),
\end{align*}
for all $\Mbar\in\cMall$. Since we are free to maximize over $\gamma\geq\sqrt{2}\veps^{-1}$, this establishes the result.
  
\end{proof}

\noah{for a later version: would be cleaner just to slightly modify
  below example so it coincides with the example of Example 5.1 here,
  i.e., should be able to make the observing arm revealing with
  probability $\alpha$ (though we actually have to do new work here
  since we're claiming something different about it)}
\dfcomment{agreed -- happy to punt for now}
\begin{proof}[\pfref{prop:regret_union_counterexample}]
  Consider the following model class $\MM$, parametrized by $\alpha \in (0,1/2)$:
  \begin{enumerate}
  \item $\Pi = \bbN \cup \{\pir \}$.
  \item We have $\MM=\crl*{M_a}_{a\in\bbN}$. For each $a \in \bbN$,
    the model $M_a \in \MM$ has rewards and observations defined as follows:
    \begin{enumerate}
    \item For $\pi \in \bbN$, $f\sups{M_a}(\pi) = \frac 12 + \alpha \cdot \One{\pi = a}$, while $f\sups{M_a}(\pir) = 0$.
    \item For all $\pi \in \Pi$, we have $r = f\sups{M_a}(\pi)$ almost surely under $r \sim M_a(\pi)$.
    \item For $\pi \in \bbN$, we receive the observation $o = \perp$.
    \item Selecting $\pir$ gives the observation $o \in \{0,1\}^\bbN$, where for each $i \in \bbN$, $o_i \sim \Ber(1/2 + \alpha \cdot \One{a=i})$ is drawn independently (thus, we have $\MO = \{0,1\}^\bbN \cup \{ \perp \}$).
    \end{enumerate}
  \end{enumerate}

  \paragraphi{Upper bound}
  We will show that there are constants $c,C > 0$ so that, for $\vep > 0$,
  \begin{align}
\sup_{\Mbar \in \co(\MM)} \deccreg[\vep](\MM, \Mbar) \leq \alpha \cdot \One{\vep \geq \sqrt{c}\cdot \alpha} \leq C \cdot \vep\nonumber.
  \end{align}
Since $\sup_{\Mbar \in \co(\MM)} \deccreg(\MM, \Mbar) \leq \alpha$ for all $\vep \geq 0$ (as the choice of $p = \indic_{1}$ satisfies $\E_{\pi \sim p}[\fm(\pim) - \fm(\pi)] \leq \alpha$ for all $M \in \MM$), it suffices to show that for $\vep < \sqrt{c} \cdot \alpha$ and for any $\Mbar \in \co(\MM)$, we have $\deccreg(\MM, \Mbar) = 0$.
  
  Given $\Mbar \in \co(\MM)$ and $\vep \leq 1/2$, we can write $\Mbar(\pi) = \E_{M' \sim \nu}[M'(\pi)]$ for some $\nu \in \Delta(\MM)$. We define a distribution $p \in \Delta(\Pi)$ according to the following cases:
  \begin{itemize}
  \item If $\nu$ puts mass at most $2/3$ on each model $M \in \MM$, we define $p = \indic_{\pir}$.
  \item Otherwise, there is a unique choice for $a^\st \in [A]$ so that $\nu(M_{a^\st}) \geq 2/3$, and in this case, we define $p = \indic_{a^\st}$. 
  \end{itemize}
  Now consider any model $M_a \in \MM$. Consider the first case above,
  and write $o \sim M_a(\pir)$ and $\wb{o} \sim \Mbar(\pir)$. Note
  that $o_a \sim \Ber(1/2 + \alpha)$, while $\wb{o}_a \sim \Ber(1/2 +
  \beta)$ for some $\beta \leq \frac{2}{3} \cdot \alpha$. It follows that
  \begin{align}
\E_{\pi \sim p} \left[ \hell{M_a(\pi)}{\Mbar(\pi)} \right] = \hell{M_a(\pir)}{\Mbar(\pir)} \geq \hell{\Ber(1/2 + \alpha)}{\Ber(1/2 + \beta)} \geq c \cdot \alpha^2\nonumber,
  \end{align}
  for a numerical constant $c > 0$. Since $c\alpha^2 > \vep^2$, it follows that $M_a \not \in \MH_{p,\vep}(\Mbar)$; since the choice of $M_a$ is arbitrary, we conclude that $\MH_{p,\vep}(\Mbar)=\emptyset$ in this case.

  Now, consider the second case above. For $a = a^\st$, we have that
$ 
\E_{\pi \sim p} \left[ f\sups{M_a}(\pi\subs{M_a}) - f\sups{M_a}(\pi) \right] = 0.
$
For $a\neq\astar$, we have that
$\bbP_{r\sim{}M_a(\astar)}(r\neq{}1/2)=0$, while
$\bbP_{r\sim{}\Mbar(\astar)}(r\neq{}1/2)\geq{}2/3$. As a result,
\begin{align}
\E_{\pi \sim p} \left[ \hell{M_a(\pi)}{\Mbar(\pi)} \right] = \hell{M_a(a^\st)}{\Mbar(a^\st)} \geq \hell{\Ber(0)}{\Ber(2/3)} \geq 4/9 > \vep^2,
\end{align}
meaning that $M_a \not \in \MH_{p,\vep}(\Mbar)$.

\paragraphi{Lower bound}
Pick $A\geq{}2$, and let $\Mbar=\Unif(\crl*{M_a}_{a\in\brk{A}})$. Given $p\in\Delta(\Pi)$, let $a=\argmin_{a\in\brk{A}}p(a)$, so that
$p(a)\leq{}1/A$. We observe that
\begin{align*}
  \En_{\pi\sim{}p}\brk*{f\sups{M_a}(\pi\subs{M_a}) - f\sups{M_a}(\pi)}
  \geq{} \alpha(1-1/A) \geq{} \alpha/2
\end{align*}
and
\begin{align*}
  \En_{\pi\sim{}p}\brk*{\Dhels{M_a(\pi)}{\Mbar(\pi)}}
  \leq{} p(\picirc)\cdot\Dhels{M_a(\picirc)}{\Mbar(\picirc)} + 2p(a) + \sum_{i\in\brk{A},i\neq{}a}p(i) \Dhels{M_a(i)}{\Mbar(i)}.
\end{align*}
For all $i\neq{}a$, we have
$\Dhels{M_a(i)}{\Mbar(i)}\leq{}\Dhels{\Ber(0)}{\Ber(1/A)}\leq{}2/A$,
so that $\sum_{i\in\brk{A},i\neq{}a}p(i)
\Dhels{M_a(i)}{\Mbar(i)}\leq{}2/A$. As long as $\alpha$ is a
sufficiently small numerical constant, we also have, using the tensorization property of the squared Hellinger distance,
\begin{align*}
  \Dhels{M_a(\picirc)}{\Mbar(\picirc)}
  &\leq{} \Dhels{\Ber(1/2+\alpha)}{\Ber(1/2+\alpha/A)}
  + (A-1)\cdot \Dhels{\Ber(1/2)}{\Ber(1/2+\alpha/A)}\\
  &\leq{} c\cdot\prn*{ \alpha^2 + (A-1)\cdot\frac{\alpha^2}{A^2}}\\
  &\leq C\cdot\alpha^2,
\end{align*}
where $C,c>0$ are numerical constants. Altogether, this gives
\begin{align*}
  \En_{\pi\sim{}p}\brk*{\Dhels{M_a(\pi)}{\Mbar(\pi)}}
  \leq{} p(\picirc)\cdot{}C\alpha^2 + 4/A.
\end{align*}
We choose $A$ large enough such that $4/A\leq{}\veps^2/2$. There are
now two cases to consider.
\begin{itemize}
\item If $p(\picirc)\leq \frac{\veps^2}{2C\alpha^2}$, then
  $M_a\in\cH_{p,\veps}(\Mbar)$, and
  \[
    \En_{\pi\sim{}p}\brk*{f\sups{M_a}(\pi\subs{M_a}) -
      f\sups{M_a}(\pi)} \geq \frac{\alpha}{2}.
  \]
\item If this is not the case, we have
  \begin{align*}
    \En_{\pi\sim{}p}\brk*{\fmbar(\pimbar) - \fmbar(\pi)}
    \geq{} \frac{1}{2}p(\picirc) \geq{} 
    \frac{\veps^2}{4C\alpha^2} \wedge{} 1.
  \end{align*}
\end{itemize}
By combining these cases, we conclude that there are numerical
constants $C,c>0$ such that
\begin{align*}
  \deccreg(\cMu,\Mbar)
  \geq{} C\cdot{}\alpha\indic\crl[\big]{\veps>c\cdot{}\alpha^{3/2}}.
\end{align*}
In particular, choosing $\alpha\propto\veps^{2/3}$ gives
$\deccreg(\cMu,\Mbar)\geq\bigom(\veps^{2/3})$, while $\deccreg(\cM,\Mbar)\leq\bigoh(\veps)$.

\end{proof}

\begin{proof}[\pfref{prop:pac_union}]
By \pref{prop:greedy_equivalence}, we have that for all $\veps>0$ and $\Mbar\in\cMall$,
  \begin{align*}
    \deccpac(\cM\cup\crl{\Mbar},\Mbar)
    \leq     \deccpacg(\cM\cup\crl{\Mbar},\Mbar)
    =  \deccpacg(\cM,\Mbar)
    \leq{}     \deccpac[\sqrt{3}\veps](\cM,\Mbar) + 4\veps.
  \end{align*}
\end{proof}

\begin{proof}[\pfref{prop:conv_equivalence_regret}]
By \pref{ass:minimax}, we have
  \begin{align*}
      \decoreg(\cM,\Mbar)
  = \sup_{\mu\in\Delta(\cM)}\inf_{p\in\Delta(\Pi)}\En_{\pi\sim{}p,M\sim\mu}\brk*{\fm(\pim)-\fm(\pi)-\gamma\cdot\Dhels{M(\pi)}{\Mbar(\pi)}}.
  \end{align*}
  Observe that since Hellinger distance satisfies the triangle
  inequality, we have that for all $\pi\in\Pi$,
  \begin{align*}
    \En_{M,M'\sim\mu}\brk*{\Dhels{M(\pi)}{M'(\pi)}}
      &\leq{}     2\En_{M\sim\mu}\brk*{\Dhels{M(\pi)}{\Mbar(\pi)}}
    +    2\En_{M'\sim\mu}\brk*{\Dhels{M'(\pi)}{\Mbar(\pi)}} \\
    &= 4\En_{M\sim\mu}\brk*{\Dhels{M(\pi)}{\Mbar(\pi)}}.
  \end{align*}
  As a result, we have
  \begin{align*}
    \decoreg(\cM,\Mbar)
    &\leq
\sup_{\mu\in\Delta(\cM)}\inf_{p\in\Delta(\Pi)}\En_{\pi\sim{}p,M\sim\mu}\brk*{\fm(\pim)-\fm(\pi)-\frac{\gamma}{4}\cdot\En_{M'\sim\mu}\Dhels{M(\pi)}{M'(\pi)}}
    \\
    &\leq
      \sup_{\nu\in\Delta(\cM)}\sup_{\mu\in\Delta(\cM)}\inf_{p\in\Delta(\Pi)}\En_{\pi\sim{}p,M\sim\mu}\brk*{\fm(\pim)-\fm(\pi)-\frac{\gamma}{4}\cdot\En_{M'\sim\nu}\Dhels{M(\pi)}{M'(\pi)}} \\
          &\leq
\sup_{\nu\in\Delta(\cM)}\inf_{p\in\Delta(\Pi)}\sup_{M\in\cM}\En_{\pi\sim{}p}\brk*{\fm(\pim)-\fm(\pi)-\frac{\gamma}{4}\cdot\En_{M'\sim\nu}\Dhels{M(\pi)}{M'(\pi)}}
    \\
& = \sup_{\nu\in\Delta(\cM)}\decoregr[\gamma/4](\cM,\nu).    
  \end{align*}
  For the second inequality in \pref{eq:conv_equivalence_regret1}, it follows immediately
  from convexity of squared Hellinger distance that
  $\sup_{\nu\in\Delta(\cM)}\decoregr(\cM,\nu)\leq\sup_{\Mbar\in\conv(\cM)}\decoreg(\cM,\Mbar)$.

  We now prove \pref{eq:conv_equivalence_regret2}. Let $\veps>0$ be
  given. Since the strong regularity condition is satisfied relative to $\cMall$,
  \pref{prop:constrained-localization-allm} with $\MM' = \cMall$ implies that
  
  \begin{align*}
\sup_{\Mbar\in\cMall}\deccreg(\cMu,\Mbar)&\leq{}\Cloc\cdot\sup_{\Mbar\in\cMall}\deccreg[\Creg\cdot\veps](\cMloc[\alpha](\Mbar)\cup\crl{\Mbar},\Mbar), %
  \end{align*}
  where $\alpha\ldef{}\Creg^2\cdot\prn*{\veps +
  \sup_{\Mbar\in\cMall}\deccreg[\veps](\cMu,\Mbar)}$. Now, let
  \[
    \cMtil(\Mbar) = \crl*{M\in\cM\mid{} \fm(\pim)\leq\fmbar(\pimbar) +
      \alpha, \fmbar(\pimbar) \leq \fm(\pimbar) + \alpha}.
\]
Using
  \pref{lem:localization_one_sided}, along with the fact that
  $\alpha \geq \Creg^2 \cdot \vep\geq{}\sqrt{2} \Creg \cdot \vep$ since $\Creg \geq \sqrt 2$, we have that
  \begin{align*}
    \sup_{\Mbar\in\cMall}\deccreg[\Creg \cdot \veps](\cMloc[\alpha](\Mbar)\cup\crl{\Mbar},\Mbar)
    \leq{} \sup_{\Mbar\in\cMall}\deccreg[\sqrt{2}\Creg \cdot \veps](\cMtil[\alpha](\Mbar)\cmb,\Mbar) + \sqrt{2}\Creg \veps.
  \end{align*}
  Let $\Mtil$ be the model in $\cMall$ that attains the maximum in the
  \rhs above, and set $\cM'=\cMtil(\Mtil)$. Let $\gamma>0$ be fixed. Using \pref{prop:constrained_offset_union},
  we have
  \begin{align*}
    \deccreg[\sqrt{2}\Creg\veps](\cM'\cup\crl{\Mtil},\Mtil)
    \leq{}
    8\sup_{\Mbar\in\cMall}\decoreg(\cM',\Mbar)\vz
    + 16\Creg^2 \gamma\veps^2 + 7\sqrt{2}\Creg \veps.
  \end{align*}
  By \eqref{eq:conv_equivalence_regret1}, we have
  \begin{align*}
    \sup_{\Mbar\in\cMall}\decoreg(\cM',\Mbar)
    \leq{}     \sup_{\nu\in\Delta(\cM')}\decoregr[\gamma/4](\cM',\nu).
  \end{align*}
  Consider any $\nu\in\Delta(\cM')$ and let
  $\Mbarnu\ldef{}\En_{M'\sim\nu}\brk*{M'}\in\conv(\cM')\subseteq\conv(\cM)$. Observe that if
  $M\in\cM'=\cMtil(\Mtil)$, then
  \begin{align*}
    \fm(\pim) \leq f\sups{\Mtil}(\pi\subs{\Mtil}) + \alpha
    \leq \En_{M'\sim\nu}\brk*{\fmp(\pimtil)} + 2\alpha
    \leq \max_{\pi\in\Pi}\En_{M'\sim\nu}\brk*{\fmp(\pi)} + 2\alpha
    = \fmbarnu(\pimbarnu) + 2\alpha.
  \end{align*}
Hence, $\cM'\subseteq\cMloc[2\alpha](\Mbarnu)$, and we have the upper
bound
\begin{align*}
\sup_{\nu\in\Delta(\cM')}\decoregr[\gamma/4](\cM',\nu) \leq   \sup_{\nu\in\Delta(\cM)}\decoregr[\gamma/4](\cMloc[2\alpha](\Mbarnu),\nu).
\end{align*}
For any $\nu\in\Delta(\cM)$, \pref{prop:constrained_offset_localized_rand} implies that
\begin{align*}
  \decoregr[\gamma/4](\cMloc[2\alpha](\Mbarnu),\nu)
  \leq{} \deccregr[4\sqrt{\alpha/\gamma}](\cM\cup\crl{\Mbarnu},\nu) + \frac{2}{\gamma}.
\end{align*}
Putting everything together,
this establishes that for all $\gamma>0$,
\begin{align*}
  \sup_{\Mbar\in\cMall}\deccreg(\cMu,\Mbar)
  &\leq{}
    \Cloc\cdot\prn*{8\sup_{\nu\in\Delta(\cM)}\deccregr[4\sqrt{\alpha/\gamma}](\cM\cup\crl{\Mbarnu},\nu)
    +8\sqrt{2}\Creg\veps + 16\gamma\Creg^2\veps^2 + \frac{16}{\gamma}} \\
    &\leq{}
    \Cloc\cdot\prn*{8\sup_{\nu\in\Delta(\cM)}\deccregr[4\sqrt{\alpha/\gamma}](\cM\cup\crl{\Mbarnu},\nu)
      + 24\Creg^2 \gamma\veps^2 + \frac{24}{\gamma}}.
\end{align*}
We choose $\gamma=\frac{1}{24\Cloc\Creg^4}\cdot\frac{\alpha}{\veps^2}$. Since
$\veps^2/\alpha\leq\veps$, this gives
\begin{align*}
  \sup_{\Mbar\in\cMall}\deccreg(\cMu,\Mbar)
&\leq  c_1\cdot\sup_{\nu\in\Delta(\cM)}\deccregr[c_2\veps](\cM\cup\crl{\Mbarnu},\nu)
                                             +
                                             \frac{1}{2\Creg^2}\alpha
                                             + c_4\veps,\\
  &\leq  c_1\cdot\sup_{\nu\in\Delta(\cM)}\deccregr[c_2\veps](\cM\cup\crl{\Mbarnu},\nu)
                                             + \frac{1}{2}\sup_{\Mbar\in\cMall}\deccreg(\cMu,\Mbar) + c_3\veps,
\end{align*}
where $c_1,c_2,c_3,c_4>0$ are constants that depend only on $\Creg$ and
$\Cloc$. Rearranging yields the first inequality in
\eqref{eq:conv_equivalence_regret2}; the second inequality now follows
from Jensen's inequality.

\end{proof}

\section{Omitted Proofs from \creftitle{sec:related}}
\label{app:related}

  \begin{proof}[\pfref{prop:improvement}]
The lower bound is an immediate corollary of \pref{prop:equivalence},
so let us prove the upper bound. Let $\veps>0$ be fixed. Using
    \pref{prop:constrained_implies_localization_regret}, we have
    \begin{align*}
      \deccreg(\cM,\Mbar)
      \leq     \Cloc\cdot\deccreg[\Creg\veps](\cMloc(\Mbar),\Mbar),
    \end{align*}
    where $\alpha=\Creg^2\cdot\prn*{\veps +
      \deccreg[\veps](\cM,\Mbar)}$. Recall that we assume
    $\Creg,\Cloc=\bigoh(1)$. Hence, for all $\gamma>0$ be fixed,
    using \pref{prop:constrained_offset_union}, we have
    \[
      \deccreg[\Creg\veps](\cMloc(\Mbar),\Mbar) \leq
      \bigoh\prn*{\decoreg(\cMloc(\Mbar),\Mbar)\vee{}0 + \gamma\veps^2+\veps}.
    \]
\pref{prop:constrained_offset_union} also gives
    \begin{align*}
      \alpha = \bigoh(\veps + \deccreg[\veps](\cM,\Mbar))
      \leq \bigoh(\veps + \decoreg(\cM,\Mbar)\vee{}0 + \gamma\veps^2)
      \leq \bigoh(\decoreg(\cM,\Mbar)\vee{}0 + \gamma\veps^2 + \gamma^{-1})=\alphaupperabs,
    \end{align*}
    where the second inequality is AM-GM. This establishes the result.
  \end{proof}

\begin{proof}[Proof for \pref{ex:upper_improvement}]%
  
  We first lower bound the quantity in \eqref{eq:upper_old}, then
  prove an upper bound on the regret of \etdp.
\paragraphi{Lower bound on offset \CompShort and regret bound from \eqref{eq:upper_old}}
We start with a basic lower bound on the offset \CompShort for the
class $\cMab$.
\begin{lemma}
  \label{lem:ex_lb}
  Let $\alpha\in(0,1/4)$, $\beta\in(0,1)$ and $A\geq{}2$ be given. For
  all $\gamma>0$,
  \begin{align*}
    \decoreg(\cMab,\Mtil)
    = \decoreg(\cMab_{\alpha}(\Mtil),\Mtil)
    \geq{}
        \frac{\alpha}{2+8\gamma\beta}
    -  4\gamma/A.
  \end{align*}
\end{lemma}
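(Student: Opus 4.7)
The plan is to directly evaluate the offset DEC as an inf-sup problem. First, since $f\sups{\Mia}(\pi\subs{\Mia}) = 1/2 + \alpha$ and $\fmtila(\pimtila) = 1/2$ for every $\Mia$, every model in $\cMab$ lies in the localized subclass $\cMab_\alpha(\Mtil)$; hence $\cMab_\alpha(\Mtil) = \cMab$ and the claimed equality is immediate. For the inequality I would fix an arbitrary $p \in \Delta(\Pi)$ and write $p_a := p(\picirc)$.

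I plan to use two explicit adversary choices to lower bound $\sup_{M\in\cMab}\En_{\pi\sim p}[\gm(\pi) - \gamma\Dhels{M(\pi)}{\Mtil(\pi)}]$. By pigeonhole there exists $i^{\ast}\in[A]$ with $p_{i^{\ast}} \leq 1/A$; taking $M = M_{\alpha,i^{\ast}}$, a direct computation (using that $M_{\alpha,i^{\ast}}(i^{\ast})$ and $\Mtil(i^{\ast})$ are Diracs at distinct reward values, hence have squared Hellinger distance $2$; that Hellinger vanishes at $\pi\in[A]\setminus\{i^{\ast}\}$; and that an explicit evaluation of the pmfs on $\{1,\ldots,A,\perp\}$ gives $\Dhels{M_{\alpha,i^\ast}(\picirc)}{\Mtil(\picirc)} = 2\beta(1 - 1/\sqrt{A}) \leq 2\beta$) yields
\[
\En_{\pi\sim p}[\gm(\pi)] = \alpha(1 - p_{i^{\ast}}) + p_a/2, \qquad \En_{\pi\sim p}[\Dhels{M(\pi)}{\Mtil(\pi)}] \leq 2 p_{i^{\ast}} + 2\beta p_a.
\]
Combining these and using $p_{i^{\ast}} \leq 1/A$ gives the lower bound $A_0 + p_a \mu$, where $A_0 := \alpha(1-1/A) - 2\gamma/A$ and $\mu := 1/2 - 2\gamma\beta$. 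The second choice $M = \Mtil$ contributes $p_a/2$ (the Hellinger term vanishes). Taking the max and infimum over $p_a \in [0,1]$ reduces the problem to
\[
\decoreg(\cMab,\Mtil) \;\geq\; \inf_{p_a\in[0,1]} \max\{A_0 + p_a\mu,\; p_a/2\}.
\]

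To finish I would split on the sign of $\mu$. Note first that since $A \geq 2$ we have $A_0 \geq \alpha/2 - 2\gamma/A$. When $\mu \geq 0$ (i.e.\ $\gamma\beta \leq 1/4$) both lines are nondecreasing in $p_a$, the infimum is at $p_a = 0$ and equals $A_0 \geq \alpha/2 - 2\gamma/A \geq \alpha/(2+8\gamma\beta) - 4\gamma/A$, using $2 + 8\gamma\beta \geq 2$ and $2\gamma/A \leq 4\gamma/A$. When $\mu < 0$ the two lines intersect at $p_a^{\ast} = A_0/(2\gamma\beta)$ with common value $A_0/(4\gamma\beta) \geq \alpha/(8\gamma\beta) - 1/(2A\beta)$; since $\gamma\beta > 1/4$, we have $1/(2A\beta) \leq 4\gamma/A$, and combining with the elementary inequality $\alpha/(8\gamma\beta) \geq \alpha/(2+8\gamma\beta)$ (as $2 + 8\gamma\beta \geq 8\gamma\beta$) gives $A_0/(4\gamma\beta) \geq \alpha/(2+8\gamma\beta) - 4\gamma/A$ as well. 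The main (mild) obstacles are (i) the explicit Hellinger computation at $\picirc$ over the alphabet $[A] \cup \{\perp\}$, and (ii) tracking constants through the two-case analysis so that the precise form $\alpha/(2+8\gamma\beta) - 4\gamma/A$ drops out rather than a weaker variant.
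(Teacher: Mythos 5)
Your proposal is correct and follows essentially the same route as the paper: pick $i^{\ast}$ with $p_{i^{\ast}}\leq 1/A$, compute the offset-DEC value against $M_{\alpha,i^{\ast}}$ and against $\Mtil$, and minimize the max of two affine functions of $p(\picirc)$ via the intersection point. The only difference is cosmetic — the paper immediately discards the $p(\picirc)/2$ term from the $M_{\alpha,i^{\ast}}$ regret (using $\En_p[g^{M_{\alpha,i^{\ast}}}]\geq\alpha/2$), which makes that line always decreasing and avoids your sign split on $\mu=1/2-2\gamma\beta$; both handlings yield the same final bound, and even in the sub-case where your intersection point leaves $[0,1]$ the unconstrained min is still a valid lower bound, so no gap results.
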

We now prove a lower bound on the quantity
\[
R\ldef{} \min_{\gamma>0}\max\crl[\bigg]{\sup_{\Mbar\in\conv(\cM)}\decoreg(\cMloc[\alphaupper](\Mbar),\Mbar)\cdot{}T,\;
    \gamma\cdot{}\log\abs{\cM}}
\]appearing in
\eqref{eq:upper_old}. We begin by lower bounding the localization
radius
\begin{align*}
  \alphaupper=\bigom\prn*{\frac{\gamma}{T} + \decoreg(\cM)}.
\end{align*}
We choose $\alpha_1=1/2$ and $A=T^2$, so that whenever $T$ is a
sufficiently large constant, \pref{lem:ex_lb} gives
\begin{align*}
  \alphaupper\geq\bigom\prn*{\frac{\gamma}{T} +
               \decoreg(\cM^{\alpha_1,\beta},\Mtil)}
  \geq\bigom\prn*{\frac{\gamma}{T} +
               \frac{1}{1+\gamma\beta} - 4\gamma/A}
  \geq\bigom\prn*{\frac{\gamma}{T} +
  \frac{1}{1+\gamma\beta}}
  \geq{}   \bigom\prn*{
  \sqrt{\frac{1}{\beta{}T}}\wedge{}1
  }.
\end{align*}
It follows that as long as $\beta\geq{}1/T$, if we set
\[
\alpha_2 = c\cdot{}\sqrt{\frac{1}{\beta{}T}},
\]
where $c$ is a sufficiently small numerical constant, then regardless
of how $\gamma$ is chosen,
\begin{align*}
  \cM^{\alpha_2,\beta}(\Mtil)
  \subseteq\cM_{\alphaupper}(\Mtil),
\end{align*}
and
\begin{align*}
  R\geq{}
  \min_{\gamma>0}\max\crl[\bigg]{\decoreg(\cM^{\alpha_2,\beta}(\Mtil),\Mtil)\cdot{}T,\;
    \gamma}.
\end{align*}
Applying \pref{lem:ex_lb} once more, we have
\begin{align*}
  R\geq{}
  \bigom\prn*{\min_{\gamma>0}\crl[\bigg]{\frac{\alpha_2T}{1+\gamma\beta}+
  \gamma}}
  \geq{}  \bigom\prn*{\alpha_2T\wedge\sqrt{\frac{\alpha_2T}{\beta}}
  }
  \geq{} \bigom\prn*{
  \beta^{-1/2}T^{1/2}\wedge{}\beta^{-3/4}T^{1/4}
  }.
\end{align*}
We set $\beta=T^{-1/2}$, which gives
\[
  R \geq{} \bigom(T^{5/8}),
\]
as desired.

\paragraphi{Upper bound on constrained \CompShort and regret of \etdp}
We now bound the regret of \etdp via \pref{thm:regret_upper}. We first bound the constrained \CompShort.
\begin{lemma}
  \label{lem:ex_ub}
  Let $\beta\in(0,1)$ be given, and let $\cMcup\ldef\cup_{\alpha\in(0,1/2]}\cMab$.
  Then for all $\veps>0$, 
  \[
    \deccreg(\cM) \leq{} \deccreg(\cMcup)\leq {\bigoh\prn*{\frac{\veps^2}{\beta}}}.
  \]
\end{lemma}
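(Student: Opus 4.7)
The plan is to construct, for each reference model $\Mbar\in\conv(\cMcup)$, an explicit distribution $p\in\Delta(\Pi)$ that achieves DEC value $O(\veps^2/\beta)$. Write $\Mbar$ as a mixture with weights $\mu(\tilde M)$ and $\mu(M_{\alpha,i})$, and let $w_i=\sum_\alpha\mu(M_{\alpha,i})$ and $s_i=w_i+\mu(\tilde M)/A$, so that $\sum_i s_i=1$. A direct computation at the revealing action gives the clean identity $\Dhels{M_{\alpha,i}(\picirc)}{\Mbar(\picirc)}=2\beta(1-\sqrt{s_i})$: rewards are $0$ a.s.\ there and the observation distributions over $[A]\cup\crl{\perp}$ have simple closed forms. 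Hence, if $p$ places mass $q=\Theta(\veps^2/\beta)$ on $\picirc$, any $M_{\alpha,i}$ in the Hellinger ball around $\Mbar$ must have $s_i$ bounded below by (say) $9/16$, and by $\sum_j s_j=1$ there is at most one such index $i^\star$. I restrict to the regime $\veps^2\leq\beta/C$ for a sufficiently large constant $C$, since otherwise the claimed bound $O(\veps^2/\beta)$ is trivial (the DEC is always at most $1$).

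My proposed distribution is $p=q\cdot\indic_{\picirc}+\lambda\cdot\indic_{i^\star}+(1-q-\lambda)\cdot\indic_{\pimbar}$ with $\lambda=\Theta(\veps^2)$; the two atomic masses collapse when $\pimbar=i^\star$, and note that $\pimbar\in[A]$ since $\fmbar(\picirc)=0$. The regret of $\tilde M$ under $p$ is at most $(q+\lambda)/2$, and a direct expansion bounds the regret of $\Mbar$ itself by $q+\lambda\cdot(\fmbar(\pimbar)-\fmbar(i^\star))\leq q+\lambda/4=O(\veps^2/\beta)$, where I use $\fmbar(j)-1/2\leq w_j/2\leq 1/4$ for $j\neq i^\star$. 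Combined with the first paragraph, which rules out all $M_{\alpha,i}$ with $i\neq i^\star$ from the Hellinger ball, only $M_{\alpha,i^\star}$ remains to control. When $\pimbar=i^\star$, the Hellinger inequality at $\pi=i^\star$ (where $p$ places mass $1-q$) forces $\mu(M_{\alpha,i^\star})\geq 1-O(\veps^2)$, so $\Mbar$ essentially coincides with $M_{\alpha,i^\star}$ and its regret is $q(1/2+\alpha)=O(q)$.

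The main obstacle is the subcase $\pimbar\neq i^\star$: playing $\pimbar$ reveals nothing about $\alpha$ (since $M_{\alpha,i^\star}$ has reward exactly $1/2$ at every $j\neq i^\star$), so a-priori $\alpha$ could be as large as $1/2$ and the $(1-q-\lambda)\cdot\indic_{\pimbar}$ component would contribute $\Omega(1)$ regret under $M_{\alpha,i^\star}$. I would resolve this by combining two Hellinger inequalities. First, at $\pi=\pimbar$ the identity $\Dhels{\delta_{1/2}}{\Mbar(\pimbar)}=2-2\sqrt{1-w_{\pimbar}}$ forces $w_{\pimbar}=O(\veps^2)$; since $\pimbar$ is the argmax of $\fmbar$, this propagates to $w_{i^\star}\cdot\bar\alpha_{i^\star}=O(\veps^2)$, where $\bar\alpha_{i^\star}$ denotes the $\mu$-weighted average of $\alpha$ over $M_{\cdot,i^\star}$. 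Second, at $\pi=i^\star$ the quantity $\Dhels{\delta_{1/2+\alpha}}{\Mbar(i^\star)}$ either equals $2$ (when $\alpha\notin\supp\mu(\cdot,i^\star)$, immediately ruling the model out via $\lambda\cdot 2>\veps^2$) or $2-2\sqrt{\mu(M_{\alpha,i^\star})}$, forcing $\mu(M_{\alpha,i^\star})\geq 1/4$. Dividing the first bound by $\mu(M_{\alpha,i^\star})\geq 1/4$ yields $\alpha=O(\veps^2)$, so that the regret of $M_{\alpha,i^\star}$ under $p$ becomes $\alpha(1-\lambda)+q/2=O(\veps^2/\beta)$. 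The auxiliary term $\lambda\cdot\indic_{i^\star}$ is thus essential: $\lambda=\Theta(\veps^2)$ is small enough to cost only $O(\veps^2)$ in regret against $\Mbar$, yet large enough to make the Hellinger inequality at $i^\star$ a nontrivial constraint on $\alpha$.
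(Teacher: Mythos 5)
Your proof is correct and achieves the same $O(\veps^2/\beta)$ bound, but by a genuinely different route. The paper also identifies the dominant index at the revealing arm (its $i = \argmax_j \bbP_{o\sim\Mbar(\picirc)}(o=j)$ is the same as your $i^\star$) and rules out all $M_{\alpha,j}$ with $j\neq i$ by placing mass $O(\veps^2/\beta)$ on $\picirc$. The difference is how the remaining budget is split between $i$ and $\pimbar$. The paper defines $\Mstar = \argmin_{M\in\cM_i'} \Dhels{M(i)}{\Mbar(i)}$, proves the inequality $\fmbar(\pimbar) - \fmbar(i) \leq \Dhels{\Mstar(i)}{\Mbar(i)}$ via the elementary bound $(\bbP(A)-\bbQ(A))^2/(\bbP(A)+\bbQ(A)) \leq 2\Dhels{\bbP}{\bbQ}$, and then uses an \emph{adaptive} allocation $q(i) = (4\veps^2/\Dhels{\Mstar(i)}{\Mbar(i)})\wedge 1$. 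When $q(i)=1$, the distribution places essentially no mass on $\pimbar$, so large-$\alpha$ models never hurt; when $q(i)<1$, every $M_{\alpha,i}$ lies outside the ball, so only $\tilde M$ and $\Mbar$ need controlling. This cleanly sidesteps the scenario you flagged as the main obstacle. Your construction instead uses a \emph{fixed} split $\lambda = \Theta(\veps^2)$ on $i^\star$ and the remaining mass on $\pimbar$, and then argues directly from two Hellinger constraints (at $\pimbar$ and at $i^\star$) that any in-ball $M_{\alpha,i^\star}$ must have $\alpha = O(\veps^2)$ when $\pimbar \neq i^\star$. Both arguments are sound; yours is more explicit (computing $\Dhels{\Mia(\picirc)}{\Mbar(\picirc)} = 2\beta(1-\sqrt{s_i})$ in closed form and tracking the mixture weights $\mu$), at the cost of more case bookkeeping. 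The paper's version avoids all of that via $\Mstar$ and the adaptive $q(i)$.

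One small omission: you should note what to do when no index $i^\star$ with $s_i$ above threshold exists — in that case every $M_{\alpha,j}$ is excluded by the $\picirc$ constraint alone, so dropping the $\lambda\indic_{i^\star}$ atom and playing $p = q\indic_{\picirc} + (1-q)\indic_{\pimbar}$ already suffices. Also, your intermediate bound $w_j/2 \leq 1/4$ needs $w_j \leq 1/2$, which holds for $j \neq i^\star$ only after choosing the threshold so that $s_{i^\star} \geq 3/4$ (equivalently, after taking the constant in $q = C\veps^2/\beta$ large enough); as written, the unconditional bound is $w_j/2 \leq 1/2$, which changes nothing in the conclusion.
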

Let $\beta\propto{}T^{-1/2}$ and $A\propto{}T^{2}$ as in the prequel. %
Plugging the bound from \pref{lem:ex_ub} into
\pref{thm:regret_upper} (cf. \eqref{eq:reg-bound-ugly}) gives
\begin{align*}
  \En\brk*{\RegDM}
  \leq \bigoht\prn*{
  \frac{\vepsupperT^2}{\beta}\cdot{}T%
  +\sqrt{T}}=\bigoht\prn*{
  \sqrt{T}
  },
\end{align*}
since, with the usual choice of estimation oracle, we can take $\vepsupperT\leq\bigoht\prn*{\sqrt{\frac{\log\abs{\cM}}{T}}}$
and $\log\abs{\cM}\leq\bigoh\prn*{\log(A)}\leq\bigoh(\log(T))$.

\end{proof}

\begin{proof}[\pfref{lem:ex_lb}]
  We first remark that $\cMab=\cMab_{\alpha}(\Mtil)$, since
  $\fmtil(\pi)=1/2$ for all $\pi\in\brk{A}$, and all $M\in\cMab$
  have $\fm(\pim)\leq{}1/2+\alpha$.

We now lower bound the value of the offset \CompShort. Consider any
distribution $p\in\Delta(\Pi)$, and let
$i\ldef{}\argmin_{i\in\brk{A}}p(i)$, so that $p(i)\leq{}1/A$. We have
\begin{align*}
  \Enp\brk*{\fmia(\pimia)-\fmia(\pi)}
  \geq \alpha\cdot(1-1/A-p(\picirc)) + \frac{1}{4}p(\picirc)
  \geq{} \frac{\alpha}{2},
\end{align*}
since $\alpha\leq{}1/4$ and $A\geq{}2$. We now bound the Hellinger
distance via

  \begin{align*}
    \En_{\pi\sim{}p}\brk*{\Dhels{\Mia(\pi)}{\Mtil(\pi)}}
    \leq{} p(\picirc)\cdot\Dhels{\Mia(\picirc)}{\Mtil(\picirc)} + 2p(i).
  \end{align*}
  Observe that $\Dhels{\Mia(\picirc)}{\Mtil(\picirc)}\leq{}2\beta$ and
  $p(i)\leq{}1/A$, so that
\begin{align*}
  \En_{\pi\sim{}p}\brk*{\Dhels{\Mia(\pi)}{\Mtil(\pi)}}
  \leq{} 2\beta{}\cdot{}p(\picirc) +  2/A.
\end{align*}

Combining the calculations so far gives
\begin{align*}
  \En_{\pi\sim{}p}\brk*{\fmia(\pimia)-\fmia(\pi)
  - \gamma\cdot{}\Dhels{\Mia(\pi)}{\Mtil(\pi)}}
  \geq{} \frac{\alpha}{2} 
  - 2\gamma\beta{}p(\picirc) -  4\gamma/A.
\end{align*}
On the other hand, by choosing $M=\Mtil$, we have
\begin{align*}
  \En_{\pi\sim{}p}\brk*{\fmtil(\pimtil)-\fmtil(\pi)
  - \gamma\cdot{}\Dhels{\Mtil(\pi)}{\Mtil(\pi)}}
=\frac{1}{2}p(\picirc),
\end{align*}
so that
\begin{align*}
  \decoreg(\cMab,\Mtil)
  &\geq\min_{p\in\Delta(\Pi)}
  \max\crl*{
\frac{\alpha}{2} 
  - 2\gamma\beta{}p(\picirc),
  \frac{1}{2}p(\picirc)
    } -  4\gamma/A,\\
  &\geq
    \frac{\alpha}{2+8\gamma\beta}
    -  4\gamma/A.
\end{align*}

\end{proof}

\begin{proof}[\pfref{lem:ex_ub}]
  Let $\Mbar\in\conv(\cMcup)$ and $\veps\leq{}1/10$ be given. Assume that
  $25\frac{\veps^2}{\beta}\leq{}1/2$, as the result is trivial
  otherwise.

  Let $i=\argmax_{i\in\brk{A}}\bbP_{o\sim{}\Mbar(\pir)}(o=i)$
and set $p=(1-25\frac{\veps^2}{\beta})q + 25\frac{\veps^2}{\beta}\indic_{\pir}$, where
$q\in\Delta(\brk{A})$ is another distribution whose value will be
chosen shortly. We first observe that if $\Mja\in\cMab\subseteq\cMcup$ for some $\alpha>0$
and $j\neq{}i$, then since
  $\bbP_{o\sim\Mbar(\pir)}(o=j)\leq{}\beta/2$, we have
  \begin{align*}
    \Dhels{\Mja(\pir)}{\Mbar(\pir)}
    \geq \prn*{\sqrt{\bbP_{o\sim{}\Mja(\pir)}(o=j)} -
    \sqrt{\bbP_{o\sim\Mbar(\pir)}(o=j)}}^2
    \geq (\sqrt{\beta}-\sqrt{\beta/2})^2\geq\frac{\beta}{20},
  \end{align*}
where we have used that
$\bbP_{o\sim{}\Mja(\picirc)}(o\neq\perp)=\bbP_{o\sim{}\Mbar(\picirc)}(o\neq\perp)=\beta$,
since $\Mbar\in\conv(\cMcup)$. It follows that regardless of how $q\in\Delta(\brk{A})$ is chosen, $\Mja\notin\cH_{p,\veps}(\Mbar)$, since
  \[
\Enp\brk*{\Dhels{\Mja(\pi)}{\Mbar(\pi)}}\geq{}\frac{25\beta}{20\beta}\veps^2 > \veps^2.
\]
Hence, if we define $\cM'_i = \crl{\Mia\in{\cMcup}\mid{}\alpha\in(0,1/2]}$, we
have $\cH_{p,\veps}(\Mbar)\cup\crl{\Mbar}\subseteq\cM'_i\cup\crl{\Mtil}\cup\crl{\Mbar}$,
and it remains to choose $q$ such that the regret on all of these
models is small. We note that
$\Enp\brk*{g\sups{\Mtil}}\leq{}25\frac{\veps^2}{\beta}$ regardless of how $q$
is chosen, so we restrict our attention to $\Mbar$ and $\cM'_i$ going forward.

Let $\Mstar=\argmin_{M\in\cM'_i}\Dhels{M(i)}{\Mbar(i)}$.  We will show that
\begin{equation}
  \label{eq:useful2}
  \fmbar(\pimbar) - \fmbar(i) \leq \hell{\Mstar(i)}{\Mbar(i)}.
\end{equation}
To establish this fact, first note that if $\pimbar = i$, then
\cref{eq:useful2} is immediate. Otherwise, let $\nu\subs{\Mbar} \in
\Delta(\cMcup)$ be such that $\Mbar(\pi) = \E_{M' \sim \nu\subs{\Mbar}}[M'(\pi)]$ for all $\pi \in \Pi$, and then
\begin{align}
\fmbar(\pimbar) - \fmbar(i) \ = \max_{\pi\in\brk{A}}\En_{M'\sim\nu\subs{\Mbar}}\brk*{\fmp(\pi)-\fmp(i)} \leq \frac12 \cdot \BP_{M' \sim \nu\subs{\Mbar}} \left( M' \not\in\MM_i'\right) \leq \frac 12 \cdot \BP_{r \sim \Mbar(i)} \left( r = 1/2 \right)\label{eq:use-nu-mbar},
\end{align}
where the final inequality follows since all models $M' \in \cMcup \backslash \MM_i'$ satisfy $r = 1/2$ a.s.~when $r \sim M'(i)$. Recall the
  elementary fact that for all events $A$ and distributions $\bbP$ and $\bbQ$.
  \begin{align}
    \label{eq:hellinger-triangular} \frac{(\bbP(A)-\bbQ(A))^2}{\bbP(A)+\bbQ(A)}\leq{}2\Dhels{\bbP}{\bbQ}.
  \end{align}
  Since $\Mstar \in \MM_i'$, we have $\BP_{r \sim \Mstar(i)}(r = 1/2) = 0$, and so, using \cref{eq:hellinger-triangular}, it follows that
\begin{align}
  \label{eq:r-mbar-mstar}
  \BP_{r \sim \Mbar(i)}(r = 1/2) \leq 2 \hell{\Mbar(i)}{\Mstar(i)}\nonumber,
\end{align}
and combining with \cref{eq:use-nu-mbar} establishes \cref{eq:useful2}.

To proceed, we choose $q\in\Delta(\brk{A})$ by setting $q(i) =
\frac{4\veps^2}{\Dhels{\Mstar(i)}{\Mbar(i)}}\wedge{}1$, and
$q(\pimbar)=1-q(i)$. We consider two cases
\begin{itemize}
\item If $q(i)=1$, it is immediate that for all $M\in\cM'_i$,
  $\En_{\pi\sim{}p}\brk*{g\sups{M}(\pi)}\leq{}25\frac{\veps^2}{\beta}$. In addition,
  \begin{align*}
\frac{4\veps^2}{\Dhels{\Mstar(i)}{\Mbar(i)}}    \geq{}1,
  \end{align*}
  so \pref{eq:useful2} implies that
  \begin{align*}
    \fmbar(\pimbar) - \fmbar(i)\leq{}{4\vep^2.}  \end{align*}
  It follows that $\En_{\pi\sim{}p}\brk*{\gmbar(\pi)}\leq{}25\frac{\veps^2}{\beta} +
  (\fmbar(\pimbar) - \fmbar(i)) \leq 25\frac{\veps^2}{\beta} + {4\vep^2}$.
  \item If $q(i)<1$, then for all $M\in\cM'_i$,
    \begin{align*}
      \En_{\pi\sim{}p}\brk*{\Dhels{M(\pi)}{\Mbar(\pi)}}
      \geq{} \frac{1}{2}q(i) \Dhels{M(i)}{\Mbar(i)}
      \geq{} \frac{1}{2}q(i) \Dhels{\Mstar(i)}{\Mbar(i)}
      = 2\veps^2,
    \end{align*}
    so $M\notin\cH_{p,\veps}(\Mbar)$. It follows that
    $\cH_{p,\veps}(\Mbar)\cap\cM'_i=\emptyset$. All that remains is to
    bound the regret under $\Mbar$, {which we do as follows:
      \begin{align}
        \E_{\pi \sim p}[\gmbar(\pi)] \leq q(i)(\fmbar(\pimbar) - \fmbar(\pi)) + 25 \frac{\vep^2}{\beta} &\leq q(i) \cdot \hell{\Mstar(i)}{\Mbar(i)} + 25 \frac{\vep^2}{\beta}\nonumber\\
        &=  4 \vep^2 + 25 \frac{\vep^2}{\beta}\nonumber.
      \end{align}
    }

\end{itemize}

Putting the cases above together, we conclude that
    \[
      \sup_{\Mbar\in\conv(\cMcup)}\deccreg({\cMcup}\cup\crl{\Mbar},\Mbar)\leq \bigoh\prn*{\frac{\veps^2}{\beta}}.
    \]

\end{proof}

\end{document}